\documentclass[10pt,journal,compsoc]{IEEEtran}

% *** CITATION PACKAGES ***
%
\ifCLASSOPTIONcompsoc

  \usepackage[nocompress]{cite}
\else
  % normal IEEE
  \usepackage{cite}
\fi
\ifCLASSINFOpdf

\else

\fi

\usepackage{amsmath,amssymb,bbm,bm,upgreek,proof,amsthm,amssymb}
\usepackage{mathtools}
\usepackage{comment}
\usepackage{upgreek}
\usepackage{multirow}
\usepackage{makecell}
\usepackage[mathscr]{eucal}
\usepackage{url}
\usepackage{graphicx} 
\usepackage{hyperref}
\usepackage{nameref}
\usepackage{cancel}
\usepackage{subcaption}
\usepackage{xcolor}
\usepackage{array, rotating}
\usepackage{multirow, makecell}

\newcommand{\iss}{i^{**}}
\newcommand{\op}{\mathcal{O}_{\mathbb{P}}}
\newcommand*{\p}{\mathbb{P}}
\newcommand{\RR}{\mathbb{R}}
\newcommand{\ee}{\end{aligned} \end{equation}}
\newcommand{\eq}{\end{quote}}
\newcommand{\diag}{\mathrm{diag}}
\newcommand{\ep}{\end{parts}}
\newcommand{\bqp}{\begin{quote}\begin{parts}}
\newcommand{\bds}{\boldsymbol}
\newcommand{\epq}{\end{parts}\end{quote}}
\newcommand{\E}{\mathbb{E}}

\newtheorem{remark}{Remark}
\newtheorem{lemma}{Lemma}
\newtheorem{theorem}{Theorem}
\newtheorem{A}{Assumption}

\newtheorem{claim}{Claim}
\newcommand{\twoinf}{2\rightarrow \infty}
\newcommand{\T}{\top}

\newcommand{\W}{\M W}
\newcommand{\off}{\mathrm{off}}

\newcommand{\tp}{^\top}
\newcommand{\HW}{\hat{\M W}}

\DeclareMathOperator*{\argmin}{arg\,min}
%%%%Definition of Equation environment

\newcommand {\ba}{{\bm{c}}}
%%%%Definition of Operators
\newcommand{\BL}{\Big (}
\newcommand{\BR}{\Big )}
\newcommand{\MD}{\mathbf{\Delta}}

\def\T{{ \mathrm{\scriptscriptstyle T} }} %%%transpose operator
%%%%Definition of Roman Numbers
\newcommand{\Rom}[1]{\text{\uppercase\expandafter{\romannumeral #1\relax}}}
\newcommand{\bee}{\begin{equation}\begin{aligned}}
\newcommand{\M}[1]{{{\mathbf{\MakeUppercase{#1}}}}}

\newcommand{\emm}{\end{bmatrix}}

\newcommand{\F}{\mathrm{F}}

\numberwithin{equation}{section}

\begin{document}
\title{Exact Recovery of Community Structures Using DeepWalk and Node2vec}
\author{Yichi~Zhang and Minh~Tang
\IEEEcompsocitemizethanks{\IEEEcompsocthanksitem Yichi Zhang and Minh Tang are with the Department of Statistics, North Carolina State University, Raleigh, NC, 27606.\protect\\
E-mail: yzhan239@ncsu.edu $\&$ mtang8@ncsu.edu}}
\IEEEtitleabstractindextext{
\begin{abstract}
Random-walk based network embedding algorithms like DeepWalk and node2vec are widely used to obtain Euclidean representation of the nodes in a network prior to performing downstream inference tasks. However, despite their impressive empirical performance, there is a lack of theoretical results explaining their large-sample behavior. In this paper, we study node2vec and DeepWalk through the perspective of matrix factorization. In particular we analyze these algorithms in the setting of community detection for stochastic blockmodel graphs (and their degree-corrected variants). By exploiting the row-wise uniform perturbation bound for leading singular vectors, we derive high-probability error bounds between the matrix factorization-based node2vec/DeepWalk embeddings and their true counterparts, uniformly over all node embeddings. Based on strong concentration results, we further show the {\textit{perfect}} membership recovery by node2vec/DeepWalk, followed by $K$-means/medians algorithms. Specifically, as the network becomes sparser, our results guarantee that with large enough window size and vertices number, applying $K$-means/medians on the matrix factorization-based node2vec embeddings can, with high probability, correctly recover the memberships of all vertices in a network generated from the stochastic blockmodel (or its degree-corrected variants). The theoretical justifications are mirrored in the numerical experiments and real data applications, for both the original node2vec and its matrix factorization variant. 
\end{abstract}
\begin{IEEEkeywords}
Stochastic blockmodel, network embedding, perfect community recovery, node2vec, DeepWalk, matrix factorization.
\end{IEEEkeywords}
}
\maketitle
\IEEEdisplaynontitleabstractindextext
\IEEEpeerreviewmaketitle
\IEEEraisesectionheading{\section{Introduction}}
\IEEEPARstart{G}{iven} a network $\mathcal{G}$, a popular approach for analyzing
$\mathcal{G}$ is to first map or embed its vertices into some low dimensional
Euclidean space and then apply machine
learning and statistical inference procedures in this space. Through this embedding process, multiple tasks could be
conducted on the network such as community detection (e.g.,
\cite{von2007tutorial,wang2017community}), link
prediction (e.g., \cite{liben2007link}), node classification (e.g.,
\cite{perozzi2014deepwalk,hamilton2017inductive}) and
network visualization (e.g., \cite{theocharidis2009network}).
There has been a large and diverse collection of network
embedding algorithms proposed in the literature, including those based
on spectral embedding \cite{rohe2011spectral,sussman2012consistent,shi_malik}, multivariate statistical dimension
reduction \cite{robinson1995typology,ye2005two}, and
neural network \cite{kipf2016semi,line_graph,pine}. See \cite{frame_work}, 
\cite{hamilton2017}, \cite{cui2018survey} and \cite{chen2020graph}
for recent surveys of network embedding and graph representation
learning.

In recent years there has been significant interest in  
network embeddings based on random-walks. The most
well-known examples include DeepWalk \cite{perozzi2014deepwalk} and node2vec
\cite{grover2016node2vec}. These algorithms are computationally
efficient and furthermore yield impressive
empirical performance in many different scientific
applications including recommendation systems \cite{palumbo2018knowledge},
biomedical natural language processing \cite{zhang2019biowordvec}, human protein identification
\cite{zhang2019identification}, traffic prediction
\cite{zheng2020gman} and city road layout modeling
\cite{chu2019neural}. Nevertheless, despite their wide-spread use,
there is still a lack of theoretical results on their large-sample properties. In
particular it is unclear what the node embeddings represents as well
as their behavior as the number of nodes increases. 

Theoretical properties for DeepWalk, node2vec, and related
algorithms had been studied previously in the computer science
community. The focus here had been mostly on the convergence of the 
{\em entries} of the co-occurrence matrix as the lengths and/or number
of random walks go to infinity. For example, motivated by the analysis in \cite{levy2014neural} for word2vec, 
the authors of \cite{qiu2018network,sussman_vec} 
showed that DeepWalk and node2vec using the skip-gram model with negative sampling is equivalent to factorizing a
matrix whose entries are obtained by taking the entry-wise logarithm of
a co-occurrence matrix, provided that the embedding dimension $d$ is
sufficiently large (possibly exceeding the number of nodes $n$). 
These authors also derived the limiting form of the entries of this matrix as the length of the
random walks goes to infinity. 
These results were further extended in \cite{qiu2020concentration} to yield finite-sample concentration bounds for
the co-occurrence entries. %as the length of the random walks
%increases.
Note, however, that the above cited works focused
exclusively on the case of a fixed graph and thus do not provide results
on the large sample behavior of these algorithms as $n$ increases.   

The statistical community, in contrast, had extensively studied the
large-sample properties of graph embeddings
based on matrix factorization. However the embedding algorithms
considered are almost entirely based on singular value decomposition (SVD) of either the
adjacency matrix or the Laplacian matrix and its normalized and/or
regularized variants. For example, in the setting of the popular
stochastic blockmodel random graphs, \cite{rohe2011spectral} and
\cite{sussman2012consistent} derived consistency results for a
truncated SVD of the normalized Laplacian
matrix and the adjacency matrix. %, respectively. 
Subsequently 
\cite{tang2018limit,grdpg} strengthened these results by providing
central limit theorems for the components of the eigenvectors of
either the adjacency matrix or the normalized Laplacian matrix under
the more general random dot product graphs model. As DeepWalk and node2vec are based on taking the entry-wise
logarithm of a random-walk co-occurrence matrix, the techniques used in these cited results do not readily translate to this setting.

\subsection{Contributions of the current paper} The current paper
studies large-sample properties of random-walk based
embedding algorithms. We first present convergence results for
the embeddings of DeepWalk and node2vec in the case of stochastic
blockmodel graphs and their degree-correctd variant. We then show that running $K$-means or $K$-medians on the resulting
embeddings is sufficient for {\em exact} recovery of the latent community
assignments.  Our theoretical results thus provide a bridge between
previous results in the computer science community and
their statistics counterpart.

We emphasize that our focus on stochastic blockmodel graphs 
is done purely for ease of exposition. Indeed, most of our results continue to hold for the
more general inhomogeneous Erd\H{o}s-R\'{e}nyi (IER) random graphs
model \cite{bollobas2001random,hoff2002}, provided that the edge probabilities are sufficiently
homogeneous, i.e., the minimum and maximum values for the edge
probabilities are of the same order (possibly converging to $0$) as $n$
increases; recall that IER is one of the most general model for
edge independent random graphs. In particular we can show that the co-occurrence matrices constructed from the sampled networks is uniformly close (entrywise) to that for the true but unknown edge probabilities matrices. However, as IER random graphs need not possess low-dimensional structure (even when $n$ increases), it is not clear what the embeddings obtained from these co-occurrence matrices represent. See Section~\ref{sec:disc} for further
discussion. 

We now outline our approach. The original node2vec and DeepWalk
algorithms are based on optimizing a non-convex skip-gram model using
stochastic gradient descent (SGD); this optimization problem has multiple local minimima and the obtained embeddings can thus
be numerically unstable (see e.g., \cite{surprising}). We instead consider, for each embedding
dimension $d$, the optimal low-rank approximation
of an observed transformed co-occurrence matrix similar to that used in 
\cite{levy2014neural,qiu2020concentration}, and recently \cite{barot2021community,sussman_vec}. We first
show that the entries of the co-occurrence matrix computed using
the observed adjacency matrix is {\em uniformly} close to the entries
of the co-occurrence matrix computed using the true but unknown edge
probabilities matrix. This uniform bound implies that the
entry-wise logarithm of the two co-occurrence matrices are
also {\em uniformly} close %We emphasize that these uniform entry-wise
%error bounds are essential for this step. Indeed, {\em a priori} the
%observed co-occurrence matrices are only guaranteed to be non-negative
%matrices and hence their entry-wise logarithm
%is possibly unbounded. As the co-occurrence matrix constructed using the true edge
%probablilities is a positive matrix, our uniform entry-wise
%error bound is necessary to guarantee that, 
and thus, with high probability, the
co-occurrence matrix constructed using the observed
graph is well-defined. %We emphasize that this result holds for general IER graphs. 
%In contrast, an error bound using the
%Frobenius norm is not sufficient to guarantee the positivity of this matrix. 
%
%In summary, the uniform entry-wise error bounds for the co-occurrence
%matrices imply that the matrix we constructed using the observed graph
%is close, in Frobenius norm, to the matrix constructed using the true
%edge probabilities. 
In the case of stochastic
blockmodel graphs the true edge probabilities matrix give rise to a (transformed) co-occurrence matrix with rank at most $K$ where $K$ is the number of
blocks and thus for stochastic blockmodel graphs with $K \ll n$ blocks. By leveraging both classical (e.g., the celebrated Davis-Kahan theorem \cite{davis70}) as well as recent results on matrix perturbations in the $2 \to \infty$ norm (e.g., \cite{cape2019two,lei2019unified}), we show that the truncated
low-rank representation of both matrices are {\em uniformly} close, i.e., the
embeddings of the observed graph is, up to
orthogonal transformation, approximately the same as that for the
true edge probabilities matrix. % uniformly well over all nodes. 
Therefore, by running $K$-means or $K$-medians on the
embeddings of the observed graph, we can with high probability recover the latent community structures for {\em every} vertices. % 
%The
% approximate node2vec/DeepWalk embedding algorithm has the same aim as
% original node2vec/DeepWalk algorithm, to optimize the skip-gram model
% with negative sampling. When the dimension of the latent space is
% higher than the node size of the network, the approximate embedding is
% exactly same as the original node2vec/DeepWalk embedding. When the
% dimension of the latent space is not sufficient large, original
% node2vec/DeepWalk approaches the low rank optimization solution of
% skip-gram model via stochastic gradient descent while our approximate
% node2vec/DeepWalk embedding approaches the low rank optimization
% solution by getting the the optimal low rank approximation, of full
% rank optimization solution under Frobenius norm. The outline of the
% paper could be summarized as following:

Our paper is organized as follows. In Section~\ref{sec:2}, we give a brief introduction of 
node2vec \cite{grover2016node2vec} and DeepWalk \cite{perozzi2014deepwalk}, and describe the matrix factorization
perspective for these algorithms. In particular, DeepWalk
can be treated as a special case of node2vec by setting the $2^{\mathrm{nd}}$-order
random-walk parameters $(p,q)$ to be $(1,1)$, which will be
assumed in Section~\ref{sec:3} for simplicity of theoretical
analysis. In Section~\ref{sec:3} we provide uniform entry-wise error
bounds for the entries of the $t$-step random-walk
transition matrix and their implications for community recovery. 
The theoretical results in Section~\ref{sec:3} hold
for both the dense and sparse regimes where the average degree grows
linearly and sublinearly in the number of nodes, respectively. 
In Section~\ref{sec:simu} we present simulations to corroborate our theoretical
results. In Section \ref{sec:real}, we apply node2vec to three real-world network datasets and show its remarkable practical performances.We conclude the paper in Section~\ref{sec:disc} with a discussion of some open
questions and potential improvements. All proofs of the stated results,  associated technical
lemmas, and additional numerical results are provided in the Supplementary File. 

% embeddings of these
% transformed co-occurrence matrices will converge to.    % entrywise logarithm of the co-occurrence matrix
    % can lead to the case where the eigenvalues of $\M M_0$ 
    % exhbit small eigengaps  the Davis-Kahan theorem will not yield meaningful upper bounds.

\subsection{Notation}
We first introduce some general notations that are used 
throughout this paper. For a given positive integer $K$, we denote by
$[K]$ the set $\{1,2,\dots,K\}$. We denote a graph on $n$ vertices by $\mathcal{G} = (\M V, \M E)$ where $\M V =
\{v_i\}_{i = 1}^n$ and $\M E= \{e_{ii'}\}_{i,i' = 1}^n$ are the
vertices and edge sets, respectively. Unless specified otherwise, all graphs in this paper are
assumed to be undirected, unweighted and loop-free. For each node $v_i$ we denote
by $\mathcal{N}(v_i)$ the set of nodes $v_{i'}$ adjacent to $v_i$. % We
% refer to $\mathcal{N}(v_i)$ as the neighborhood of $v_i$.
If $\mathcal{G}$ is a graph on $n$ vertices then its $n \times n$ adjacency matrix is denoted
as $\M A = [a_{ii'}]$. In the subsequent discussion we
often assume that the upper triangular entries of $\M A$ are independent
Bernoulli random variables with $\mathbb{E}[a_{ii'}] = p_{ii'}$ when $i' < i$. As $\mathbf{A}$ is symmetric we also set $a_{ii'} = a_{i'i}$ for $i' > i$ and denote by
$\M P = [p_{ii'}]$ the corresponding $n \times n$ matrix of edge probabilities.
% stochastic blockmodel graphs, we  use $\M P = [p_{ii'}]_{n\cdot n}$ to denote the edge
% probability matrix and assume $\M A$ is generated by

% \bee
% a_{ii'} = \begin{cases}
% \text{Bern}(p_{ii'}) & i>i'
% \\
% 0 & i = i'
% \\
% a_{i'i} & i < i',
% \end{cases}
% \ee
% and all $a_{ii'}$ with $i > i'$ are independently generated. 
Given a graph $\mathcal{G}$ with adjacency matrix $\mathbf{A}$, let $\M D_{\M A} = \mathrm{diag}(d_1,\dots,d_n)$ be a diagonal matrix with $d_i
= \sum_{i' = 1}^{n}a_{ii'} $ as its $i$th diagonal element. Assuming $\mathcal{G}$ is connected, we define a
random walk on $\mathcal{G}$ with a $1$-step transition matrix $\hat{\M
  W} = \M A \M D_{\M A}^{-1}$. % The $t$-step transition matrix  $\hat{\M W}^{t}$ is the
% $t$-step transition
Correspondingly, when appropriate, we also define $\mathbf{W} =
\mathbf{P} \mathbf{D}_{\mathbf{P}}^{-1}$ where $\M D_{\M P}
= \diag(p_1,\dots,p_n)$ is the diagonal matrix with $p_i = \sum_{i' =
1}^{n}p_{ii'}$. %and $\bds{p} = (p_1,\dots,p_n)^{\T}$ and 
% \par
% %If the $s$ step distribution of the random walk is denoted as $v_s \in \RR^{n}$, the $s+1$ step distribution is,
% % \bee
% % v_{s+1} = \hat{\M W} v_{s}.
% % \ee
%It can be shown the $t$ step transition matrix on the network is $\hat{\M W}^{t} = \underbrace{\M A \M D_{\M A}^{-1} \dots \M A \M D_{\M A}^{-1}}_{t\text{ items}}= [\hat{w}_{ii'}^{(t)}]_{n\cdot n}$. Also we define $\M W^{t} = \underbrace{\M P \M D_{\M P}^{-1} \dots \M P \M D_{\M P}^{-1}}_{t\text{ items}}=[{w}_{ii'}^{(t)}]_{n\cdot n}$. \par

We use $\|\cdot\|$, $\|\cdot \|_\F$, $\|\cdot\|_{\infty}$ and $\|\cdot\|_{\mathrm{max}}$ to denote the
spectral norm, Frobenius norm, maximum absolute row sum, and maximum entry-wise value of a matrix, respectively. 
We also use $\|\cdot\|_{\max,\mathrm{off}}$ and
$\|\cdot\|_{\max,\mathrm{diag}}$ to denote the maximum value for the
off-diagonal and diagonal entries of a matrix, i.e., for a square matrix $\M M =[m_{ii'}]$,
\bee
\|\M M\|_{\max,\mathrm{off}} = \max_{i\neq i'}
|m_{ii'}|, \,\, \|\M M\|_{\max,\mathrm{diag}} = \max_{i} |m_{ii}|.
\ee
 We use $|\cdot|$ to denote the absolute value of a real number as
 well as the cardinality of a finite set. The vectors $\M 0_{d}$ and
 $\M 1_{d} \in \RR^{d}$ are $d$ dimensional vectors with all elements
 equal to $0$ and $1$, respectively. The set of $d \times d'$
 matrices with orthonormal columns is denoted as
 $\mathbb{O}_{d,d'}$ while the set of $d \times d$ orthogonal
 matrices is denoted as $\mathbb{O}_{d}$. 

 For two terms $a$ and $b$, let $a\wedge b :=
 \min\{a,b\}$. We write $a \precsim b$ and $a \succsim b$ if there
 exists a constant $c$ not depending on $a$ and $b$ such that $a \leq
 c b$ and $a \geq c b$, respectively. If $a \precsim b$ and $a \succsim
 b$ then $a \asymp b$. We say an event $\mathcal{A}$ depending on $n$ happens with high probability (whp) if $\p(\mathcal{A}) \geq 1 - \mathcal{O}(n^{-c})$ for some constant $c > 3$. Finally, for random sequences $A_n, B_n$,
 we write $A_n = O_{\mathbb{P}}(B_n)$ if $A_n/B_n$ is bounded whp and
 $A_n = o_{\mathbb{P}}(B_n)$ if $A_n/B_n \rightarrow 0$ whp.
 
\section{Summary of node2vec and SBM}\label{sec:2}
In this section we first provide a brief overview of the node2vec
algorithm. We then discuss the popular stochastic blockmodel (SBM) for
random graphs. Finally we discuss a matrix factorization perspective
to node2vec and show that, for a graph $\mathcal{G}$ generated from
a stochastic blockmodel, this
matrix factorization approach leads to a low-rank approximation of an elementwise non-linear transformation of the random walk transition matrix for $\mathcal{G}$. 
\subsection{Node2vec with negative sampling}\label{sec:node2vec}
First introduced in \cite{grover2016node2vec}, node2vec is a
computationally efficient and widely-used algorithm for network
embedding. Motivated by the ideas behind word2vec for text
documents \cite{mikolov2013distributed}, node2vec generates sequences of nodes using random walks which are then feed into a skip-gram
model \cite{Tomas2013ICLR} to yield the node embeddings. The original
skip-gram model is quite computationally demanding for large networks
and hence, in practice, usually replaced by a skip-gram with
negative sampling (SGNS). The resulting algorithm is 
summarized below.
\begin{enumerate}
\item {\bf (Sampling Random Paths):} First
  generates $r$ independent $2^{\mathrm{nd}}$ order random walks on $\mathcal{G}$ with each
  having a fixed length $L$. 
  A $2^{\mathrm{nd}}$ order
random walk of length $L$ starting at $v_i$ with parameters $p$ and $q$ is generated
as follows. First let $v_1^{(i)} = v_i$. 
Next sample $v^{(i)}_2$ from
$\mathcal{N}(v_1^{(i)})$ uniformly at random.
Then for $3 \leq \ell \leq L$, sample $v^{(i)}_{\ell} \in \mathcal{N}(v^{(i)}_{\ell-1})$ with probability,
\begin{equation*}
\p(v^{(i)}_{\ell} = v_0) 
= 
\begin{cases}
\frac{1}{p}J(v_0) & \text{if $v_0 = v^{(i)}_{\ell-2}$},\\
J(v_0)& \text{if $v_0  \in \mathcal{N}(v^{(i)}_{\ell-2})$},\\
\frac{1}{q}J(v_0)  & \text{if $v_0  \not \in \mathcal{N}(v^{(i)}_{\ell-2})$},
\end{cases} 
\end{equation*}
where $J(v_0)$ is given by
\bee
\frac{1}{J(v_0)} = 
 p^{-1} &+ |\mathcal{N}(v^{(i)}_{\ell-2}) \cap
\mathcal{N}(v^{(i)}_{\ell-1})| \\ & +
q^{-1}|\mathcal{N}(v^{(i)}_{\ell-2})^{\mathrm{c}} \cap
\mathcal{N}(v^{(i)}_{\ell-1})|\Bigr.
\ee
%The sequence $(v_1^{(i)}, v_2^{(i)},\dots,v_L^{(i)})$ is a random walk starting at $v_i$. 
The form of $J(v_0)$ allows for $v_{\ell}^{(i)}$ to have possibly unbalanced
  probabilities of reaching three different types of nodes in the
  neighborhood of $v_{\ell-1}^{(i)}$, namely (1) the previous node
  $v_{\ell-2}^{(i)}$; (2) nodes belonging to both the neighborhoods of
  $v_{\ell-2}^{(i)}$ and $v_{\ell-1}^{(i)}$;  (3) nodes belonging only to the
  neighborhood of $v_{\ell-1}^{(i)}$ but not the neighborhood of
  $v_{\ell-2}^{(i)}$. The parameters $p > 0$ and $q > 0$ provide weights for these three different type of nodes and hence control the speed at
  which the random walk leaves the neighborhood of the original node
  $v_i$. 
In this paper we assume that the starting
  vertex $v_i$ of any random walk is sampled according to a
  stationary distribution $\mathbf{S} = (S_1, \dots, S_n)$ on $\mathcal{G}$ with
\bee
\p\big(\text{Starting Vertex is } v_i\big) = S_i =\frac{d_{i}}{2|\mathbf{E}|}
\ee 
for all $v_i \in \mathbf{V}$. For a given $i \in [n]$ we denote by $r_i$ the number of random walks starting from $v_i$, $\bds{\ell}^{(i)}_{j}$ as the $j$th random walk starting from $v_i$ and $\mathcal{L}_{i} = \{\bds{\ell}^{(i)}_{j} , j\in [r_i]\}$ as the set of all random walks starting from $v_i$. %We now describe the notion of a $2^{\mathrm{nd}}$ order random walk on
%$\mathcal{G}$. % {\remark
% {\upshape 
  % The simple random walk corresponds to the choice of $p = q = 1$.
%   }
% }

\begin{remark}
{\upshape We consider only the case of $p = q = 1$ for our
theoretical analysis. The choice $p = q = 1$ is
the default setting for node2vec as suggested in
the original paper \cite{grover2016node2vec} and leads to a sampling scheme
equivalent to that of DeepWalk \cite{perozzi2014deepwalk}; 
the subsequent analysis thus also applies to DeepWalk.}
\end{remark}

\item {\bf (Calculating $\M C$) :} Borrowing ideas from word2vec
\cite{mikolov2013distributed}, node2vec creates a $n \times n$ node-context matrix $\M C = [C_{ii'}]_{n \times
    n}$ whose $ii'$th entry records
  the number of times the pair $(v_i,v_{i'})$ appears among
  all random paths in $\bigcup_{i = 1}^{n}\mathcal{L}_i$. More
  specifically, for a given window size $(t_L, t_U)$, 
  $C_{ii'}$ is the number of times that $(v_i,v_{i'})$ appears within a sequence 
\begin{equation}
    \label{node2vec:ii'}
    \begin{aligned}
\dots,v_i,&\underbrace{\dots\dots}_{t - 1 \text{vertices}},v_{i'},\dots \quad \text{ or } 
\\
\dots,v_{i'}, &\underbrace{\dots\dots}_{t - 1\text{vertices}}, v_i,\dots
\end{aligned}
\end{equation}
among all random paths in $\bigcup _{i = 1}^{n}\mathcal{L}_i$; here $t$ is any integer satisfying $t_L \leq t \leq t_U \leq L-1$   % Note that in this paper we always require $t_U \leq L-1$ because if $t > L - 1$, the existence of a $(v_i, v_{i'})$ pair with a gap of $t - 1$ items needs more than $L$ nodes in a random walk, which is impossible. Other definition of word-context pair could be found in \cite{levy2014dependency}.
\begin{remark}
{\upshape
The original node2vec algorithm fixed $t_L = 1$ while in this paper we
allow for varying $t_L$ for a more flexible theoretical analysis. In Section~\ref{sec:T} we show that different values for $(t_L,t_U)$ could
lead to different convergence rates for the embedding and furthermore appropriate values for $(t_L, t_U)$ depend
intrinsically on the sparsity of the network.}
\end{remark}

\item {\bf (Skip-gram model with negative sampling) :}
Given the $n \times n$ matrix $\M C$ and an embedding dimension $d$,
node2vec uses the SGNS model to learn the node embedding matrix $\M F
\in \mathbb{R}^{n \times d}$ and the context embedding matrix
$\mathbf{F}' \in \mathbb{R}^{n \times d}$. The $i$th row of $\M F$ is
the $d$-dimensional embedding vector of node $v_i$.
% and similarly ``Context Vector'' embedding matrix $\M F' =
% [{\boldsymbol{f'}_1}\tp, {\boldsymbol{f'}_2}\tp, \dots,
% {\boldsymbol{f'}_n}\tp]\tp \in \mathbb{R}^{n\cdot d}$. 
In slight contrasts to the original node2vec, in this paper we do not
require the constraint $\M F = \M F'$. The objective function of SGNS
model for a given $\M C$ is defined as
\bee\label{SKIP_NS} \mathrm{g}(\M F,\M F') =& \sum_{ij}
C_{ij}\Big[\log\big\{\sigma( {\boldsymbol f}_i\tp 
{\boldsymbol f}'_{j})\big\} 
\\
&+ \kappa \mathbb{E}_{{\boldsymbol
f}'_\mathcal{N} \sim \M P_{\text{ns}}}\big[\log\big\{\sigma(-{\boldsymbol
f}_i\tp {\boldsymbol f}'_\mathcal{N})\big\}\big]\Big].\ee
Here $\bm{f}_i$ (resp. $\bm{f}'_{j}$) are the $i$ (resp. $j$) row of $\mathbf{F}$ (resp. $\mathbf{F}'$), $\kappa$ is the ratio of negative to positive samples,
$$\M P_{\text{ns}}({\boldsymbol f}'_\mathcal{N}) = \frac{\sum_{i' = 1}^n
C_{\mathcal{N}i'}}{\sum_{i,i'} C_{ii'}}$$ is the empirical unigram
distribution for the negative samples, and $\sigma$ is the logistic function. The original node2vec algorithm solves
for $(\hat{\M F}, \hat{\M F}')$ by minizing Eq.~\eqref{SKIP_NS}  over 
$(\mathbf{F}, \mathbf{F'})$ using SGD. In
this paper we use a matrix
factorization approach, described in section~\ref{sec:MF}, to find $(\hat{\M F}, \hat{\M F}')$. 
\end{enumerate}
\subsection{Stochastic blockmodel}\label{sec:sbm}
The stochastic blockmodel (SBM) of \cite{holland1983stochastic} is one of the
  most popular generative model for network data. It often serves as a benchmark for evaluating community detection algorithms \cite{abbe2017community}. Our
  theoretical analysis of node2vec/DeepWalk is situated in the context of this model. We parametrize 
  a $K$-blocks SBM in terms of two parameters $(\M B, \M
  Z )$ where $\M B = [b_{uu'}]$ is a
  symmetric matrix of blocks connectivity and $\M Z
  \in \{0,1\}^{n\times K}$ is a matrix whose rows denote the block
  assignments for the nodes; we use $\tau(i)\in [K]$ to represent
  the community assignment for node $i$, i.e., the $i$th row of $\M
  Z$ contains a single $1$ in the $\tau{k}(i)$th element and $0$
  everywhere else. Given $\mathbf{B}$ and $\M Z$, the edges
  $a_{ii'}$ of $\mathcal{G}$ are {\em independent} Bernoulli random
  variables with $\mathbb{P}[a_{ii'} = 1] = B_{\tau(i),\tau(i')}$, i.e., the probability of
  connection between $i$ and $i'$ depends only on the communities
  assignment of $i$ and $i'$. Denote by 
\bee\label{SBM:P}
\M P = [p_{ii'}] = \M
  Z \M B \M Z\tp
\ee the matrix of edge probabilities. We
  denote a graph with adjacency matrix $\mathbf{A}$ sampled from a
  stochastic blockmodel as $\mathbf{A} \sim \mathrm{SBM}(\mathbf{B},
  \M Z)$, and, for any stochastic blockmodel graph, we denote
  by $n_{k}$ the number of vertices assigned to block $k$. %, i.e.,
  %$n_{k} = |\{i|\bds{k}(i) = k\}|$. 
  We shall also assume, without
loss of generality, that $\M Z$ is ordered by blocks: 
\bee\label{theta}
\M Z := \begin{pmatrix}
\bds 1_{n_1} & \bds 0 & \dots
 & \bds 0
\\
\bds 0 & \bds 1_{n_2} & 
\dots & \bds 0
\\
\vdots & \vdots & \vdots & \vdots
\\
\bds 0 & \bds 0 & \dots & \bds 1_{n_K}
\end{pmatrix}.
\ee
{\color{black}In real-world applications the average degree of a networks usually grows at a slower rate than $\Theta(n)$. To model this phenomenon we introduce a sparse parameter $\rho_n$ that can vanish as $n\rightarrow \infty$. For ease of exposition we use the following parametrization of $\M B$ that is commonly used in the literature (see  e.g., \cite{bickel2009nonparametric}). %for our theoretical analysis in Section \ref{sec:T}.
\begin{A}
  \label{am:s} There exists a fixed $K \times K$ matrix $\M B_0$ such that $\M B = \rho_n\M B_0$ with $\rho_n\succsim n^{-\beta}$ for some $\beta \in [0,1)$.
\end{A}
The parameter $\rho_n$ scales the edge probabilities in $\mathbf{B}$. As $\rho_n \succsim n^{-\beta}$, the average degree of the nodes in $\mathcal{G}$ grows at rate $n^{1 - \beta}$ so that larger values of $\beta$ lead to sparser network. 
It is well known that, for sufficiently large $n$, if $\mathcal{G}$ satisfies Assumption \ref{am:s} then $\mathcal{G}$ is connected
with high probability (see e.g. Section~7.1 of \cite{bollobas2001random}). Then $\M  P = \M Z \mathbf{B} \M Z^{\top}$ has a $K \times K$ block structure and thus has rank at most $K$. 
}
\subsection{Node2vec and matrix factorization}\label{sec:MF}
{\color{black}
In general, for a fixed given embedding dimension $d < n$, 
minimization of the objective function in Eq.~\eqref{SKIP_NS} leads to
a non-convex optimization problem and the potential convergence of SGD
into local minima makes the asymptotic analysis of $\hat{\M F}$ quite
complicated. Indeed, almost all existing results for non-convex optimization using gradient descent or SGD only guarantees convergence to a local minima provided that the initial estimate is sufficiently close to this local minima, see e.g., \cite[Section 5]{chi2019nonconvex} and \cite{sgd_convergence}.
We thus desire a different approach for finding $\hat{\M
  F}$, namely one for which the form of $\hat{\M F}$ is more readily
apparent. One such approach is the use of matrix factorization.  
For example, in the context of word2vec embedding,
\cite{levy2014neural} showed that minimization of Eq.~\eqref{SKIP_NS}
when $\mathbf{C}$ is a word-context matrix is equivalent to a matrix
factorization problem on some {\em elementwise} non-linear transformation
of $\mathbf{C}$ and that this transformation can be related to the notion of pointwise
mutual information between the words. Motivated by this line of inquiry,
we consider a formulation of node2vec wherein $\hat{\M
  F} \hat{\M F}'^{\top}$ is a low-rank approximation of
some elementwise transformation $\tilde{\mathbf{M}}$ of $\hat{\mathbf{W}}$;
recall that $\hat{\mathbf{W}}$ is the $1$-step transition matrix for
the canonical random walk on $\mathcal{G}$. % together with the corresponding low-rank approximation $\hat{\M
%  F} \hat{\M F}'^{\top}$. 
%For ease of exposition, we first describe
%the matrix factorization in the case when $d \geq n$. 
We emphasize that this approach had
been considered previously in \cite{qiu2018network} and recently by \cite{barot2021community,sussman_vec}. The main
contribution of our paper is in showing that this matrix
factorization leads to consistent community recovery for stochastic blockmodel graphs. 

We now describe the matrix $\tilde{\mathbf{M}}$. 
 In the context of the word2vec algorithm,
\cite{levy2014neural} showed that there exists some embedding dimension $d$ such that the minimizer of Eq.~\eqref{SKIP_NS} over $\M F \in
\mathbb{R}^{n \times d}$ and $\M F' \in \mathbb{R}^{n \times d}$ satisfies
\bee\label{Fact}
\hat{\M F}\hat{\M F}'^\top \!&=\tilde{\M M}(\M C,\kappa) 
\!:= \left[\log \frac{C_{ij}(\sum_{ij}C_{ij})}{\kappa\bigl(\sum_{i}C_{ij}\bigr) \bigl( \sum_{j}C_{ij}\bigr)}\right]_{n\times n}
\ee

%and hence $(\hat{\M F}, \hat{\M F}')$ could be viewed as a solution of
%a matrix factorization problem. % given $\M C$ and $k$ when $d\geq n$. 
Using the same idea for our analysis of node2vec, we first fixed $n$
and show that if the number of sampled random paths increases then $\tilde{\M
  M}(\M C,k)$ converges, elementwise, to a limiting matrix
$\tilde{\mathbf{M}}_0$ defined below. Note that the entries of $\tilde{\M M}_0$ can be interpreted as 
point-wise mutual information (PMI) between the nodes. 
\begin{theorem}
\label{T1}
Let $n$ be fixed but arbitrary. Suppose $\mathcal{G}$ is a connected graph on $n$ vertices and $t_U$ is large enough such 
that the entries of $\sum_{t = t_L}^{t_U}\hat{\M W}^t$ are all
positive. Applying the node2vec sampling strategy introduced in Section~\ref{sec:node2vec} on $\mathcal{G}$ we have
\bee\label{T1:1}
\tilde{\M M}(\M C, \kappa) \xrightarrow{\text{a.s.}}& \tilde{\M M}_0(\mathcal{G},t_L,t_U,\kappa,L)
\\
&:= \log\Bigg\{\frac{2|\M A|}{\kappa \gamma }\sum_{t = t_L}^{t_U}(L - t)\M D_{\M A}^{-1}\hat{\M W}^t\Bigg\} 
\ee
as the number of random paths $r = \sum_{i = 1}^n r_i \rightarrow \infty$; recalling that $\hat{\M
  W} = \M A \M D_{\M A}^{-1}$. The convergence of $\tilde{\mathbf{M}}(\mathbf{C}, \kappa)$ to $\tilde{\mathbf{M}}_0$
is element-wise and uniform over all entries of
$\tilde{\mathbf{M}}(\mathbf{C}, \kappa)$. Here $|\mathbf{A}|$ denote the
sum of the entries in $\mathbf{A}$ and the constant $\gamma$ is defined as
$$\gamma := \frac{1}{2}(L - t_L - t_U)(t_U - t_L + 1).$$
\end{theorem}

To reduce notation clutter, we will henceforth drop the dependency of
$\tilde{\mathbf{M}}_0$ on the parameters
$\mathcal{G}, t_L, t_U, \kappa, L$. 
As the value of $r$ is chosen purely
for computational expediency, i.e., smaller values of $r$
require sampling fewer random walks, we will thus take the conceptual view that $r \rightarrow \infty$ so that $\tilde{\mathbf{M}}(\mathbf{C}, \kappa) \rightarrow \tilde{\mathbf{M}}_0$; note that $\tilde{\mathbf{M}}_0$ can be constructed explicitly from $\mathbf{A}$ without needing to sample any random walk. 
Combining Eq.~\eqref{Fact} and Theorem~\ref{T1}, we have that, for any 
fixed $n$, there exists an embedding dimension $d$ such that for $r\rightarrow \infty$, the matrices $\hat{\M F}$ and $\hat{\M F}'$ are
exact factors for factorizing $\tilde{\M
  M}_0$. 
 Note that $\mathbf{D}_{\mathbf{A}}^{-1}
\hat{\mathbf{W}}^{t}$ is symmetric for any $t \geq 1$ and hence $\tilde{\mathbf{M}}_0$ is {\em symmetric}. 

In practice one usually chooses $d \ll n$ to reduce the noise in the embeddings as well as combat the curse of dimensionality in downstream inference. Obviously if $d < n$ then
exact factors $(\hat{\M F}, \hat{\M
  F'})$ for factorizing
$\tilde{\mathbf{M}}_0$ might no longer exist (see e.g., \cite{levy2014neural}). The requirement that $\hat{\M F} \hat{\M F}'^{\top} = \tilde{\mathbf{M}}_0$ is, however, both misleading and unnecessary. Indeed,
as the observed graph is but a single {\em noisy}
sample generated from some true but unobserved edge probabilities matrix
$\mathbf{P}$, what we really want to recover is the factorization induced by $\mathbf{P}$. More specifically, replacing $\hat{\M W}^t$ and $|\M A|$ with ${\M W}^t$ and $|\M P|$ in $\tilde{\M M}_0$, we define
\bee\label{truth}
\M M_0
&= \log\Bigg\{\frac{2|\M P|}{\kappa \gamma}\sum_{t = t_L}^{t_U}(L - t)\M D_{\M P}^{-1}{\M W}^t\Bigg\} 
\ee
as the underlying-truth counterpart of $\tilde{\M M}_0$; %, using the {\em true} but {\em
%  unknown} edge probabilities matrix $\mathbf{P}$;
note that, similar to $\tilde{\mathbf{M}}_0$, we had dropped the parameters associated with $\M M_0$ for simplicity of notations. Under the  SBM setting, the true signal matrices $\M P$ and $\M M_0$ are both low-rank and hence an embedding dimension of $d = \text{rk}(\M M_0) \ll n$ is sufficient to recover the factorization induced by $\M M_0$. 
  
  To be more precise, recall from Eq.~\eqref{theta} that for stochastic blockmodel graphs, the matrix $\M P$ has a $K\times K$
block structure. Thus both $\M W^{t}$ and  $\M D_{\M P}^{-1}\M W^t$ also have $K \times
K$ block structures. Eq.~\eqref{truth} then implies that $\M M_0$  also
has a $K \times K$ block structure and hence $\mathrm{rank}(\M M_0)
\leq K$. Most importantly, the $K \times K$ block structure of ${\M M}_0$
is also sufficient for recovering the community structure in
$\mathcal{G}$. {\color{black}
We will show in Section~\ref{sec:3} that the relative
error, in the \textit{row-wise maximum} norm, between
$\tilde{\M M}_0$ and ${\M M}_0$ converges to $0$ as
$n\rightarrow \infty$. This convergence, together with results for perturbation of eigenspaces, implies the existence of an embedding
dimension $d \leq K$ for which the $n \times d$ matrices $\hat{\M F}$
and $\hat{\M F'}$ obtained by factorizing $\tilde{\M M}_0$  lead to exact recovery of the community
structure in $\mathcal{G}$. }
}

\begin{remark}
{\upshape
If $\M P$ does
not arise from a stochastic blockmodel graph then ${\M M}_0$ need not
have a low-rank structure. Nevertheless we can still consider a
rank-$d$ approximation to ${\M M}_0$ for some $d <
\mathrm{rk}(\mathbf{M}_0)$. Furthermore, as we will clarify in
Section~\ref{sec:disc}, the bound for
$\|\tilde{\mathbf{M}}_0 - \mathbf{M}_0\|_{\max}$ in Section~\ref{sec:3}
also holds for general edge independent random graphs, provided that
the entries of $\M P$ is reasonably homogeneous. Hence $\tilde{\mathbf{M}}_0$ has an approximate low-rank
structure if and only if $\mathbf{M}_0$ also has an approximate low-rank
structure.
}
\end{remark}

In summary, motivated by the low-rank structure of  $\mathbf{M}_0$ in the case of SBM graphs, we view % low dimensional embedding $(\hat{\bds{\mathcal{F}}},\hat{\bds{\mathcal{F}'}})$ such that $\hat{\bds{\mathcal{F}}}\cdot\hat{\bds{\mathcal{F}'}}^\T$ could still converge to $\M M_0$ under Frobenius norm, like $\tilde{\M M}_0$. Unable to find the exact factorization of rank $n$ matrix $\tilde{\M M}_0$ in the form of $\hat{\bds{\mathcal{F}}}\cdot \hat{\bds{\mathcal{F}}'}\tp$ when $d < n$, 
the matrix factorization approach for node2vec as
finding the best rank $d < n$ approximation $\hat{\bds{\mathcal{F}}}\cdot
\hat{\bds{\mathcal{F'}}}\tp$ to $\tilde{\M M}_0$ under Frobenius norm,
i.e.,  
\bee\label{propose}
(\hat{\bds{\mathcal{F}}},\hat{\bds{\mathcal{F}'}}) = \argmin_{({\bds{\mathcal{F}}},{\bds{\mathcal{F}}}')\in \mathbb{R}^{n\times d}\cdot\mathbb{R}^{n\times d}} \ \|\tilde{\M M}_0 - {\bds{\mathcal{F}}}\cdot {\bds{\mathcal{F}'}}\tp\|_{\F}.
\ee 
The minimizer of Eq.~\eqref{propose} is obtained by
truncating the SVD of $\tilde{\M M}_0$. More specifically, let
\bee\label{svdtrue}
% \M M_0 = \M U_{0} \M \Sigma_{0} \M V_{0}\tp, \
\tilde{\M M}_0 = \hat{\M U} \hat{\M \Sigma} \hat{\M V}^{\top}
\ee 
with a decreasing order of singular values in $\hat{\M \Sigma}$. Then
for a given $d \leq \mathrm{rk}(\mathbf{M}_0)$, let 
\bee\label{def:f0}
\hat{\bds{\mathcal{F}}} = \hat{\M U}_d, \,\hat{\bds{\mathcal{F}}}' =
\hat{\M V}_d\hat{\M \Sigma}_d^{}
\ee
where $\hat{\M U}_d \in \mathbb{R}^{n\times d}, \hat{\M V}_d \in
\mathbb{R}^{n\times d}$ are the first $d$ columns of $ \hat{\M U}$ and
$\hat{\M V}$, respectively, and $\hat{\M \Sigma}_d \in
\mathbb{R}^{d\times d}$ is the diagonal matrix containing the $d$
largest singular values in $\hat{\M \Sigma}$.

{\remark\label{rk:dcsbm}
  {\upshape The appropriate embedding
  dimension $d$ for factorizing $\tilde{\M M}_0$ depends on knowing
  $\mathrm{rank}(\mathbf{M}_0)$.
  but the convergence of
  $\tilde{\M M}_0$ to that of ${\M M}_0$ does not require knowing $\mathrm{rank}(\mathbf{M}_0)$. For ease of exposition we will assume that
  $\mathrm{rank}(\mathbf{M}_0)$ is known; in practice it can be estimated consistently using an eigenvalue
  thresholding procedure provided that $\mathbf{M}_0$ has a low-rank
  structure. Finally, in the context of SBM graphs 
  and their degree-corrected variant, community recovery using $\hat{\M F}$ also depends on knowing $K$. For
  simplicity we also assume that $K$ is known, noting that consistent
  estimates for $K$
  are provided in \cite{bickel13:_hypot,lei2014}.}
}%\par

\section{Theoretical analysis}\label{sec:3}
\label{sec:T}
\begin{comment}
In this section we provide results for the convergence rate of the
embedding $\hat{\bds{\mathcal{F}}}_0$, obtained by factorizing the
matrix $\tilde{\mathbf{M}}_0$ defined in Eq.~\eqref{T1:1}, to the ground
truth as the number of vertices in $\mathcal{G}$ increases. For ease
of exposition we first consider, in Section~\ref{sec:dense}, the dense regime where the entries of
the edge probabilities matrix $\M P$ are bounded away from $0$. The
bounds derived in this regime are simpler to state and, for the casual
reader, still include all the main conceptual insights of the
paper. The results for the sparse regime, while similar to that of the
dense regime, require substantially more technical machinery and
involved derivations and will be presented in Section~\ref{sec:sparse}. 
\end{comment}
\subsection{Entry-wise  concentration of $\hat{\M W}^t$ and $\tilde{\M M}_0$}
\label{sec:dense}
Recall that $\hat{\bds{\mathcal{F}}}$ is obtained from the
eigendecomposition of $\tilde{\M M}_0$ while the true embedding is
obtained from the eigendecomposition of $\mathbf{M}_0$ (see Eq.~\eqref{truth}). Therefore, before studying the community recovery using $\hat{\bds{\mathcal{F}}}$, we first study the
convergence of $\tilde{\M M}_0$ to $\mathbf{M}_0$. In particular we derive concentration bounds for $\tilde{\M M}_0 - \M M_0$ in both Frobenius and infinity norms. % and maximum abolsute row sum of  $\|\tilde{\M M}_0 - \M M_0\|_{\infty}$, where the former one  sequentially shows the weak community recovery of node2vec after implementing row-wise clustering over $\hat{\bds{\mathcal{F}}}_0$, while the latter one further shows the strong community recovery, due to a more  explicit row-wise norm bound between $\tilde{\M M}_0$ and ${\M M}_0$.  
These bounds are
facilitated by the following Theorem~\ref{T2} which provides a precise uniform
bound for the entry-wise difference between the $t$-step
transition matrix $\hat{\M W}^{t}$ and $\M W^{t}$ defined using the 
adjacency matrix $\mathbf{A}$ and the edge probabilities matrix $\mathbf{P}$, respectively. %The theorem considers both the dense regime, $\rho_n = \Theta(1)$, and the sparse regime, $\rho_n = o(1)$.
\begin{theorem}\label{T2}
Let $\mathcal{G}\sim \mathrm{SBM}(\mathbf{B},
  \mathbf{Z})$ where $\mathbf{B}$ satisfies Assumption~\ref{am:s}. We then have the following bounds.
\begin{enumerate}
\item\textbf{(Dense regime)} Suppose $\rho_n\asymp 1$. Then
\begin{align}
&\|\hat{\M W} - \M W\|_{\max} = \mathcal{O}_{\mathbb{P}}( n^{-1}), \label{T2:oneorder}
\\ &
\|\hat{\M W}^2 - {\M W}^2\|_{\max,\mathrm{diag}} = %\max_{i}\big|w_{ii}^{(2)} - \hat{w}_{ii}^{(2)}\big|
\mathcal{O}_{\mathbb{P}}(n^{-1}), \label{T2main30}
\\ &
\|\hat{\M W}^2 - {\M W}^2\|_{\max,\mathrm{off}} = %\max_{i\neq
 %    i'}\big|w_{ii'}^{(2)} -
 %    \hat{w}_{ii'}^{(2)}\big|
 \mathcal{O}_{\mathbb{P}}\Big(\frac{\log^{1/2}{n}}{n^{3/2}}\Big), \label{T2main3}
 \end{align}
 Furthermore, for $t \geq 3$, 
\begin{align}
     &\label{T2main4}
\|\hat{\M W}^{t} - \M W^{t}\|_{\max} =
                \mathcal{O}_{\mathbb{P}}\Big(\frac{\log^{1/2}{n}}{n^{3/2}}\Big), 
\end{align}
\item\textbf{(Sparse regime)} Let $\rho_n\rightarrow 0$ with $\rho_n \succsim n^{-\beta}$ for some $\beta \in [0,1)$. Then for $t \geq 4$ satisfying $\tfrac{t-3}{t-1} > \beta$ we have
\bee\label{T5:6:main}
\|\hat{\M W}^{t} - \M W^{t}\|_{\max} =
\op\Big(\frac{\log^{1/2}{n}}{n^{3/2} \rho_n^{1/2}}\Big).
\ee
\par
In addition if $0 \leq \beta < 1/2$ then
\bee\label{T10:extend}
&\|\hat{\M W}^{2} - \M W^{2}\|_{\max,\off} = \op\Big(\frac{\log^{1/2}n}{n^{3/2} \rho_n} \Big),
\\
&\|\hat{\M W}^{3} - \M W^{3}\|_{\max} = \op\Big(\frac{\log^{1/2}n}{n^{3/2} \rho_n} \Big).
\ee
\end{enumerate}
\end{theorem}
\begin{remark}
{\upshape Throughout this paper we assume that $t_L \geq 2$ instead of
  $t_{L} \geq 1$ as used in the original node2vec formulation. The rationale for this assumption is as follows. Recall
  the definition of $\tilde{\M M}_0$ in Eq.~\eqref{T1:1}. 
 If we allow $t$ to starts from $1$ in the sum $
  \sum_{t = t_L}^{t_U}(L - t)\cdot\big(\M D_{\M A}^{-1}\hat{\M
    W}^t\big)$ then the term $\hat{\M W}$ might lead to 
  a convergence rate of $\tilde{\M M}_0$ to ${\M M}_0$ that is slower
  than that given in Eq.~\eqref{T3:dense}. For example in the dense regime Eq.~\eqref{T2:oneorder} and Eq.~\eqref{T2main3} show that the entries of $\hat{\M W} - \M W$
  are of larger magnitude than the entries of $\hat{\M W}^{t} - \M
  W^{t}$ for $t \geq 2$.}
  % If we need $t_L = 1$ in  (\ref{T1:1}) and maintain the same convergence rate of $\|\tilde{\M M}_0-\M M_0\|_{\F}$ as (\ref{T3:e}), we could take a sequence of $t_{n,U}$ such that $t_{n,U} \rightarrow +\infty$. Then (\ref{T3:e}) could be similarly  addressed as,
% \bee
% \|\tilde{\M M}_0(\mathcal{G},1,t_{n,U},k,L)-\M M_0(\M \Theta,\M B,1,t_{n,U},k,L)\|_{\F} = \op(n^{1/2}\sqrt{\log(n)}).
% \ee
\end{remark}

%\begin{remark}\label{r}\upshape To keep the statement of Theorem~\ref{T2} simple we
%    have omitted the following more general but weaker bounds for the sparse regime (i.e. $\rho_n\rightarrow 0$),
%\begin{gather*}
%\|\hat{\M W}^2 - {\M W}^2\|_{\max,\text{diag}} = \max_{i
%=i'}\big|w_{ii'}^{(2)} - \hat{w}_{ii'}^{(2)}\big|=\op \bigl\{
%(n\rho_n)^{-1}\bigr\}, 
%\\ \|\hat{\M W}^{2} - \M W^{2}\|_{\max,\off} =
%\op\bigl\{n^{-3/2}\rho_n^{-1}\sqrt{\log n}+ (n\rho_n)^{-2}\sqrt{\log n}\bigr\},
%\\ \|\hat{\M W}^{t} - \M W^{t}\|_{\max} = \op \bigl\{n^{-3/2}\rho_n^{-1}\sqrt{\log n}+
%(n\rho_n)^{-2}\sqrt{\log n}\bigr\},
%\end{gather*}
%when $t\geq 3$. While we do not use these bounds in our subsequent analysis, we note that they do
%not depend on the values of $\beta$ and might thus be of independent interest.
%\end{remark}

Before discussing the convergence rate of   $\tilde{\M M}_0$ to ${\M
M}_0$ we first find a value of $t_U$ such that, for large values of $n$, $\tilde{\M M}_0$ is well defined with high
probability. 
We note that the entries of $\{\M W^t\}_{t\geq 1}$ are
uniformly of order $\Theta(n^{-1})$. Then, under the dense regime, $t = 2$ is sufficient to guarantee that all the off-diagonal entries of $\hat{\mathbf{W}}^{t}$ are uniformly of order 
$\Omega(n^{-1} - n^{-3/2} \log^{1/2}{n}) = \Omega(n^{-1})$ with high probability (c.f. Eq.~\eqref{T2main30}) while $t = 3$ is sufficient to guarantee that all entries of $\hat{\mathbf{W}}^{t}$ are of order $\Omega(n^{-1})$ with high probability (c.f. Eq.~\eqref{T2main3}). If we are under the sparse regime with $\beta < 1/2$ then these same values of $t \geq 2$ are still sufficient to guarantee that the entries of $\hat{\mathbf{W}}^{t}$ are of order $\Omega(n^{-1})$ (c.f. Eq.~\eqref{T5:6:main} and Eq.~\eqref{T10:extend}). Finally, if we are under the sparse regime with $\beta \geq 1/2$ then choosing $t \geq 4$ with $\tfrac{t-3}{t-1} > \beta$ is sufficient to guarantee that the entries $\hat{\M
  W}^{t}$ are uniformly of order $\Omega(n^{-1} - n^{-3/2}
\rho_n^{-1/2} \log^{1/2}{n}) = \Omega(n^{-1})$ with high probability. Now recall that the
matrix $\tilde{\M M}_0$ is of the form
$$\log\Bigg\{\frac{2|\M A|}{\kappa \gamma} \sum_{t = t_L}^{t_U}(L - t)\M D_{\M A}^{-1}\hat{\M W}^t\Bigg\}$$
We therefore have, for $t_U \geq 3$ in the dense regime, $t_U \geq 2$ in the not too sparse regime of $\beta < 1/2$, or for $\tfrac{t_U - 3}{t_U - 1} > \beta$ in general,  that the entries of
the inner sum are bounded away from $0$ with high
probability. For the dense regime, the condition can be further relaxed to $t_U \geq 2$, as a dense graph has a diameter of $2$ and thus all entries of $\hat{\M W}^2$ are uniformly larger than $0$ with high probability; see Theorem 10.10 in \cite{bollobas2001random}.  Therefore, with high probability, the elementwise logarithm is well-defined for all
entries of $\tilde{\M M}_0$. 
Given the existence of $\tilde{\M M}_0$, the following result shows the convergence rate of $\tilde{\M
M}_0$ to ${\M M}_0$.
%{\color{red}The sufficient conditions for $\tilde{\M M}_0$ to be well-defined are $t_U\in\{2,3\}$ and $\frac{t_U - 3}{t_U - 1}> \beta$?}
%From Theorem~\ref{T2} together with an entry-wise Taylor series expansion for
%$\log x$, the following bound for $\|\tilde{\M M}_0 - \M M_0\|_{\F}$ and $\|\tilde{\M M}_0 - \M M_0\|_{\infty}$.
%To keep the following claims and proofs simple, we make an additional assumption to exclude some unrealistic and special conditions. We assume the block probability vector $\bds \pi$ is roughly fixed regarding to $n$.
%\begin{A}[Fixed Block Probability]\label{am:pi}
%We assume $\bds \pi$ is fixed regarding to $n$.
%\end{A}
%{\remark
%We note that for all the following theorems using Assumption~\ref{am:pi}, the condition could be realized to:
%\bee
%\frac{n_i}{n} \rightarrow \pi_i
%\ee
%\end{remark}
\begin{theorem}\label{T3}
Suppose $\mathcal{G}\sim \mathrm{SBM}(\mathbf{B},
  \bm{\Theta})$ satisfies Assumption~\ref{am:s}, and $t_U\geq t_L \geq 2$ where $t_L$ is chosen as described above. Then
$\tilde{\M M}_0$ is well-defined with high probability. Denote 
$$\Delta = \max\{\|\tilde{\mathbf{M}}_0 - \mathbf{M}_0\|_{\F}, \|\tilde{\mathbf{M}}_0 - \mathbf{M}_0 \|_{\infty}\}.$$
We then have the following bounds.
\begin{enumerate}
    \item \textbf{(Dense regime)} Let $\rho_n\asymp 1$. 
    Then for $t_L \geq 2$ we have \bee\label{T3:dense}
    \Delta = \mathcal{O}_{\mathbb{P}}(n^{1/2}\log^{1/2}{n}).
\ee
    \item \textbf{(Sparse regime)} Let $\rho_n\rightarrow 0$ with $\rho_n \succsim n^{-\beta}$ for some $\beta \in [0,1)$. Then for $t_L$ satisfying $\tfrac{t_L - 3}{t_L - 1} > \beta$ we have
    \bee\label{T3:e}
     \Delta = \op\big(n^{1/2}\rho_n^{-1/2} \log^{1/2}{n} \big).
    \ee
    In addition if $0 \leq \beta < 1/2$ then for $t_L \geq 2$ we have
    \bee\Delta = \op\big(n^{1/2}\rho_n^{-1} \log^{1/2}{n}\big).
 \ee
\end{enumerate}
In both regimes we have $\|\M M_0\|_{\F} = \Theta(n)$ and 
$\|\M M_0\|_{\infty} = \Theta(n)$.
%Hence, for $\star \in\{\F,\infty\}$, we have $\|\tilde{\M M}_0-\M M_0\|_{\star} \ll \|\M M_0\|_{\star}$ with high probability. 
\end{theorem}
Theorem~\ref{T3} indicates that as $\beta$ increases
(equivalently, as $\rho_n$ decreases) so that the graph $\mathcal{G}$
becomes sparser, we could (1) still guarantee the existence of $\tilde{\M M}_0$ when
 $t_U$ is sufficiently large, and (2) control the convergence rate of
$\|\tilde{\M M}_0 - {\M M}_0\|_{\F}$ relative to $\|\M M_0\|_{\F}$ by increasing $t_L$.
\subsection{Subspace perturbations and exact recovery}
Theorem~\ref{T3} implies that $\tilde{\mathbf{M}}_0$ is close to
$\mathbf{M}_0$ under both Frobenius and infinity norms, i.e., 
$\|\tilde{\mathbf{M}}_0 - \mathbf{M}_0\|_{\star}/\|\mathbf{M}_0\|_{\star} =
o_{\mathbb{P}}(1)$ for $\star \in \{F, \infty\}$ and sufficiently large $n$. 
Now, by Eq.~\eqref{theta}, $\mathbf{M}_0$ has a $K \times K$
block structure and hence $\mathrm{rk}(\mathbf{M}_0) \leq K$. %{\color{orange}
Furthermore the
eigenvectors of $\mathbf{M}_0$ associated with its non-zero eigenvalues is
sufficient for recovering the community assignments induced by
$\M Z$. The following result, which follows from bounds for $\|\tilde{\M M}_0 - \mathbf{M}_0\|_{\infty}$ given in Theorem~\ref{T3} together with perturbations bounds for invariant subspaces using $2\rightarrow \infty$ norm \cite{cape2019two}, shows that  
the embedding $\hat{\bds{\mathcal{F}}}$ given by the leading
eigenvectors of $\tilde{\mathbf{M}}_0$ is {\em uniformly} close to
that of the leading eigenvectors of $\mathbf{M}_0$. Therefore $K$-means or $K$-medians clustering on the rows of $\hat{\bds{\mathcal{F}}}$ will recover the community membership for \textit{every} nodes, i.e, attain strong or exact recovery of $\M Z$.%}
\begin{theorem}\label{c1}
Under the condition of Theorem~\ref{T3}, let $\hat{\mathbf{U}}
\hat{\bm{\Sigma}} \hat{\mathbf{U}}^{\top}$ and $\mathbf{U} \bm{\Sigma}
\mathbf{U}^{\top}$ be the eigen-decomposition of
$\tilde{\mathbf{M}}_0$ and $\mathbf{M}_0$, respectively. Let $d =
\mathrm{rk}(\mathbf{M}_0)$ and note that $\mathbf{U}$ is a $n \times
d$ matrix. Let $\hat{\bds{\mathcal{F}}} = \hat{\mathbf{U}}_{d}$ be the matrix formed by the columns of
$\hat{\mathbf{U}}$ corresponding to the $d$ largest-in-magnitude eigenvalues of
$\tilde{\M M}_0$. For a $n \times d$ matrix $\mathbf{Z}$ with rows $Z_1, Z_2, \dots, Z_n$ let $\|\mathbf{Z}\|_{2 \to \infty}$ denote the maximum $\ell_2$ norms of the $\{Z_i\}$, i.e.,
$$\|\mathbf{Z}\|_{2 \to \infty} = \max_{i} \|Z_i\|_2.$$
We then have the following results.
\begin{itemize}
\item[(i)]\textbf{(Dense regime)} Let $\rho_n\asymp 1$. Then for $t_L \geq 2$ we have
\bee\label{T3:dense_res1}
\min_{{\M T} \in \mathbb{O}_d}\|\hat{\bds{\mathcal{F}}} \mathbf{T} -
\mathbf{U}\|_{\F} &= \mathcal{O}_{\mathbb{P}}\Big(\frac{\log^{1/2}{n}}{n^{1/2}}\Big)
\\
\min_{{\M T} \in \mathbb{O}_d}\|\hat{\bds{\mathcal{F}}} \mathbf{T} -\mathbf{U}\|_{\twoinf} &= 
\mathcal{O}_{\mathbb{P}}\Big(\frac{\log^{1/2}{n}}{n}\Big).
\ee
\item[(ii)] \textbf{(Sparse regime)} Let $\rho_n \rightarrow 0$ with $\rho_n \succsim n^{-\beta}$ for some $\beta \in [0,1/2)$. If $t_L \geq 2$, we have
\bee
\min_{\M T \in \mathbb{O}_{d}} \|\hat{\bds{\mathcal{F}}} \mathbf{T} - \mathbf{U}\|_{\F} &=
\op\Big(\frac{\log^{1/2}{n}}{n^{1/2} \rho_n}\Big)
%\begin{cases}
 \\
\min_{\M T \in \mathbb{O}_{d}} \|\hat{\bds{\mathcal{F}}}  \mathbf{T} - \mathbf{U}\|_{\twoinf} &=%=\begin{cases}
\op\Big(\frac{\log^{1/2}{n}}{n \rho_n}\Big) %& \text{when } t_L \geq 2 \text{ and }\beta <\frac{1}{2},
\ee
\item[(iii)] \textbf{(Sparse regime)} Let $\rho_n \rightarrow 0$ with $\rho_n \succsim n^{-\beta}$ for some $\beta \in [0,1)$. If $\frac{t_L - 3}{t_L - 1} > \beta$, we have,
\bee\label{rateF:spare:general}
\min_{\M T \in \mathbb{O}_{d}} \|\hat{\bds{\mathcal{F}}} \mathbf{T} - \mathbf{U}\|_{\F} &=
\op\Big(\frac{\log^{1/2}{n}}{(n \rho_n)^{1/2}}\Big) \\ 
\min_{\M T \in \mathbb{O}_{d}} \|\hat{\bds{\mathcal{F}}} \mathbf{T} - \mathbf{U}\|_{\twoinf} &=
\op\Big(\frac{\log^{1/2}{n}}{n \rho_n^{1/2}}\Big) .
\ee
\end{itemize}
Given the above convergence rates, clustering the rows of $\hat{\bds{\mathcal{{F}}}}$ using either $K$-means or $K$-medians will, with high probablity, recover the memberships of {\em every} nodes in $\mathcal{G}$.
\end{theorem}
%Similar to Theorem~\ref{T3}, Theorem \ref{c1}  guarantees the exact community recovery of performing clustering algorithm over $\hat{\bds{\mathcal{{F}}}}$, when   $t_L$ is large enough corresponding to the graph sparsity.
\begin{remark}\label{rk:improved}
{\upshape \color{black} Settings (ii) and (iii) in Theorem \ref{c1} both consider the sparse regime but setting (ii) focuses on the case where $\rho_n  = \omega(n^{-1/2})$ and exact recovery is achieved whenever $t_L \geq 2$ while setting (iii) considers the more general scenario of $\rho_n = \omega(n^{-\beta})$ for any fixed but arbitrary $\beta < 1$. We note that for ease of exposition we had impose $\tfrac{t_L - 3}{t_L - 1} > \beta$ for setting (iii) but this condition can be relaxed to 
\bee\label{requirement}
\frac{t_L - 2}{t_L} > \beta,
\ee
under which we still have $\tilde{\M M}_0$ is well-defined with high probability, and have a more complicated bound of
$$\min_{\M T \in \mathbb{O}_{d}} \|\hat{\bds{\mathcal{F}}} \mathbf{T} - \mathbf{U}\|_{\twoinf} \precsim \op\Bigg\{\frac{\log^{1/2}{n}}{ n^{3/2}\rho_n^{1/2}} + (n\rho_n)^{-t_L/2}\Bigg\}$$
(see \eqref{eq:hat_F_alternative}). The above bound is still sufficient to guarantee that running $K$-means or $K$-medians on the rows of $\hat{\bds{\mathcal{F}}}$ will recover the memberships of every nodes in $\mathcal{G}$ with high probability; see Section \ref{sec:rk6:pf} in the Supplementary File for a rigorous proof.
\par A recent preprint \cite{barot2021community} which appeared on arXiv after the first version of our paper also studied community recovery using SVD-based DeepWalk/node2vec and they have a similar requirement for $t_L$ as Eq.~\eqref{requirement}; see (3.1) in \cite{barot2021community}. For comparison we note that \cite{barot2021community} only derived the convergence rate of $\hat{\bds{\mathcal{F}}}$ under Frobenius norm, and thereby prove a {\em weak} recovery result which allows at most $o(n^{1/2})$ nodes to be misclassified. In contrast the max-norm concentration of $\hat{\M W}^t$ in Theorem \ref{T2} helps us derive a $2\to\infty$ norm convergence for $\hat{\bds{\mathcal{F}}}$, based on which we achieved the much stronger exact recovery (i.e., there are no mis-classified nodes). Finally we conjecture that Eq.~\eqref{requirement} for $t_L$ is sufficient but not necessary. Our simulation results in Section~\ref{sec:simu} agree with this conjecture and we leave its verification for future work.} 
\end{remark}
\begin{remark}
{\upshape The exact recovery results in Theorem \ref{c1} can also be extended to the case of degree-corrected SBM graphs \cite{karrer2011stochastic,zhao2012consistency,gao2018community}. Recall that the edge probabilities for a DCSBM is $\M P = \M \Theta \M Z\M B \M Z^\T \M \Theta$ where $\M \Theta = \text{diag}(\theta_1,\dots,\theta_n)$ are the degree-correction parameters. DCSBM allows heterogeneous edge probabilities within each community and thus yields a more flexible model in comparison with SBM. Section \ref{sec:dcsbmsr} and \ref{sec:sparsedcebm} in the Supplementary File demonstrates how to extend the technical derivations for Theorem \ref{c1} to the DCSBM case provided that the $\{\theta_i\}$ are sufficiently homogeneous, i.e., that $\max_i\theta_i/\min_i \theta_i = \mathcal{O}(1)$.
}
\end{remark}
\section{Simulation}\label{sec:simu}
We now present simulations experiments for the matrix factorization perspective of node2vec/DeepWalk. These experiments complement our theoretical
results in Section~\ref{sec:3} and illustrate the interplay between
the sparsity of the graphs, the choice of window sizes, and their
combined effects on the nodes embedding. 
\subsection{Error bounds for \texorpdfstring{${\|\tilde{\M M}_0 - \M M_0\|_{\F}}$}{TEXT}}
\label{sec:error_bound_simulations}
We first compare the 
large-sample empirical behavior of $\|\tilde{\M M}_0 - \M M_0\|_{\F}$ against the theoretical bounds given in  Theorem~\ref{T3}. We shall simulate undirected graphs
generated from a $2$-blocks SBM with parameters
\begin{equation}\label{Bdef}
  \M B(\rho_n) := \begin{pmatrix} 0.8\rho_n  & 0.3\rho_n
\\ 0.3\rho_n  & 0.8\rho_n \end{pmatrix}, \quad \bds{\pi} = (0.4, 0.6),
\end{equation}
and sparsity $\rho_n \in
\{1, 3n^{-1/3}, 3n^{-1/2}, 3n^{-2/3}\}$.
While this two blocks setting is quite simple it
nevertheless displays the effect of the sparsity
$\rho_n$ and the window size $(t_L, t_U)$ on the upper
bound for ${\|\tilde{\M M}_0 - \M M_0\|_{\F}}$.
\begin{figure*}[htbp]
\begin{subfigure}{.49\columnwidth}
\includegraphics[width=\columnwidth]{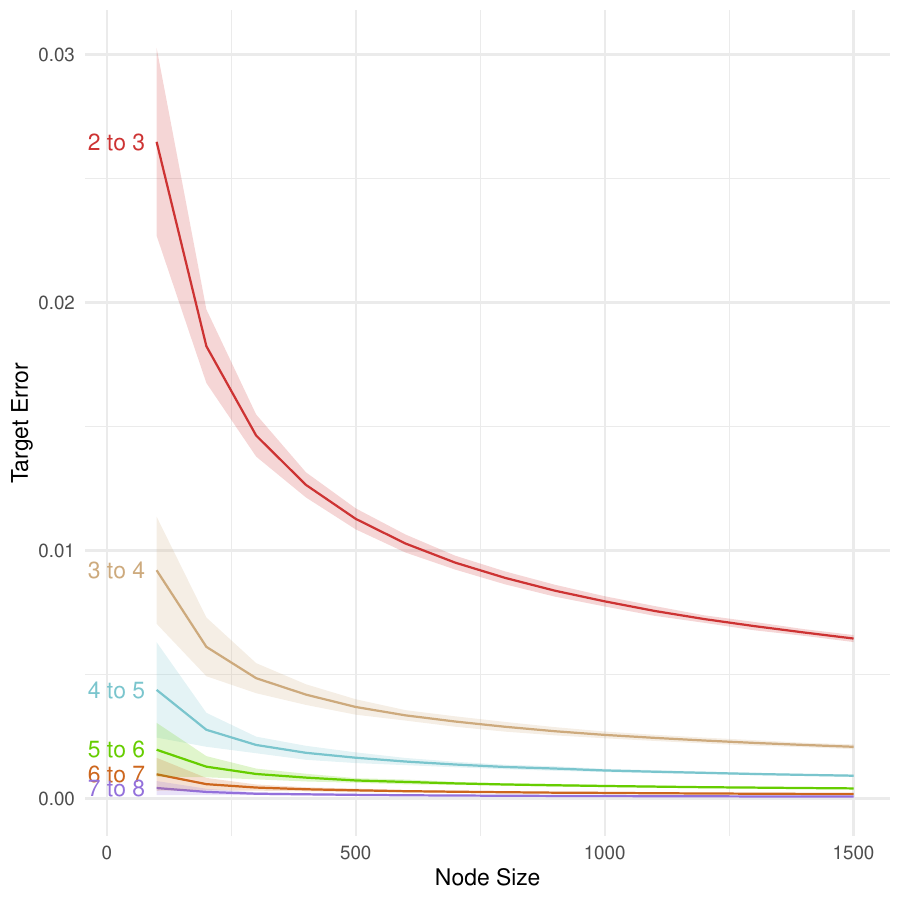}%
\caption{$\rho_n = 1$}
\end{subfigure}
\begin{subfigure}{.49\columnwidth}
\includegraphics[width=\columnwidth]{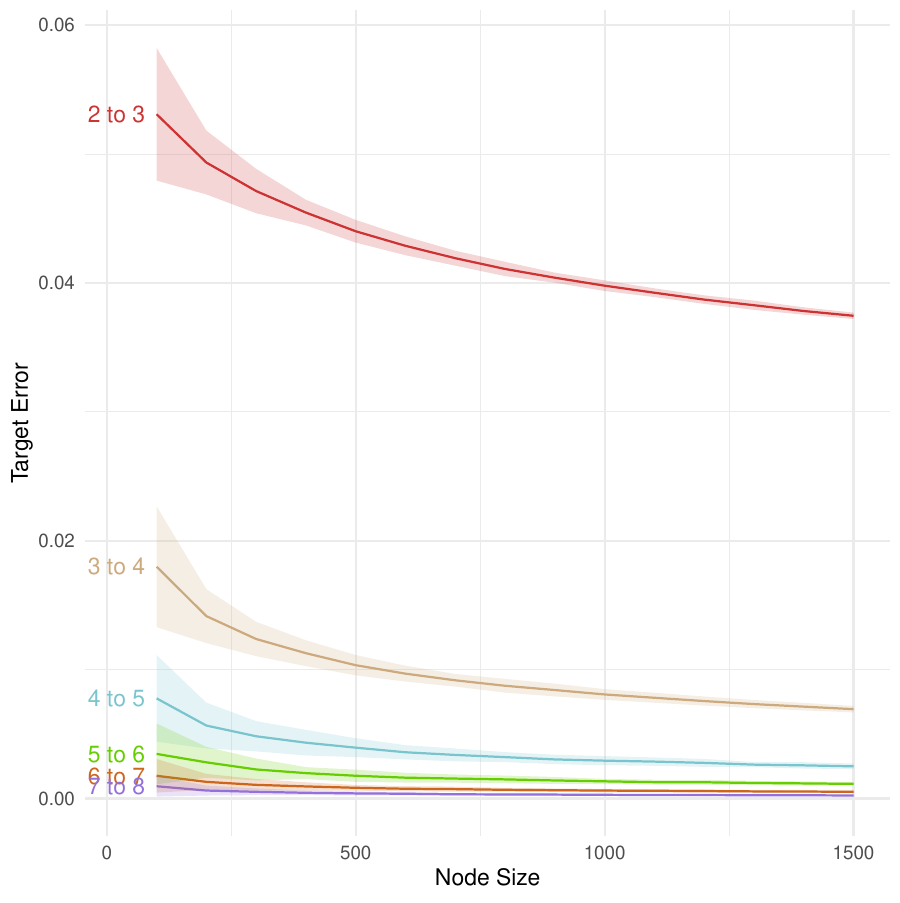}%
\caption{$\rho_n = 3n^{-1/3}$}
\end{subfigure}
\begin{subfigure}{.49\columnwidth}
\includegraphics[width=\columnwidth]{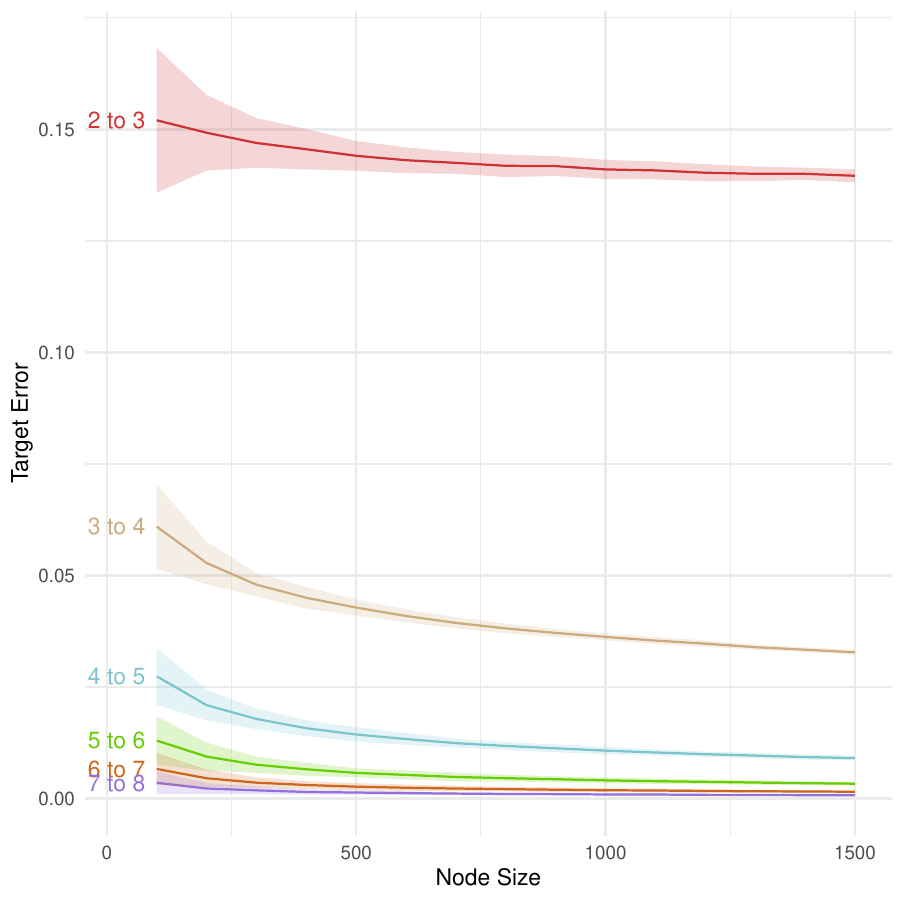}%
\caption{$\rho_n = 3n^{-1/2}$}
\end{subfigure}
\begin{subfigure}{.49\columnwidth}
\includegraphics[width=\columnwidth]{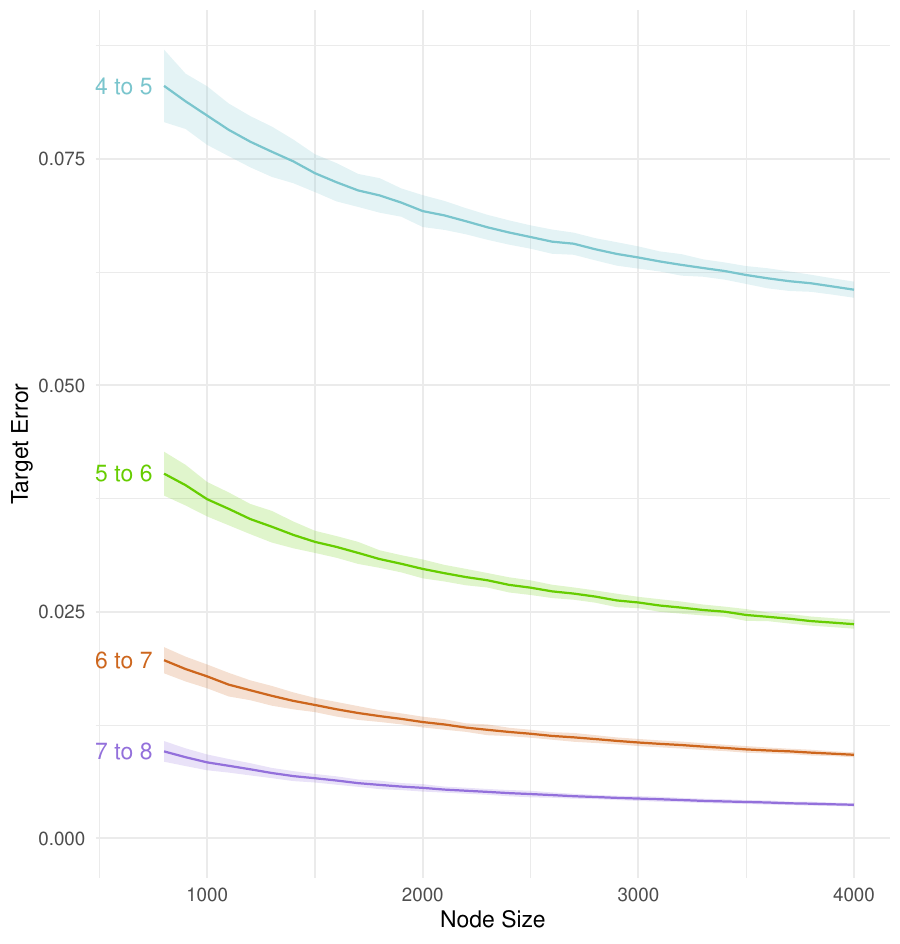}%
\caption{$\rho_n = 15n^{-2/3}$}
\end{subfigure}
\caption{Sample means and $95\%$ empirical confidence
    intervals for $\varepsilon_1(\tilde{\mathbf{M}}_0)$
    based on $100$ Monte Carlo replicates for different values of $n, \rho_n$ and 
    $(t_L, t_U)$ with $t_U - t_L = 1$.}
\label{f:rate:1}
\end{figure*}
\begin{figure*}[htbp]
\begin{subfigure}{.49\columnwidth}
\includegraphics[width=\columnwidth]{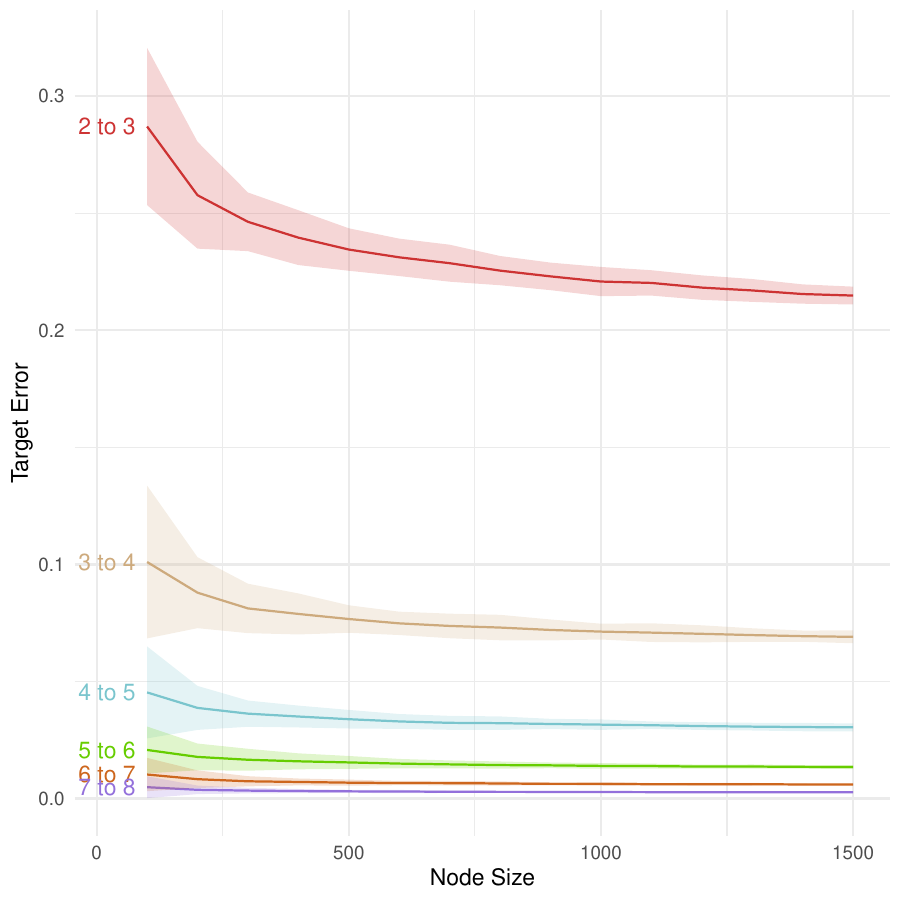}%
\caption{$\rho_n = 1$}
\end{subfigure}
\begin{subfigure}{.49\columnwidth}
\includegraphics[width=\columnwidth]{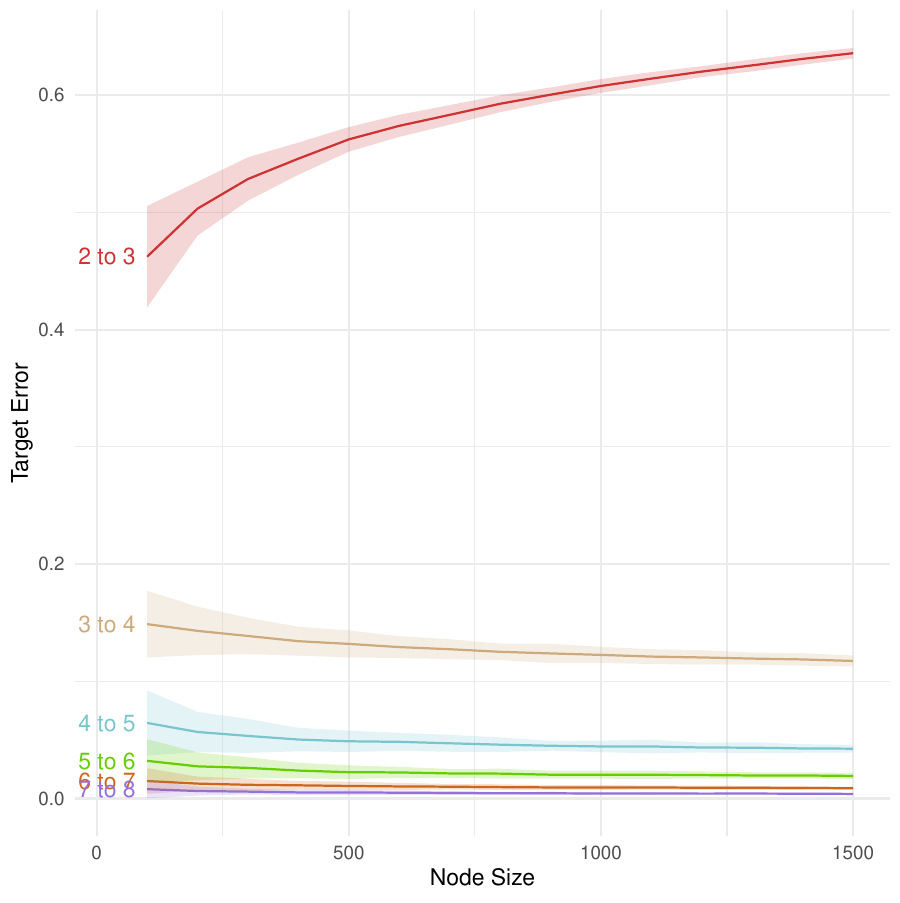}%
\caption{$\rho_n = 3n^{-1/3}$}
\end{subfigure}
\begin{subfigure}{.49\columnwidth}
\includegraphics[width=\columnwidth]{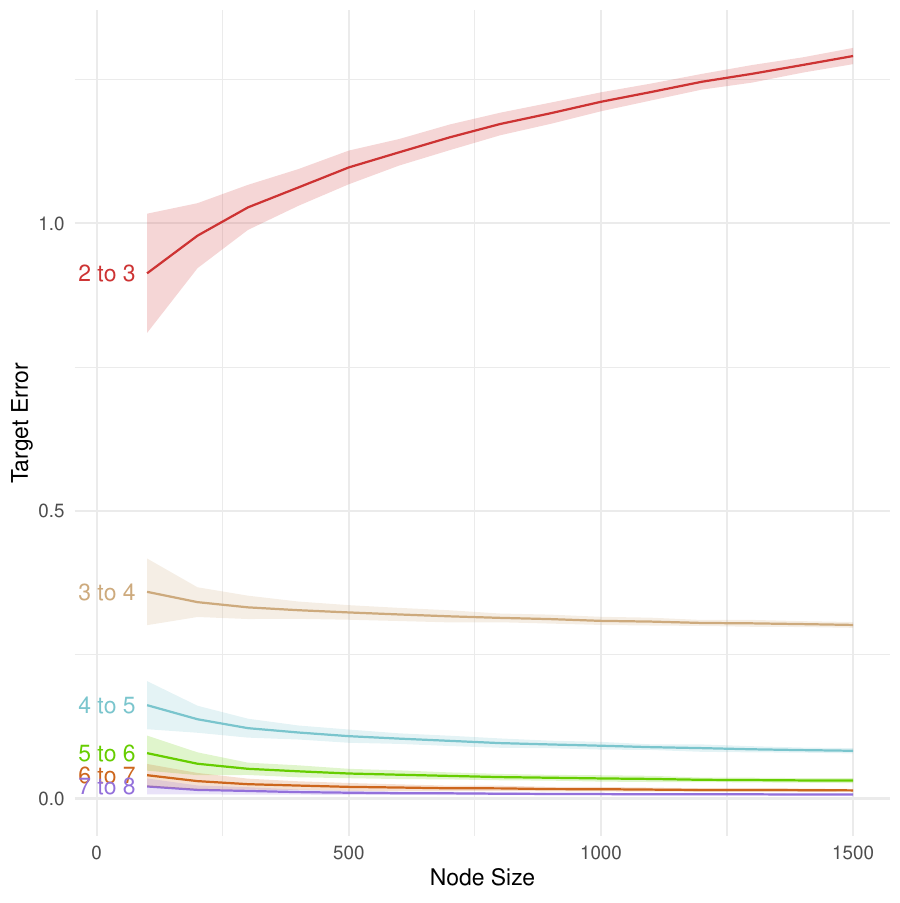}%
\caption{$\rho_n = 3n^{-1/2}$}
\end{subfigure}
\begin{subfigure}{.49\columnwidth}
\includegraphics[width=\columnwidth]{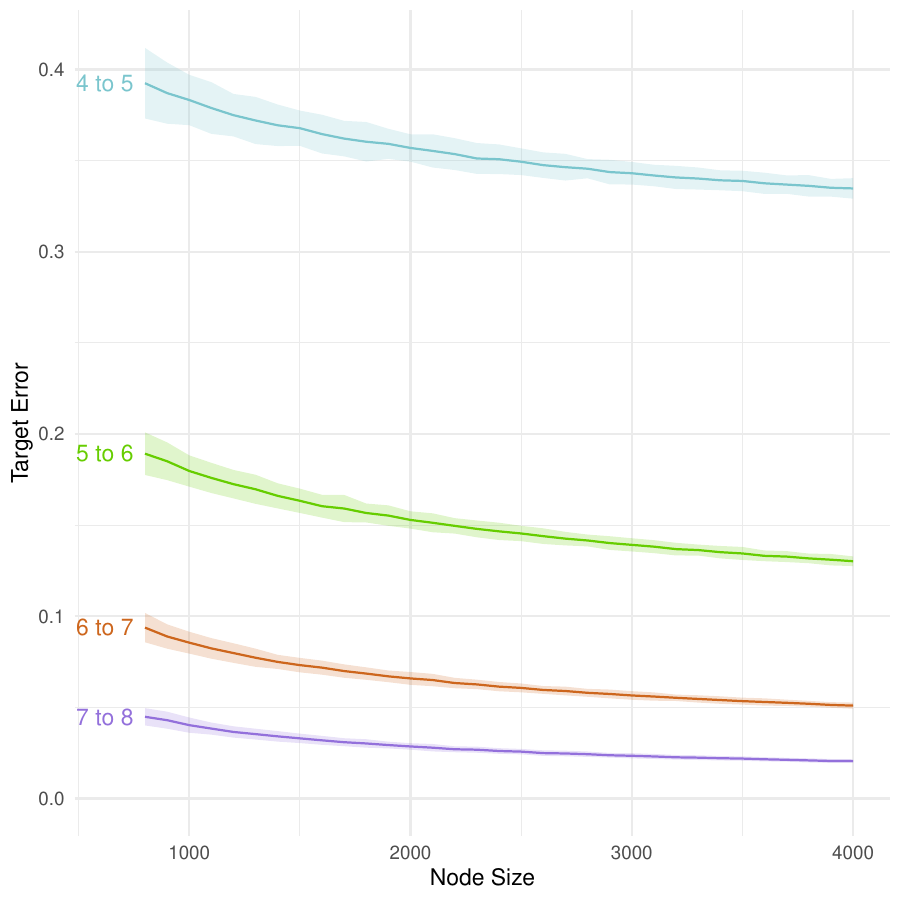}%
\caption{$\rho_n = 15n^{-2/3}$}
\end{subfigure}
\caption{Sample means and $95\%$ empirical confidence
    intervals for $\varepsilon_2(\tilde{\mathbf{M}}_0)$ based on $100$
    Monte Carlo replicates for different values of  of $n,\rho_n$, and
    $(t_L, t_U)$ with $t_U - t_L = 1$.}
\label{f:rate:3}
\end{figure*}

For each value of $n$ and sparsity  $\rho_n$ we run $100$
independent replications where, in each replicate, we generate $\mathcal{G} \sim \mathrm{SBM}(\M B(\rho_n),\M \Theta_n)$
and calculate $\tilde{\M M}_0$ for different choices of $(t_L,
t_U)$. In particular we consider two types of window size, namely $t_U = t_L + 1$ and $t_U = t_L + 3$. While $t_U = t_L + 1$ is not commonly used in practice, for simulation purpose this
choice clearly show the effects of the random walks' length $t$ on 
the error $\|\tilde{\M M}_0 - \M M_0\|_\F$. In
contrast the choice $t_U = t_L + 3$ is more realistic but also partially obfuscate the effect of
$t$ on $\|\tilde{\M M}_0 - \M M_0\|_\F$.  Recall that, from
the discussion prior to Theorem~\ref{T3}, sparser values of $\rho_n$ requires larger values of $t_U$ to guarantee that
$\tilde{\M M}_0$ is well-defined. The choices for $\big(\rho_n, n, (t_L, t_U)\big)$ in
the simulations are summarized below.
\begin{itemize}
\item If $\rho_n \geq 3n^{-1/2}$ then $n \in \{100,
200, 300, \dots1500\}$. We chose $2 \leq t_L \leq 7$ when $t_U = t_L + 1$ and chose $2 \leq t_L \leq 5$ when $t_U = t_L + 3$.
\item If $\rho_n = 3n^{-2/3}$ then $n \in \{800, 900, 
\dots, 4000\}$. We chose $4 \leq t_L \leq 7$ when $t_U = t_L + 1$ and
$3 \leq t_L \leq 5$ when
$t_U = t_L + 3$.
\end{itemize}
We calculate two relative error criteria for $\tilde{\mathbf{M}}_0$, namely
\begin{equation*}
  \varepsilon_1(\tilde{\mathbf{M}}_0) = \frac{\|\tilde{\M M}_0 - \M M_0\|_{\F}}{\|\M M_0\|_\F} \,\, \text{and} \,\,
   \varepsilon_2(\tilde{\mathbf{M}}_0) = \frac{\|\tilde{\M M}_0 - \M
    M_0\|_{\F}}{n^{1/2}\rho_n^{-1/2} \log^{1/2}{n}}.
\end{equation*}
We expect that, as $n$ increases, the first criteria converges to $0$ 
while the second criteria remains bounded.
%as a proxy for the convergent of \|\tilde{\M M}_0 - \M M_0\|_  $(t_L, t_U)$ under different settings of $\rho_n$.

%\subsection{Relative Error 1}
%\label{sec:simu:1:1}
\noindent{\bf Relative Error 1:} We first confirm the convergence of $\varepsilon_1(\tilde{\mathbf{M}}_0)$ to $0$. Figures~\ref{f:rate:1} and~\ref{f:rate:2}
shows the means and $95\%$ confidence intervals for $\varepsilon_1(\tilde{\mathbf{M}}_0)$
based on $100$
Monte Carlo replicates for different values of $\rho_n, (t_L, t_U)$.
These figures indicate the following
general patterns as predicted by the theoretical results in Theorem~\ref{T3}.
\begin{itemize}
\item The error $\varepsilon_1(\tilde{\mathbf{M}}_0)$ is smallest in the dense case and deteriorates as the sparsity factor $\rho_n$ decreases.
\item The error also depends on $(t_L, t_U)$ with larger values of $t_U - t_L$ leading to smaller $\varepsilon_1(\tilde{\mathbf{M}}_0)$ 
\item %If we fix $(t_U, t_L)$ then $\varepsilon_1(\tilde{\mathbf{M}}_0)$
  %increases as $\rho_n$ decreases. In particular 
  If the window size
  is too small, e.g., $(t_L, t_U) = (2,3)$ or $(t_L, t_U) = (2,5)$, then
  $\tilde{\M M}_0$ is often times not well-defined.
\end{itemize}
{\bf Relative Error 2:} Figures~\ref{f:rate:1} and Figure~\ref{f:rate:2} (see the Supplementary File)
corroborate our theoretical results in Section
\ref{sec:T}. Nevertheless there are two additional questions we
should consider. The first is whether or not the bound
$\|\tilde{\mathbf{M}}_0 - \M M_0\|_\F =
\op(n^{1/2}\rho_n^{-1/2}\log^{1/2}{n})$ in Theorem~\ref{T3} is
tight and, if it is tight, the second  is whether or not the
condition $\tfrac{t_L -
3}{t_L - 1} > \beta$ is necessary to achieve this rate.
Analogous to the previous two figures, Figures~\ref{f:rate:3} and
\ref{f:rate:4} show the means and $95\%$ empirical confidence intervals for
the relative error $\varepsilon_2(\tilde{\mathbf{M}}_0)$ over $100$
Monte Carlo replicates for different values of $\rho_n$ and $(t_L, t_U)$. 
From these simulations we can answers the above questions as follows. 
\begin{itemize}
\item If $\rho_n \succsim n^{-\beta}$ is such that $\beta \leq \frac{t_L
  -3}{t_L-1}$ then $\varepsilon_2(\tilde{\mathbf{M}}_0)$ appears to converge to a constant as $n$ increases. There
  is thus evidence that the rate
  $n^{1/2}\rho_n^{-1/2} \log^{1/2}{n}$  for $\|\tilde{\mathbf{M}}_0 - \mathbf{M}_0\|_{\F}$ is optimal. 
  Nevertheless if $t_L$ is large relative to $\rho_n$, e.g.,
 $\rho_n \in \{3n^{-1/3}, 3n^{-1/2}\}$ and $t_L \geq 6$, then
  $\varepsilon_2(\tilde{\mathbf{M}}_0)$ appears to converges to $0$ which suggests that for a fixed $\beta$ the error rate for $\|\tilde{\mathbf{M}}_0 - \mathbf{M}_0\|_{\F}$ can be smaller than $n^{1/2}\rho_n^{-1/2} \log^{1/2}{n}$; this might be due to the convergence of $\hat{\M W}^{t}$ and $\M W^{t}$ towards the stationary distributions as $t$ increases. 
%  We leave this as a future work.
\item For cases such as $(t_L,t_U) \in \{(3,4), (3,6)\}$ and $\rho_n = 3n^{-1/2}$
  or $(t_L,t_U) \in \{(4,5), (3,6)\}$ and $\rho_n = 15n^{-2/3}$, 
  the $t_L$'s do not satisfy $\tfrac{t_L
  -3}{t_L-1} > \beta$. Nevertheless $\varepsilon_2(\tilde{\mathbf{M}}_0)$ still appears to converge to a constant as $n$ increases. This suggests that $\tfrac{t_L
  -3}{t_L-1} > \beta$ is sufficient but possibly not necessary for
  the bound in Eq.~\eqref{T3:e} to hold. On the other hand,
  for fixed $n$ and $\rho_n$, the error $\|\tilde{\mathbf{M}}_0 - \M M_0\|_{\F}$ generally decreases as $t_U - t_L$ increases. 
  \item Finally if $(t_L, t_U) \in \{(2,3), (2,5)\}$ and $\rho_n
\in \{3n^{-1/3}, 3n^{-1/2}\}$ then $\varepsilon_2(\tilde{\mathbf{M}}_0)$ increases with $n$. This supports the claim in Theorem~\ref{T3} of a phase transition for the error rate of $\|\tilde{\M M}_0 - \M M_0\|_{\F}$ as $t_L$ increases.
\end{itemize}
In summary Figure~\ref{f:rate:1} through Figure~\ref{f:rate:4} supports the conclusion of Theorem~\ref{T3}. In particular the error
rate in Theorem~\ref{T3} is sharp and the condition $\tfrac{t_L - 3}{t_L - 1}>\beta$ 
is sufficient but perhaps not necessary. 
\begin{table*}[htbp]
\centering\sffamily
\renewcommand{\theadfont}{\normalsize\bfseries}
\setcellgapes{0.8ex}\makegapedcells
\begin{tabular}{|*{6}{c|}}
\hline
\multirowthead{1}{$n$}  & \multicolumn{2}{c|}{SVD-based node2vec} & 
\multicolumn{2}{c|}
{Original node2vec}  \\
\cline{2-5}
 &$t_U = 5$& $t_U = 8$ &$t_U = 5$&$t_U = 8$\\
\hline
600&1.00&1.00&1.00&1.00
\\
900&1.00&1.00&1.00&1.00
\\
1500&1.00&1.00&1.00&1.00
\\
\hline
\end{tabular}
\quad\quad\quad\quad\quad\quad
\begin{tabular}{|*{6}{c|}}
\hline
\multirowthead{1}{$n$}  & \multicolumn{2}{c|}{SVD-based node2vec} & 
\multicolumn{2}{c|}
{original node2vec}  \\
\cline{2-5}
 &$t_U = 5$& $t_U = 8$ &$t_U = 5$&$t_U = 8$\\
\hline
600&0.30&0.32&0.01&0.05
\\
900&0.57&0.55&0.07&0.11
\\
1500&0.86&0.86&0.57&0.28
\\
\hline
\end{tabular}
\caption{Proportions of times that SGD-based and SVD-based node2vec variants perfectly recover all nodes memberships. The graphs are generated  from $\M B(\rho_n)$ with sparsity $\rho_n = 3n^{-1/3}$ (left table) and $\rho_n = 3n^{-1/2}$ (right table).
}\label{table1}
\end{table*}

\begin{table*}[htbp]
\centering\sffamily
\renewcommand{\theadfont}{\normalsize\bfseries}
\setcellgapes{0.8ex}\makegapedcells
\begin{tabular}{|*{6}{c|}}
\hline
\multirowthead{1}{$n$}  & \multicolumn{2}{c|}{SVD-based node2vec} & 
\multicolumn{2}{c|}
{original Node2vec}  \\
\cline{2-5}
 &$t_U = 5$& $t_U = 8$ &$t_U = 5$&$t_U = 8$\\
\hline
600&1.00&1.00 &1.00& 0.45
\\
900&1.00&1.00&1.00& 0.95
\\
1500&1.00&1.00&1.00&1.00
\\
\hline
\end{tabular}
\quad\quad\quad\quad\quad\quad
\begin{tabular}{|*{6}{c|}}
\hline
\multirowthead{1}{$n$}  & \multicolumn{2}{c|}{SVD-based node2vec} & 
\multicolumn{2}{c|}
{original Node2vec}  \\
\cline{2-5}
 &$t_U = 5$& $t_U = 8$ &$t_U = 5$&$t_U = 8$\\
\hline
600&0.25&0.40&0.00&0.00
\\
900&0.58&0.61&0.00&0.00
\\
1500&0.82&0.83&0.13&0.37
\\
\hline
\end{tabular}
\caption{Proportions of times that SGD-based and SVD-based node2vec variants perfectly recover all nodes memberships. The graphs are generated from $\M B^{\natural}(\rho_n)$ with sparsity $\rho_n = 3n^{-1/3}$ (left table) and  $\rho_n = 3n^{-1/2}$ (right table).
}\label{table2}
\end{table*}
\subsection{Exact recovery of community structure}\label{sec:erp}
Theorem~\ref{c1} together with Remark \ref{rk:improved} showed that
$\hat{\bds {\mathcal{F}}}$ combined with $K$-means/medians can correctly recover the memberships of all nodes in a SBM with high probability. We demonstrate this result for two-blocks SBMs with block probabilities being either $\M B(\rho_n)$ as given in Eq.~\eqref{Bdef} or 
\begin{equation}\nonumber
  \M B^\natural(\rho_n) := \begin{pmatrix} 0.3\rho_n  & 0.8\rho_n
\\ 0.8\rho_n  & 0.3\rho_n \end{pmatrix}.
\end{equation}
Note that $\M B(\rho_n)$ and $\M B^\natural(\rho_n)$ corresponds to 
an assortative and a dis-assortative structure, respectively. Given specific setting of $\M B, n, \rho_n$, we randomly sample $100$ graphs where each vertex is randomly assigned to one of the two blocks with equal probability and evaluate the membership recovery performances of the original node2vec \cite{grover2016node2vec} (based on SGD) and node2vec using matrix factorization (as described \cite{qiu2018network,sussman_vec,barot2021community} and this paper) followed by clustering using $K$-means. We set the window sizes to $t_U \in \{5,8\}$ and choose $\kappa = 5$ and $L = 200$. For the original node2vec we also set $t_L = 1$ as the default and $r_1 =\cdots = r_n = 200$, while for the SVD-based node2vec we set $t_L = t_U - 3$. We report in Tables \ref{table1} and \ref{table2} the proportions of times for the $100$ simulated graphs that these two variants of the node2vec algorithm correctly recover the memberships of all nodes. 
\par
The numerical results show that as $n$ increases, both the original and SVD-based node2vec are more likely to perfectly recover memberships of all nodes in the graph, under all different settings of $\rho_n,\M B,t_U$. Furthermore the accuracy when $\rho_n = 3n^{-1/3}$ is considerably higher than that for $\rho_n = 3n^{-1/2}$. %These observations mirrors the theoretical results in Theorem~
%These results are consistent mirrors the high-probability exact recovery results in  Theorem \ref{c1}. When graph becomes sparser, the recovery performances of all node2vec algorithms become worse, which can also be predicted by Theorem \ref{c1} because 
This is consistent with the results in Theorem~\ref{c1} as a smaller magnitude for $\rho_n$ results in a slower convergence rate for $\hat{\bds{\mathcal{F}}}$ under both the Frobenius and $2\rightarrow  \infty$ norms. In addition the exact recovery performance of SVD-based node2vec when $\rho_n\asymp n^{-1/2}$ and $(t_L,t_U ) = (2,5)$ suggests that the $t_L$ threshold for Theorem \ref{c1} in  Eq.~\eqref{requirement} is possibly not sharp as $\frac{t_L - 2}{t_L} = 0 < \beta = 1/2$.{ \color{black} Finally we note that the SVD-based node2vec has better empirical performance than the original node2vec in these experiments as well as in the experiments for three-blocks SBMs and DCSBMs in Section \ref{sec_simulation_embedding}. This is consistent with the discussion in Section~\ref{sec:2}. Indeed, the entries of $\tilde{\mathbf{M}}_0$ are the limit of those for the original node2vec when the number of sampled paths $r \rightarrow \infty$ and furthermore $\tilde{\mathbf{M}}_0$ has an approximately low-rank structure as $n$ increases. In other words, at least for SBM and DCSBM graphs, we can view the original node2vec as a computationally efficient approach to approximate the embeddings based on SVD of $\tilde{\mathbf{M}}_0$. 
}

\subsection{Embedding performance}\label{sec_simulation_embedding}
In this section we perform more numerical experiments to take a closer look at the
finite-sample performance of community detection, using both the original and SVD-based node2vec embeddings.
We consider both three-blocks SBM and three-blocks DCSBM. We will vary the sample size $n$,
window sizes $t_U$, and sparsity $\rho_n$ in these simulations
and investigate the effect of these parameters on the community detection accuracy.

More specifically, for each simulation with a specified value of $n$
and $\rho_n$, we run $100$ Monte Carlo replications where, for each
replicate, we apply both the original and SVD-based node2vec algorithms with different window
sizes on the simulated random graph to obtain the
embeddings followed by community detection using $K$-means on these
embeddings. Let the true and estimated cluster labels be denoted by $\{\tau(i)\}_{i
= 1}^n$ and
$\{\hat{\tau}(i)\}_{i = 1}^n$. We calculate the accuracy of $\hat{\tau}$ as (here
$\xi(\cdot)$ denotes an arbitrary permutation of $\{1,2,\dots,K\}$),
\begin{equation}\label{def:acc}
  \text{Accuracy} =
\min_{\xi(\cdot)}\frac{\#\{i|\xi(\hat{\tau}(i))\neq \tau(i)\}}{n}.
\end{equation}

Before presenting the formal numerical results, we first  fix $\rho_n = 3n^{-1/2}$, $n = 600$
and sample one random realization from both the SBM and the
DCSBM to illustrate the node2vec embedding performances. These
visualizations, which are depicted in Figure~\ref{f:embd:sbm} and
Figure~\ref{f:embd:dcsbm} in the Supplementary File, provide us with some intuitions, namely that (i) the original and SVD-based node2vec variants yield similar embeddings (ii) for SVD-based node2vec, increasing the 
window size could help separate nodes from different communities
and thereby improve the community detection accuracy; (iii) although the embeddings appear similar, $K$-means clustering yields more accurate membership recovery for the SVD-based node2vec compared to the original SVD-based node2vec embeddings. %We will have  more detailed discussions about these obervations after showing the formal simulation results. 
We now describe the settings of the network generation models used in these simulations.  

\noindent\textbf{Stochastic Blockmodel:}
We consider three-blocks SBMs with block
probabilities being either
\begin{equation}
\label{eq:Bmat_simulations}
\M B_1 = \begin{pmatrix}
0.8 & 0.4 & 0.3 
\\
0.4 & 0.7 & 0.5
\\
0.3 & 0.5 & 0.9
\end{pmatrix} \,\, \text{or} \,\, \M B_2 =  \begin{pmatrix}
0.8 & 0.5 & 0.5 
\\
0.5 & 0.8 & 0.5
\\
0.5 & 0.5 & 0.8
\end{pmatrix},
\end{equation}
and block assignment probabilities $\bds\pi = (0.3,0.3,0.4)$. % for
%both $\M B_1$ and $\M B_2$. 
\begin{figure*}[htbp]
\centering
\begin{subfigure}{.49\columnwidth}
\includegraphics[width=\columnwidth]{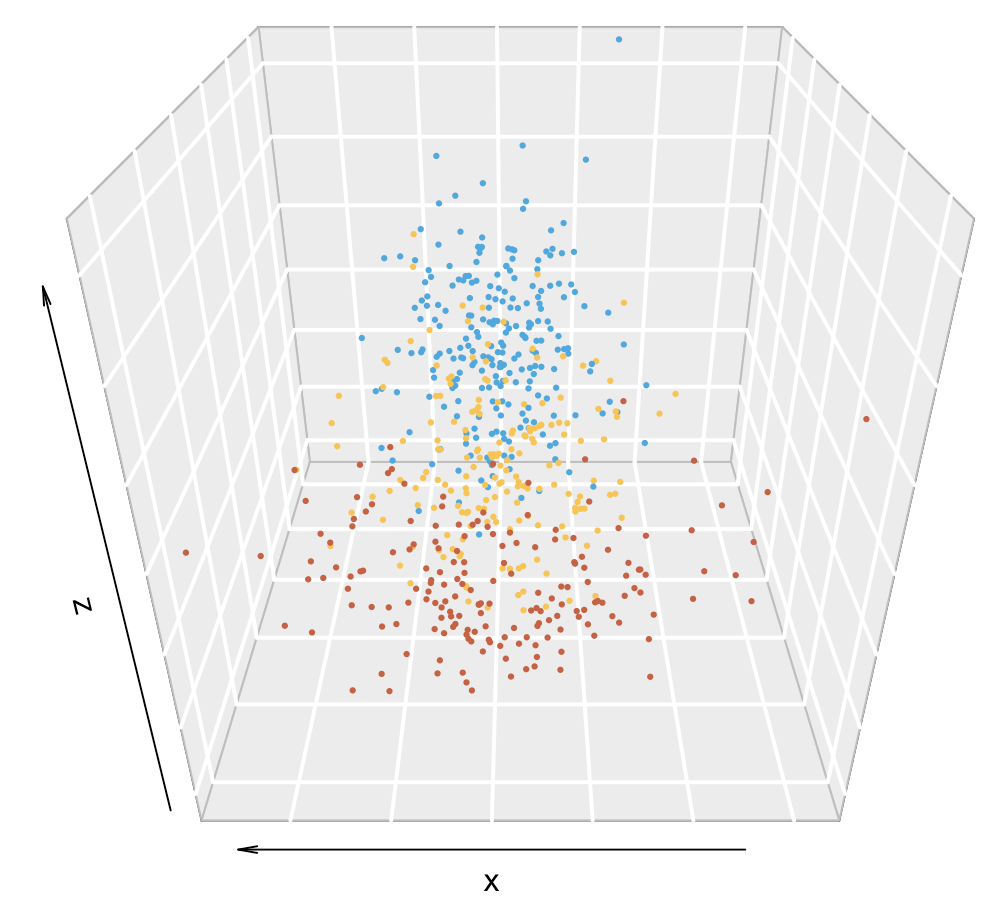}%
\caption{$t_U = 5$, $\text{accuracy} = 0.61$}
\end{subfigure}
\begin{subfigure}{.49\columnwidth}
\includegraphics[width=\columnwidth]{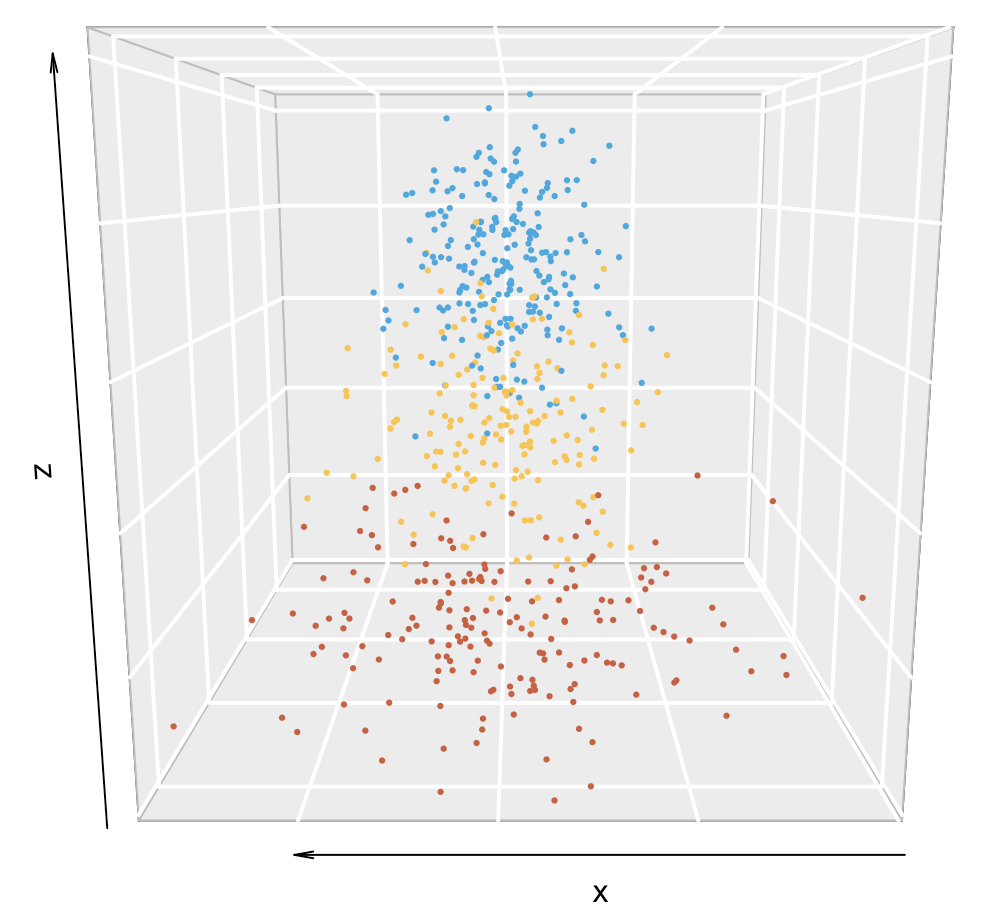}%
\caption{$t_U = 6$, $\text{accuracy} = 0.87$}
\label{12:3}
\end{subfigure}
\begin{subfigure}{.49\columnwidth}
\includegraphics[width=\columnwidth]{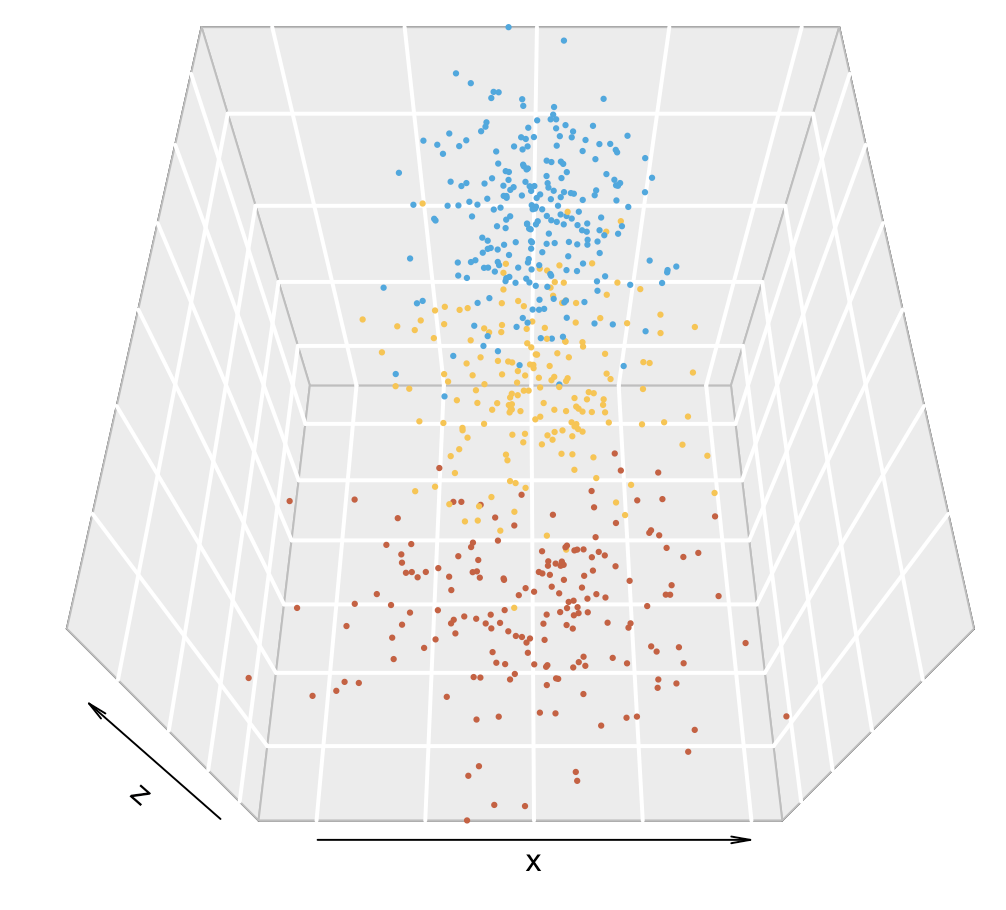}%
\caption{$t_U = 8$, $\text{accuracy} = 0.88$}
\end{subfigure}
\begin{subfigure}{.49\columnwidth}
\includegraphics[width=\columnwidth]{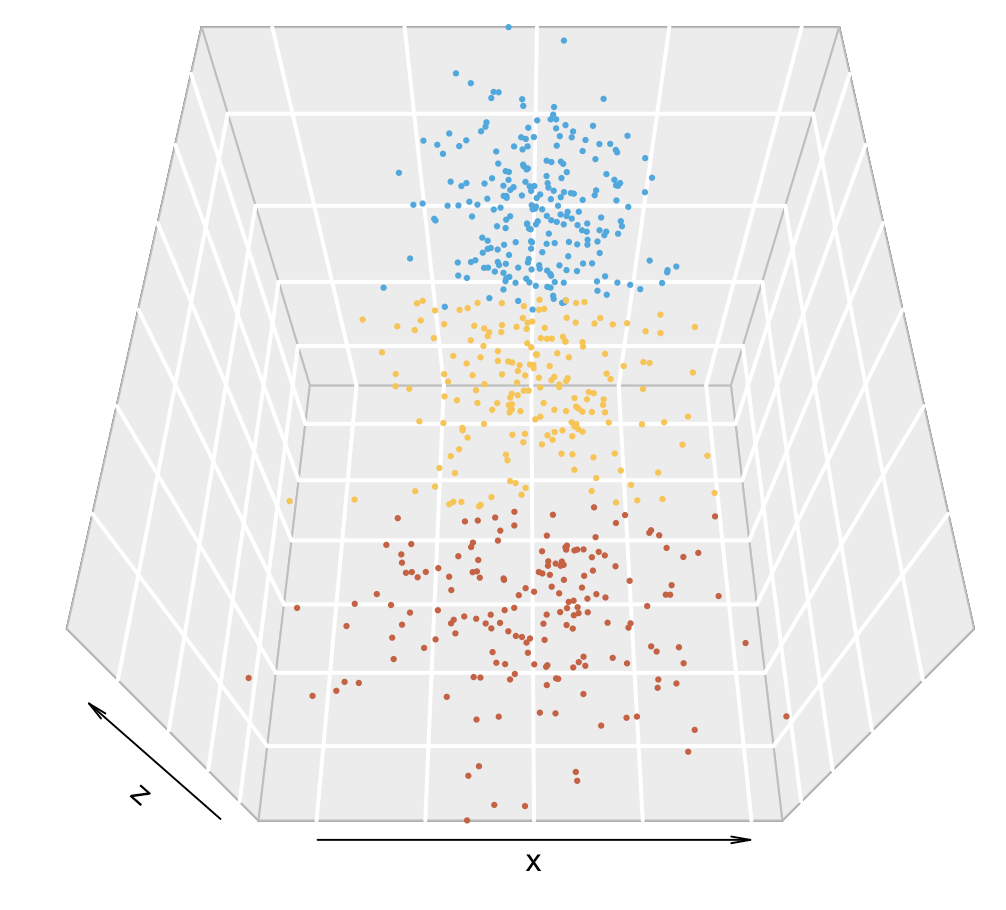}%
\caption{$t_U = 8$, recovery result}
\end{subfigure}
\par
\centering
\begin{subfigure}{.49\columnwidth}
\includegraphics[width=\columnwidth]{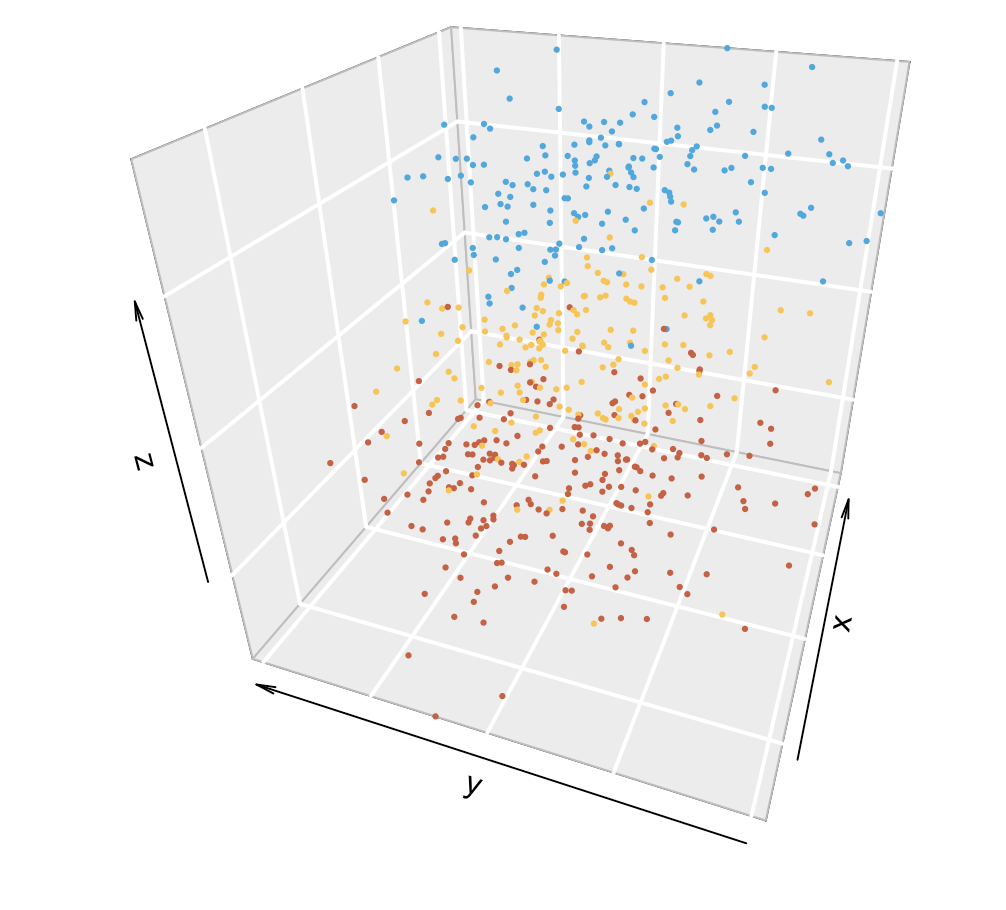}%
\caption{$t_U = 5$, $\text{accuracy} = 0.58$}
\end{subfigure}
\begin{subfigure}{.49\columnwidth}
\includegraphics[width=\columnwidth]{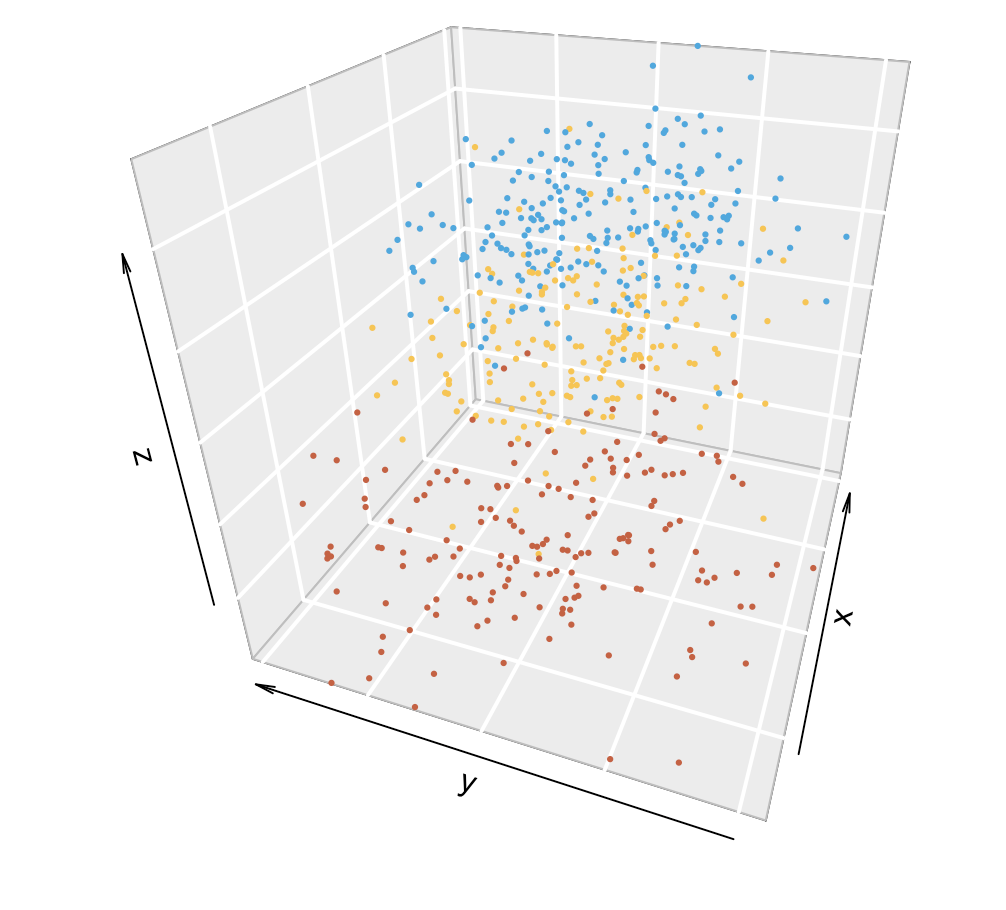}%
\caption{$t_U = 6$, $\text{accuracy} = 0.60$}
\label{12:3}
\end{subfigure}
\begin{subfigure}{.49\columnwidth}
\includegraphics[width=\columnwidth]{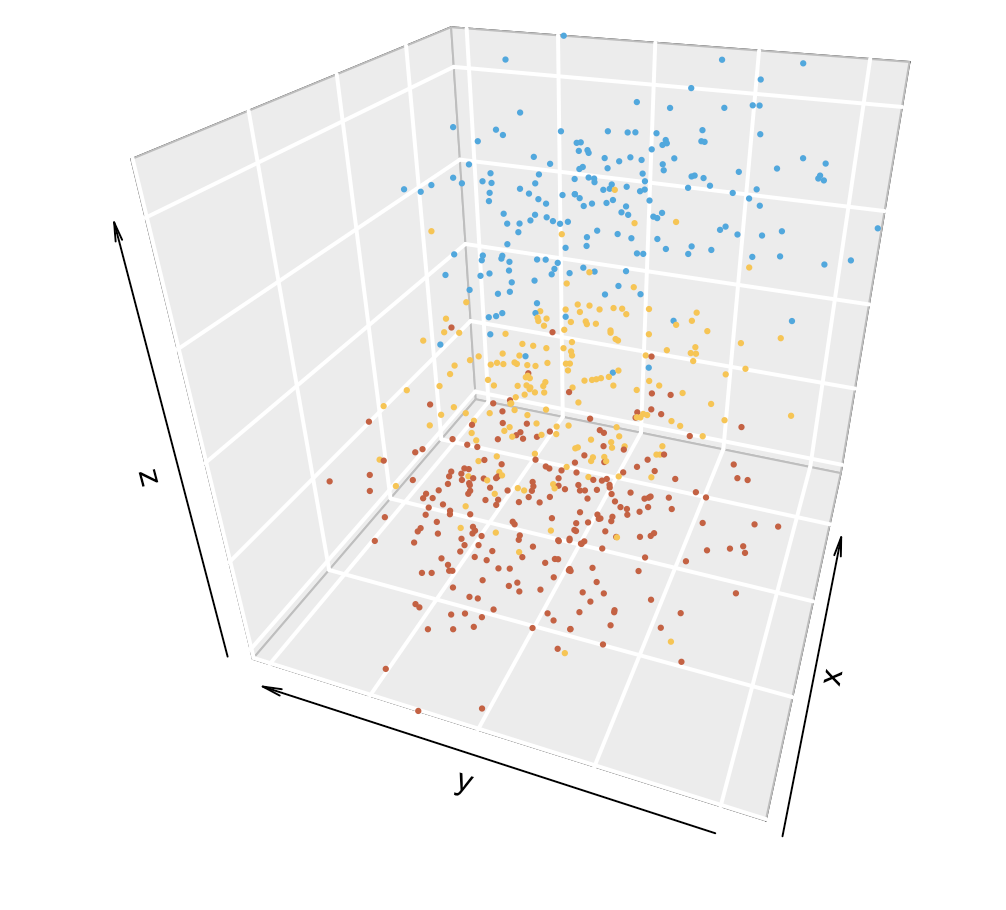}%
\caption{$t_U = 8$, $\text{accuracy} = 0.58$}
\end{subfigure}
\begin{subfigure}{.49\columnwidth}
\includegraphics[width=\columnwidth]{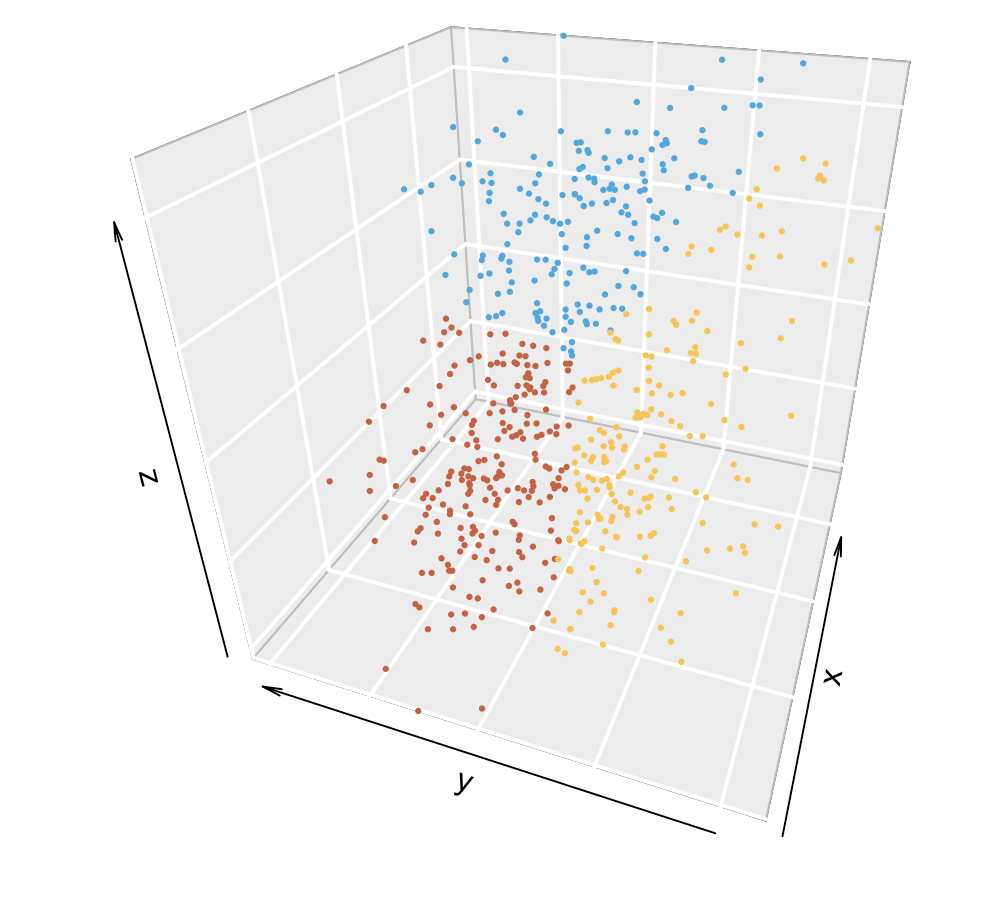}%
\caption{$t_U = 8$, recovery result}
\end{subfigure}
\caption{Visualizations of the SVD-based node2vec embeddings (first row) and original node2vec embeddings (second row) for different choices of $t_U$. The plots are for a single realization of a SBM graph on $n = 600$ vertices with block probabilities matrix $\M B_1$ (see eq.~\eqref{eq:Bmat_simulations}), sparsity $\rho_n = 3n^{-1/2}$, and block assignment probabilities $\bm{\pi} = (0.3, 0.3, 0.4)$. The embeddings in panels (a)--(c) and (e)--(g) are colored using the true membership assignments while the embeddings in panels (d) and (h) are colored using the $K$-means clustering.
Accuracy of the recovered memberships (by $K$-means clustering) are also reported for panels 
(a)--(c) and (e)--(g).}
\label{f:embd:sbm}
\end{figure*}

%\begin{remark}
%  \upshape{Our rationale for choosing these specific values for $\M B_1$ and $\M
%B_2$ are as follows. When $t \rightarrow \infty$, $\M W^t$ converges
%to $\frac{\bm{d}_{\M P}}{|\M P|}\bm{1}^\T$ where $\bm{d}_{\M P} =
%(d_1,\dots,d_n)^\T \in \mathbb{R}^n$ are the expected nodes'
%degrees. Thus, as $t_L$ and $t_U$ increase, $\M D_{\M P}^{-1}\M W^t$
%will get closer to $\bm{1}\bm{1}^\T/|\M P|$ and $\M M_0$ will get
%closer to $C\bm{1}\bm{1}^\T$ for some constant $C$; we note that the matrix $C
%\bm{1} \bm{1}^{\top}$  contains no useful information for community detection.
%
%Therefore, if $\bm{\pi}$ is relatively balanced, the speed at which
%$\M M_0$ converges to $C\bm{1}\bm{1}^\T$ as $t_L$ increases
%is determined by the second largest eigenvalue of $\M B \M D_{\M
%B}^{-1}$. Here $\M D_{\M B} = \mathrm{diag}\big(\M B
%\bm{1}\big)$. For the SBM with block probabilities matrix $\M B_1$ and
%$\M B_2$, this second largest eigenvalue is $0.309$ and $0.167$,
%respectively. We thus expect the model with block probabilities 
%$\M B_2$ to yield less accurate community detection when $t_L$ increases,
%compared to the model with block probabilities $\M B_1$. }
%\end{remark}

\noindent\textbf{Degree-Corrected Stochastic Blockmodel:} DCSBMs are direct
generalization of SBMs with the only difference being that each
node $i$ has a degree-correction parameter $\theta_i$ and that the
probability of connection between nodes $i$ and $j$ is
\begin{equation*}
p_{ij} = \theta_i \theta_j B_{\tau(i)\tau(j)}
\end{equation*}
instead of $p_{ij} = B_{\tau(i)\tau(j)}$ as in the case of
SBMs.  For more on DCSBMs and their inference, see
\cite{karrer2011stochastic,zhao2012consistency,gao2018community}. We generate the degree correction parameters $\theta_i$
as \bee\label{dg:cor}\theta_i = |Z_i| + 1 - (2\pi)^{-1/2}, \,\, Z_1,  \dots, Z_n \overset{\mathrm{iid}}{\sim}
\mathcal{N}(0,0.25)\ee
This procedure for generating $\theta_i$ is the same as that in \cite{gao2018community}. 

For each simulated graph, we test both the original node2vec and the SVD-based node2vec with $t_U = 5,6,7$. Other settings of the node2vec algorithms are similar to Section \ref{sec:erp}. The simulation results for the SBMs and the DCSMBs are presented in
Figure \ref{box:1} and Figure \ref{box:2} in the Supplementary File. We now summarize the main trend in these figures.

\begin{itemize} 
\item {The box plots when $\rho_n = 1$ (dense regime) have large
  interquartile ranges because there are a few replicates where, due to sampling variability, the simulated graphs are quite noisy and the community detection algorithm has low accuracy while for most of the remaining graphs we achieved exact recovery using both the original and SVD-based node2vec algorithms. Furthermore if $\rho_n = 1$ then increasing the
  window size from $t_L = 2$ to $t_L > 2$ does not yields noticeable improvement in 
  accuracy for the SVD-based node2vec. The condition $t_L \geq 2$ in Theorem \ref{c1} (i) is thus sufficient for exact recovery in the dense regime. On the other hand, when $\rho_n\neq 1$, we see that the accuracy increases with $n$ as indicated by the large-sample results in Theorem \ref{c1}.
\item When $\rho_n \rightarrow 0$ faster (i.e., the network is more sparse), we need a larger $n$ to achieve the same level of accuracy. This is consistent with Theorem \ref{c1} as the convergence rate for $\hat{\bds{\mathcal{F}}}$ depends on $n \rho_n$. 
\item When $\M B = \M B_2$ the original node2vec and SVD-based node2vec have very similar accuracy and thus our theoretical analysis of SVD-based node2vec closely reflects the performance of the original node2vec. 
\item When $\M B = \M B_1$ the SVD-based node2vec has higher accuracy compared to the original node2vec. However the embeddings generated by these algorithms are still quite similar. A plausible reason for why the original node2vec has lower accuracy is because the downstream $K$-means clustering is sub-optimal for these embeddings. For example, comparing panels (c) and (d) in Fig.~\ref{f:embd:sbm} we can see that $K$-means clustering correctly recovers most of the membership assignments for embeddings from the SVD-based node2vec. In contrast, panels (g) and (h) in Fig.~\ref{f:embd:sbm} show that $K$-means clustering is less accurate for embeddings from the original node2vec. Indeed, when replacing $K$-means with Gaussian mixtures model (GMM) \cite{scrucca2016mclust,fraley2002model} in panels (g) and $(h)$ of Fig.~\ref{f:embd:sbm} we increase the clustering accuracy from $0.58$ to $0.84$ which is close to that of $0.88$ for the SVD-based node2vec (see Fig.~\ref{fig:improve} of the Supplementary File). }

\end{itemize}
\begin{figure*}[htbp]
\centering
\begin{subfigure}{.6\columnwidth}
\includegraphics[width=\columnwidth]{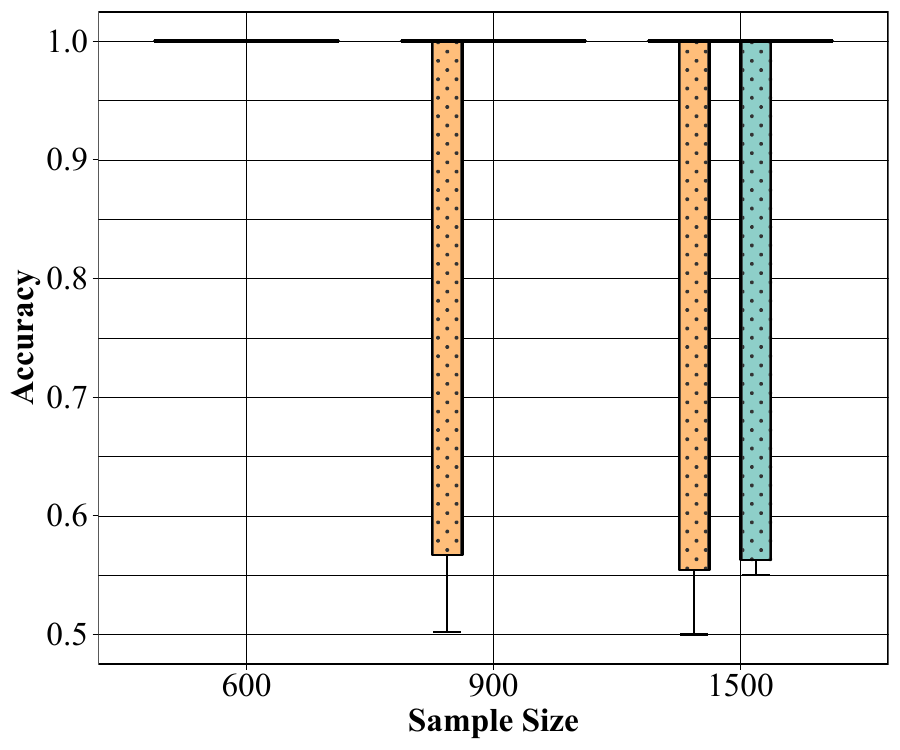} %
\caption{$\rho_n = 1$}
\label{dense:1}
\end{subfigure}
\begin{subfigure}{.6\columnwidth}
\includegraphics[width=\columnwidth]{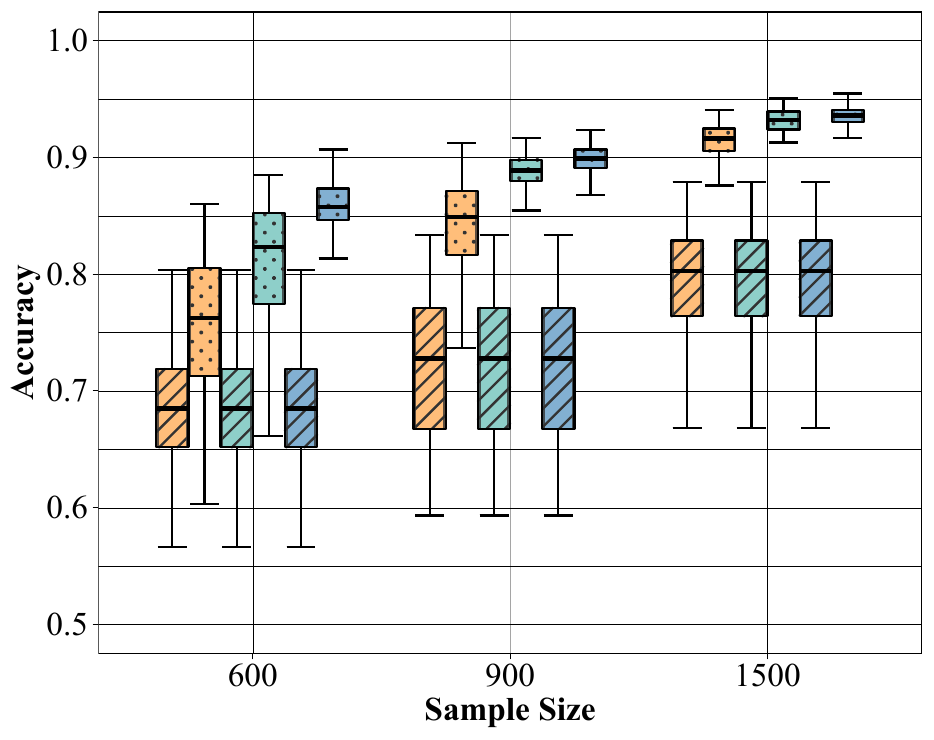}%
\caption{$\rho_n = 3n^{-1/2}$}
\label{12:1}
\end{subfigure}
\begin{subfigure}{.6\columnwidth}
\includegraphics[width=\columnwidth]{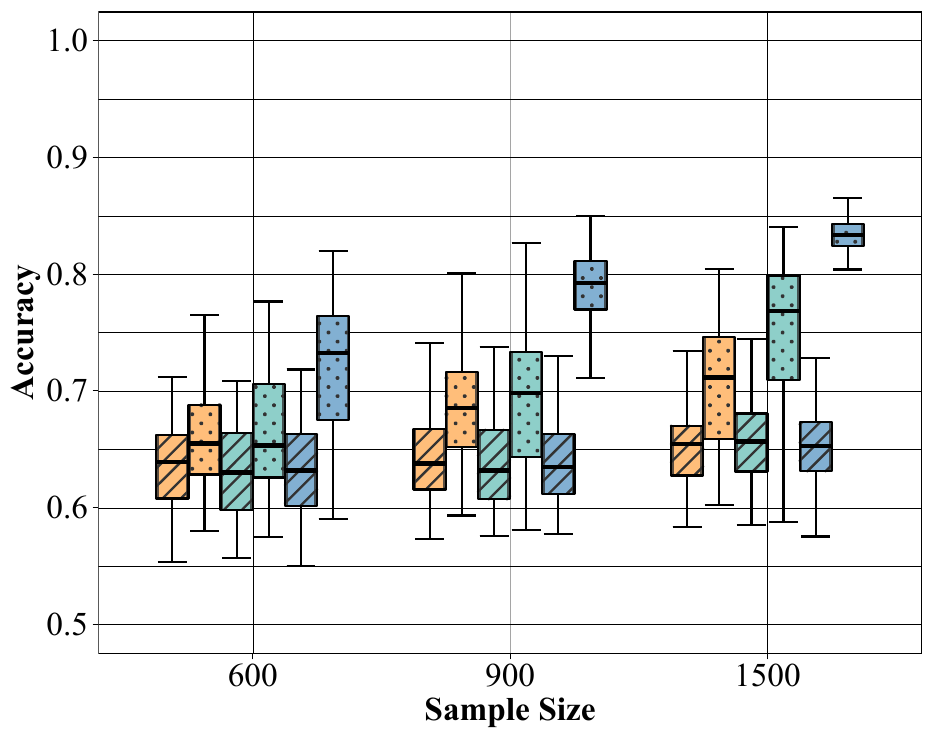}%
\caption{$\rho_n = 6n^{-2/3}$}
\end{subfigure}
\par
\centering
\begin{subfigure}{.6\columnwidth}
\includegraphics[width=\columnwidth]{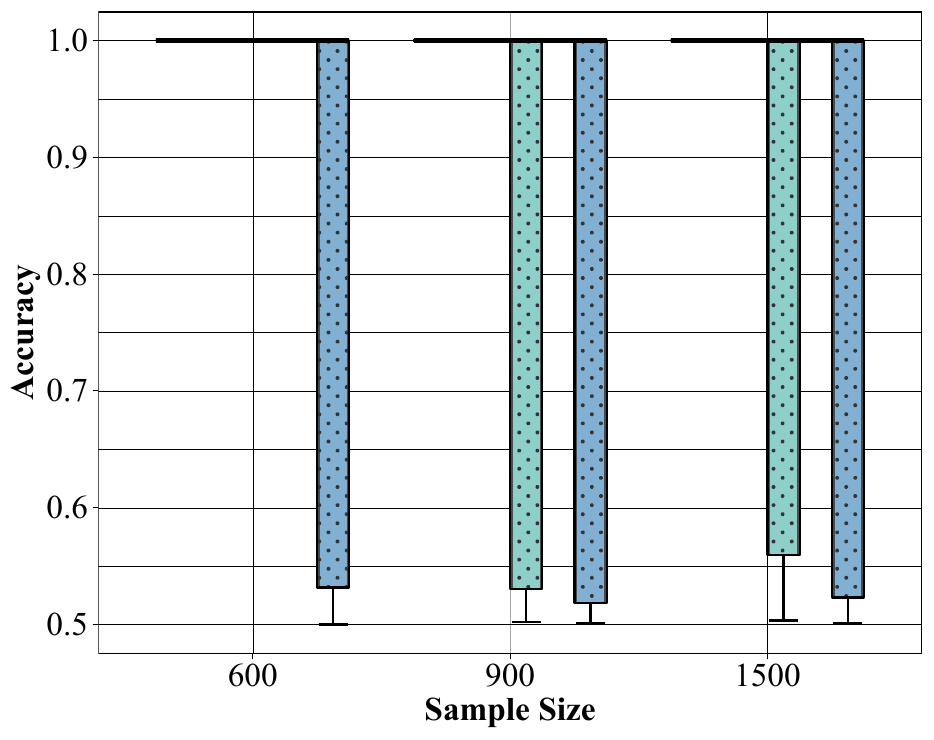}%
\caption{$\rho_n = 1$}
\label{dense:2}
\end{subfigure}
\begin{subfigure}{.6\columnwidth}
\includegraphics[width=\columnwidth]{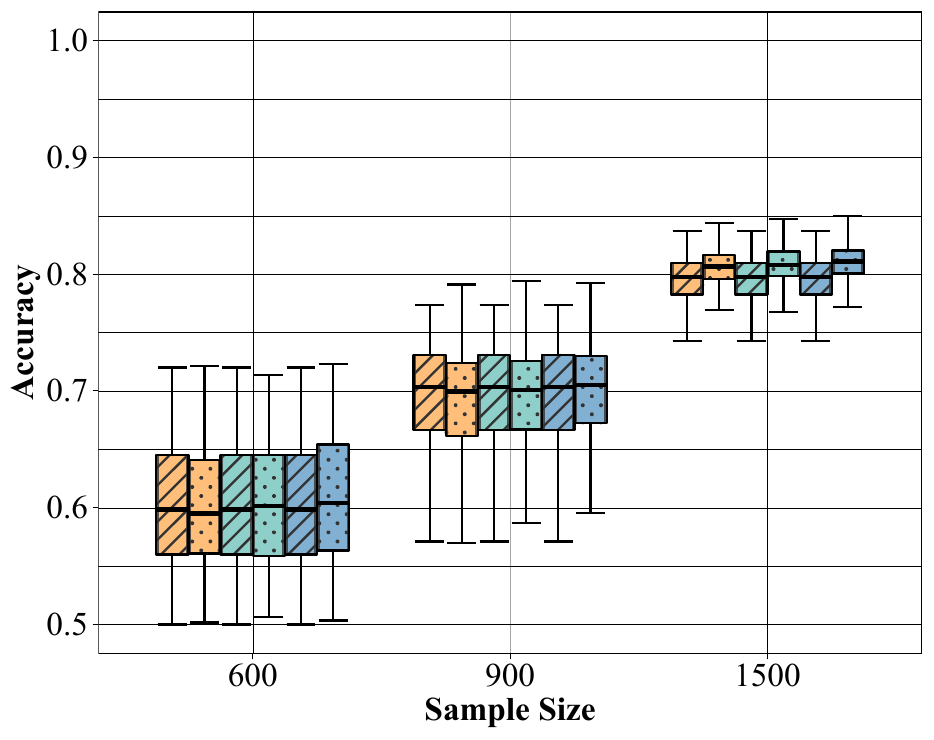}%
\caption{$\rho_n = 3n^{-1/2}$}
\label{12:2}
\end{subfigure}
\begin{subfigure}{.6\columnwidth}
\includegraphics[width=\columnwidth]{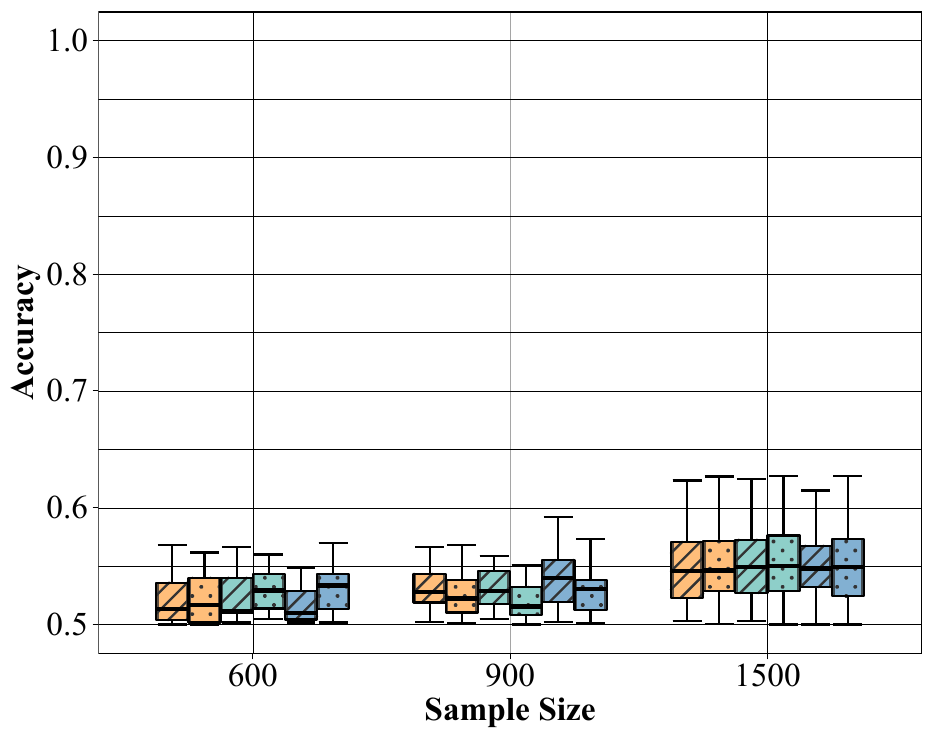}%
\caption{$\rho_n = 6n^{-2/3}$}
\label{subfig:special2}
\end{subfigure}
\caption{Community detection accuracy of node2vec followed by
    $K$-means for SBM graphs. The
    boxplots of the accuracy for each value of $n, \rho_n$ and $
    t_U$ are based on $100$ Monte Carlo replications. Boxplots with the slash pattern (resp. dot pattern) summarized the results for the original (resp. SVD-based) node2vec. Different colors (yellow, green, blue) represent the algorithms implemented for different choices of $t_U \in \{5,6,8\}$. The first and second row
    plot the results when the block probabilities for the SBM is $\M B_1$ and $\M
    B_2$, respectively.}
\label{box:1}
\end{figure*}

\section{Applications to real-world networks}\label{sec:real}
We test the membership recovery performance of node2vec  on three real-world networks, namely, the Zachary's karate graph (henceforth, ZK) \cite{zachary1977information}, political blogs graph (henceforth, PB) \cite{adamic2005political}, and  Wikipedia graph (henceforth, WIKI) \cite{sussman2012consistent}.  In each of the three graphs, the memberships of all vertices have been assigend baed on specific real-world meanings without missing. Both ZK and PB contain 2 communities, while WIKI contains 6 communities. ZK is connected with 34 vertices. By conventions \cite{karrer2011stochastic,sussman2012consistent}, we ignore the directions of edges and focus on the largest connected components of PB and WIKI, which contain 1222 and 1323 vertices, respectively. We refer interested readers to the references above for more detailed information about the three real-world network datasets.
\par
For each network dataset, we embed the vertices  into the $K$-dimensional Euclidian space through both the SVD-based and original node2vec, and then cluster the embeddings by $K$-means to estimate the memberships of each vertex;  $K$ is chosen as the exact number of memberships in each graph. We test three window sizes $t_U \in \{ 10,15,20\}$. Similar to Section \ref{sec:simu}, we set $t_L = t_U -5$ for the SVD-based node2vec and $t_L = 1$ for the original node2vec by default. To measure the membership recovery performances, we calculate the accuracies between the estimated memberships and the real memberships for ZK and PB; see the definition of accuracy in Eq.~\eqref{def:acc}. For WIKI, because the criteria of accuracy becomes computationally inflexible, we alternatively use the adjusted rand index (ARI). Similar to the accuracy, $\text{ARI} = 1$ indicates the estimated memberships perfectly recover the real memberships, while  $\text{ARI} = 0$ indicates the estimated memberships are assigned randomly. We also compare  performances of node2vec algorithms with other popular spectral embedding algorithms, including the spectral clustering based on adjacency and normalized Laplacian \cite{von2007tutorial,rohe2011spectral,sussman2012consistent}, and the spectral clustering with projection onto the sphere \cite{modell2021spectral}; for all methods we use $K$-means for the downstream clustering. \par
 The recovery results are summarized in Table \ref{tablereal}. The SVD-based and original node2vec algorithms have similar performances, which are generally better than or equivalently to other methods in all three datasets. In addition, we note the PB dataset is better modeled as a DCSBM \cite{karrer2011stochastic}. Recall that, as shown in Remark \ref{rk:dcsbm}, node2vec can theoretically attain exact recovery for DCSBMs and hence the high-accuracy of node2vec on the PB dataset is expected. Similarly, \cite{modell2021spectral} shows a valid theoretical guarantee of the spectral clustering with a spherical projection, when applying to the DCSBM graph. This can also be verified by the high accuracy of ASE+SP on PB  as shown in Table \ref{tablereal}.
 \begin{table*}[htbp]
\centering\sffamily
\renewcommand{\theadfont}{\normalsize}
\setcellgapes{0.8ex}\makegapedcells
\begin{tabular}{|*{10}{c|}}
\hline
\multirowthead{1}{Network}  & \multicolumn{3}{c|}{SVD-based node2vec} & 
\multicolumn{3}{c|}{Original node2vec}
&\multicolumn{1}{c|}{ASE}&\multicolumn{1}{c|}{LSE} & \multicolumn{1}{c|}{ASE+SP} \\
\cline{2-7}
 &$t_U = 10$& $t_U = 15$ & $t_U = 20$ &$t_U = 10$&$t_U = 15$&$t_U = 20$&&&\\
\hline
\text{ZK}&0.97&0.97&0.97&0.97&0.97&0.97&1.00&0.97& 0.97
\\
\hline
\text{PB}&0.96&0.95&0.95&0.96&0.95&0.95&0.61&0.51&0.95
\\
\hline
\end{tabular}
{\vskip 5mm} 
\centering
 \begin{tabular}{|*{10}{c|}}
\hline
\multirowthead{1}{Network}  & \multicolumn{3}{c|}{SVD-based node2vec} & 
\multicolumn{3}{c|}{Original node2vec}
&\multicolumn{1}{c|}{ASE}&\multicolumn{1}{c|}{LSE}& \multicolumn{1}{c|}{ASE+SP}  \\
\cline{2-7}
 &$t_U = 10$& $t_U = 15$ & $t_U = 20$ &$t_U = 10$&$t_U = 15$&$t_U = 20$&&&\\
\hline
\text{WIKI}&0.09&0.10&0.08&0.09&0.10&0.11&0.03&0.09&0.09
\\
\hline
\end{tabular}
\caption{The upper table reports the membership recovery accuracy of different embedding methods on the ZK and PB network datasets. The lower table reports the ARI  of different embedding methods on the WIKI network dataset. ASE and LSE denote spectral clusterings using the truncated eigendecomposition of the adjacency and normalized Laplacian matrix \cite{von2007tutorial,rohe2011spectral,sussman2012consistent}, respectively. ASE+SP denote spectral clustering using the truncated eigendecomposition of the adjacency matrix together with a spherical projection step \cite{modell2021spectral,ng_jordan_weiss}.}
\label{tablereal}
\end{table*}
\section{Discussion}\label{sec:disc}
In this paper we derive perturbation bounds and show exact recovery for the DeepWalk and node2vec
(with $p = q = 1$) algorithms under the assumption that the observed graphs are instances
of the stochastic blockmodel graphs. Our results are valid under both the dense
and sparse regimes for sufficient large $t_L$ and $n$.
%	 % We explore the large-sample property of random-walk based embedding through both
%	% theoretical and empirical studies.
%	Under the sparse regime, we further
%	demonstrate that %(a) under certain sparsity of the network, there is a phase
%	%transition phenomenon as the window sizes increase and (b) 
%	as the network
%	becomes sparser there is a need for larger window sizes in order to
%	guarantee a tight error bound between the embeddings of the observed
%	but noisy adjacency matrix and that of the true but {\em unknown} edge probabilities
%	matrix. 
The simulation results corroborate our theoretical findings; 
in particular they show that increasing the sample size and window size
can improve the community detection accuracy for both sparse SBM and
DCSBM graphs. % as long as the $t$-steps random walk using the true edge
%probability matrix does not converge to the stationary distribution
%too quickly %, i.e., $\M D_{\M P}^{-1}\M W^t$ does not converge to
%$\bds{1}\bds{1}^\T/|\M P|$ too quickly 
%as $t$ increases.

We emphasize that our paper only include real data analysis on simple graphs with a small number of nodes 
just to illustrate the agreement between our theoretical results and the empirical performance of DeepWalk/node2vec. This is intentional as DeepWalk
and node2vec are widely-used algorithms with numerous
papers demonstrating their uses for analyzing real
graphs in diverse applications. In contrast, our paper is one of a few
that addresses the theory underpinning these algorithms and is, to the best of our knowledge, the
first paper to establish consistency and exact recovery for SBMs and DCSBMs
using these random-walk based embedding algorithms. Note that exact recovery for SBMs can also be achieved using other algorithms such as those based on semidefinite programming, variational Bayes, and spectral embedding; see \cite{abbe2017community,gao_optimal,lyzinski2014perfect} for a few examples.  
% already been shown in vast literatures.
% \par

There are several open questions for future research: 
\begin{enumerate}
  \item
  In this paper we only consider the case of $p=q=1$ for node2vec
  embedding (recall that $p = q = 1$ is the default parameter values for
  node2vec). If $p \not = 1$ and/or $q \not = 1$ then the transformed co-occurrence matrix 
  $\tilde{\M M}_0$ can no longer be expressed in
  terms of the adjacency matrix $\M A$ or the transition matrix
  $\hat{\M W}^t$;
  this renders the theoretical analysis for general values of $p$ and
  $q$ substantially more involved. % Indeed, when $p \not = q$ then the
  % random walk transitions are determined by the previous two vertices
  % the current and last visited vertices. It could also be identified
  % by an oriented edge $(u,v), u,v \in \M E$. Thus, 
  One potential approach to this problem is to consider,
  similar to the notion of the {\em non-backtracking matrix} in
  community detection for sparse SBM \cite{bordenave2015non}, a
  transition matrix associated with the edges of $\mathcal{G}$ as
opposed to the transition matrix associated with the vertices in
  $\mathcal{G}$.  Indeed, if $p \not = q$ then the transition probability from a vertex $v$ to another vertex $w$ depends also on the vertex, say $u$, preceding $v$ in the random walk. i.e.,
  the transition probability for $(v,w)$ depends on the choice of $(u,v)$.
  % An example of the use of transition matrices  is the {\em
  %   non-backtracking} matrix used in community
  % detection for very sparse graph, see e.g.,
  % \cite{bordenave2015non}.
  % We emphasize, however, that the matrix
  % associated with $p \not = q$ do allow for back-tracking
  % with graph that describes the edge connectivity relationship instead
  % of vertex connectivity. 
  % Indeed, when $p \not = q$, the random walk transition from a given vertex
  % $v$ to a vertex $w$ depends also on the vertex, say $u$, preceding $v$, i.e.,
  % the probability transition along the edge $(v,w)$ depends on the edge $(u,v)$
  % %
  % the current and last visited vertices. It could also be identified
  % % by an oriented edge $(u,v), u,v \in \M E$. Thus, 
  % like {\em non-backtracking} matrix
  % (see e.g., \cite{bordenave2015non}); 
\item In this paper we focus on error bounds (in Frobenius and infinity norms) of node2vec/DeepWalk
  embedding for stochastic blockmodel graphs and their degree-corrected variant. An important question is whether
  or not stronger limit results are available for these
  algorithms. For example spectral embeddings of stochastic
  blockmodel graphs obtained via eigendecompositions of either the adjacency
  or the normalized Laplacian matrices are well-approximated by mixtures of
  multivariate Gaussians; see \cite{grdpg,tang2018limit} for more
  precise statements of these results and their implications for
  statistical inference in networks. It is thus natural to inquire if
  normal approximations also holds node2vec/Deepwalk. 
  % which has been well studied on adjacency spectral embedding \cite{grdpg} and Laplacian spectral embedding \cite{tang2018limit}, is also of interest for node2vec/DeepWalk embedding, to derive further statistical inferential applications. 
%
We ran several one-round simple simulations to visualize the embeddings of
node2vec/DeepWalk when the graphs are sampled from a SBM with 
\bee\label{Bpidef}
\M B = \Bigl(\begin{smallmatrix}0.42 & 0.42
  \\ 0.42 & 0.5\end{smallmatrix}\Bigr)\text{ and }\bds{\pi} =
(0.4,0.6).
\ee The results are summarized in Fig.~\ref{f:embd:sbm} in the Supplementary File. In particular when $n$ is large these embeddings are also well-approximated by a mixture of multivariate Gaussians.
%Figure~\ref{fig:CLT_1} plots the embeddings of
%several stochastic blockmodel graphs with parameters $\M B$ and
%$\bds{\pi}$ as $n \in \{1000,1500,2000,3000\}$ varies; the window size is set to $(t_L, t_U) = (2,5)$. As another
%example, Figure~\ref{fig:CLT_2} plots the embeddings of a DCSBM with the same
%parameters and degree-correction factors $\{\theta_i\}$ generated
%according to Eq.~\eqref{dg:cor}. 
% , we present the embeddings of node2vec/DeepWalk for this simulation via scatter plots with empirical $95\%$ curves (dashed lines); (2) Consider DCSBM with $\M B = \tilde{\M B},\bds\pi = (0.4,0.6)$ and the degree correction parameters $\theta_i$ are generated via Eq.~\eqref{dg:cor}. We also set window size $(t_L,t_U) = (2,5)$. In Figure \ref{fig:CLT_2}, we present the embeddings of node2vec/DeepWalk for this simulation via paired scatter plots.
%Both Figure \ref{fig:CLT_1} and Figure~\ref{fig:CLT_2} suggest
%that for large samples the node2vec/DeepWalk
%embeddings of SBM and DCSBM graphs are approximately mixtures of
%multivariate Gaussians
We leave the theoretical justification of this phenomenon for future work.
\item
As we allude to in the introduction, for simplicity we only consider (degree-corrected) stochastic blockmodel graphs
    in this paper. For the more general inhomogeneous Erd\H{o}s-R\'{e}nyi random graphs
model, we expect that Theorem~\ref{T2} and Theorem~\ref{T3} 
still
    hold, provided that the edge probabilities are sufficiently
homogeneous, i.e., the minimum and maximum values for the edge
probabilities values are of the same order as $n$
increases. However, the error bounds in Theorem~\ref{c1} 
    might no longer apply since %the structure of the true but
    %unobserved edge probabilities matrix together with 
    the entry-wise logarithmic transformation of the co-occurrence matrices can lead
    to the setting wherein $\M M_0$ is no longer low-rank, e.g., the rank of $\M M_0$ can be as large as $n$ the number of vertices. Furthermore, even when $\M M_0$ have an approximate low-rank structure, due to the logarithmic transformation there is still the question of how the embedding of $\M M_0$ relates to the underlying latent structure in $\M P$.
%    
%    the eigenvalues of $\M M_0$ 
%    exhibit small eigengaps for which (1) no approximate low-rank
%    representations exists and (2) matrix perturbations results will not
%    yield meaningful upper bounds.

  \item Finally, in this paper we mainly focus on the node2vec and DeepWalk embedding through
 matrix factorization (SVD-based node2vec), but also compare the SVD-based node2vec with the original node2vec in the numerical experiments. As we mentioned in the introduction the original node2vec algorithm
 uses (stochastic) gradient descent (GD/SGD) to optimize Eq.~\eqref{SKIP_NS}
 and obtain the embeddings. As Eq.~\eqref{SKIP_NS} is non-convex there can be a large number of
 local-minima, thereby making the theoretical analysis intractable unless we assume that the initial estimates for GD/SGD are sufficiently close to the global minima; see
 e.g., \cite{chi2019nonconvex,sgd_convergence} for some examples of results relating the closeness of the initial estimates and the convergence rate of GD/SGD. One popular initialization scheme for GD/SGD is via spectral methods and thus we can consider using the SVD-based embedding $\hat{\bds{\mathcal{F}}}$ as a ``warm-start`` for Eq.~\eqref{SKIP_NS}. We leave the precise convergence analysis of the resulting GD/SGD iterations to the interested reader. We note, however, that while this is certainly an interesting technical problem, the practical benefits might be limited. Indeed, the theoretical results in Section~\ref{sec:3} guaranteed perfect recovery using $\hat{\bds{\mathcal{F}}}$ while the empirical evaluations in Sections~\ref{sec:erp} and Section~\ref{sec_simulation_embedding} suggest that clustering based on $\hat{\bds{\mathcal{F}}}$ is comparable or even better than that of the original node2vec. In other words as the main objective is to recover the structure in $\mathbf{M}_0$ induced by $\M P$, it is certainly possible that optimizing Eq.~\eqref{SKIP_NS} does not lead to better inference performance due to the noise in using $\M A$ as a replacement for $\M P$. 
 \end{enumerate}
\bibliographystyle{IEEEtran}
\bibliography{ref.bib}
\newpage
\begin{IEEEbiography}[{\includegraphics[width=1in,height=1.25in,clip,keepaspectratio]{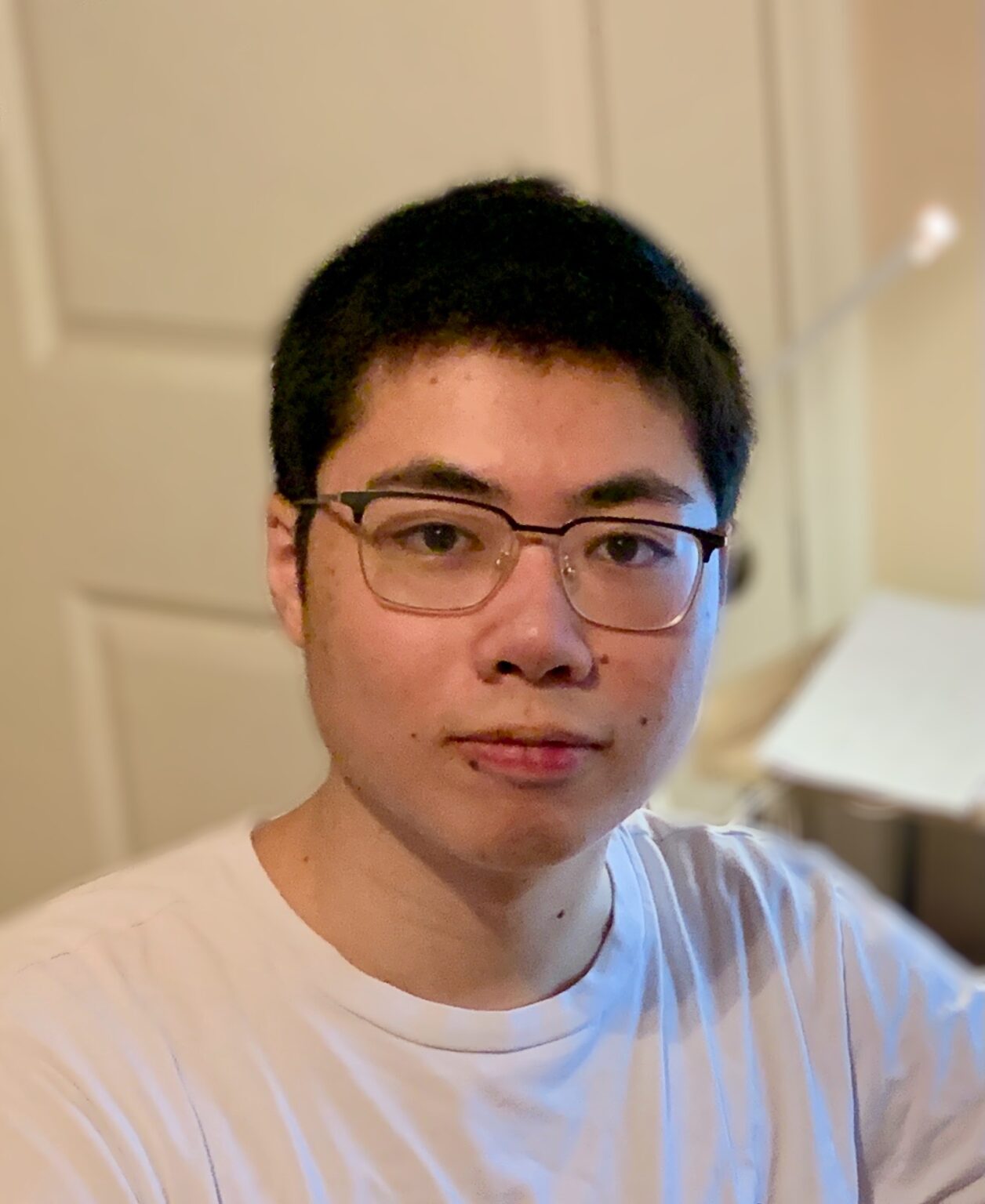}}]{Yichi Zhang}
Yichi Zhang received the BS degree in mathematics from Sichuan University in 2018. He is currently a fifth year PhD student in the Department of Statistics at North Carolina State University. His research interests include statistical inference on random graphs, high-dimensional statistics, and causal inference.
\end{IEEEbiography}
\begin{IEEEbiography}[{\includegraphics[width=1in,height=1.25in,clip,keepaspectratio]{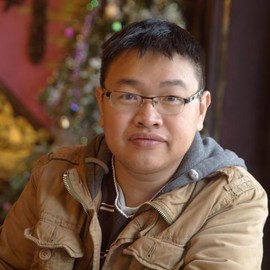}}]{Minh Tang}
    Minh Tang received the BS degree from Assumption University, Thailand, in 2001, the MS degree from the University of Wisconsin, Milwaukee, in 2004, and the PhD degree from Indiana University, Bloomington, in 2010, all in computer science. He is currently an assistant professor in the Department of Statistics at North Carolina State University. His research interests include statistical pattern recognition, dimensionality reduction, and high-dimensional data analysis.
\end{IEEEbiography}
\enlargethispage{-5in}
\newpage

\appendices
\onecolumn
\renewcommand\thesection{\Alph{section}}
\setcounter{section}{0}

\renewcommand{\thefigure}{\Alph{section}\arabic{figure}}
\setcounter{figure}{0}

\renewcommand{\thetable}{\Alph{section}\arabic{table}}
\setcounter{table}{0}

\renewcommand{\theequation}{\Alph{section}.\arabic{equation}}
\setcounter{equation}{0}

\renewcommand{\thelemma}{\Alph{section}\arabic{lemma}}
\setcounter{lemma}{0}

\renewcommand{\thetheorem}{\Alph{section}\arabic{theorem}}
\setcounter{theorem}{0}

\renewcommand{\theremark}{\Alph{section}\arabic{remark}}
\setcounter{remark}{0}

\makeatletter
\@addtoreset{theorem}{section}
\@addtoreset{lemma}{section}
\@addtoreset{remark}{section}
\makeatother

\section{Proofs under the dense regime}\label{app:densepf}
In this section we provide proofs of our main theorems under the dense regime of $\rho_n = \Theta(1)$. This is done for ease of exposition as the proofs for the sparse regime follows the same conceptual ideas and proof structure but with substantially more involved and tedious technical derivations.  
We first list two basic lemmas that will be used repeatedly in the subsequent proofs. Proofs of these lemmas are deferred to
Appendix~\ref{app:lemma}.
%lemma1
\begin{lemma}
  \label{l1}
Let $\bds d = (d_1,\dots,d_n)$ and $\bds p = (p_1,\dots,p_n)$. Then for any integer $t>0$ we have 
\bee
\M 1_{n}\tp \hat{\M W}^t  = \M 1_{n}\tp  {\M W}^t  = \M 1_{n}\tp, \,\, \hat{\M W}^t \bds{d} = \bds{d}, \,\, \M W^t \bds{p} = \bds{p}.
\ee
\end{lemma}

\begin{lemma}
  \label{l:dense:rate}
Under Assumption~\ref{am:s}, we have 
\begin{eqnarray}\nonumber
\ &\|\M D_{\M A}\|_{} = \max_{i} d_i = \op(n),
\\\nonumber
&\|\M D_{\M A}^{-1}\|_{} = \max_{i} 1/d_i = \op(1/n),
\\\label{l4:res:dp}
&\|\M D_{\M A} - \M D_{\M P}\|_{} = \max_{i}{|d_i - p_i|} = \op
                      (n^{1/2} \log^{1/2}{n}),
\\\nonumber
&\|\M D^{-1}_{\M A} - \M D^{-1}_{\M P}\|_{} = \max_{i}{|d_i^{-1} -
              p_i^{-1}|} = \op\Bigl(\frac{\log^{1/2}{n}}{n^{3/2}}\Bigr),
\\\nonumber
&\label{l4:res}
\|\hat{\M W}^{t}\|_{\max} = \op(n^{-1}),  
\\\nonumber
&\max_{i,{i'}} w^{(t)}_{ii'} \asymp 1/n, \, \min_{i,{i'}} w^{(t)}_{ii'} \asymp 1/n,
\end{eqnarray}
for any fixed $t \geq 1$. Here $w^{(t)}_{ii'}$ is the $ii'$th entry of $\M W^t$.
\end{lemma}

\subsection{Proof of Theorem~\ref{T2} (Dense regime)}\label{a:p:t2}
We use the following three steps to bound Eq.~\eqref{T2:oneorder}-Eq.~\eqref{T2main4} in turn.

\noindent\textbf{Step 1 (Bounding $\|\hat{\M W} - \M W\|_{\max}$):}
We start with the decomposition
\bee\label{T1:order_1_0}
\hat{\M W} - \M W &= \M A \M D_{\M A}^{-1} - \M P \M D_{\M P}^{-1} 
= \underbrace{\M A \M D_{\M P}^{-1} \M D_{\M A}^{-1}(\M D_{\M P} - \M D_{\M A})}_{\M \Delta^{(1)}_1} + \underbrace{(\M A - \M P)\M D_{\M P}^{-1}}_{\M \Delta^{(1)}_2}.
\ee
For the first term, we have
\begin{equation*}
\M \Delta^{(1)}_1 = \M A \M D_{\M P}^{-1} \M D_{\M A}^{-1}(\M D_{\M P} - \M D_{\M A}) = \Big[ \Big| \frac{a_{ii'}}{d_{i'} \cdot p_{i'}}(d_{i'} - p_{i'})\Big|\Big]_{n\times n}
\end{equation*}
and hence
\begin{equation*}
\| \M \Delta^{(1)}_1 \|_{\max} = \max_{i,i'} \Big| \frac{a_{ii'}}{d_{i'} \cdot p_{i'}}(d_{i'} - p_{i'})\Big| \precsim \max_{i'}\frac{1}{n}\Big| \frac{d_{i'} - p_{i'}}{d_{i'}}\Big|
\end{equation*}
by $|a_{ii'}| \leq 1$ and $c_0 <p_{ii'}<c_1$. Lemma~\ref{l:dense:rate}
then implies
\begin{equation}
\label{T1:order_1_1}
\begin{split}
\| \M \Delta^{(1)}_1 \|_{\max} &\precsim \frac{1}{n}\max_{i'}|d_{i'} - p_{i'}| \cdot \max_{i'}\frac{1}{d_{i'}} =
\op\big(n^{-3/2}\log^{1/2}{n}\big).
\end{split}
\end{equation}
For the second term we have, by Assumption~\ref{am:s}, that
\bee
\label{T1:order_1_2}
\|\M \Delta^{(1)}_2\|_{\max}  
&= \max_{i,{i'}}\Big| \frac{a_{ii'} - p_{ii'}}{p_{i'}}\Big| \leq \max_{i'} \frac{1}{p_{i'}} = \op\big(n^{-1}\big).
\ee
Combining Eq.~\eqref{T1:order_1_1} and Eq.~\eqref{T1:order_1_2} yields
%\bee\label{T21:res1}
\bee
\|\hat{\M W} - \M W\|_{\max}& \leq\!\|\M \Delta^{(1)}_1\|_{\max} \!+ \|\M
\Delta^{(1)}_2\|_{\max} 
%&=
%\op\big(n^{-3/2}\sqrt{\log n}\big) + %\op\big(n^{-1}\big) 
%\\
= \op\big(n^{-1}\big).
\ee

\noindent\textbf{Step 2 (Bounding $\|\hat{\M W}^2 - \M
  W^2\|_{\max,\mathrm{diag}}$, $\|\hat{\M W}^2 - \M
  W^2\|_{\max,\mathrm{off}}$):}
We first decompose $\hat{\M W}^2 - \M W^2$ as
\bee\label{T1:order_2_1}
\hat{\M W}^2 - \M W^2 = \underbrace{(\hat{\M W} - \M W )\M W}_{\M\Delta^{(2)}_1} + \underbrace{\hat{\M W} (\hat{\M W} - \M W)}_{\M\Delta^{(2)}_2}.
\ee
As with Eq~\eqref{T1:order_1_0} we have,
\bee
\M\Delta^{(2)}_1 &= (\M A \M D_{\M A}^{-1} - \M P \M D_{\M P}^{-1})\M W 
\\
&= \{\M A \M D_{\M P}^{-1} \M D_{\M A}^{-1}(\M D_{\M P} - \M D_{\M A}) + (\M A - \M P)\M D_{\M P}^{-1}\}\M W 
\\
&= \underbrace{\{\M A \M D_{\M P}^{-1} \M D_{\M A}^{-1}(\M D_{\M P} - \M D_{\M A}) - \M A \M D_{\M P}^{-2}(\M D_{\M P} - \M D_{\M A})\}\M W}_{\M \Delta^{(2,1)}_{1}} 
+ \underbrace{\M A \M D_{\M P}^{-2}(\M D_{\M P} - \M D_{\M A})}_{\M \Delta^{(2,2)}_{1}}\M W +  \underbrace{(\M A - \M P)\M D_{\M P}^{-1} \M W}_{\M \Delta^{(2,3)}_{1}}.
\ee
The $ii'$th element of $\M \Delta^{(2,1)}_{1}$ is given by
\bee
%&\sum_{i^* = 1}^{n}\Big\{\frac{a_{ii^*}}{d_{i^*} p_{i^*}}(p_{i^*} - d_{i^*}) - \frac{a_{ii^*}}{p_{i^*}^2}(p_{i^*} - d_{i^*})\Big\}\cdot w_{i^{*}{i'}} 
%\\
%&=\sum_{i^* = 1}^{n} \frac{a_{ii^*}(p_{i^*} - d_{i^*})}{p_{i^*}}\Big(\frac{1}{d_{i^*}} - \frac{1}{p_{i^*}} \Big)\cdot w_{i^*{i'}}
%\\
&\sum_{i^* = 1}^{n} \frac{a_{ii^*}(p_{i^*} - d_{i^*})^2}{p_{i^*}^2 d_{i^*}}\cdot w_{i^{*}{i'}}.
\ee
We therefore have, by Assumption~\ref{am:s} and Lemma~\ref{l:dense:rate},
\bee
\|\M \Delta^{(2,1)}_{1}\|_{\max}
&\leq \max_{i,i'} \sum_{i^* = 1}^{n} \frac{a_{ii^*}(p_{i^*} - d_{i^*})^2}{p_{i^*}^2 d_{i^*}} w_{i^{*}{i'}} 
\leq n \max_{i}|p_i - d_i|^2 \BL\max_{i}\frac{1}{p_i^2 d_i}\BR(\max_{i,{i'}}w_{ii'})
= \op(n^{-2}\log n).
\ee
Similarly, for $\M \Delta^{(2,2)}_{1}$ we have
\bee\label{T1:d12}
\|\M \Delta^{(2,2)}_{1}\|_{\max} 
&= \max_{i,{i'}} \Big|\sum_{i^* = 1}^{n}\frac{a_{ii^*}}{p_{i^*}^2}(p_{i^*} - d_{i^*})w_{i^{*}{i'}}\Big|
\leq n(\max_{i} |p_i - d_i|) \cdot \max_{i,{i'}} w_{ii'} \cdot
\BL \max_{i}\frac{1}{p_i^2}\BR
=\op\big(n^{-3/2}\log^{1/2}{n}\big).
\ee
We now consider the term $\M \Delta^{(2,3)}_{1}$. We have
\bee
\|\M \Delta^{(2,3)}_{1}\|_{\max} 
&= \max_{i,{i'}}\Big|\sum_{i^* = 1}^n\frac{(a_{ii^*} - p_{ii^*})}{p_{i^*}}w_{i^{*}{i'}} \Big| 
%\\
%&= (n^{-2})\max_{i,{i'}}\Big|\sum_{i^* = 1}^n({a_{ii^*} - p_{ii^*}})\cdot(n^2\cdot w_{i^{*}{i'}}/{p_{i^*}})\Big|.
\ee
Assumption~\ref{am:s} and Lemma~\ref{l:dense:rate} then imply
\begin{equation*}
\max_{i',i^*} w_{i^{*}{i'}}/{p_{i^*}} \asymp n{-2},\,\, \text{and} \,\, \min_{i'i^*}w_{i^{*}{i'}}/{p_{i^*}} \asymp n^{-2}.
\end{equation*}
and hence, by Bernstein inequality, we have
\bee\label{T1:d22}
\max_{i,{i'}}\Big|\sum_{i^* = 1}^n\frac{(a_{ii^*} - p_{ii^*})}{p_{i^*}}w_{i^{*}{i'}}\Big| = \op(n^{-3/2} \log^{1/2}{n}).
\ee
%Therefore $\|\M \Delta^{(2,3)}_{1}\|_{\max} =
%\op(n^{-3/2}\sqrt{\log n})$. 
Combining the above bounds we obtain
\bee\label{T1:order_2_2}
\|\M \Delta_{1}^{(2)}\|_{\max} &\leq \|\M \Delta_{1}^{(2,1)}\|_{\max} + \|\M \Delta_{1}^{(2,2)}\|_{\max} + \|\M \Delta_{1}^{(2,3)}\|_{\max} 
= \op
\big(n^{-3/2}\log^{1/2}{n}\big).
\ee
We now consider the term $\M \Delta^{(2)}_2 = \hat{\M W}(\hat{\M W} -
\M W)$. We have
\bee
\|\M \Delta^{(2)}_2\|_{\max} &= \|(\M W - \hat{\M W})^2 - \M W(\M W - \hat{\M W})\|_{\max} 
\leq   \|(\M W - \hat{\M W})^2\|_{\max} + \|\M W(\M W - \hat{\M W})\|_{\max}.
\ee
The same argument for bounding $\|\M \Delta_2^{(1)}\|_{\max}$ as given above also yields
$$\|\M W(\M W - \hat{\M W})\|_{\max}  = \op(n^{-3/2}\sqrt{\log
  n}).$$
We then bound $\|(\M W - \hat{\M W})^2\|_{\max}$ through
the following expansion
\begin{equation}
\label{t:w:w-w2}
\begin{split}
\|(\M W - \hat{\M W})^2\|_{\max}
%= \max_{i,i'}\Big|\sum_{i^* = 1}^{n}(w_{ik} -
%\hat{w}_{ik})(w_{i^{*}{i'}} - \hat{w}_{k{i'}})\Big|
&= \max_{i,i'}\Big|\sum_{i^* = 1}^{n}\Big(\frac{p_{ii^*}}{p_{i^*}} -
\frac{a_{ii^*}}{d_{i^*}}\Big)
\Big(\frac{p_{i^*i'}}{p_{i'}} - \frac{a_{i^{*}{i'}}}{d_{i'}}\Big)\Big|
\\
&=\max_{i,i'}\Big|\sum_{i^* = 1}^{n}\Big\{\BL\frac{p_{ii^*}}{p_{i^*}} - \frac{a_{ii^*}}{p_{i^*}} \BR + \BL \frac{a_{ii^*}}{p_{i^*}} - \frac{a_{ii^*}}{d_{i^*}}
\BR\Big\}
\Big\{\BL\frac{p_{i^*i'}}{p_{i'}} - \frac{a_{i^{*}{i'}}}{p_{i'}} \BR + \BL \frac{a_{i^{*}{i'}}}{p_{i'}} - \frac{a_{i^{*}{i'}}}{d_{i'}}\BR\Big\}\Big|
\\
&\leq  \underbrace{\max_{i,{i'}}\Big|\sum_{i^{*}= 1}^n \BL \frac{a_{ii^*}}{p_{i^*}} - \frac{a_{ii^*}}{d_{i^*}}
\BR\BL\frac{p_{i^*i'}}{p_{i'}} - \frac{a_{i^{*}{i'}}}{p_{i'}} \BR\Big|}_{\delta^{(2,1)}_{2}} 
+\underbrace{\max_{i,{i'}}\Big|\sum_{i^* = 1}^{n}\Big(\frac{p_{ii^*}}{p_{i^*}} - \frac{a_{ii^*}}{d_{i^*}}\Big)\BL \frac{a_{i^{*}{i'}}}{p_{i'}} - \frac{a_{i^{*}{i'}}}{d_{i'}}\BR\Big|}_{ \delta^{(2,2)}_{2}}
\\
&+\underbrace{\max_{i,{i'}}\Big|\sum_{i^{*}= 1}^{n}\BL\frac{p_{ii^*}}{p_{i^*}} - \frac{a_{ii^*}}{p_{i^*}} \BR\BL\frac{p_{i^*i'}}{p_{i'}} - \frac{a_{i^{*}{i'}}}{p_{i'}} \BR\Big|}_{ \delta^{(2,3)}_{2}}.
\end{split}
\end{equation}
Since $|a_{ii'}| \leq 1$ and $|a_{ii'}-p_{ii'}|\leq 1$, we have
\begin{equation*}
  \begin{split}
\delta^{(2,1)}_2 &\leq  \max_{i^{'}}\BL\sum_{i^* =
  1}^{n}\Big|\frac{1}{p_{i^*}} - \frac{1}{d_{i^*}}\Big|
\frac{1}{p_{i'}} \BR
\precsim n\cdot \frac{1}{n}\max_{i^*}\BL\frac{|p_{i^*} -
  d_{i^*}|}{p_{i^*} d_{i^*}}
\BR %\\  & \precsim \max_{i^*}|p_{i^*} - d_{i^*}| \cdot \max_{i^*}
% \BL\frac{1}{p_{i^*}}\BR \cdot \BL\max_{i^*}\frac{1}{d_{i^*}}\BR =
= \op(n^{-3/2}\log^{1/2}{n}).
\end{split}
\end{equation*}
Similar reasoning also yields
\begin{equation*}
  \begin{split}
 \delta^{(2,2)}_{2}&\leq \max_{i,{i'}}\sum_{i^* = 1}^{n}\Big|\frac{p_{ii^*}}{p_{i^*}} - \frac{a_{ii^*}}{d_{i^*}}\Big| \cdot \Big| \frac{1}{p_{i'}} - \frac{1}{d_{i'}}\Big|
\\
&\leq n \BL \max_{i^*}\frac{1}{p_{i^*}} + \max_{i^*}\frac{1}{d_{i^*}}\BR\cdot \max_{i^{'}}\Big|\frac{1}{p_{i'}} - \frac{1}{d_{i'}} \Big|
\precsim n\cdot \big\{n^{-1} + \op(1/n)\big\} \cdot \op\big(n^{-3/2} \log^{1/2}{n}\big)
  = \op\big(n^{-3/2}\log^{1/2}{n}\big).
\end{split}
\end{equation*}
We now bound $\delta^{(2,3)}_{2}$ by considering the diagonal and
off-diagonal terms respectively. For the diagonal terms we have
\bee\label{t:d:d223}
\max_{i}\sum_{i^{*}= 1}^{n}\!\frac{(a_{ii*} - p_{ii*})^2}{p_{i*} p_i}
& \leq \max_{i} \!\sum_{i^* = 1}^{n}
\frac{1}{p_ip_{i^*}} 
\!=\mathcal{O}(n^{-1})\!
\ee
Eq.~\eqref{T1:order_2_2} together with the above bounds for $\delta_2^{(2,1)}$ through
$\delta_{2}^{(2,3)}$ yield
\bee\label{T22:res1}
\|\hat{\M W}^2 - \M W^2\|_{\max,\text{diag}} 
%\\
%& \leq \|\MD^{(2)}_1\|_{\max} + \|\MD^{(2)}_2\|_{\MDG}
%\\
%&\leq \|\MD^{(2)}_1\|_{\max} + \|\M W(\M W - \hat{\M W})\|_{\max} + \|(\M W - \hat{\M W})^2\|_{\MDG}
%\\
\leq \|\MD^{(2)}_1\|_{\max} + \|\M W(\M W - \hat{\M W})\|_{\max} 
+ \delta^{(2,1)}_{2}  + \delta^{(2,2)}_{2}  
+\max_{i} \sum_{i^{*}= 1}^{n}\frac{(p_{ii*} - a_{ii*})^2}{p_{i*}
  p_{i}} = \op(n^{-1}).
\ee
We now consider the off-diagonal terms with $i \not = i'$ for
$\delta_{2}^{(2,3)}$. First define
$$\zeta_{i^*}^{ii'} = \frac{1}{p_{i^*} p_{i'}} (p_{ii^*} -
a_{ii^*})(p_{i^*i'} - a_{i^*i'}).$$
We now make an important observation that if $i \not = i'$ then the collection of random variables
$\zeta_{i^*}^{ii'}$ for $i^{*} = 1,2\dots,n$ are {\em independent} mean
$0$ random variables. Indeed, when $i \not = i'$ then $a_{ii*}$ and
$a_{i*i'}$ are independent and hence
\begin{equation*}
\E\zeta^{ii'}_{i^{*}} = \frac{1}{p_{i^*} p_{i'}}\E(p_{ii^*} -
a_{ii^*})
\cdot \E(p_{i^{*}i'} - a_{i^*i'}) = 0.
\end{equation*}
We thus have
\bee\label{T1:off-d}
\max_{i \neq {i'}}\Big|\sum_{i^{*}= 1}^{n}\frac{(p_{ii^*} - a_{ii^*})(p_{i^*i'} - a_{i^{*}{i'}})}{p_{i^*}p_{i'}} \Big|
%\max_{i \neq {i'}}\Big|\sum_{i^{*}= 1}^{n}\BL\frac{p_{ii^*}}{p_{i^*}}
%- \frac{a_{ii^*}}{p_{i^*}} \BR\BL\frac{p_{i^*i'}}{p_{i'}} -
%\frac{a_{i^{*}{i'}}}{p_{i'}} \BR\Big| 
&= \max_{i \neq
  {i'}}\Big|\sum_{i^{*}= 1}^{n} \zeta_{i*}^{ii'} \Big|
 = \max_{i \not
  = i'} \Bigl|
\sum_{i^* = 1\atop i^* \neq i,{i'}}^{n}\zeta^{ii'}_{i^{*}}
+ \zeta^{ii'}_i + \zeta^{ii'}_{i'} \Bigr|.
\ee
% Define each item in (\ref{T1:off-d}) by $\tilde{\zeta}^{ii'}_{i^{*}} = \frac{1}{p_{i^*} p_{i'}}\BL p_{ii^*} - a_{ii^*} \BR\BL p_{{i'}i^{*}}-a_{i'i^*} \BR$. When $i \neq {i'}$ we have $a_{ii^*}$ and $a_{i'i^*}$ are independent and
% \bee\nonumber
% \E\tilde{\zeta}^{ii'}_{i^{*}} = \frac{1}{p_{i^*} p_{i'}}\E(p_{ii^*} - a_{ii^*}) \cdot \E(p_{{i'}i^{*}} - a_{i'i^*}) = \frac{1}{p_{i^*} p_{i'}}\cdot 0 \cdot 0 = 0.
% \ee
% And we could write,
% \bee\label{T1:bern_mean}
% \sum_{i^{*}= 1}^{n}\BL\frac{p_{ii^*}}{p_{i^*}} - \frac{a_{ii^*}}{p_{i^*}} \BR\BL\frac{p_{i^*i'}}{p_{i'}} - \frac{a_{i^{*}{i'}}}{p_{i'}} \BR = \Big(\sum_{i^* = 1\atop i^* \neq i,{i'}}^{n}\tilde{\zeta}^{ii'}_{i^{*}} \Big)+ \tilde{\zeta}^{ii'}_i + \tilde{\zeta}^{ii'}_{i'}.
% \ee
Now fix a pair $\{i,i'\}$ with $i \not = i'$. 
% When $i^* \neq i,{i'}$ and $i \neq {i'}$, all $\tilde{\zeta}^{ii'}_{i^*}$ are independent and $\E \tilde{\zeta}^{ii'}_{i^{*}} = 0$. 
Then by Bernstein inequality, we have, for any $\tilde{t} > 0$,
\bee\label{T1:bern}
\p\Big(\Big|\sum_{i^{*} = 1\atop i^{*} \neq
  i,{i'}}^{n}\zeta^{ii'}_{i^{*}}\Big|>\tilde{t} \Big)\leq 2 \exp\Big(-\frac{\tilde{t}^2}{2\sigma^2_1+\frac{M}{3}\tilde{t}}\Big)
\ee
where the variance proxy $\sigma_1^2$ is bounded as
\begin{equation*}
  \begin{split}
\sigma_1^2 &= \sum_{i^* = 1\atop i^*f\neq i,{i'}}^{n} \text{Var}(\zeta^{ii'}_{i^{*}})  
= \frac{1}{p_{i^*}^2p_{i'}^2}\sum_{i^* = 1\atop i^*\neq i,{i'}}^{n}\big\{ \text{Var}({p_{ii^*}} - {a_{ii^*}}) \cdot \text{Var}({p_{{i'}i^{*}}} - {a_{i'i^*}})\big\} 
\leq \frac{n}{16p_{i^*}^2p_{i'}^2}
\leq\frac{1}{16c_0^4n^3},
\end{split}
\end{equation*}
and $M$ is any constant bigger than $\max_{i^*} |\tilde{\zeta}^{ii'}_{i^{*}}|$. In particular 
\bee\nonumber
|\zeta^{ii'}_{i^{*}}| &= \Big | \BL\frac{p_{ii^*}}{p_{i^*}} - \frac{a_{ii^*}}{p_{i^*}} \BR\BL\frac{p_{i^*i'}}{p_{i'}} - \frac{a_{i^{*}{i'}}}{p_{i'}} \BR\Big |
\leq \frac{1}{c_0^2n^2} 
\ee
and we can take $M = (c_0 n)^{-2}$. Let $\vartheta_n = n^{-3/2} \log^{1/2}{n}$.
Plugging the above bounds for $\sigma_1^2$ and $M$ into
Eq.~\eqref{T1:bern}, we have, for any $C_1 > 0$, 
\begin{equation*}
  \begin{split}
\p\Big(\Big|\sum_{i^* = 1\atop i^* \neq
  i,{i'}}^{n}\tilde{\zeta}^{ii'}_{i^{*}}\Big|>C_1 \vartheta_n \Big)
  &
\leq 2\exp\Big\{-\frac{\big(C_1 \vartheta_n\big)^2}{
  \frac{2}{16c_0^4n^3} +\frac{1}{3c_0^2 n^2}\big(C_1
  \vartheta_n\big)}\Big\}
= 2\exp\Big(-\frac{C_1\log n}{\frac{1}{8c_0^4}+\frac{\log^{1/2}{n}}{n^{1/2}3c_0^2C_1}}\Big)
\precsim n^{-8C_1c_0^4}.
\end{split}
\end{equation*}
Now choose $C_1$ such that $8C_1 c_0^4 > 5$. 
We then have, by a union bound over all pairs $\{i,i'\}$ with $i \not = i'$, that
\bee\nonumber
\p\Big(\max_{i \neq {i'}}\Big|\sum_{i^* = 1\atop i^* \neq
  i,{i'}}^{n}\tilde{\zeta}^{ii'}_{i^{*}}\Big|>C_1
n^{-3/2}\log^{1/2}{n}\Big)
%&\precsim (n^2 - n)\cdot
%n^{-8C_1c_0^4} 
%\\
&\precsim n^{2 - 8C_1c_0^4} 
\precsim n^{-3}.
\ee
We thus conclude
% $\max_{i \neq {i'}}|\sum_{i^* = 1\atop i^* \neq i,{i'}}^{n}\tilde{\zeta}^{ii'}_{i^{*}}\Big| = \op(n^{-3/2}\sqrt{\log(n)})$. By (\ref{T1:xiii'}), we could also see $\max_{i,{i'}}|\tilde{\zeta}^{ii'}_{i^{*}}| = \mathcal{O}(n^{-2})$ and thus
\begin{equation*}
  \begin{split}
\max_{i \neq {i'}}\Big|\sum_{i^{*}= 1}^{n}\frac{(p_{ii^*} - a_{ii^*})(p_{i^*i'} - a_{i^{*}{i'}})}{p_{i^*}p_{i'}} \Big|
\leq \max_{i \neq {i'}}\Big|
\sum_{i^* = 1\atop i^* \neq i,{i'}}^{n}\zeta^{ii'}_{i^{*}}
\Big| + 2 M 
%&= \op(n^{-3/2}\sqrt{\log n}) + \mathcal{O}(n^{-2}) 
%\\
= \op(n^{-3/2}\log^{1/2}{n}).
\end{split}
\end{equation*}
A similar argument to Eq.~\eqref{T22:res1} then yields
\bee\label{T22:res2}
\|\hat{\M W}^2 - \M W^2\|_{\max,\text{off}} 
& \leq \|\MD^{(2)}_1\|_{\max} + \|\M W(\M W - \hat{\M W})\|_{\max} + \delta^{(2,1)}_{2} + \delta^{(2,2)}_{2}  
+\max_{i \neq {i'}}\Big|\sum_{i^{*}=
  1}^{n}\BL\frac{p_{ii^*} - a_{ii^*}}{p_{i^*}}
\BR\BL\frac{p_{i^*i'} - a_{i^{*}{i'}}}{p_{i'}} \BR
\Big| 
\\
&=\op(n^{-3/2} \log^{1/2}{n}).
\ee

\noindent\textbf{Step 3 (Bounding $\|\hat{\M W}^{t} - \M W^{t}\|_{\max},  t\geq 3$):}
We first consider $t = 3$. We have  
\bee\label{T2:s3f1}
\|\HW^3 - \M W^3\|_{\max} 
\leq \|\big(\HW^2 - \W^2\big)\HW\|_{\max} + \|\W^2\big(\HW - \W\big)\|_{\max}
\ee
For the first term on the RHS of Eq.~\eqref{T2:s3f1} we have, by
Eq.~\eqref{T22:res1}, Eq.~\eqref{T22:res2} and Lemma~\ref{l:dense:rate}, that
\bee\label{T2:s3f2}
\|(\HW^2 - \W^2)\HW\|_{\max}
\leq n \|\HW\|_{\max}\|\HW^2 - \W^2\|_{\max,\text{off}} 
+ \|\HW\|_{\max}\|\HW^2 - \W^2\|_{\max,\text{diag}}
= \op(n^{-3/2}\log^{1/2}{n}).
\ee
For the second term in the RHS of Eq.~\eqref{T2:s3f1} we use the
same argument as that for bounding $\|(\HW - \W)\W\|_{\max}$ in
{Step 2}. In particular we have
\bee\label{T2:s3f3}
\|\W^2(\HW - \W)\|_{\max} = 
\op(n^{-3/2} \log^{1/2}{n}).
\ee
Combining Eq.~\eqref{T2:s3f1} through Eq.~\eqref{T2:s3f3} yields
\bee\label{T2:s3res1}
\|\HW^3 - \W^3\|_{\max} = 
\op(n^{-3/2} \log^{1/2}{n}).
\ee
The case when $t = 4$ is analogous. More specifically,
\bee\label{T2:s3res2}
\|\HW^4 - \W^4\|_{\max}
&= \|(\HW^2 - \W^2)\HW^2\|_{\max} + \|\W^2(\HW^2 - \W^2)\|_{\max}
\\ &\leq n\|\HW^2 - \W^2\|_{\max,\text{off}}(\|\HW^2\|_{\max}+\|\W^2\|_{\max}) 
+\|\HW^2 - \W^2\|_{\max,\text{diag}}(\|\HW^2\|_{\max}+\|\W^2\|_{\max}) 
\\
& = \op(n^{-3/2}\log^{1/2}{n})
\ee
We now consider a general $t \geq 5$. We start with the decomposition
\bee\nonumber
\hat{\M W}^{t} - \M W^{t} 
&= (\hat{\M W}^{t} - \M W^2\hat{\M W}^{t -2}) + (\M W^2 \hat{\M W}^{t - 2} - \M W^{t})
= \underbrace{(\HW^2 - \W^2)\HW^{t -2}}_{\M\Delta^{(3)}_1} + \underbrace{\W^2(\HW^{t - 2} - \W^{t - 2})}_{\M\Delta^{(3)}_2},
\ee 
% which implies
% \bee\label{T23:main}
% \| \HW^{t} - \W^{t}\|_{\max} \leq \|\M\Delta^{(3)}_1\|_{\max} + \|\M\Delta^{(3)}_2\|_{\max}.
% \ee
We now have, by Lemma~\ref{l:dense:rate},
Eq.~\eqref{T22:res1}, and Eq.~\eqref{T22:res2}, that
\bee\label{T23:res1}
\|\M \Delta^{(3)}_1\|_{\max} 
&\leq n \|\HW^2 - \W^2\|_{\max,\text{off}}\|\HW^{t -2}\|_{\max} 
+ \|\HW^2 - \W^2\|_{\max,\text{diag}}\|\HW^{t -2}\|_{\max}  
%\\
%&=\op(n \cdot n^{-3/2} \log^{1/2}{n} \cdot n^{-1}) + \op(n^{-1} \cdot
%n^{-1}) 
%\\
%&
= \op(n^{-3/2}\log^{1/2}{n}).
\ee
Once again, by Lemma~\ref{l:dense:rate}, we have
\bee\label{T23:res2}
\|\M \Delta^{(3)}_2\|_{\max} &\leq n \|\M W^2\|_{\max}\|\hat{\M W}^{t - 2} - \M W^{t - 2}\|_{\max} 
\precsim \|\hat{\M W}^{t - 2} - \M W^{t - 2}\|_{\max}.
\ee
Combining Eq.~\eqref{T23:res1} and Eq.~\eqref{T23:res2}, we have
% (\ref{T23:main}), (\ref{T23:res1}),  (\ref{T23:res2}) and repeat the above arguments also for $\|\hat{\M W}^{t - 2} - \M W^{t - 2}\|_{\max}, \|\hat{\M W}^{t - 4} - \M W^{t - 4}\|_{\max},\dots$, then we could see
\bee\label{T23:ite}
&\|\HW^{t} - \W^{t}\|_{\max} 
\precsim \op(n^{-3/2}\log^{1/2}{n}) + \|\hat{\M W}^{t - 2} - \M W^{t - 2}\|_{\max}
% \\
% &= \op(n^{-3/2}\sqrt{\log n}) + \mathcal{O}(1)\|\hat{\M W}^{t - 4} - \M W^{t - 4}\|_{\max} = \dots
\ee
As $t$ is finite, iterating the above argument yields
\bee\nonumber
\|\HW^{t} - \W^{t}\|_{\max} 
\precsim \begin{cases}
\op(n^{-3/2}\log^{1/2}{n}) + \|\hat{\M W}^{4} - \M
W^{4}\|_{\max} &
 \text{(when $t$ is even)} \\
\op(n^{-3/2} \log^{1/2}{n}) + \|\hat{\M W}^3 - \M W^3\|_{\max} 
&
 \text{(when $t$ is odd)}
\end{cases}
\ee
% when $t$ is even and
% \bee\nonumber
% \|\HW^{t} - \W^{t}\|_{\max}= \op(n^{-3/2}\sqrt{\log(n)}) + \mathcal{O}(1)\|\hat{\M W}^3 - \M W^3\|_{\max},
% \ee
% when $t$ is odds number. 
Recalling Eq.~\eqref{T2:s3res1} and Eq.~\eqref{T2:s3res2}, we conclude that
\bee\nonumber
\|\HW^{t} - \W^{t}\|_{\max} = \op(n^{-3/2} \log^{1/2}{n}) \quad \text{for all $t \geq 3$}
\ee
as desired. \qed
\subsection{Proof of Theorem~\ref{T3} (Dense regime)}
We will only present the proof of bounding $\|\tilde{\mathbf{M}}_0 -
\mathbf{M}_0 \|_{\F}$ and $\|\tilde{\mathbf{M}}_0 -
\mathbf{M}_0 \|_{\infty}$ here as the rates for $\|\M M_0\|_{\F}$ and $\|\M M_0\|_{\infty}$ are derived
in the proof of Theorem~\ref{c1}, Eq.~\eqref{MMFF}.
Under the dense regime of Assumption~\ref{am:s} and for a sufficiently
large $n$, the diameter of $\mathcal{G}$ is $2$ with high probability,
see e.g., Corollary 10.11 in \cite{bollobas2001random}. Therefore,
with high probability, all entries of $\hat{\M W}^2$ are
positive and hence
\bee\nonumber
\tilde{\mathbf{M}}_0 &= \log \Bigl\{\frac{2|\M A|}{\kappa \gamma}\sum_{t =
  t_L}^{t_U}(L - t)\cdot\Big(\M D_{\M A}^{-1}\hat{\M W}^t\Big)\Bigr\} 
%\\
%&-
%\log(k) \bm{1} \bm{1}^{\top},
\ee
is well-defined with high probability.
% \par
% So $\tilde{\M M}_0$ is well-defined almost surly. By the definition
% of $\tilde{\M M}_0$ and
Next recall the definition of $\M M_0$ given in Eq.~\eqref{truth}. We
have
\bee\label{T7:df1}
\tilde{\M M}_0 - \M M_0 &= 
\log\Big\{\underbrace{{|\M A|}{}\sum_{t = t_L}^{t_U}(L - t)\cdot\Big(\M D_{\M A}^{-1}\hat{\M W}^t\Big)}_{\M I_{\M A}}\Big\} 
- \log\Big\{\underbrace{{|\M P|}{}\sum_{t = t_L}^{t_U}(L - t)\cdot\Big(\M D_{\M P}^{-1}{\M W}^t\Big)}_{\M I_{\M P}}\Big\}.
\ee
By the mean value theorem, the absolute value of $ii'$th entry in $\tilde{\M M}_0 - \M M_0$ is 
\bee\nonumber
% &\Big|\log\Big[\underbrace{\sum_{t = t_L}^{t_U}(L - t)\Big(\sum_{ii'}{a_{ii'}}\Big)\cdot\Big(\frac{\hat{w}_{ii'}^{(t)}}{d_i}\Big)}_{I_{\M A}^{ii'}}\Big] - \log\Big[\underbrace{\sum_{t = t_L}^{t_U}(L - t)\Big(\sum_{ii'}{p_{ii'}}\Big)\cdot\Big(\frac{{w}_{ii'}^{(t)}}{p_i}\Big)}_{I^{ii'}_{\M P}}\Big]\Big|
% \\
% &
\frac{1}{\alpha_{ii'}}\underbrace{\Bigg|\sum_{t = t_L}^{t_U}(L -
  t)\cdot\Big\{|\M A|
  \cdot\Big(\frac{\hat{w}_{ii'}^{(t)}}{d_i}\Big)-|\M P| \cdot\Big(\frac{{w}_{ii'}^{(t)}}{p_i}\Big)\Big\}\Bigg|}_{I^{ii'}_{\M A} - I^{ii'}_{\M P}}
\ee
where $\alpha_{ii'}\in (I^{ii'}_{\M A},I ^{ii'}_{\M P})$ and $I^{ii'}_{\M A}$
and $I^{ii'}_{\M P}$ are the $ii'$th entry of $\M I_{\M A}$ and $\M
I_{\M P}$, respectively. We therefore have
\begin{gather*}
\|\tilde{\M M}_0 - \M M_0\|_{\max,\text{off}} \leq\max_{i \not =
  i'}\Big(\frac{1}{\alpha_{ii'}}\Big)\cdot \|\M I_{\M A} - \M I_{\M
  P}\|_{\max,\text{off}}, \\
\|\tilde{\M M}_0 - \M M_0\|_{\max,\text{diag}} \leq\max_{i}\Big(\frac{1}{\alpha_{ii}}\Big)\cdot \|\M I_{\M A} - \M I_{\M P}\|_{\max,\text{diag}}.
\end{gather*}
% and we bound $\|\tilde{\M M}_0 - \M M_0\|_{\max,\text{off}}$ and $\|\tilde{\M M}_0 - \M M_0\|_{\max,\text{diag}}$ by bounding $\max_{i,i'}\Big(\frac{1}{\alpha_{ii'}}\Big)$, $\|\M I_{\M A} - \M I_{\M P}\|_{\max,\text{off}}$ and $\|\M I_{\M A} - \M I_{\M P}\|_{\max,\text{diag}}$ separately.
% \\
% \par
We now bound $\alpha_{ii'}^{-1}$, $\|\M I_{\M A} - \M I_{\M
  P}\|_{\max,\text{off}}$ and $\|\M I_{\M A} - \M I_{\M
  P}\|_{\max,\text{diag}}$.

\noindent\textbf{Step 1 (Bounding $\max_{ii'} \alpha_{ii'}^{-1}$):} 
As $\alpha_{ii'} \in (I_{\M A}^{ii'}, I_{\M P}^{ii'})$ we have
\bee\label{T3:S11}
\max_{i,i'}\frac{1}{\alpha_{ii'}}\leq \max_{i,i'}\Big\{\frac{1}{I^{ii'}_{\M A}},\frac{1}{I^{ii'}_{\M P}}\Big\}.
\ee
We first bound $(I_{\M P}^{ii'})^{-1}$. In particular
\bee\nonumber
\frac{1}{I_{\M P}^{ii'}} = \frac{1}{\sum_{t = t_L}^{t_U}(L -
  t)|\M P|\cdot\Big(\frac{{w}_{ii'}^{(t)}}{p_i}\Big)}\leq
\frac{p_i}{|\M P|w^{(t_L)}_{ii'}}
\ee
as $L - t \geq 1$ for all $t_L \leq t \leq t_U \leq L - 1$. 
Now there exists constants $c_0$ and $c_1$ such that $c_0\leq p_{ii'}\leq c_1$ for all $(i,i')\in[n]^2$. Therefore, by Lemma~\ref{l:dense:rate}, we have 
\bee\label{T3:S12}
\max_{i,i'}\frac{1}{I^{ii'}_{\M P}} \precsim \frac{n}{n^2\cdot \frac{1}{n}} = 1.
\ee
We then bound $(I^{ii'}_{\M A})^{-1}$. We have
\bee\nonumber
\frac{1}{I^{ii'}_{\M A}} = \frac{1}{\sum_{t = t_L}^{t_U}(L -
  t)|\M A|\cdot\Big(\frac{\hat{w}_{ii'}^{(t)}}{d_i}\Big)} \leq
\frac{d_i}{|\M A| \hat{w}_{ii'}^{(t_L)}}
\ee
First suppose $i \not = i'$. Then by Theorem~\ref{T2}, we have
$$\max_{i \neq {i'}}|w^{(t_L)}_{ii'} - \hat{w}^{(t_L)}_{ii'}| =
\op(n^{-3/2} \sqrt{\log n})$$
which implies, whp, that
\bee\label{pf:d:ii'}
0\leq {w^{(t_L)}_{ii'} - \max_{i \neq {i'}}|w^{(t_L)}_{ii'} - \hat{w}^{(t_L)}_{ii'}|} & & \text{for all } %0 \leq i,{i'}\leq n, 
i \neq {i'}.
\ee
 As $\min_{i,i'}w^{(t_L)}_{ii'} \asymp \max_{i,i'}w^{(t_L)}_{ii'}\asymp n^{-1}$ we also have
 $\hat{w}_{ii'}^{(t_L)} \asymp n^{-1}$ for all $i \not = i'$. 
 Hence, by Lemma~\ref{l:dense:rate}, we have
% and Lemma~\ref{l:dense:rate}, we have $\max_{i \neq
% {i'}}|w^{(t_L)}_{ii'} - \hat{w}^{(t_L)}_{ii'}| = \op(n^{-3/2}
% \sqrt{\log(n)})$ and $\min_{i,i'}w^{(t_L)}_{ii'},
% \max_{i,i'}w^{(t_L)}_{ii'}\asymp n^{-1}$ for any fixed $2\leq t_L$,
% which implies w.h.p., \bee\label{pf:d:ii'} 0\leq {w^{(t_L)}_{ii'} -
% \max_{i \neq {i'}}|w^{(t_L)}_{ii'} - \hat{w}^{(t_L)}_{ii'}|} & &
% \text{for all } 0 \leq i,{i'}\leq n, i \neq {i'} \ee as $n \rightarrow
% +\infty$, and thus w.h.p., $\max_{i \neq {i'}}\frac{1}{w^{(t_L)}_{ii'}
% - \max_{i \neq {i'}}|w^{(t_L)}_{ii'} - \hat{w}^{(t_L)}_{ii'}|} =
% \frac{1}{\min _{i\neq i'}\big(w^{(t_L)}_{ii'}\big) - \max_{i \neq
% {i'}}|w^{(t_L)}_{ii'} - \hat{w}^{(t_L)}_{ii'}|} = \op(n)$. So w.h.p.,
\bee\label{T3:S13}
\max_{i \neq {i'}}\frac{1}{I^{ii'}_{\M A}} &\leq \max_{i\neq
  i'}\frac{d_i}{|\M A| \big\{\hat{w}^{(t_L)}_{ii'}}
\precsim \max_i d_i \cdot \max_{i} \frac{1}{n d_i} \cdot \max_{i \neq {i'}}\frac{1}{\hat{w}^{(t_L)}_{ii'}}
= \op(1).
\ee
Now suppose that $i = i'$. Then for $t_L = 2$, we have
\bee\nonumber
\frac{1}{\hat{w}_{ii}^{(2)}} = \frac{1}{\sum_{i' =
    1}^{n}\hat{w}_{ii'}\hat{w}_{i'i}} &=
\frac{1}{\sum_{i' = 1}^n\frac{a_{ii'}}{d_{i'}}\frac{a_{i'i}}{d_i}} 
= \frac{d_i}{\sum_{i' = 1}^n a_{ii'}/d_{i'}}.
\ee
Once again, by Lemma~\ref{l:dense:rate}, we have
\bee\nonumber
\max_{i}\frac{1}{\hat{w}_{ii}^{(2)}} &\leq \frac{\max_i d_i}{\frac{1}{\max_i
    d_i}\sum_{i' = 1}^n a_{ii'}}
    \leq {(\max_i d_i)^2}  \cdot\max_i
    \frac{1}{d_i} = \op(n), 
\ee
and hence
\bee
\nonumber \max_{i}\frac{1}{I^{ii}_{\M A}}&\precsim \max_{i}d_i \cdot \max\frac{1}{nd_i}  \cdot \max_i\frac{1}{\hat{w}_{ii}^{(2)}} = \op(1).
\ee
If $t_L \geq 3$, then 
$\max_{i}|{w}_{ii}^{(t_L)} - \hat{w}_{ii}^{(t_L)}| =
\op\big(n^{-3/2}\log^{1/2}{n}\big)$
(see Theorem~\ref{T2}) and hence, using the same argument as that for deriving
Eq.~\eqref{T3:S13}, we also have $\max_{i}({I^{ii}_{\M A}})^{-1} =
\op(1)$. In summary, we have % Together with Eq.~\eqref{T3:S11}, Eq.~\eqref{T3:S12}) and
% Eq.~\eqref{T3:S13}, we have
\bee\label{T3:S1f}
\max_{i,i'}\frac{1}{\alpha_{ii'}} = \op(1).
\ee

\noindent\textbf{Step 2 (Bounding $\|\M I_{\M A} - \M I_{\M P}\|_{\max,\mathrm{off}}$):} 
We start with the inequality
%$\|\M I_{\M A} - \M I_{\M P}\|_{\max,\mathrm{off}}$ is equal to,
\bee\label{T2:S22}
\max_{i \neq {i'}}|I_{\M A}^{ii'} - I_{\M P}^{ii'}| 
&= \max_{i \neq
  {i'}}\Big|\sum_{t = t_L}^{t_U}(L - t)\Big\{|\M
A|\Big(\frac{\hat{w}_{ii'}^{(t)}}{d_i}\Big)-|\M P|\Big(\frac{{w}_{ii'}^{(t)}}{p_i}\Big)\Big\}\Big|
\leq\sum_{t = t_L}^{t_U} (L - t) \max_{i \neq
  {i'}}\Bigl||\M A|\Big(\frac{\hat{w}_{ii'}^{(t)}}{d_i}\Big)-|\M P| \Big(\frac{{w}_{ii'}^{(t)}}{p_i}\Big)\Big|.
\ee
We now bound each of the summand in the RHS of the above
display. Consider a fixed value of $t \geq 2$. % The bounds for
% $t > 2$ are identical and follow the same argument. 
% F
% and give the bound of $\max_{i \neq {i'}}\Big|\Big(\sum_{ii'}{a_{ii'}}\Big)\cdot\Big(\frac{\hat{w}_{ii'}^{(2)}}{d_i}\Big)-\Big(\sum_{ii'}{p_{ii'}}\Big)\cdot\Big(\frac{{w}_{ii'}^{(2)}}{p_i}\Big)\Big|$. For all the items in RHS of (\ref{T2:S22}) with $t_L \leq t \leq t_U$, the argument should be the same. We have
We have
% \bee\nonumber
% |\M A| \frac{\hat{w}_{ii'}^{(2)}}{d_i} = (|\M A| - |\M P|)
% \frac{w^{(2)}_{ii'}}{d_i} + |\M P| \frac{w^{(2)}_{ii'}}{d_i} + |\M A| \cdot \frac{\hat{w}_{ii'}^{(2)} - w_{ii'}^{(2)}}{d_i}
% \ee
% which implies
\bee\label{T3:S21}
&\max_{i \neq {i'}}\Big||\M
A|\frac{\hat{w}_{ii'}^{(t)}}{d_i}- |\M P|
\frac{{w}_{ii'}^{(t)}}{p_i}\Big| 
\leq (|\M A| - |\M P|) \cdot \max_{i \neq
  {i'}}\Big| \frac{w^{(t)}_{ii'}}{d_i}\Big| + |\M P| \max_{i \neq
  {i'}}\Big| \frac{(p_i - d_i) w^{(t)}_{ii'}}{p_i d_i} \Big| + |\M A| \max_{i \neq {i'}}\Big|\frac{\hat{w}_{ii'}^{(t)} - w_{ii'}^{(2)}}{d_i}\Big|.
\ee
By Lemma~\ref{l:dense:rate}, the first term in RHS of Eq.~\eqref{T3:S21} is bounded as
\bee\label{T3:S23}
&(|\M A| - |\M P|) \max_{i \neq {i'}} \Big|
\frac{w^{(t)}_{ii'}}{d_i}\Big| 
\leq n\max_{i} |d_i - p_i|
\cdot \max_{i,{i'}} w_{ii'}^{(t)} \cdot \max_{i}\frac{1}{d_i} 
=
\op(n^{-1/2}\log^{1/2}{n}).
\ee
%{\color{orange} I believe the bound in the following is loose. Note that here we need $t_L$ to be large enough to show the following bound. But if we consider the case that $t_L = 1$, we roughly is studying $\M D_A^{-1}\hat{\M W} =\M D_A^{-1} \M A \M D_A^{-1}$, which is the graph Laplacian and is good enough for embedding! But our following bound rules out the $t_L = 1$ case since bound for $\hat{\M W}$ is not good...... This makes our condition for $t_L$ restrictive. 
%\par
%Our proof reveals that it is the term with small $t$ in $\M I_A$ makes $\tilde{\M M}_0$ not precise. However, our simulation shows that each term in $\M I_A$ even with $t = 1$ is good for community recovery, if we run SVD on themselves directly. The reason that we observe small $t_L$ will make the final embedding bad, is the sum matrix of the terms with $t = 2$ and $t = 3$ will have very bad embedding performance and thus finally makes the resulted embedding bad.
%\par 
%}
The second term in the RHS of Eq.~\eqref{T3:S21} is also bounded by
Lemma~\ref{l:dense:rate} as
\bee\label{T3:S24}
|\M P|\max_{i \neq {i'}}\Big|\frac{(p_i - d_i)w^{(t)}_{ii'}}{p_i d_i} \Big| & \leq |\M P|\cdot \max_{i} \Big|\frac{p_i - d_i}{p_id_i}\Big|\cdot \max_{i,i'} w^{(t)}_{ii'}
%\precsim n^2 \cdot \op(n^{1/2}\sqrt{\log n}/n^2)\cdot n^{-1} 
%=
= \op(n^{-1/2}\log^{1/2}{n})
\ee
The third term is bounded by Theorem~\ref{T2} and Lemma~\ref{l:dense:rate} as
\bee\nonumber
|\M A|  \max_{i \neq {i'}}\Big|\frac{\hat{w}_{ii'}^{(t)} -
  w_{ii'}^{(t)}}{d_i}\Big|
 \leq |\M A| \max_{\i \neq {i'}}|\hat{w}_{ii'}^{(t)} - w^{(t)}_{ii'}|\cdot \max_{i}\frac{1}{d_i}
& = %\op(n^2)\cdot \op(n^{-3/2}\sqrt{\log n})\cdot \op\big(n ^{-1}\big) 
%\\
%&
\op\big(n^{-1/2}\log^{1/2}{n}\big).
\ee
% Sum the result above,
% \bee\nonumber
% \max_{i \neq {i'}}\Big|\Big(\sum_{ii'}{a_{ii'}}\Big)\cdot\Big(\frac{\hat{w}_{ii'}^{(2)}}{d_i}\Big)-\Big(\sum_{ii'}{p_{ii'}}\Big)\cdot\Big(\frac{{w}_{ii'}^{(2)}}{p_i}\Big)\Big| = \op(n^{-1/2}\sqrt{\log(n)}).
% \ee
% For other $\max_{i \neq {i'}}\Big|\Big(\sum_{ii'}{a_{ii'}}\Big)\cdot\Big(\frac{\hat{w}_{ii'}^{(t)}}{d_i}\Big)-\Big(\sum_{ii'}{p_{ii'}}\Big)\cdot\Big(\frac{{w}_{ii'}^{(t)}}{p_i}\Big)\Big|$ with $t>2$, the bound is also hold. Then (\ref{T2:S22}) leads to
Collecting the above terms and summing over the finite values of ${\color{black}t_L \leq t \leq t_U}$, we obtain
\bee\label{T3:S2f}
\|\M I_{\M A} - \M I_{\M P}\|_{\max,\text{off}} =
\op\big(n^{-1/2}\log^{1/2}{n}\big).
\ee

\noindent\textbf{Step 3 (Bounding $\|\M I_{\M A} - \M I_{\M P}\|_{\max,\mathrm{diag}}$):} 
With a similar argument as {Step 2}, we consider 
\bee\label{T3:S33}
\|\M I_{\M A} - \M I_{\M P}\|_{\max,\text{diag}} 
\leq  \sum_{t =
  2}^{t_U} (L - t)\max_{i}\Big||\M A|\Big(\frac{\hat{w}_{ii}^{(t)}}{d_i}\Big)- |\M P|\Big(\frac{{w}_{ii}^{(t)}}{p_i}\Big)\Big|.
\ee
We once again bound each summand in the above display. Similar to Eq.~\eqref{T3:S21}, we have
\bee\label{T3:S31}
&\max_{i}\Big||\M A|  \frac{\hat{w}_{ii'}^{(t)}}{d_i}-
|\M P|  \frac{{w}_{ii'}^{(t)}}{p_i}\Bigr| 
\leq (|\M A| - |\M
P|)  \max_{i}\Big| \frac{w^{(t)}_{ii}}{d_i}\Big|
+ |\M P|  \max_{i}\Big|\frac{(p_i - d_i)w^{(t)}_{ii}}{p_i d_i}
\Big| + |\M A|  \max_{i}\Big|\frac{\hat{w}_{ii}^{(t)} - w_{ii}^{(t)}}{d_i}\Big|.
\ee
The first two terms in the RHS of Eq.~\eqref{T3:S31} is bounded via
$\op(n^{-1/2}\log^{1/2}{n})$; see the arguments for
  Eq.~\eqref{T3:S23} and Eq.~\eqref{T3:S24}. For the third term, we
  consider the cases $t = 2$ and $t > 2$ separately. For $t = 2$, we have
\bee\nonumber
|\M A| \max_{i}\Bigl|\frac{\hat{w}_{ii}^{(2)} -
  w_{ii}^{(2)}}{d_i}\Big| &\leq |\M A| \max_{i}|\hat{w}_{ii}^{(2)} - w^{(2)}_{ii}|\cdot \max_{i}\frac{1}{d_i}
= \op(1).
\ee
In contrast, for $t > 2$, we have
\bee\nonumber
|\M A| \max_{i}\Bigl|\frac{\hat{w}_{ii}^{(t)} -
  w_{ii}^{(t)}}{d_i}\Big| = 
\op\big(n^{-1/2} \log^{1/2}{n}\big)
\ee
Combining the above terms, we have
\bee\label{T3:S3f}
\|\M I_{\M A} - \M I_{\M P}\|_{\max,\text{diag}} = \begin{cases}
  \op(1) & \text{if $t_L = 2$}, \\
  \op(n^{-1/2} \log^{1/2}{n}) & \text{if $t_L \geq 3$}. \end{cases}
\ee
\textbf{Step 4 (Bounding $\|\tilde{\M M}_0 - \M M_0\|_{\F}$):}
In summary, Eq.~\eqref{T3:S1f}, Eq.~\eqref{T3:S2f} and Eq.~\eqref{T3:S3f} imply
\bee
\|\tilde{\M M}_0 - \M M_0\|_{\max,\text{off}} &\leq\max_{i \not = i'} \Big(\frac{1}{\alpha_{ii'}}\Big)\cdot \|\M I_{\M A} -
\M I_{\M P}\|_{\max,\text{off}} 
= \op(n^{-1/2}\log^{1/2}{n}), 
\ee
\bee
\|\tilde{\M M}_0 - \M M_0\|_{\max,\text{diag}}
&\leq\max_{i}\Big(\frac{1}{\alpha_{ii}}\Big)\cdot \|\M I_{\M A} - \M
I_{\M P}\|_{\max,\text{diag}} 
= \op(1).
\ee
We thus conclude that
\bee
\|\tilde{\M M}_0-\M M_0\|_{\F} 
&\leq \BL n^2 \|\tilde{\M M}_0 -
\M M_0\|^2_{\max,\text{off}} + n\cdot \|\tilde{\M M}_0 - \M
M_0\|^2_{\max,\text{diag}}  \BR^{1/2} 
= \op(n^{1/2}\log^{1/2}{n}),
\\
\|\tilde{\M M}_0-\M M_0\|_{\infty} 
&\leq n \|\tilde{\M M}_0 -
\M M_0\|_{\max,\text{off}} +  \|\tilde{\M M}_0 - \M
M_0\|_{\max,\text{diag}}  
= \op(n^{1/2}\log^{1/2}{n}).
\ee
 as desired.
 \qed.

% \noindent{\textbf{Step 5 (Rate of $\|\M M_0\|_{\F}$): }}
% \\
% \par
% The $ii'$th entry of $\M M_0$ is 
% \bee\nonumber
% \log\Big[\sum_{t = t_L}^{t_U}\frac{L - t}{k\cdot \gamma(L,t_L,t_U)}\Big(\sum_{i,i'}{p_{ii'}}\Big)\cdot\Big(\frac{{w}_{ii'}^{(t)}}{p_i}\Big)\Big].
% \ee
% Under Dense Regime, by Lemma~\ref{l:dense:rate} and (\ref{lemma:rate:wt}) we have
% \bee\nonumber
% \sum_{t = t_L}^{t_U}\frac{L  - t}{k\cdot \gamma(L,t_L,t_U)}\Big(\sum_{i,i'}{p_{ii'}}\Big)\cdot\Big(\frac{{w}_{ii'}^{(t)}}{p_i}\Big)\asymp n^2 \cdot \frac{1}{n^2} = 1
% \ee
% uniformly for all $1\leq i,{i'}\leq n$. If $\sum_{t = 2}^{T_1}\frac{L - t}{k\cdot \gamma(L,t_L,t_U)}\Big(\sum_{ii'}{p_{ii'}}\Big)\cdot\Big(\frac{{w}_{ii'}^{(t)}}{p_i}\Big)$ is bounded away from $1$, we have
% \bee\nonumber
% \Big|\log\Big[{\sum_{t = 2}^{T_1}\frac{L - t}{k\cdot \gamma(L,t_L,t_U)}\Big(\sum_{ii'}{p_{ii'}}\Big)\cdot\Big(\frac{{w}_{ii'}^{(t)}}{p_i}\Big)}\Big]\Big|\asymp 1
% \ee
% uniformly and
% \bee
% \|\M M_0\|_{\F} \asymp n.
% \ee 

\subsection{Proof of Theorem~\ref{c1} (Dense Regime)}
Recall that $n_k$ denote the number of vertices assigned to block $k$. 
For ease of exposition we shall assume that
\bee\label{t:t:1}
n_k = n\pi_{k}, \quad\text{ for all }k\in [K].
\ee
The case where the $n_k$ are random with $\mathbb{E}[n_k] = n \pi_k$ is, except for a few slight
modifications, identical.
Note that $\tilde{\M M}_0$ and $\M M_0$ are symmetric matrices. Indeed
\bee\nonumber
(\M D_{\M A}^{-1}\hat{\M W}^t)^\T = (\M D_{\M A}^{-1}\M A\M D_{\M
  A}^{-1}\cdots \M D_{\M A}^{-1}\M A\M D_{\M A}^{-1})^\T = \M D_{\M
  A}^{-1}\hat{\M W}^t
\ee
and similarly for $(\M D_{\M P}^{-1} \M W^{t})^{\top} = (\M D_{\M P}^{-1} \M W^{t})$. Recall from Eq.~\eqref{theta} that $\M P = \M Z \M B \M Z^{\top}$ has a $K \times K$ blocks structure. Now let $\M B = \M V \bm{\Upsilon} \M V^{\top}$ be the eigendecomposition of $\M B$ where $\M V$ is a $K \times \mathrm{rank}(\mathbf{B})$ matrix with orthornormal columns, i.e., $\M V^{\top} \M V = \mathbf{I}$ and $\bm{\Upsilon}$ is a diagonal matrix containing the {\em non-zero} eigenvalues of $\mathbf{B}$. 
We then have, for any $t \geq 1$, 
\begin{equation}
\label{eq:pfc1_a}
    \M D_{\M P}^{-1} \M W^{t} = \M D_{\M P}^{-1} (\M Z \M B \M Z^{\top} \M D_{\M P}^{-1}\bigr)^{t} = 
\M D_{\M P}^{-1} (\M Z \M V \bm{\Upsilon} \M V^{\top} \M Z^{\top} \M D_{\M P}^{-1}\bigr)^{t} = 
\M D_{\M P}^{-1} \M Z \M V (\bm{\Upsilon} \M V^{\top} \M Z^{\top} \M D_{\M P}^{-1} \M Z \M V)^{t-1} \bm{\Upsilon} \M V^{\top} \M Z^{\top} \M D_{\M P}^{-1}
\end{equation}
Now $\M Z \M D_{\M P}^{-1} \M Z$ is a $K \times K$ {\em diagonal} matrix and is positive definite. Therefore 
$\M V^{\top} \M Z^{\top} \M D_{\M P}^{-1} \M Z \M V$ is also positive definite. Let $\mathcal{D} := \bm{\Upsilon} \M V^{\top} \M Z^{\top} \M D_{\M P}^{-1} \M Z \M V$. Note that $\mathcal{D}$ is invertible and all of its eigenvalues are real-valued as it is similar to the matrix $(\M V^{\top} \M Z^{\top} \M D_{\M P}^{-1} \M Z \M V)^{1/2} \bm{\Upsilon} (\M V^{\top} \M Z \M D_{\M P}^{-1} \M Z \M V)^{1/2}$. We then have
$$\M M_0' := \frac{2|\M P|}{\kappa \gamma}\sum_{t = t_L}^{t_U}(L - t) \M D_{\M P}^{-1}{\M W}^t = \frac{2|\M P|}{\kappa \gamma} 
\M D_{\M P}^{-1} \M Z \M V \underbrace{\Bigl(\sum_{t=t_L}^{t_U} (L-t) \mathcal{D}^{t-1} \Bigr)}_{f(\mathcal{D})} \bm{\Upsilon} \M V^{\top} \M Z^\top \M D_{\M P}^{-1}.$$
Recall that $t_L \leq t_U \leq L$. 
Then $f(\mathcal{D})$ is singular if and only if there is an eigenvalue $\lambda$ of $\mathcal{D}$ such that $\sum_{t=t_L}^{t_U} (L - t) \lambda^{t-1} = 0$. As the set of roots for any fixed polynomial equation has measure $0$, we conclude that $f(\mathcal{D})$ is invertible for almost every $\mathcal{D}$, or equivalently that the set of matrix $\mathbf{B}$ whose induced $f(\mathcal{D})$ is singular has Lebesgue measure $0$ in the space of $K \times K$ symmetric matrices. Note that $f(\mathcal{D})$ is guaranteed to be invertible whenever $\mathbf{B}$ is positive semidefinite. 

We thus assume, without loss of generality, that $f(\mathcal{D})$ is invertible. Then $f(\mathcal{D}) \bm{\Upsilon}$ is also invertible and hence
$\mathbf{V} f(\mathcal{D}) \bm{\Upsilon} \mathbf{V}^{\top}$ is a symmetric $K \times K$ matrix with {\em distinct} rows. Indeed, suppose that there exists two rows $k$ and $k'$ of 
$\mathbf{V} f(\mathcal{D}) \bm{\Upsilon} \mathbf{V}^{\top}$ that are the same. Let $V_k$ and $V_{k'}$ be the $k$th and $k'$th row vectors for $V$. Then the $k$th and $k$th rows of $\mathbf{V} f(\mathcal{D}) \bm{\Upsilon} \mathbf{V}^{\top}$ are $V_k f(\mathcal{D}) \bm{\Upsilon} \mathbf{V}^{\top}$ and $V_{k'} f(\mathcal{D}) \bm{\Upsilon} \mathbf{V}^{\top}$. We then have
\begin{equation*}
\begin{split}
V_k f(\mathcal{D}) \bm{\Upsilon} \mathbf{V}^{\top} = V_{k'} f(\mathcal{D}) \bm{\Upsilon} \mathbf{V}^{\top} &\Longrightarrow (V_k - V_{k'}) f(\mathcal{D}) \bm{\Upsilon} \mathbf{V}^{\top} \mathbf{V} \bm{\Upsilon} f(\mathcal{D})^{\top} (V_{k} - V_{k'})^{\top} = 0 \\ &\Longrightarrow (V_k - V_{k'}) f(\mathcal{D}) \bm{\Upsilon}^2 f(\mathcal{D})^{\top} (V_{k} - V_{k'})^{\top} = 0
\end{split}
\end{equation*}
which is only possible if $V_{k} - V_{k'} = 0$
as $f(\mathcal{D}) \bm{\Upsilon}^2 f(\mathcal{D})^{\top}$ is positive definite. As $\mathbf{B}$ has distinct rows, the rows of $\mathbf{V}$ are also distinct and hence $V_{k} - V_{k'} = 0$ if and only if $k = k'$. 
The matrix
$\M M_0'$
thus have the same $K \times K$ block structure as $\M P$. Recall that the entries of $\mathbf{M}_0$ are the logarithm of the entries of $\mathbf{M}_0'$ and hence $\mathbf{M}_0$ is of the form
\bee\nonumber
\M M_0 &= \begin{pmatrix}
\xi_{11} \bds 1_{n\pi_1} \bds 1_{n\pi_1}^\T &\dots & \xi_{1K} \bds 1_{n\pi_1} \bds 1_{n\pi_K}^\T\\
\vdots & \vdots & \vdots\\
\xi_{K1} \bds 1_{n\pi_K} \bds 1_{n\pi_1}^\T & \dots & \xi_{KK} \bds 1_{n\pi_K} \bds 1_{n\pi_K}^\T
\end{pmatrix} 
= 
\M Z  \bm{\Xi} \M Z^\T,
\ee
where $\bm{\Xi} = (\xi_{ii'})_{K
  \times K}$ is a symmetric matrix with distinct rows. The non-zero eigenvalues
of $\M M_0$ coincides with that of
\bee\label{trans}
\M Z^{\top} \M Z \bm{\Xi} = n \mathrm{diag}(\pi_1,
\pi_2, \dots, \pi_K) \bm{\Xi}.
\ee As $\text{diag}(\pi_1,\dots,\pi_K)$ is fixed, it is left to study $\bm{\Xi}$. Since the entries in $\M P$  are $\Theta(1)$, we have $|\M P| = \Theta(n^2)$, and entries in $\M D_{\M P}^{-1}$ and $\M W^t$ are of order $\Theta(n^{-1})$ (see Lemma \ref{l:dense:rate}). Therefore each entry of $\M M_{0}'$ is of order $\Theta(1)$. Indeed, with some more careful book-keeping, one can show that under Eq.~\eqref{t:t:1} each entry of $\M M_{0}'$  can take on one of $K^2$ possible values (with these values not depending on $n$). Thus $\bm{\Xi}$ is a fixed matrix not depending on $n$
and hence, by Eq.~\eqref{trans},
all non-zero singular values of $\M M_0$ grows at order $n$ since $\text{diag}(\pi_1,\pi_2,\dots,\pi_K)\cdot \bm{\Xi}$ is fixed. In summary we have
\bee\nonumber
\sigma_{d}(\M M_0)=\Theta(n).
\ee 
In addition as $\M M_0$ has rank at most $K$, we have 
\bee\label{MMFF}
\|\M M_0\|_{\F} \asymp n, \quad \|\M M_0\|_{\infty}\asymp n
\ee 
%provided that there exists one block of entries in $\M M_0$ which are non-zero. %Note that each block size of $\M M_0$ is $\Theta(n)$ (see Section \ref{sec:sbm}).
Therefore, by the Davis-Kahan Theorem \cite{davis70,yu2015useful} and Theorem \ref{T3} (dense regime), we have
\bee\label{fnormbound}
\min_{\M T \in \mathbb{O}_{d}} \|\hat{\bds{\mathcal{F}}} \cdot \mathbf{T} - \mathbf{U}\|_{\F} & \leq  \frac{\|\tilde{\M M}_0 - \M M_0\|_\F}{\sigma_d(\M M_0)} = \op(n^{-1/2}\log^{1/2}{n}).
\ee
To show the $2\rightarrow \infty$ norm concentration of $\hat{\bds{\mathcal{F}}}$, we need to further bound $\|\M U\|_{\twoinf}$. Since $\M M_0 $ shares the exact same $K\times K$ block structure as  $\mathbf{B}$ (with $\mathbf{B}$ assumed to have distinct rows), we  can follow the same argument as that in Section 5.1 in \cite{lei2019unified} and show that $[\M U]_i = [\M U]_{i'}$ if and only if $\bds k(i) = \bds k(i')$ which also implies
$\|\M U\|_{\twoinf} \precsim n^{-1/2}$.
Summarizing above results, by  Theorem \ref{T3} (dense regime) and Theorem 4.2 in \cite{cape2019two}, we have
\bee\label{twoinfnormbound}
\min_{\M T \in \mathbb{O}_{d}} \|\hat{\bds{\mathcal{F}}} \cdot \mathbf{T} - \mathbf{U}\|_{\twoinf} \leq 14\Bigg(\frac{\|\tilde{\M M}_0 - \M M_0\|_{\infty}}{\sigma_d(\M M_0)}\Bigg)\|\M U\|_{\twoinf} = \mathcal{O}_{\mathbb{P}}\big(n^{-1}\log^{1/2}{n}\big),
\ee
%$$\sigma_d(\M M_0) = \Theta(n)\gg \|\tilde{\M M}_0 - \M M_0\|_{\twoinf} = \mathcal{O}(n^{1/2}\sqrt{\log n})$$ 
with high probability. We thus have
\bee\nonumber
&n^{1/2}\cdot \min_{\M T \in \mathbb{O}_{d}} \|\hat{\bds{\mathcal{F}}} \cdot \mathbf{T} - \mathbf{U}\|_{\twoinf}
 = \op\bigl( n^{-1/2}\log^{1/2}{n} \bigr).
\ee
Hence, by 
combining Lemma 5.1 and the arguments in Section 5.1 of \cite{lei2019unified}, we can show that running $K$-medians or $K$-means on the rows of $\hat{\mathbf{U}}$ will, with high probability, correctly recover the memberships of {\em every} nodes in $\mathcal{G}$.
\qed

\subsection{Strong recovery for DCSBM}
\label{sec:dcsbmsr}
We now extend the proof of Theorem~\ref{c1} to the setting of DCSBM. Recall that the edge probabilities matrix of a DCSBM is of the form
$\M P = \bm{\Theta} \M Z \M B \M Z^{\top} \bm{\Theta}$ where $\bm{\Theta} = \mathrm{diag}(\theta_1, \dots, \theta_n)$ contains the degree 
correction factors. Substituting the above $\M P$ into Eq.~\eqref{eq:pfc1_a} we obtain
\begin{equation}
\label{eq:pfdcsbm_a}
\M D_{\M P}^{-1} \M W^{\top} = \M D_{\M P}^{-1}  (\bm{\Theta} \M Z\M B \M Z^{\top} \bm{\Theta} \M D_{\M P}^{-1})^{t} = \M D_{\M P}^{-1} \bm{\Theta} \M Z \M V (\bm{\Upsilon} \M V^{\top} \M Z^{\top} \bm{\Theta} \M D_{\M P}^{-1} \bm{\Theta} \M Z \M V)^{t-1} \bm{\Upsilon}  \M V^{\top} \M Z^{\top} \bm{\Theta} \M D_{\M P}^{-1}
\end{equation}
The matrix $\M Z^{\top} \bm{\Theta} \M D_{\M P}^{-1} \bm{\Theta} \M Z$ is now a $K \times K$ positive definite diagonal matrix and hence $\M V^{\top} \M Z^{\top} \bm{\Theta} \M D_{\M P}^{-1} \bm{\Theta} \M Z \M V$ is also positive definite. Let $\mathcal{D} = \bm{\Upsilon} \M V^{\top} \M Z^{\top} \bm{\Theta} \M D_{\M P}^{-1} \bm{\Theta} \M Z \M V$. We then have
$$\M M_0' := \frac{2|\M P|}{\kappa \gamma} \sum_{t=t_L}^{t_U} (L - t)\M D_{\M P}^{-1} \M W^{t} = \frac{2|\M P|}{\kappa \gamma}  \M D_{\M P}^{-1} \bm{\Theta} \M Z \M V f(\mathcal{D}) \bm{\Upsilon} \M V^{\top} \M Z^\top \bm{\Theta} \M D_{\M P}^{-1}$$
Once again the set of matrices $\M B$ for which $f(\mathcal{D})$ is singular has Lebesgue measure $0$ in the space of $K \times K$ matrices and hence we can assume, without loss of generality, that $f(\mathcal{D})$ is invertible. Using the same reasoning as that in the proof of Theorem~\ref{c1} we conclude that $\mathbf{M}'_0$ is of the form
$$\mathbf{M}_0' = \M D_{\M P}^{-1} \bm{\Theta} \M Z \bm{\Xi}' \M Z^{\top} \M\Theta\M D_{\M P}^{-1}$$
where $\bm{\Xi}'$ is a $K \times K$ matrix with distinct rows. Now for any vertices $i$ we have
$$p_i = \sum_{j} p_{ij} = \sum_{j} \theta_i \theta_j b_{\tau(i), \tau(j)} = \theta_{i} \sum_{k=1}^{K} \sum_{\tau(j) = k} \theta_j b_{\tau(i), k}
$$
and hence $\theta_{i}/p_{i} = \theta_{i'}/p_{i'}$ whenever $i$ and $i'$ belong to the same community. The $i$th entry of $\M D_{\M P}^{-1} \bm{\Theta}$ is $\theta_i/p_i$ and 
thus
$$ \M D_{\M P}^{-1} \bm{\Theta} \M Z = \M Z \tilde{\bm{\Theta}}$$
for some $K \times K$ positive definite diagonal matrix $\tilde{\bm{\Theta}}$. In summary we have
$$\mathbf{M}'_0 = \M D_{\M P}^{-1} \bm{\Theta} \M Z \bm{\Xi} \M Z^{\top} \M D_{\M P}^{-1} = \M Z \tilde{\bm{\Theta}} \bm{\Xi}' \tilde{\bm{\Theta}} \M Z^{\top}.$$
where $\tilde{\bm{\Theta}} \bm{\Xi}' \tilde{\bm{\Theta}}$ is a $K \times K$ matrix with distinct rows. Again recall that the entries of $\mathbf{M}_0$ are the logarithm of the entries of $\mathbf{M}_0'$ and hence $\mathbf{M}_0$ is of the form
\bee\nonumber
\M M_0 &=  
\M Z  \bm{\Xi} \M Z^\T,
\ee
for some $K\times K$ matrix $\bm{\Xi}$. Let $\hat{\mathcal{F}}$ be the truncated SVD of $\tilde{\mathbf{M}}_0$ and $\M U$ be the leading singular vectors of $\mathbf{M}_0$.
We can then follow the remaining steps in the proof of Theorem~\ref{c1} to show that 
\begin{gather*}
\min_{\M T \in \mathbb{O}_{d}} \|\hat{\bds{\mathcal{F}}} \cdot \mathbf{T} - \mathbf{U}\|_{\F}  \leq  \frac{\|\tilde{\M M}_0 - \M M_0\|_\F}{\sigma_d(\M M_0)} = \op(n^{-1/2}\log^{1/2}{n}), \\
\min_{\M T \in \mathbb{O}_{d}} \|\hat{\bds{\mathcal{F}}} \cdot \mathbf{T} - \mathbf{U}\|_{\twoinf} \leq 14\Bigg(\frac{\|\tilde{\M M}_0 - \M M_0\|_{\infty}}{\sigma_d(\M M_0)}\Bigg)\|\M U\|_{\twoinf} = \mathcal{O}_{\mathbb{P}}\big(n^{-1}\log^{1/2}{n}\big),
\end{gather*}
{\color{black}noting that the involved high-probability bounds still hold for DCSBM as long as $\max_i \theta_i /\min_i \theta_i = \mathcal{O}(1)$ and $\theta_i \in (0,1)$, because we are using entry-wise arguments in the main proofs}. Thus clustering the rows of $\hat{\mathcal{F}}$ will, with high probability, exactly recover the memberships of every nodes in $\mathcal{G}$. \qed

\section{Proofs under the sparse regime}\label{app:sparsepf}
The approach used in the proofs of our main theorems under
the sparse regime is similar to that in the dense regime. However, if
we simply use the same technique as in Section~\ref{app:densepf} then
we only obtain a convergence rate of $\op\{(n\rho_n)^{-3/2}\log^{1/2}{n}\}$ for $\|\hat{\M W}^2 - \hat{\M W}^2\|_{\max,\off}$ and
$\|\hat{\M W}^t - \hat{\M W}^t\|_{\max}$ (with $t\geq 3$), which is too loose.  More specifically the bounds for %Theorem~\ref{T3} to bound 
$\|\tilde{\M M}_0- \M M\|_\F$ and $\|\tilde{\M M}_0 - \M M\|_{\infty}$ %under the sparse regime, then 
Eq.~\eqref{pf:d:ii'} is currently valid only when $\rho_n
 = \omega(n^{-1/3} \log^{1/3}{n})$. %  and thus we could not show a useful
 % bound for $\|\tilde{\M M}_0 - \M M\|_\F$ when $\rho_n
 % = O(n^{-1/3} \log^{1/3}{n})$ using 
Before getting into the formal proofs, we first state 
Lemma~\ref{l:sparse:rate} and Lemma~\ref{l:hoA} as the main
technical results for bounding $\|\hat{\M W}^2 - \hat{\M
W}^2\|_{\max,\off}$ and $\|\hat{\M W}^t - \hat{\M W}^t\|_{\max}$ for $t\geq
3$ under the sparse regime. We summarize the motivation behind these lemmas
below. 
\begin{itemize}
\item Lemma~\ref{l:sparse:rate} is an analogue of
  Lemma~\ref{l:dense:rate} and is used
  repeatedly for bounding several important terms that frequently appear in our proofs.
\item In Section~\ref{app:densepf} we show that the bound for $\|\hat{\M W}^t - \hat{\M
W}^t\|_{\max}$ when $t\geq 3$ is of the same order as that for $\|\hat{\M W}^2 - \hat{\M
  W}^2\|_{\max,\off}$. For the sparse regime these bounds are generally of different magnitude as $n\rho_n$ increases. 
  This difference is the main distinguish feature 
between the two regimes.  Therefore, we derive a more accurate bound for $\|\hat{\M W}^t - \hat{\M
W}^t\|_{\max}$ when $t \geq 3$ in Step~4 and Step~5 of the proof of
Theorem~\ref{T3} (sparse regime) presented below. The main challenge is in controlling the term $\|\mathbf{A}^{t}\|_{\mathrm{max}}$ as
given in Lemma~\ref{l:hoA}. Thus Lemma~\ref{l:hoA} is
 the main technical contribution of this section and might be of independent interest. 
\end{itemize}

For ease of exposition we will only present the proof of Lemma \ref{l:hoA} in this
section; the proofs of the other lemmas are deferred to Section~\ref{app:lemma}.
\begin{lemma}
  \label{l:sparse:rate} 
Under Assumption~\ref{am:s} we have, for any $t \geq 1$ and
sufficiently large $n$, 
\begin{align}
&\|\M D_{\M A}\|_{} = \max_{1\leq i \leq n} d_i = \op(n\rho_n),
\\
&\|\M D_{\M A}^{-1}\|_{} = \max_{1\leq i \leq n} 1/d_i = \op\{(n\rho_n)^{-1}\},
\\ &\|\M D_{\M A} - \M D_{\M P}\|_{} =\max_{i}{|d_i - p_i|}  = \op(\sqrt{n\rho_n\log n}),
\\
&\|\M D^{-1}_{\M A} - \M D^{-1}_{\M P}\|_{} = \max_{i}{|d_i^{-1} -
     p_i^{-1}|} = \op\{(n\rho_n)^{-3/2}\sqrt{\log n}\},
\\
&\label{l:s:hwt}
\|\hat{\M W}^{t}\|_{\max} = \op\{(n\rho_n)^{-1}\},
\\
& \max_{i,{i'}}w^{(\eta)}_{ii'},\min_{i,{i'}}w^{(\eta)}_{ii'} \asymp 1/n,
\end{align}
\end{lemma}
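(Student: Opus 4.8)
The plan is to establish the five displayed bounds in sequence, with the uniform concentration of the empirical degrees as the engine that drives everything else. First I would record the elementary consequence of Assumption~\ref{am:s} that the population degrees satisfy $p_i = \sum_{i' \neq i} p_{ii'} \asymp n\rho_n$ uniformly in $i$, since each summand is of order $\rho_n$ and there are $n-1$ of them; I will also take the block sizes to be of order $n$, so that $\M P = \M \Theta \M B \M \Theta\tp$ has block-constant degrees.

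The core step is the degree concentration. Each $d_i = \sum_{i'} a_{ii'}$ is a sum of independent Bernoulli variables with $\E d_i = p_i \asymp n\rho_n$ and variance $\sum_{i'} p_{ii'}(1 - p_{ii'}) \leq p_i \asymp n\rho_n$. Applying Bernstein's inequality with a deviation of order $\sqrt{n\rho_n \log n}$ together with a union bound over the $n$ indices yields $\max_i |d_i - p_i| = \op(\sqrt{n\rho_n\log n})$; here the sparse condition $\beta < 1$ is exactly what guarantees $n\rho_n/\log n \to \infty$, so the deviation is $o(n\rho_n)$ and the sub-Gaussian regime of Bernstein governs the bound. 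This immediately gives $d_i \asymp n\rho_n$ uniformly, hence $\max_i d_i = \op(n\rho_n)$ and $\max_i d_i^{-1} = \op((n\rho_n)^{-1})$. For the inverse-degree bound I would write $d_i^{-1} - p_i^{-1} = (p_i - d_i)/(d_i p_i)$ and substitute $|p_i - d_i| = \op(\sqrt{n\rho_n\log n})$ together with $d_i p_i \asymp (n\rho_n)^2$ to reach the stated $\op((n\rho_n)^{-3/2}\sqrt{\log n})$.

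For the population transition entries $w^{(t)}_{ii'}$ I would exploit the $K \times K$ block structure: since the population degrees are block-constant, $\M W$ and every $\M W^t$ are block-constant, and their common block values obey the recursion $\mathbf{M}^{(t+1)} = R\,\mathbf{M}^{(t)}$ for the fixed $K \times K$ matrix with entries $R_{km} = n_m B_{km}/\bar{d}_m$, where $\bar{d}_m \asymp n\rho_n$ is the block-$m$ degree. Because $R$ has all entries $\asymp 1$ and $\mathbf{M}^{(1)}_{k\ell} = B_{k\ell}/\bar{d}_\ell \asymp 1/n$, the fixed power $R^{t-1}$ has strictly positive, uniformly bounded entries, so $\mathbf{M}^{(t)} = R^{t-1}\mathbf{M}^{(1)}$ has every entry $\asymp 1/n$, which is the two-sided claim $\max_{i,i'} w^{(t)}_{ii'} \asymp \min_{i,i'} w^{(t)}_{ii'} \asymp 1/n$ (consistent with Lemma~\ref{l1}).

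Finally, for $\|\HW^t\|_{\max}$ I would induct on the fixed integer $t$. The base case $\|\HW\|_{\max} = \max_{i,i'} a_{ii'}/d_{i'} = \op(1/(n\rho_n))$ follows from the degree lower bound. For the inductive step I use that the row sums of $\HW$ are uniformly bounded, $\sum_{i'} \hat{w}_{ii'} = \sum_{i'} a_{ii'}/d_{i'} \leq d_i/\min_{i'} d_{i'} = \op(1)$, whence $(\HW^t)_{ii'} \leq (\max_i \sum_j \hat{w}_{ij})\,\|\HW^{t-1}\|_{\max}$ with an $\op(1)$ factor; iterating the fixed number of times gives $\|\HW^t\|_{\max} = \op(1/(n\rho_n))$. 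The main obstacle is really the bookkeeping in the concentration step — verifying that Bernstein plus the union bound survives the sparse scaling $\rho_n \succsim n^{-\beta}$ and that all subsequent degree ratios remain of the correct order \emph{uniformly} in $i$; the transition-matrix estimates are then soft consequences of block structure (population) and bounded row sums (empirical). This mirrors the structure of Lemma~\ref{l:dense:rate}, with the extra $\rho_n$ factors tracked throughout.
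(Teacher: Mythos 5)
Your proposal is correct and follows essentially the same route as the paper: Bernstein (the paper uses Chernoff for the first display and Bernstein for the second, but your single concentration step subsumes both since $\sqrt{n\rho_n\log n}=o(n\rho_n)$ under $\beta<1$) plus a union bound for the degrees, the identity $d_i^{-1}-p_i^{-1}=(p_i-d_i)/(d_ip_i)$ for the inverses, and the bound $\|\hat{\M W}^t\|_{\max}\leq (\max_i d_i \cdot \max_j d_j^{-1})^{t-1}\|\hat{\M W}\|_{\max}$ for the empirical transition matrix. The only cosmetic difference is that for $w^{(t)}_{ii'}\asymp 1/n$ the paper argues directly from the uniform elementwise bounds $c_2\rho_n\leq p_{ii'}\leq c_3\rho_n$ (giving $c_2^t/(c_3^t n)\leq w^{(t)}_{ii'}\leq c_3^t/(c_2^t n)$), whereas your block-recursion argument, while valid, uses more structure than is needed.
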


\begin{remark}
\upshape{The bounds for $\|\hat{\M W}^t\|_{\max}$ given in
  Eq.~\eqref{l:s:hwt} is generally sub-optimal for $t \geq
  2$. Nevertheless we use this bound purely as a stepping stone in proving
  Theorem~\ref{T3}. Once Theorem~\ref{T3} is established
  we can improve the bound for $\|\hat{\M W}^{t}\|_{\mathrm{max}}$ by applying triangle inequality incorporated with the tight bounds of
   $ \|\M W^t\|_{\max}$ and $\|\hat{\M W}^t - \M W^t\|_{\max}$.
  % , an
  % application of the triangle inequality yield an improved bound of 
% \begin{equation*}
%   \|\hat{\M W}^t\|_{\max} \leq \|\M W^t\|_{\max} + \|\hat{\M W}^t - \M
%   W^t\|_{\max},
% \end{equation*}
% we could improve the rate of $\|\hat{\M
%   W}^t\|_{\max}$ to
% \begin{equation*}
%  \|\hat{\M W}^t\|_{\max} =
%  \op\big(\max\big\{\frac{1}{n},r_{n,{\hat{\M W}^t - \M W^t}}\}\big)
%  \end{equation*}
% where $r_{n,{\hat{\M W}^t - \M W^t}}$ is defined as the convergence
% rate of $\|\hat{\M W}^t - \M W^t\|_{\max}$ shown in Theorem~\ref{T5}
% based on different condition of $t,\rho_n$.}
  }
\end{remark}

\begin{lemma}\label{l:hoA}
  Under Assumption~\ref{am:s}, suppose $\rho_n$ satisfies $\rho_n\rightarrow 0$ and,
  \bee\label{am:fd}
n^{-1/2} \log^{\beta_2}{n} \precsim\rho_n 
\ee
for some $\beta_2>1/2$. Then we have
\bee\label{T5:6}
&\big\|\M A^2\big\|_{\max,\text{off}}= \op(n\rho_n^2),
\\
&\big\|\M A^t\big\|_{\max} = \op\big\{(n\rho_n)^{t/2} + n^{t - 1}\rho_n^t\big\},
\ee
for any fixed $t\geq 3$. Furthermore, if $\rho_n\rightarrow 0$ and $n^{\frac{2-t}{t}}\precsim \rho_n$ for some $t \geq 3$,
then above bounds can be sharpened to $$\big\|\M A^t\big\|_{\max} = \op\big( n^{t - 1}\rho_n^t\big).$$
\end{lemma}
\begin{proof}
  Suppose $\rho_n$ satisfies Eq.~\eqref{am:fd} for some $\beta_2 > 1/2$. Define
  \bee
  \ &\zeta_{i^*}^{ii'} := a_{ii^*}a_{i'i^*}, \quad {\zeta}_{ii'} := \max_{i,{i'}}\sum_{i^{*}\neq
    i,{i'}}a_{ii^*}a_{i'i^*} = \max_{i,{i'}}\sum_{i^{*}\neq i,{i'}} \zeta_{i^*}^{ii'}.\ee
  Given $i \neq {i'}$, the $\{\zeta^{ii'}_{i^*}\}_{i^* \in[n],i^* \neq i,{i'}}$ are a set of independent Bernoulli variables with
%\bee\nonumber
$c_2^2\rho_n^2\leq \p(\zeta^{ii'}_{i^*} = 1) \leq c_3^2\rho_n^2.$
%\ee
A similar argument to that for deriving Eq.~\eqref{A6:1} yields (recall that $n^{-1/2} \log^{\beta_2}{n} \precsim \rho_n$)
\bee\label{A6:2}
\log\Big\{\p\Big(\frac{{\zeta}_{ii'}}{n-1}\leq \frac{c_2^2}{2}\rho_n^2\Big)\Big\}
&\leq \log\Big\{\p\Big({\zeta}_{ii'}\leq\frac{1}{2}\E{\zeta}_{ii'}\Big)\Big\}
\precsim -C_1 n\rho_n^2
\precsim-C_1\log^{2\beta_2}{n}
\ee
for all $i\neq {i'}$; here $C_1 \geq 0$ is a constant not
depending on $n$ or $\rho_n$. Eq.~\eqref{A6:2} together with a union
bound then implies
\bee\nonumber
\p\Big\{\max_{i\neq{i'}} {\zeta}_{ii'}\leq
\frac{c_2^2}{2}n\rho_n^2\Big\}
\leq n^2
\max_{i\neq{i'}}\Big\{\p\Big({\zeta}_{ii'}\leq
\frac{c_2^2}{2}n\rho_n^2\Big)\Big\} \precsim \exp\Big(2\log n - C_1\log^{2\beta_2}{n}\Big)
\rightarrow 0
\ee
as $n \longrightarrow \infty$. We thus have $\max_{i\neq{i'}}\sum_{i^*\neq
  i,{i'}}a_{ii^*}a_{i'i^*} = \op(n\rho_n^2)$ %  whenever
% Eq.~\eqref{am:fd} holds.
% is satisfied.
and hence
\bee\nonumber
\|\M A^2\|_{\max,\text{off}}
&= \max_{i\neq{i'}}\sum_{i^*=1}^na_{ii^*}a_{i'i^*}
\leq  \max_{i\neq{i'}}\sum_{i^*\neq i,{i'}}a_{ii^*}a_{i'i^*} + 2 
= \op(n\rho_n^2).
\ee
We next consider the case when $t \geq 3$ and $\rho_n$ satisfies $n^{\frac{2-t}{t}}\precsim \rho_n$. We have
\bee\nonumber
\|\M A^{t}\|_{\max} \leq  \|\M P^t\|_{\max} + \|\M A^{t} - \M P^t\|_{\max}.
\ee
Under Assumption~\ref{am:s} we have
\bee\nonumber
\|\M P^{t}\|_{\max} 
\leq n\|\M P^{t - 1}\|_{\max}\|\M P\|_{\max}
\leq n^2\|\M P^{t - 2}\|_{\max}\|\M P\|_{\max}^2
\leq \dots \leq n^{t - 1}\|\M P\|^t_{\max} 
= \mathcal{O}(n^{t - 1}\rho_n^t).
\ee 
We now focus on bounding $\|\M A^{t} - \M P^{t}\|_{\max}$. Consider the following expansion for $\M A^{t} - \M P^{t}$
\bee\label{l:hoA:1}
\M A^{t} - \M P^{t} 
&=(\M A^{t - 1} - \M P^{t - 1})(\M A - \M P) + \M P^{t - 1}(\M A - \M P) 
+ \sum_{b_0 = 1}^{t - 1}\M A^{t - 1 - b_0}(\M A - \M P)\M P^{b_0}.
\ee
Let $\mathbf{E} = \M A - \M P$.
Applying the same expansion to $\M A^{t - 1} - \M P^{t - 1}, \dots ,\M A^{2} - \M P^{2}$, we obtain
\bee\label{l:hoA:2}
\M A^{t} - \M P^{t}  
&= (\M A^{t - 1} - \M P^{t - 1}) \M E + \M P^{t - 1} \M E
+ \sum_{b_0 = 1}^{t - 1}\M A^{t - 1 - b_0}(\M A - \M P)\M P^{b_0}
\\
&= \Big\{\sum_{b_1 = 0}^{t - 2}\M A^{t - 2 - b_1} \M E \M P^{b_1}\Big\} \M E+ \M P^{t - 1} \M E
+ \sum_{b_0 = 1}^{t - 1}\M A^{t - 1 - b_0} \M E \M P^{b_0}
\\
& = (\M A^{t - 2} - \M P^{t -2}) \M E^2 
+ \sum_{b_1 = 1}^{t - 2}\M A^{t - 2 - b_1} \M E \M P^{b_1} \M E
+ \sum_{b_0 = 1}^{t - 1}\M A^{t - 1 - b_0} \M E \M P^{b_0}
+\sum_{b' =  1}^2\M P^{t - b'}\M E^{b'}
\\
&= \dots =
\M E^{t} + \underbrace{\sum_{c = 0}^{t - 1}\sum_{b = 1}^{t - c - 1} \M A^{t - c - b - 1}\M E \M P^{b}\M E^c}_{\M L_1} + \underbrace{\sum_{b' =  1}^{t - 1}\M P^{t - b'}\M E^{b'}}_{\M L_2}.
\ee
Now for any summand appearing in $\M L_1$, if both $c \not = 0$ and $t -
c - b - 1 \not = 0$ then
\bee\label{l:diff:L1}
\|\M A^{t - c - b - 1}\M E\M P^{b}\M E^c\|_{\max} 
&\leq \|\mathbf{A}\|_{1} \cdot \|\M A^{t - c - b - 2} \M E \M P^{b} \M E^c\|_{\max}
\\
&\leq \dots \leq\|\mathbf{A}\|_{1}^{t - c - b - 1}  \|\M E \M P^{b} \M E^c\|_{\max}
\\
&\leq \|\mathbf{A}\|_{1}^{t - c - b - 1}  \|\M E \M
P^{b} \M E^{c - 1}\|_{\max}( \|\mathbf{A}\|_{1} + \|\M P\|_{1})
\leq \|\M A\|_{1}^{t - c - b - 1}  \|\M E \M
P^{b}\|_{\max} (\|\M A\|_{1} + \|\M P\|_{1})^{c}.
\ee
Here $\|\mathbf{M}\|_{1}$ denote the maximum of the absolute column sum of a matrix
$\mathbf{M}$. The bound in Eq.~\eqref{l:diff:L1} also holds when
$c = 0$ or $t -c - b - 1 = 0$. A similar argument yields
\begin{equation}
  \begin{split}
\label{l:diff:L1:2}
\|\M E \M P^b\|_{\max}
&\leq \|\M P\|_{\mathrm{max}}  (\|\M A\|_{1} +
\|\M P\|_{1})  \|\M P\|_{1}^{b-1}.
\end{split}
\end{equation}
Observe that $\|\M A\|_{1} = \max_{i} d_i$ and $\|\M 
P\|_{1} = \max_{i} \sum_{j} p_{ij}$. Combining Eq.~\eqref{l:diff:L1}, Eq.~\eqref{l:diff:L1:2} and
Lemma~\ref{l:sparse:rate}, we have
\bee\nonumber
\|\M L_1\|_{\max} &\leq \sum_{a = 0}^{t - 1}\sum_{b = 1}^{t - c - 1} \|\M A^{t - c - b - 1} \M E \M P^{b} \M E^c\|_{\max}
= \op(n^{t - 1}\rho_n^{t}).
\ee
Similarly, we also have $\|\M L_2\|_{\max} = \op(n^{t -
  1}\rho_n^{t})$. We therefore have
\bee\label{l:pdiff:main}
\|\M A^{t}\|_{\max} &\leq  % \|\M P^t\|_{\max} + \|\M A^{t} - \M P^t\|_{\max}\leq
\|\M P^t\|_{\max} + \|\M E^{t}\|_{\max} + \|\M L_1\|_{\max} + \|\M L_2\|_{\max}
= \|\M E^{t}\|_{\max} + \op(n^{t - 1}\rho_n^{t}).
\ee
Finally we bound $\|\M E^t\|_{\max}$. Note that $\|\mathbf{M}\|_{\max}\leq \|\mathbf{M}\|_2$ 
for any matrix $\mathbf{M}$. 
Now, under Assumption \ref{am:s}, the maximal expected degree of
$\mathcal{G}$ is of order $n\rho_n\succsim n^{1 -
  \beta}\gg\log^4n$. We can thus apply the spectral norm concentration in \cite{lu2013spectra} to obtain
\bee\label{lupeng}
\|\M E^{t}\|_{\mathrm{max}} \leq \|\M A - \M P\|_{2}^{t} = \mathcal{O}_{\mathbb{P}}\bigl((n\rho_n)^{t/2}\bigr).
\ee
% which yields $\|(\M A - \M P)^t\|_{\max}=\mathcal{O}_{\mathbb{P}}((n\rho_n)^{t/2})$. 
We thus have, after a bit of algebra, that $\|\M E^{t}\|_{\max} = \op(n^{t - 1}\rho_n^t)$ whenever $n^{\frac{2 - t }{t}}\precsim \rho_n$. Combining Eq.~\eqref{l:pdiff:main}, we have $\|\M A^t\|_{\max} = \op(n^{t - 1}\rho_n^t)$ whenever $n^{\frac{2-t}{t}}\precsim \rho_n\prec 1$ holds.
\end{proof}
\subsection{Proof of Theorem~\ref{T3} (Sparse Regime) 
%and Remark \ref{r}
}\label{sec:pf:ts:main}
The proof is organized as follows. In {Step 1} through {Step 4},
we bound $\|\hat{\M W}^t - \M W^t\|_{\max}$ under the
general sparse condition as specified in Assumption~\ref{am:s}. These
arguments are generalizations of the corresponding arguments in the proof
of Theorem~\ref{T2} in Section~\ref{a:p:t2}.
In {Step 5}, we provide an improved bound for $\|\hat{\M W}^t
- \M W^t\|_{\max}$ when $t \geq 3$ and $\tfrac{t-3}{t-1} > \beta$. For ease of exposition we omitted some of the more mundane technical details
from the current proof and refer the interested reader to Section~\ref{app:c:ts:dt}.

\noindent\textbf{Step 1 (Bounding $\|\hat{\M W} - \M W\|_{\max}$):} 
Similar to {Step 1} in the proof of Theorem~\ref{T2}, we have
\bee
  \label{T1s:order_1_0:s}
  \hat{\M W} - \M W &= \M A \M D_{\M A}^{-1} - \M P \M D_{\M P}^{-1} 
=
  \underbrace{\M A \M D_{\M P}^{-1} \M D_{\M A}^{-1}(\M D_{\M P} - \M D_{\M A})}_{\M \Delta^{(1)}_1} + \underbrace{(\M A - \M P)\M D_{\M P}^{-1}}_{\M \Delta^{(1)}_2}
\ee
and hence
\bee\nonumber
&\|\M \Delta^{(1)}_1\|_{\max} = \op\big\{(n\rho_n)^{-3/2} \log^{1/2}{n}\big\}, \quad
&\|\M \Delta^{(1)}_2\|_{\max} = \op\big\{(n\rho_n)^{-1}\big\}.
\ee
We therefore have 
\begin{equation}
  \label{T21:res1}
\begin{split}
\|\hat{\M W} - \M W\|_{\max} &\leq \|\M \Delta^{(1)}_1\|_{\max} + \|\M
\Delta^{(1)}_2\|_{\max} = \op\big\{(n\rho_n)^{-3/2} \log^{1/2}{n} \big\} + \op\big\{(n\rho_n)^{-1}\big\} 
= \op\big\{(n\rho_n)^{-1}\big\}.
\end{split}
\end{equation}
See Section~\ref{app:ts:dt:1} for additional details in deriving the above inequalities.

\noindent\textbf{Step 2 (Bounding $\|\hat{\M W}^2 - \M
  W^2\|_{\max,\mathrm{diag}}$):}
Similar to {Step 2} in the proof of Theorem~\ref{T2},
\bee
\hat{\M W}^2 - \M W^2 &= \underbrace{(\hat{\M W} - \M W )\M W}_{\M\Delta^{(2)}_1} + \underbrace{\hat{\M W} (\hat{\M W} - \M W)}_{\M\Delta^{(2)}_2}, \\
\M\Delta^{(2)}_1 &= \underbrace{\big\{\M A \M D_{\M P}^{-1} \M D_{\M A}^{-1}(\M D_{\M P} - \M D_{\M A}) - \M A \M D_{\M P}^{-2}(\M D_{\M P} - \M D_{\M A})\big\}\M W}_{\M \Delta^{(2,1)}_{1}} 
+ \underbrace{\M A \M D_{\M P}^{-2}(\M D_{\M P} - \M D_{\M A})}_{\M \Delta^{(2,2)}_{1}}\M W 
+  \underbrace{(\M A - \M P)\M D_{\M P}^{-1} \M W}_{\M \Delta^{(2,3)}_{1}}.
\ee
The bounds in Section~\ref{app:ts:dt:2} imply
\begin{gather*}
    \|\M \Delta^{(2)}_{1}\|_{\max}=
\op\big(n^{-3/2}\rho_n^{-1/2}\log^{1/2}{n}\big), \quad \|\M W(\M W -
\hat{\M W})\|_{\max}  = \op(n^{-3/2} \rho_n^{-1/2} \log^{1/2}{n}).
\end{gather*}
Furthermore, we also have
\bee
\|\M \Delta^{(2)}_2\|_{\max} &= \|(\M W - \hat{\M W})(\M W - \hat{\M W}) - \M W(\M W - \hat{\M W})\|_{\max} 
\leq   \|(\M W - \hat{\M W})^2\|_{\max} + \|\M W(\M W - \hat{\M W})\|_{\max}. 
\ee
Replacing $\|\cdot\|_{\max}$ with $\|\cdot\|_{\max,\diag}$ in
Eq.~\eqref{t:w:w-w2}, and following the same argument as that for
Eq.~\eqref{t:w:w-w2} and Eq.~\eqref{t:d:d223}, we have, by Lemma~\ref{l:sparse:rate},
\bee
  \label{T4:res2}
\|(\M W - \hat{\M W})^2\|_{\max,\text{diag}} = \op\{(n\rho_n)^{-1}\},
\ee
and thus
$
\|\M W^2 - \hat{\M W}^2\|_{\max,\diag} = \op\{(n\rho_n)^{-1}\}.
$

\noindent\textbf{Step 3 (Bounding $\|\hat{\M W}^2 - \M W^2\|_{\max,\mathrm{off}}$):} Similar to Eq.~\eqref{t:w:w-w2},
\bee
  \label{t:w-w2:s}
\|(\M W - \hat{\M W})^2\|_{\max,\text{off}}
&=  \max_{i\neq i'}\Big|\sum_{i^* = 1}^{n}\Big(\frac{p_{ii^*}}{p_{i^*}} - \frac{a_{ii^*}}{d_{i^*}}\Big)\Big(\frac{p_{i^*i'}}{p_{i'}} - \frac{a_{i^{*}{i'}}}{d_{i'}}\Big)\Big|
\\
&\leq \underbrace{\max_{i\neq {i'}}\Big|\sum_{i^{*}= 1}^n \BL \frac{a_{ii^*}}{p_{i^*}} - \frac{a_{ii^*}}{d_{i^*}}
\BR\BL\frac{p_{i^*i'}}{p_{i'}} - \frac{a_{i^{*}{i'}}}{p_{i'}} \BR\Big|}_{\delta^{(2,1)}_{2,\text{off}}}
 +\underbrace{\max_{i\neq {i'}}\Big|\sum_{i^* = 1}^{n}\Big(\frac{p_{ii^*}}{p_{i^*}} - \frac{a_{ii^*}}{d_{i^*}}\Big)\BL \frac{a_{i^{*}{i'}}}{p_{i'}} - \frac{a_{i^{*}{i'}}}{d_{i'}}\BR\Big|}_{ \delta^{(2,2)}_{2,\text{off}}}
\\
&+\underbrace{\max_{i\neq {i'}}\Big|\sum_{i^{*}= 1}^{n}\BL\frac{p_{ii^*}}{p_{i^*}} - \frac{a_{ii^*}}{p_{i^*}} \BR\BL\frac{p_{i^*i'}}{p_{i'}} - \frac{a_{i^{*}{i'}}}{p_{i'}} \BR\Big|}_{ \delta^{(2,3)}_{2,\text{off}}}.
\ee
We then have the bounds
\bee
{}&  \delta^{(2,1)}_{2,\text{off}} = \op\bigl\{(n\rho_n)^{-2}{\log
    n}\bigr\}, \quad \delta^{(2,2)}_{2,\text{off}} =
  \op\bigl\{(n\rho_n)^{-2}{\log n}\bigr\}, 
  \quad 
   \delta^{(2,3)}_{2,\text{off}} = \op\big\{n^{-3/2}\rho_n^{-1} \log^{1/2}{n}\big\}.
\ee
For succinctness we only derive the bound for
$\delta^{(2,1)}_{2,\text{off}}$ here. The bounds for
$\delta^{(2,2)}_{2,\text{off}}$ and
$\delta^{(2,3)}_{2,\text{off}}$ are derived similarly; see Section~\ref{app:ts:dt:3} for more details.

\begin{claim} Under the setting of Theorem~\ref{T3} we have $\delta^{(2,1)}_{2,\text{off}} =
  \op\big\{(n\rho_n)^{-2}{\log n}\big\}$.
  \end{claim}
  \begin{proof}
We start by writing
\begin{equation}
  \begin{split}
\sum_{i^* = 1}^n\BL\frac{a_{ii^*}}{p_{i^*}} - \frac{a_{ii^*}}{d_{i^*}}
\BR\BL\frac{p_{i^*i'}}{p_{i'}} - \frac{a_{i^{*}{i'}}}{p_{i'}} \BR 
&=
\frac{1}{p_{i'}}\sum_{i^* = 1}^na_{ii^*} \Bigl\{\frac{(d_{i^*} -
  p_{i^*})^2}{p_{i^*}^2 d_{i^*}} + \frac{p_{i^*} -
  d_{i^*}}{p_{i^*}^2}\Bigr\} \bigl(a_{i^*i'} - p_{i^*i'}\bigr),
\end{split}
\end{equation}
and hence
\bee
\delta_{2,\text{off}}^{(2,1)}
&\leq \max_{i\neq i'}\Bigl|\underbrace{\frac{1}{p_{i'}}\sum_{i^* = 1}^na_{ii^*}\frac{(d_{i^*} - p_{i^*})^2}{p_{i^*}^2d_{i^*}}
  \big(p_{i^*i'} - a_{i^*i'}\big)}_{\xi_{ii'}}\Bigr| 
+ \max_{i\neq i'}\Bigl|\underbrace{\frac{1}{p_{i'}}\sum_{i^* = 1}^na_{ii^*}\BL\frac{d_{i^*} - p_{i^*}}{p_{i^*}^2}
\BR\big(p_{i^*i'} - a_{i^*i'}\big)}_{\zeta_{ii'}}\Bigr|.
\ee
For the term $\xi_{ii'}$, by Lemma~\ref{l:sparse:rate}, we have
\begin{equation*}
\begin{split}
 \max |\xi_{ii'}| 
 & \leq \Big(\max_i\frac{1}{p_i}\Big)
  \Big\{\max_{i^*}\Big|\frac{(d_{i^*} -
    p_{i^*})^2}{p_{i^*}^2d_{i^*}}\Big|\Big\}
    \Big(2 + \max_{i\neq
    i'}\sum_{i^* = 1\atop i^* \neq i,i'}^n|a_{ii^*}||p_{i^*i'} -
  a_{i^*i'}| \Big)
  \\ &\leq \op\big\{(n\rho_n)^{-3} \log n\big\} 
  \Bigl[2 +
  \max_{i\neq i'}\Big\{i^*:a_{ii^*} = a_{i'i^*} = 1\Big\}
   +
  \|\mathbf{P}\|_{\max} \cdot \max_{i\neq i'}\Big\{i^*:a_{ii^*} = 1, a_{i'i^*} = 0\Big\} \Bigr]
  \\ & \leq \op\big\{(n\rho_n)^{-3} \log n\big\}  \Bigl(2 + \max_{i} d_i +
  \|\mathbf{P}\|_{\max}  \max_{i} d_i \Bigr)
= \op\big\{(n\rho_n)^{-2}{\log n}\big\}.
  \end{split}
\end{equation*}
For the term $\zeta_{ii'}$, first let $e_{ii'} = a_{ii'} -
p_{ii'}$. Then expanding $d_{i^{*}} - p_{i^{*}}$ as $\sum_{i^{**}}
e_{i^*i^{**}}$, we have
\bee
\zeta_{ii'} =\frac{1}{p_{i'}}\sum_{i^* = 1}^n
\frac{a_{ii^*}}{p_{i^*}^2}\Bigl(\sum_{i^{**}\not \in \{i',i\}} -
e_{i^*i^{**}} e_{i^*i'} - \sum_{i^{**} \in \{i',i\}}e_{i^*i^{**}}e_{i^*i'}\Bigr).
\ee
Now, for fixed $i,i'$, since $a_{ii} = p_{ii} = 0$ for all $i$, we have
\bee
\label{d211}
\frac{1}{p_{i'}}\sum_{i^* = 1}^n
\frac{a_{ii^*}}{p_{i^*}^2}\sum_{i^{**} \not \in
  \{i',i\}}e_{i^*i^{**}} e_{i^*i'} 
&= \frac{1}{p_{i'}}\sum_{i^* \not \in \{i,i'\}}\sum_{i^{**} \not \in
  \{i',i,i^*\}}\frac{1}{p_{i^*}^2}
a_{ii^*}e_{i^*i^{**}}e_{i^*i'} \\
&= \frac{1}{p_{i'}}\sum_{(i^*, i^{**}) \in \mathcal{T}(i,i')}
\Bigl(\frac{a_{ii^*}e_{i^*i'}}{p_{i^*}^2} +
\frac{a_{ii^{**}}e_{i^{**}i'}}{p_{i^{**}}^2} \Bigr)e_{i^*i^{**}} := \mathcal{S}(\bds a_i, \bds a_{i'}).
\ee
% By the symmetry of $\mathcal{S}_{*} = \{(i^*,i^{**})|i^* \neq i^{**}
% \text{ and } i^* \not \in \{i,i'\}, i^{**} \not \in \{i,i'\}\}$, we have
% \bee\label{d211}
% \sum_{i^* \neq i,i'}\sum_{i^{**}\neq
%   i',i,i^*}\frac{1}{p_{i'}}\frac{1}{p_{i^*}^2}a_{ii^*}e_{i^*i'} e_{i^*i^{**}} &= \sum_{(i^*,i^{**}) \in \mathcal{S}_*}\frac{1}{p_{i'}}\frac{1}{p_{i^*}^2}a_{ii^*}e_{i^*i'} e_{i^*i^{**}} 
% \\
% &= \sum_{(i^*,i^{**}) \in \mathcal{S}_*\atop i^* <
%   i^{**}}\frac{1}{p_{i'}}\Big[\frac{1}{p_{i^*}^2}a_{ii^*}e_{i*i'} + \frac{1}{p_{i^{**}}^2}a_{ii^{**}}e_{i^{**}i'}\Big]e_{i^*i^{**}} := \mathcal{\M S}(\bds a, \bds a').
% \ee
Here $\mathcal{T}(i,i') = \big\{(i^*,i^{**})|i^* < i^{**}, i^* \not \in
\{i,i'\}, i^{**} \not \in \{i,i'\}\}$ and
$\bm{a}_{i} = (a_{i1}, \dots, a_{in})$. 
Let us now, in addition to conditioning on $\mathbf{P}$, also
condition on both $\bds{a}_i$ and $\bds{a}_{i'}$. 
Then the sum for $\mathcal{S}(\bds{a}_{i}, \bds{a}_{i'})$ in
Eq.~\eqref{d211} is a sum of {\em independent}, mean zero random
variables, i.e., once we conditioned on $\mathbf{P}, \bds{a}_{i}$,
and $\bds{a}_{i'}$, the $e_{i^{*}i^{**}}$ for $(i^*,i^{**}) \in
\mathcal{T}(i,i')$ are independent. 
% Now on (1): $\max_i d_i \leq Cn\rho_n$ with some constant $C>0$; (2):
% $(a_{i1},\dots,a_{in}) = \bds a, (a_{i'1},\dots,a_{i'n}) = \bds a'$
% where $\bds a, \bds a'$ are some given binary vectors, we could see
% the items in the RHS of summation Eq.~\eqref{d211} are independent
% since only the $a_{i^*i^{**}}$ still has randomness.
We therefore have
\bee\label{var:d112}
\mathrm{Var}\Big(\frac{1}{p_{i'}} \sum_{\mathcal{T}(i,i')} \Bigl(\frac{a_{ii^*}e_{i^*i'}}{p_{i^*}^2} +
\frac{a_{ii^{**}}e_{i^{**}i'}}{p_{i^{**}}^2} \Bigr)e_{i^*i^{**}} 
  \Big| \bds a, \bds a'\Big)
&= \frac{1}{p_{i'}^2} \sum_{\mathcal{T}(i,i')} \Bigl(\frac{a_{ii^*}e_{i^*i'}}{p_{i^*}^2} +
\frac{a_{ii^{**}}e_{i^{**}i'}}{p_{i^{**}}^2}\Bigr)^2 \mathrm{Var}[e_{i^{*},i^{**}}] 
\\
&\precsim (n\rho_n)^{-6} \cdot (n \rho_n) \cdot (d_{i} + d_{i'})
% \\
% &\precsim (n\rho_n)^{-1}\cdot n \cdot\max_{i^*,i^{**}}\text{Var}\Big(\frac{1}{p_{i^*}^2}(p_{i^*i'} - a_{i^*i'})(a_{i^*i^{**}} - p_{i^*i^{**}})\Big| \max_i d_i \leq C n\rho_n, \bds a, \bds a'\Big) = (n\rho_n)^{-4},
\ee
Furthermore, we also have
$$\Bigl|\frac{1}{p_{i'}} \Bigl(\frac{a_{ii^*}e_{i^*i'}}{p_{i^*}^2} +
\frac{a_{ii^{**}}e_{i^{**}i'}}{p_{i^{**}}^2} \Bigr)e_{i^*i^{**}}\Bigr| \precsim (n\rho_n)^{-3}.$$
% \begin{equation*}
% \E\Big(\frac{1}{p_{i'}}\Big[\frac{1}{p_{i^*}^2}a_{ii^*}(p_{i^*i'} - a_{i^*i'}) + \frac{1}{p_{i^{**}}^2}a_{ii^{**}}(p_{i^{**}i'} - a_{i^{**}i'})\Big](a_{i^*i^{**}} - p_{i^*i^{**}})\Big|\max_i d_i \leq C n\rho_n,  \bds a, \bds a'\Big) = 0.
% \end{equation*}
Therefore, by Bernstein inequality, for any $c' > 0$ there exists a
constant $C' > 0$ such that, for all sufficiently large $n$,
\bee\label{d112:bern}
\ &\p\Bigl\{\Bigl|\mathcal{\mathcal{S}}(\bds{a}_{i}, \bds{a}_{i'})\Bigr|\geq
C'(n\rho_n)^{-5/2} (d_{i} + d_{i'})^{1/2} \log^{1/2}{n}\mid \bds a_{i},
\bds a_{i'}\Bigr\}
\leq n^{-c'}
\ee
We can now uncondition with respect
to $\bds{a}_i$ and $\bds{a}_{i'}$. More specifically, for any $t$, we have
\begin{equation*}
  \begin{split}
\p\Bigl\{\Bigl|\mathcal{S}(\bds{a}_{i}, \bds{a}_{i'})\Bigr|\geq
\tilde{t} \Bigr\}
&\leq \p\Bigl\{\Bigl|\mathcal{S}(\bds{a}_{i}, \bds{a}_{i'})\Bigr|\geq
\tilde{t} \mid \max \{d_i, d_{i'}\} < C n
\rho_n\Bigr\}
\p\Bigl(\max\{d_i,d_{i'}\} < C n \rho_n \Bigr)  + \p\big(\max \{d_i,d_{i'}\} \geq C n \rho_n\big),
\end{split}
\end{equation*}
where $C$ is any arbitrary positive constant. Now let $c$ be arbitrary. Then by
Lemma~\ref{l:sparse:rate} and
Eq.~\eqref{d112:bern}, together with
taking $\tilde{t} = C' (n \rho_n)^{-2} \log^{1/2}{n}$ for some constant $C'$,
we have
\begin{equation*}
\p\Bigl\{\Bigl|\mathcal{S}(\bds{a}_{i}, \bds{a}_{i'})\Bigr|\geq
C' (n \rho_n)^{-2} \log^{1/2}{n} \Bigr\} \leq 2n^{-c} .
\end{equation*}
A union bound over the $\tbinom{n}{2}$ possible choices of
$\bds{a}_{i}$ and $\bds{a}_{i'}$ yields
\bee\label{d112:f}
\max_{i \not = i'} \Bigl|\mathcal{S}(\bds{a}_{i}, \bds{a}_{i'})\Bigr| = \op\bigl\{(n \rho_n)^{-2} \log^{1/2}{n}\bigr\}.
\ee
% We note that since $\max_i d_i \leq Cn\rho_n$, the nonzero entries of $\bds a, \bds a'$ are bounded by $n\rho_n$ and we could see the multiplication constants of $\precsim$ in Eq.~\eqref{var:d112}-Eq.~\eqref{d112:bern} are not to do with the selection of $\bds a, \bds a'$. So
% \bee\label{d112:f}
% &\p\Big(\max_{i,i'}\Big|\frac{1}{p_{i'}}\sum_{i^* = 1}^n \frac{a_{ii^*}}{p_{i^*}^2}\Big[\sum_{i^{**}\neq i',i}(a_{i^*i^{**}} - p_{i^*i^{**}})(p_{i^*i'} - a_{i^*i'})\Big]\Big| \geq C'(n\rho_n)^{-2}\sqrt{\log(n)}\Big)
% \\
% &\leq n^2 \p\Big(\Big|\frac{1}{p_{i'}}\sum_{i^* = 1}^n \frac{a_{ii^*}}{p_{i^*}^2}\Big[\sum_{i^{**}\neq i',i}(a_{i^*i^{**}} - p_{i^*i^{**}})(p_{i^*i'} - a_{i^*i'})\Big]\Big| \geq C'(n\rho_n)^{-2}\sqrt{\log(n)}\Big)
% \\
% &\leq n^2\sum_{\bds a,\bds a':\atop \max_i d_i \leq C n\rho_n} \p\Big(\max_i d_i \leq C n\rho_n,  \bds a, \bds a'\Big)\cdot\p\Big(\mathcal{\M S}(\bds a, \bds a')\geq C'(n\rho_n)^{-2}\sqrt{\log(n)}\Big)\Big|\max_i d_i \leq C n\rho_n,  \bds a, \bds a'\Big)
% \\
% &+n^2\p(\max_i d_i > Cn\rho_n) \rightarrow 0,
% \ee
% with sufficient large $C,C' >0$ by Eq.~\eqref{A6:Da} and
% Eq.~\eqref{d112:bern}. 
Finally we also have 
\bee
\max_{i,i'}\Big|\frac{1}{p_{i'}}\sum_{i^* = 1}^n
\frac{a_{ii^*}}{p_{i^*}^2} \sum_{i^{**} \in \{i,i'\}}e_{i^*i^{**}}
e_{i^*i'}\Big|
&\precsim \max_i d_i \cdot \Big(\max_i 1/p_i\Big)^3 
=\op\{(n\rho_n)^{-2}\}.
\ee
In summary, we have $\max_{ii'} \zeta_{ii'} =
\op((n\rho_n)^{-2}\log^{1/2}{n})$ and hence
\bee
 \delta_{2,\text{off}}^{(2,1)}
  \leq \max_{ii'} \xi_{ii'} +
  \max_{ii'} \zeta_{ii'} = \op\big\{(n\rho_n)^{-2} \log n\big\}.
\ee
\end{proof}
Given the previous claim together with an argument similar to that for
Eq.~\eqref{T22:res1}, we obtain
\begin{equation}
  \label{T22s:res2}
  \begin{split}
\|\M W^2 - \hat{\M W}^2\|_{\max,\off} 
&\leq \|\MD^{(2)}_1\|_{\max} + \|\M W(\M W - \hat{\M W})\|_{\max} + \delta^{(2,1)}_{2,\off} + \delta^{(2,2)}_{2,\off} + \delta^{(2,3)}_{2,\off} 
\\
&=\op\Big[\max\Big\{(n\rho_n)^{-2} \log n,n^{-3/2}\rho_n^{-1}\sqrt{\log n}\Big\}\Big]
\\ & =\begin{cases}
\op(n^{-3/2}\rho_n^{-1}\log^{1/2}{n}) & \text{if $0\leq \beta <1/2$}
\\
\op((n\rho_n)^{-2} \log n) & \text{otherwise.}
\end{cases}
\end{split}
\end{equation}
\textbf{Step 4 (Bounding $\|\hat{\M W}^{t} - \M W^{t}\|_{\max}$ for $t
\geq 3$):} The following argument is a generalization of the
argument in \textbf{Step 3} of
Section~\ref{a:p:t2}. We first consider $t = 3$. We have  
\bee\label{T2s:s3f1}
\|\HW^3 - \M W^3\|_{\max} 
&\leq \|\big(\HW^2 - \W^2\big)\HW\|_{\max} + \|\W^2\big(\HW - \W\big)\|_{\max}
\ee
For the first term in the RHS, by Eq.~\eqref{T4:res2} and Lemma~\ref{l:sparse:rate}, we have
\bee\label{T2s:s3f2}
\|(\HW^2 - \W^2)\HW\|_{\max}
&\leq \max_{i}d_i\cdot \|\HW\|_{\max}\|\HW^2 - \W^2\|_{\max,\text{off}} \\
&= \op(1)\|\HW^2 - \W^2\|_{\max,\text{off}} 
=\op\Bigl(\max\Bigl\{(n\rho_n)^{-2}\log n,n^{-3/2}\rho_n^{-1}\log^{1/2}{n}\Bigr\}\Bigr).
\ee
For the second term in the RHS of Eq.~\eqref{T2s:s3f1}, we apply the
similar technique for $\M \Delta^{(2,1)}_{1}$ in {Step 2} and deduce 
\bee\label{T2s:s3f3}
\|\W^2(\HW - \W)\|_{\max} =\op(1) \|\M W^2 - \hat{\M W}^2\|_{\max,\off}.
\ee
Combining Eq.~\eqref{T2s:s3f1}, Eq.~\eqref{T2s:s3f2} and Eq.~\eqref{T2s:s3f3}, we have
\bee\label{T2s:s3res1}
\|\HW^3 - \W^3\|_{\max} 
= \op\Big(\max\Big\{(n\rho_n)^{-2}\log
n,n^{-3/2}\rho_n^{-1}\log^{1/2}{n}\Big\}\Big).
\ee
Now for $t = 4$, we have 
\bee\label{T4:s3f1}
\HW^4 - \W^4 &= (\HW^2 - \W^2)\HW^2 + \W^2(\HW^2 - \W^2)
=(\HW^2 - \W^2)\HW^2 - (\W^2 - \HW^2)^2
+ \HW^2(\HW^2 - \W^2).
\ee
For $(\HW^2 - \W^2)\HW^2$, by Eq.~\eqref{T2s:s3f2} and Lemma~\ref{l:sparse:rate}
\bee\label{T4:s3f2}
\|(\HW^2 - \W^2)\HW^2\|_{\max} 
&\leq \max_i d_i \cdot \|\HW\|_{\max}\cdot \|(\HW^2 - \W^2)\HW\|_{\max}
\\& = \op(n \rho_n \cdot (n \rho_n)^{-1})\cdot\op(1) \|\M W^2 -
\hat{\M W}^2\|_{\max,\off} = \op(1) \|\M W^2 - \hat{\M W}^2\|_{\max,\off}.
\ee
Similarly, we have $\|\HW^2(\HW^2 - \W^2)\|_{\max} = \op(1) \|\M W^2 - \hat{\M W}^2\|_{\max,\off}$. Furthermore,
\bee\label{T4:s3f3}
\|(\HW^2 - \W^2)^2\|_{\max} 
&\leq n \|\HW^2 - \W^2\|_{\max,\text{off}}^2 + \|\HW^2 - \W^2\|_{\max,\text{diag}}^2
=\op\big((n\rho_n)^{-2}\log n\big).
\ee
Combining Eq.~\eqref{T4:s3f1}, Eq.~\eqref{T4:s3f2} and Eq.~\eqref{T4:s3f3}, we obtain
\bee\label{T2s:s3res2}
\|\HW^4 - \W^4\|_{\max} 
&\leq \mathcal{O}_{\mathbb{P}}(1) \|(\HW^2 - \W^2)\HW^2\|_{\max} + \|(\W^2 - \HW^2)^2\|_{\max}  
\\ &= \op\Big(\max\Big\{(n\rho_n)^{-2}\log
n,n^{-3/2}\rho_n^{-1} \log^{1/2}{n} \Big\}\Big) .
\ee
For any $t \geq 5$, we can write $\hat{\M W}^{t} - \M W^{t}$ as
\begin{equation*}
\hat{\M W}^{t} - \M W^{t} = \underbrace{(\HW^2 - \W^2)\HW^{t -2}}_{\M\Delta^{(3)}_1} + \underbrace{\W^2(\HW^{t - 2} - \W^{t - 2})}_{\M\Delta^{(3)}_2}.
\end{equation*}
% which implies
% \bee\label{T23s:main}
% \| \HW^{t} - \W^{t}\|_{\max} \leq \|\M\Delta^{(3)}_1\|_{\max} + \|\M\Delta^{(3)}_2\|_{\max}.
% \ee
We first consider $\M\Delta^{(3)}_1$. By Lemma~\ref{l:sparse:rate} and Eq.~\eqref{T2s:s3f2}
\bee\label{T23s:res1}
\|\M \Delta^{(3)}_1\|_{\max} 
&\leq \max_i d_i \cdot \|\HW\|_{\max}\cdot\|(\HW^2 - \W^2)\HW^{t-3}\|_{\max}
\\
&\leq \dots
\leq \Big(\max_i d_i  \|\HW\|_{\max}\Big)^{t - 3} \|(\HW^2-\W^2)\HW\|_{\max}
=\op\Bigl(\max\Bigl\{(n\rho_n)^{-2}\log
n,n^{-3/2}\rho_n^{-1}\log^{1/2}{n}\Bigr\}\Bigr).
\ee
For $\M \Delta^{(3)}_2$, by Lemma~\ref{l:sparse:rate}
\bee\label{T23s:res2}
\|\M \Delta^{(3)}_2\|_{\max} 
&\leq n\cdot \|\M W^2\|_{\max}\|\hat{\M W}^{t - 2} - \M W^{t - 2}\|_{\max}
 = \mathcal{O}(1)\|\hat{\M W}^{t - 2} - \M W^{t - 2}\|_{\max}.
\ee
Similar to Eq.~\eqref{T23:ite}, we obtain
\bee
\|\HW^{t} - \W^{t}\|_{\max}
&= 
\op\Big(\max\Big\{(n\rho_n)^{-2}\log
n,n^{-3/2}\rho_n^{-1} \log^{1/2}{n} \Big\}\Big) + 
\begin{cases} 
\mathcal{O}(1)\|\hat{\M W}^{4} - \M W^{4}\|_{\max} & \text{if $t$ is even} \\
\mathcal{O}(1)\|\hat{\M W}^3 - \M W^3\|_{\max}, & \text{if $t$ is odd}
\end{cases}
\ee
%depending on whether $t$ is even or odd. 
Eq.~\eqref{T2s:s3res1} and
Eq.~\eqref{T2s:s3res2} then implies
\bee\nonumber
\|\HW^{t} - \W^{t}\|_{\max} 
&= \op\Big(\max\Big\{(n\rho_n)^{-2}\log
n,n^{-3/2}\rho_n^{-1} \log^{1/2}{n}\Big\}\Big)
=\begin{cases}
\op(n^{-3/2}\rho_n^{-1}\log^{1/2}{n}) & \text{if $0\leq \beta <1/2$}
\\
\op((n\rho_n)^{-2}\log n) & \text{otherwise.}
\end{cases}
\ee
\textbf{Step 5 (Bounding $\|\hat{\M W}^t - \M W^t\|_{\max}$ for $t
  \geq 4$ and $\tfrac{t-3}{t-1} \geq \beta$):}
Similar to Eq.~\eqref{l:hoA:1} and Eq.~\eqref{l:hoA:2} in the proof of
Lemma~\ref{l:hoA}, we write
\bee
  \label{Ts:ho:main}
\hat{\M W}^{t} - \M W^{t} 
&= (\hat{\M W}- \M W)^{t} 
+ \sum_{r = 0}^{t
  - 1}\sum_{s = 1}^{t - r - 1} \hat{\M W}^{t - r - s - 1}(\hat{\M W} -
\M W)\M W^{s}(\hat{\M W} - \M W)^r  + \sum_{r' =  1}^{t - 1}\M W^{t - r'}(\hat{\M W} - \M W)^{r'}.
\ee
In {Step 2}, we shown $\|(\hat{\M W} - \M W)\M W\|_{\max} = \op
\big(n^{-3/2}\rho_n^{-1/2}\sqrt{\log n}\big)$. Let $d_{\max} = \max_{i} d_i$. A similar argument to
that for $L_1$ in the proof of Lemma~\ref{l:hoA} yields, for $0 \leq r
\leq t-1$ and $1 \leq s \leq t - r - 1$, that
\bee\nonumber
\|\hat{\M W}^{t - r - s - 1}(\hat{\M W} - \M W)\M W^{s}(\hat{\M W} - \M W)^r\| 
&\leq \bigl(d_{\max}  \|\hat{\M W}\|_{\max}\Bigr)^{t - r - s - 1} \|(\hat{\M W} - \M W)\M W^{s}\|_{\max}
(n\|\M W\|_{\max} + d_{\max} \|\hat{\M W} - \M W\|_{\max})^r
\\ 
&= \op\Big(\|(\hat{\M W} - \M W)\M W^{s}\|_{\max}\Big) 
\\
&= \op\Big(n^{s -1}\|(\hat{\M W} - \M W)\M W\|_{\max}  \|\M W\|_{\max}^{s -1}\Big)   
\\ &= \op\Big(\|(\hat{\M W} - \M W)\M W\|_{\max} \Big) 
= \op
\Big(n^{-3/2}\rho_n ^{-1/2}\log^{1/2}{n}\Big),
\ee
where the second inequality follows from
Lemma~\ref{l:sparse:rate} and Eq.~\eqref{T21:res1} and the last
inequality follows from Lemma~\ref{l:sparse:rate}. As $t$ is
finite, we have
\bee\nonumber
\Big\|\sum_{r = 0}^{t - 1}\sum_{s = 1}^{t - r - 1} \hat{\M W}^{t - r - s - 1}(\hat{\M W} - \M W)\M W^{s}(\hat{\M W} - \M W)^r\Big\|_{\max}
=\op
\Big(n^{-3/2} \rho_n^{-1/2}\log^{1/2}{n}\Big).
\ee
Similarly,
\bee\nonumber
\Big\|\sum_{r' =  1}^{t - 1}\M W^{t - r'}(\hat{\M W} - \M W)^{r'} \Big\|_{\max} = \op
\big(n^{-3/2} \rho_n^{-1/2} \log^{1/2}{n}\big).
\ee
Recalling Eq.~\eqref{Ts:ho:main}, we obtain
\bee\label{Ts:ho:main1}
\|\hat{\M W}^{t} - \M W^{t}\|_{\max} 
\leq \|(\hat{\M W} - \M W)^t\|_{\max} + \op
\big(n^{-3/2} \rho_n^{-1/2}\log^{1/2}{n}\big).
\ee
Now we focus on $(\hat{\M W} - \M W)^t$. We start with the polynomial expansion
\begin{equation*}
\begin{split}
(\hat{\M W} - \M W)^t &= \Big\{\M A(\M D_{\M A}^{-1} - \M D_{\M P}^{-1}) + (\M A - \M P) \M D_{\M P}^{-1}\Big\}^t
= \Big\{(\M A - \M P) \M D_{\M P}^{-1}\Big\}^t 
+\sum_{\begin{subarray}\ \bds c\in \{1,2\}^t \\ \bds c\neq (1,\cdots,1)\end{subarray}}\prod_{r = 1}^t\M \Xi_{c_r}.
\end{split}
\end{equation*}
Here $c_r$ represents the $r$th coordinate of $\bds{c} = (c_1, 
\dots, c_t) \in \{1,2\}^{t}$ and 
\bee\nonumber
\M \Xi_{{c}_r} =
\begin{cases}
(\M A - \M P)\M D_{\M P}^{-1}, & \text{if $c_r = 1$}
\\
\M A (\M D_{\M A}^{-1} - \M D_{\M P}^{-1}), & \text{if $c_r = 2$.}
\end{cases}
\ee
We note that there are $2^t - 1$ distinct $\bds{c} \not =
(1,1,\dots,1)$. Now for any given
$\bds{c}$, let $r^{*} = r^*(\bds{c})$ be the smallest value of $r$
such that $c_r = 2$. We emphasize that $r^{*}$ depends on $\bds{c}$;
however, for simplicity of notation we make this dependency
implicit. We further denote
\bee\nonumber
(\M \Xi^{1}_{c_r},\M \Xi^2_{c_r}) := \begin{cases}
(\M A \M D_{\M A}^{-1}, - \M A\M D_{\M P}^{-1}),& \text{if $c_r = 1$,} \\
(\M A \M D_{\M P}^{-1}, - \M P\M D_{\M P}^{-1}),& \text{if $c_r = 2$},
\end{cases}
\ee
so that $\M \Xi_{c_r} = \M \Xi_{c_r}^1 + \M \Xi_{c_r}^2$. Given $\bds c$, we could write
\begin{equation}
  \label{Ts:ho:xis}
\begin{split}
  \prod_{r = 1}^t\M \Xi_{c_r} &= \prod_{j < r^*} (\M \Xi_{c_j}^1
  + \M \Xi_{c_j}^2) \cdot \M \Xi_{c_{r^*}} \cdot \prod_{k > r^*} (\M \Xi_{c_k}^1 + \M \Xi_{c_k}^{2})
=\sum_{\bds{m} \in \{1,2\}^{t}} \underbrace{\prod_{j < r^*}
  \M \Xi_{c_j}^{m_j} \cdot \M \Xi_{c_{r^*}} \cdot \prod_{k > r^*} \M \Xi_{c_k}^{m_k}}_{\bds{\Im}_{\bds{m}, \bds{c}}}
\end{split}
\end{equation}
Now for any $c_r \in \{1,2\}$ and $m \in \{1,2\}$, 
the $ii'$th entry of $\M \Xi^m_{c_r}$ is
$\xi^{c_r,m}_{ii'} \cdot \vartheta^{c_r,m}_{ii'}$ where
\bee\label{Ts:ho:defxitheta}
(\xi^{c_r,m}_{ii'}, \vartheta^{c_r,m}_{ii'}) = 
\begin{cases}
(a_{ii'},1/d_{i'}) & \text{if $c_r = 1$ and $m = 1$},\\
(a_{ii'},-1/p_{i'}) & \text{if $c_r = 1$ and $m = 2$},
\\
(a_{ii'},1/p_{i'}) & \text{if $c_r = 2$ and $m = 1$},
\\
(p_{ii'},-1/p_{i'}) & \text{if $c_r = 2$ and $m = 2$}.
\end{cases}
\ee
Using the above notations, we can now write the $ii'$th entry of $\bds{\Im}_{\bds{m},\bds{c}}$ as
\bee\nonumber
 \Im^{\bds{m},\bds{c}}_{ii'} 
= \sum_{\bds{i} \in
  \{1,\dots,n\}^{t-1}} \Bigl( \prod_{j < r^*}  \xi^{c_j,m_j}_{i_{j-1}
    i_j}  \vartheta^{c_j,m_j}_{i_{j-1} i_j}\Bigr) 
\cdot
  a_{i_{r^* - 1}i_{r_*}}  \Bigl(\frac{1}{d_{i_{r^*}}} - \frac{1}{p_{i_{r^*}}}\Bigr)\Bigl(\prod_{k > r^*} \xi^{c_k,m_k}_{i_{k-1}
    i_k} \cdot \vartheta^{c_k,m_k}_{i_{k-1} i_k}\Bigr),
    \ee
where, with a slight abuse of notation, we denote $\bm{i} = (i_1,
\dots, i_{t-1}) \in \{1,\dots,n\}^{t-1}$, $i_0 = i$ and $i_t = i'$. 
We thus have
\begin{equation*}
  \begin{split}
|\Im^{\bds{m},\bds{c}}_{ii'}| 
&\leq \sum_{\bds{i}}
\Big|a_{i_{r^* - 1}i_{r^*}} \prod_{j \not =
  r^*} \xi^{c_j,m_j}_{i_{j-1}i_j} \Bigr|
\cdot \Bigl|\Bigl(\frac{1}{d_{i_{r^*}}} -
\frac{1}{p_{i_{r^*}}}\Bigr)  \prod_{j \not = r^*}
\vartheta^{c_j,m_j}_{i_{j-1}i_j} \Bigr|
\\
&\leq \|\M D_{\M A}^{-1}\|^{s(\bds{c},\bds{m})}  \|\M
D_{\M P}^{-1}\|^{t - 1 - s(\bds{c},\bds{m})} 
\cdot \|\M D^{-1}_{\M A} - \M
D_{\M P}^{-1}\|  
 \Bigl(\sum_{\bds{i}} a_{i_{r^*} - 1i_{r^*}} \prod_{j \not =
  r^*} \xi^{c_j,m_j}_{i_{j-1}i_j} \Bigr).
\end{split}
\end{equation*}
Here $s(\bds{c}, \bds{m})$ is the number of indices $r$ with $r
\not = r^*$ and $(c_r, m_r) = (1,1)$. 
% and $T_{P,\ba,\bds{m}}$ denote the number of times that $\M A \M D_{\M A}^{-1}$ and $- \M A \M D_{\M P}^{-1}, \M A \M D_{\M P}^{-1}, -\M P \M D_{\M P}^{-1}$  appear among 
% \bee\nonumber
% \{\M \Xi_{\bds{a}_1}^{m_1},\cdots,\M \Xi_{\ba_{b^*_\ba + 1}}^{m_{b^*_\ba - 1}},\M \Xi_{\ba_{b^*_\ba + 1}}^{m_{b^*_\ba + 1}},\dots,\M \Xi_{\bds{a}_t}^{m_t}\}
% \ee
% so $T_{A,\ba,\bds{m}} + T_{P,\ba,\bds{m}} = t -1$, and the second inequality holds by taking overall maximum for each element in (\ref{Ts:ho:divpart}) and 
% We note that the last  holds since all $\xi^{\ba,b,m}_{ii'}$ and $a_{ii'}$ are nonnegative. 
Now, by Lemma~\ref{l:sparse:rate} and Assumption~\ref{am:s}, we have
\bee\label{Ts:ho:mainIm}
\|\bds{\Im}_{\bds{m},\bds{c}}\|_{\max} 
&= \op\Bigg\{(n\rho_n)^{-1/2
  -t}(\log n)^{1/2} \cdot \max_{ii'}  \Bigl(\sum_{\bds{i}} a_{i_{r^*} - 1i_{r^*}} \prod_{j \not =
  r^*} \xi^{c_j,m_j}_{i_{j-1}i_j}\Big) \Bigg\}.
\ee
with the convention $i_0 = i$ and $i_{t} = i'$. 
Now define a matrix-valued function $\upxi_{\M A, \M P}(\cdot)$ by
\bee\nonumber
\upxi_{\M A, \M P}(c_r, m_r)=\begin{cases}
\M A & \text{if $(c_r, m_r) \in \{(1,1),(1,2),(2,1)\}$}, \\
\M P & \text{if $(c_r, m_r) = (2,2)$},
\end{cases}
\ee
Also define
\bee\nonumber
\upxi_{\M A,\M P}^{\bds{\Im}_{\bds{m},\bds{c}}} = \Bigl\{\prod_{j < r^*} \upxi_{\M A,\M
  P}(c_j, m_j)\Bigr\} \cdot \M A \cdot \Bigl\{\prod_{k > r^*}
\upxi_{\M A,\M P}(c_k, m_k)\Bigr\}.
\ee 
Then by the definition of the $\xi^{c_r,m}_{ii'}$ in
Eq.~\eqref{Ts:ho:defxitheta}, we have 
\bee
&\|\upxi_{\M A,\M P}^{\bds{\Im}_{\bds{m},\bds{c}}}\|_{\max} 
= \max_{ii'} \Bigl(\sum_{\bds{i}} a_{i_{r^*} - 1i_{r^*}}
  \prod_{j \not = r^*} \xi^{c_j,m_j}_{i_{j-1}i_j} \Bigr).
\ee
% With the above notations in place, we have
% \bee\nonumber
% \|\bds{\Im}_{\bds{m},\bds{c}}\|_{\max} \leq \|\M D_{\M
%   A}^{-1}\|^{s(\bds{c},\bds{m})}\cdot \|\M D_{\M P}^{-1}\|^{t - 1 -
%   s(\bds{c}, \bds{m})} \cdot\|\M D^{-1}_{\M A} - \M D_{\M P}^{-1}\|_{\max}\cdot\|\upxi_{\M A,\M P}^{\bds{\Im}_{\bds{m},\bds{c}}}\|_{\max}.
% \ee
% In addition with Lemma~\ref{l:sparse:rate} and Assumption~\ref{am:s}, we have
% \bee\label{Ts:ho:mainIm}
% &\|\bds{\Im}_{\bds{m},\ba}\|_{\max} \leq \op((n\rho_n)^{-3/2 -(t-1)}\sqrt{\log(n)}) \cdot \|\upxi_{\M A,\M P}^{\bds{\Im}_{\bds{m},\ba}}\|_{\max}.
% \ee
We now consider two cases to bound $\|\upxi_{\M A,\M P}^{\bds{\Im}_{\bds{m},\bds{c}}}\|_{\max}$.

\noindent\textbf{Case 1:} Suppose that for the given $\bds{m},\bds{c}$,
there exists at least one index $r \not = r^{*}$ with $(c_r, m_r) =
2$, i.e., the matrix $\M P$ appears at least once among the collection
of 
$\upxi_{\M A, \M P}(c_r, m_r)$ for $r \not = r^{*}$. Then $\upxi_{\M A,\M P}^{\bds{\Im}_{\bds{m},\bds{c}}}$ must have the form
\begin{equation}\label{Ts:ho:stru}
\begin{aligned}
&\underbrace{\cdots\cdots}_{g(\bds{c},\bds{m}) \,\, \text{matrices}} \ \M
P \M A\underbrace{\cdots\cdots}_{t - 2 - g(\bds{c},\bds{m})
  \,\, \text{matrices}}, \quad \text{or} \quad
\underbrace{\cdots\cdots}_{g(\bds{c},\bds{m}) \,\, \text{matrices}}\ \M A \M P
\underbrace{\cdots\cdots}_{t - 2 - g(\bds{c},\bds{m}) \,\,\text{matrices}}.
\end{aligned}
\end{equation}
Note that it is possible that $g(\bds{c},\bds{m}) = 0$ or $g(\bds{c},\bds{m}) = t - 2$ as $\M A$
could be either the first or last matrix in the product
$\upxi_{\M A,\M P}^{\bds{\Im}_{\bds{m},\bds{c}}}$.
Consider the first form in Eq.~\eqref{Ts:ho:stru}. We have
\begin{equation}
  \begin{split}
\|\upxi_{\M A,\M P}^{\bds{\Im}_{\bds{m},\bds{c}}}\|_{\max} 
&=
\|\underbrace{\cdots\cdots}_{g(\bds{c},\bds{m}) \,\, \text{matrices}} \ \M P
\M A\underbrace{\cdots\cdots}_{t - 2 - g(\bds{c},\bds{m}) \,\, \text{ matrices}}\|_{\max}
\\
& \overset{(1)}{\leq}
\begin{cases}
n\max_{i,i'} p_{ii'}
\cdot\|\underbrace{\cdots\cdots}_{g(\bds{c},\bds{m}) - 1 \,\, \text{matrices}}
\M P \M A\underbrace{\cdots\cdots}_{t - 2 - g(\bds{c},\bds{m}) \,\,
  \text{matrices}}\| 
  \\(\text{if first matrix is }\M P)
\\
\max_i d_i \cdot\|\underbrace{\cdots\cdots}_{g(\bds{c},\bds{m}) - 1
  \,\, \text{matrices}} \M P \M A\underbrace{\cdots\cdots}_{t - 2 -
  g(\bds{c},\bds{m}) \,\, \text{matrices}}\| 
  \\(\text{if first matrix is }\M A)
\end{cases}
\\
&\overset{(1)}{=}
\op\Big(n\rho_n\|\underbrace{\cdots\cdots}_{g(\bds{c},\bds{m} - 1) \,\,
  \text{matrices}} \ \M P \M A\underbrace{\cdots\cdots}_{t - 2 -
  g(\bds{c},\bds{m}) \,\, \text{matrices}}\|\Big)
  \\
  &\overset{(1)}{\leq}\dots
\overset{(1)}{\leq}\op\Big((n\rho_n)^{g(\bds{c},\bds{m})}\|\M
P\M A\underbrace{\cdots\cdots}_{t - 2 - g(\bds{c},\bds{m}) \,\, \text{matrices}}\|\Big)
\\
&\overset{(2)}{\leq}
\op\Big((n\rho_n)^{g(\bds{c},\bds{m})+1}\|\M P\M
A\underbrace{\cdots\cdots}_{t - 3 - g(\bds{c},\bds{m}) \,\,
  \text{matrices}}\|\Big)
  \\
  &\overset{(2)}{\leq}\dots
\overset{(2)}{\leq}\op\Big((n\rho_n)^{t-2}\|\M P \M A\|_{\max}\Big)
\leq\op\Big((n\rho_n)^{t-2}\max_{i,i'}p_{ii'} \max_i d_i \Big)
= \op(n^{t-1}\rho_n^{t}).
\end{split}
\end{equation}
In the above inequality, all relationships with $\overset{(1)}{\leq}$
and $\overset{(2)}{\leq}$ are when we removed either $\M P$ or $\M A$
from before and after the term $\M P \M A$, respectively. An identical argument also yields
\bee\label{Ts:ho:Im:C1}
\|\upxi_{\M A,\M P}^{\bds{\Im}_{\bds{m},\bds{c}}}\|_{\max} = \op(n^{t-1}\rho_n^{t})
\ee
for the second form in Eq.~\eqref{Ts:ho:stru}.
\par
\noindent\textbf{Case 2:} Suppose now that, for all $r \not = r^{*}$, we have
$(c_r, m_r) \not = (2,2)$, i.e., $\xi_{\mathbf{A}, \mathbf{P}}(c_r,
m_r) = \M A$ for all $r \not = r^{*}$. Then $\upxi_{\M A,\M
  P}^{\bds{\Im}_{\bds{m},\ba}} = \M A^t$ and, by the fact that $\frac{t-3}{t-1} > \beta$ already yields
$n^{\frac{2-t}{t}}\precsim \rho_n$ hold, we have by Lemma~\ref{l:hoA}, 
\bee\label{Ts:ho:Im:C2}
\|\upxi_{\M A,\M P}^{\bds{\Im}_{\bds{m},\ba}}\|_{\max} = \|\M A^t\|_{\max} = \op(n^{t-1}\rho_n^{t}).
\ee
Combining Eq.~\eqref{Ts:ho:xis}, Eq.~\eqref{Ts:ho:mainIm},
Eq.~\eqref{Ts:ho:Im:C1} and Eq.~\eqref{Ts:ho:Im:C2},
we have
\bee\label{Ts:ho:main4}
\Big\|\prod_{r = 1}^t\M \Xi_{c_r}\Big\|_{\max} 
 &\leq \sum_{\bds{m}
  \in \{1,2\}^{t}}\|
\bds{\Im}_{\bds{m},\bds{c}}\|_{\max} 
\\
&= 
\op\Big\{2^t\|\upxi_{\M A,\M P}^{\bds{\Im}_{\bds{m},\bds{c}}}\|_{\max}(n\rho_n)^{-1/2 -t} \log^{1/2}{n}\Big\} 
\\ &= \op\Big((n\rho_n)^{-1/2 -t}\log^{1/2}{n}\Big) \cdot \op\Big(n^{t-1}\rho_n^{t}\Big)
= \op\bigl\{n^{-3/2}\rho_n^{-1/2}\log^{1/2}{n}\bigr\}.
\ee
We then have, by Eq.~\eqref{Ts:ho:main1},
\bee\label{stepfinal}
\|\hat{\M W}^{t} - \M W^{t}\|_{\max} 
&\leq \|(\hat{\M W} - \M W)^t\|_{\max} + \op
\bigl(n^{-3/2} \rho_n^{-1/2}\log^{1/2}{n}\bigr)
\\
&\leq\|\{(\M A - \M P) \M D_{\M P}^{-1}\}^t
\|_{\max}+\sum_{\substack{\bds{c}\in \{1,2\}^t \\ \bds{c}\neq
    (1,\cdots,1)}}
\Big\|\prod_{r = 1}^t\M \Xi_{c_r}\Big\|_{\max}
+\op
\bigl(n^{-3/2}\rho_n^{-1/2}
\log^{1/2}{n}
\bigr)
\\
&\leq \big\|\bigl\{(\M A  - \M P) \M D_{\M P}^{-1}\bigr\}^t\big \|_{\max} + \op
\bigl(n^{-3/2} \rho_n^{-1/2}
\log^{1/2}{n}\bigr).
\ee
The last inequality in the above display follows from the fact that the number of distinct
$\bm{c}$ is $2^t - 1$ which is finite and does not depend on $n$.
Finally we focus on $\|\bigl\{(\M A - \M P) \M D_{\M
P}^{-1}\bigr\}^t \|_{\max}$. An argument similar to that for
$\|(\M A - \M P)^t\|_{\max}$ in the proof of Lemma~\ref{l:hoA} yields
\bee\label{apdpt}
\|\bigl\{(\M A - \M P) \M D_{\M P}^{-1}\bigr\}^t \|_{\max} 
&\leq\|\M A - \M P\|^t_2\cdot\|\M D_{\M P}^{-1}\|_2^t
\\
&\precsim (n\rho_n)^{-t}\|\M A - \M P\|^t_2 && \text{(By Lemma \ref{l:sparse:rate})}
\\
& = \mathcal{O}_{\mathbb{P}}\big\{(n\rho_n)^{-t/2}\big\} && (\text{By Eq.~\eqref{lupeng}}).
\ee
It could be directly checked that the rate $(n\rho_n)^{-t/2}\precsim n^{-3/2}\rho_n^{-1/2}\log^{1/2}{n}$ when $\frac{t-3}{t-1} > \beta$. Thus, together with Eq.~\eqref{Ts:ho:main4}, we conclude
\bee\label{Wttdiff}
&\|\hat{\M W}^{t} - \M W^{t}\|_{\max}  = \op
\big(n^{-3/2}\rho_n^{-1/2}\log^{1/2}{n}\big)
\ee
for $t\geq 4$ and $\frac{t-3}{t-1} > \beta$, as desired.\qed
\subsection{Proof of Theorem~\ref{T3} (Sparse Regime)}\label{pf:T3}
We will only present the proof of bounding $\|\tilde{\mathbf{M}}_0 -
\mathbf{M}_0 \|_{\F}$ and $\|\tilde{\mathbf{M}}_0 -
\mathbf{M}_0 \|_{\infty}$ here as the rates for $\|\M M_0\|_{\F}$ and $\|\M M_0\|_{\infty}$ are derived
in the proof of Theorem~\ref{c1} (c.f. Eq.~\eqref{MMFF:s}). 
For sufficient large $t_U > 0$ such that, w.h.p. $\tilde{\M M}_0$ is well defined, we use the same notations of $\M I_{\M A}, \M I_{\M P}$ as defined in Eq.~\eqref{T7:df1}. Then we also have
\bee\label{t:s:m:res1}
&\|\tilde{\M M}_0 - \M M_0\|_{\max,\text{off}} \leq\max_{i, i'}\Big(\frac{1}{\alpha_{ii'}}\Big)\cdot \|\M I_{\M A} - \M I_{\M P}\|_{\max,\text{off}},
\\
&\|\tilde{\M M}_0 - \M M_0\|_{\max,\text{diag}} \leq\max_{i,i'}\Big(\frac{1}{\alpha_{ii'}}\Big)\cdot \|\M I_{\M A} - \M I_{\M P}\|_{\max,\text{diag}},
\ee
where $\alpha_{ii'}\in (\min\{I^{ii'}_{\M A},I ^{ii'}_{\M P}\},\max\{I_{\M A}^{ii'},\, I_{\M P}^{ii'}\})$, $I^{ii'}_{\M A}, I^{ii'}_{\M P}$ are $ii'$th entries of $\M I_{\M A}, \M I_{\M P}$. We now extend the argument in the dense regime to the sparse regime. 

\noindent{\textbf{Step 1 (Bounding $\max_{i,i'}\alpha_{ii'}^{-1}$):}} 
We also have $\alpha_{ii'}$ is between $I_{\M A}^{ii'}$ and $I_{\M P}^{ii'}$ and $\max_{i,i'}\frac{1}{\alpha_{ii'}}\leq \max_{i,i'}\Big\{\frac{1}{I^{ii'}_{\M A}},\frac{1}{I^{ii'}_{\M P}}\Big\}.$ By Lemma~\ref{l:sparse:rate} and Assumption~\ref{am:s}, we have
\bee\nonumber
\max_{i,i'}\frac{1}{I_{\M P}^{ii'}} &= \max_{i,i'}\frac{1}{\sum_{t = t_L}^{t_U}(L - t)\Big(\sum_{ii'}{p_{ii'}}\Big)\cdot\Big(\frac{{w}_{ii'}^{(t)}}{p_i}\Big)}
\leq \max_{i,i'}\frac{p_i}{(L-t_L)(\sum_{ii'}p_{ii'})(w^{(t_L)}_{ii'})}
\precsim \frac{n\rho_n}{n^2\rho_n\cdot \frac{1}{n}} = 1.
\ee
We also have $\frac{1}{I^{ii'}_{\M A}} \leq \frac{d_i}{(L-t_L)(\sum_{ii'}a_{ii'})(\hat{w}_{ii'}^{(t_L)})}$ and we consider off-diagonal and diagonal cases separately. 

\textbf{(1) \text{When $i\neq i'$:}} By Theorem~\ref{T3} and Lemma~\ref{l:sparse:rate}, we have $\min_{i,i'}w^{(t_L)}_{ii'}, \max_{i,i'}w^{(t_L)}_{ii'}\asymp n^{-1}$ for fixed $2 \leq t_L$ and 
\bee\label{wwtww}
\max_{i \neq {i'}}|w^{(t_L)}_{ii'} - \hat{w}^{(t_L)}_{ii'}|/\min_{i,i'}w^{(t_L)}_{ii'}
&= \begin{cases}
\op\Big(n^{-3/2}\rho_n^{-1}\log^{1/2}{n}/(1/n)\Big) = \mathrm{o}_{\M P}(1) &
(\text{when } t_L \geq 2 \text{ and }\beta <\frac{1}{2})
\\
\op\Big(n^{-3/2}\rho_n^{-1/2}\log^{1/2}{n}/(1/n)\Big) = \mathrm{o}_{\M P}(1) 
&
(\text{when } t_L \geq 4 \text{ and }\beta<\frac{t_L - 3}{t_L-1})
\end{cases}
\ee
which implies 
\bee\nonumber
0< {\min_{i,i'}w^{(t_L)}_{ii'} - \max_{i \neq {i'}}|w^{(t_L)}_{ii'} - \hat{w}^{(t_L)}_{ii'}|}\asymp\min_{i,i'}w^{(t_L)}_{ii'} = \op(1/n).
\ee
Furthermore we have
\bee\nonumber
\max_{i \neq {i'}}\frac{1}{w^{(t_L)}_{ii'} - \max_{i \neq {i'}}|w^{(t_L)}_{ii'} - \hat{w}^{(t_L)}_{ii'}|} 
= \frac{1}{\min _{i\neq i'}\big(w^{(t_L)}_{ii'}\big) - \max_{i \neq {i'}}|w^{(t_L)}_{ii'} - \hat{w}^{(t_L)}_{ii'}|} 
= \op(n), \\
\max_{i \neq {i'}}\frac{1}{I^{ii'}_{\M A}} 
\leq \max_{i\neq i'}\frac{d_i}{(L-t_L)(\sum_{ii'}a_{ii'})(\hat{w}_{ii'}^{(t_L)})} 
%\\ &= \max_{i\neq i'}\frac{d_i}{(L-t_L)(\sum_{ii'}a_{ii'})[(\hat{w}_{ii'}^{(t_L)}-w^{(t_L)}_{ii'})+w^{(t_L)}_{ii'}]}
\leq \max_id_i \cdot \max_{i}\frac{1}{nd_i} 
\cdot \max_{i \neq {i'}}\frac{1}{w^{(t_L)}_{ii'} - \max_{i \neq {i'}}|w^{(t_L)}_{ii'} - \hat{w}^{(t_L)}_{ii'}|}
 = \op(1).
\ee
\textbf{(2) \text{When $i= i'$:}} Similarly, when $t_L = 2$, we have $\max_{i}\frac{1}{\hat{w}_{ii}^{(2)}} = \max_{i}\frac{d_i}{\sum_{i' = 1}^{n}a_{ii'}/d_{i'}} \leq {(\max_i d_i)^2} \cdot \max_i \frac{1}{d_i} = \op(n\rho_n)$. Thus
\bee\nonumber
\max_{i}\frac{1}{I^{ii}_{\M A}}\precsim \max_{i}d_i\cdot\max\frac{1}{nd_i} \cdot \max_i\frac{1}{\hat{w}_{ii}^{(2)}} = \op(\rho_n).
\ee
When $t_L \geq 3$, since $\|\hat{\M W}^t - \M W^t\|_{\max,\text{diag}}
= \op\big(n^{-3/2}\rho_n^{-1/2}\sqrt{\log n}\big)$, a similar argument as the $i \neq i'$ case shows
\bee\nonumber
\max_{i}\frac{1}{I^{ii}_{\M A}} &\leq  \max_{i}\frac{d_i}{(L-t_L)(\sum_{ii'}a_{ii})[(\hat{w}_{ii}^{(t_L)}-w^{(t_L)}_{ii})+w^{(t_L)}_{ii}]} 
= \op(1).
\ee
In summary we have both $\max_{i,i'}\frac{1}{I^{ii'}_{\M A}} = \op(1)$ and $\max_{i,i'}\frac{1}{I^{ii'}_{\M P}} = \op(1)$ under the conditions of Theorem~\ref{T3} (sparse regime). We thus conclude
\bee\label{t:s:m:res2}
\max_{i,i'}\frac{1}{\alpha_{ii'}} = \op(1).
\ee
\noindent{\textbf{Step 2 (Bound of $\|\M I_{\M A} - \M I_{\M P}\|_{\max,\text{off}}$ and $\|\M I_{\M A} - \M I_{\M P}\|_{\max,\text{diag}}$):}} 
We consider two cases for $(t_L, \beta)$

\noindent{\bf\textit{Case 1 ($t_L \geq 2$ and $\beta<0.5$}):} For $t \geq 2$ we have
\bee\label{ts:m:apo}
&\noindent{\textbf{(a).}}
\max_{i, {i'}}\Big|
\Big(\sum_{ii'}a_{ii'}-\sum_{ii'}p_{ii'}\Big)\frac{w^{(t)}_{ii'}}{d_i}\Big|
\leq n\max_{i \neq {i'}}|d_i - p_i|\cdot \max_{i,{i'}}
w_{ii'}^{(t)}\cdot \max_{i}\frac{1}{d_i} 
=
\op((n\rho_n)^{-1/2}\sqrt{\log n}),
\\
&\noindent{\textbf{(b).}}\max_{i,
  {i'}}\Big|\sum_{ii'}p_{ii'}\Big(\frac{1}{d_i} -
\frac{1}{p_i}\Big)w^{(t)}_{ii'}\Big| 
=
\Big|\sum_{ii'}p_{ii'}\Big|\cdot \max_{i \neq {i'}}\Big|\frac{p_i -
  d_i}{p_id_i}\Big|\cdot \max_{i,i'} w^{(t)}_{ii'} =
\op\big((n\rho_n)^{-1/2}\sqrt{\log n}\big),
\\
&\noindent{\textbf{(c).}}\max_{i \neq
  {i'}}\Big|\Big(\sum_{ii'}a_{ii'}\Big)\frac{\hat{w}_{ii'}^{(t)} -
  w_{ii'}^{(t)}}{d_i}\Big| 
 \leq n\max_i d_i\cdot \max_{i \neq
  {i'}}|\hat{w}_{ii'}^{(t)} - w^{(t)}_{ii'}|\cdot
\max_{i}\frac{1}{d_i} 
= \op\big(n^{-1/2}\rho_n^{-1}\sqrt{\log n}\big),
\\
&\noindent{\textbf{(d).}}\max_{i}\Big|\Big(\sum_{ii'}a_{ii'}\Big)\frac{\hat{w}_{ii}^{(t)} - w_{ii}^{(t)}}{d_i}\Big|
 \leq n\max_i d_i\cdot \max_{i}|\hat{w}_{ii}^{(t)} - w^{(t)}_{ii}|\cdot \max_{i}\frac{1}{d_i} 
= \op\big(\rho_n^{-1}\big),
\ee
by Lemma~\ref{l:sparse:rate}, Theorem~\ref{T3} and Eq.~\eqref{T10:extend}. Thus a similar argument as the proof under dense regime gives
\bee\label{t:s:m:res3}
&\|\M I_{\M A} - \M I_{\M P}\|_{\max,\text{off}} =
\op\big(n^{-1/2}\rho_n^{-1}\sqrt{\log n}\big), \quad
&\|\M I_{\M A} - \M I_{\M P}\|_{\max,\text{diag}} = \op\big(\rho_n^{-1}\big).
\ee
\noindent{\bf\textit{Case 2 ($t_L \geq 4$ and $\beta<\frac{t_L - 3}{t_L - 1}$ holds}):} We note that $\textbf{(a)}$ and $\textbf{(b)}$ in Eq.~\eqref{ts:m:apo} do not change for $t\geq t_L \geq 3$. By Eq.~\eqref{T5:6} we further have
\bee\label{t:s:m:res4}
\max_{i,{i'}}\Big|\Big(\sum_{ii'}a_{ii'}\Big)\frac{\hat{w}_{ii'}^{(t)}
  - w_{ii'}^{(t)}}{d_i}\Big| =\op\big\{(n\rho_n)^{-1/2}\sqrt{\log n}\big\}
\ee
for all $t \geq t_L \geq 3$, which implies, in this case, 
\bee\nonumber
\|\M I_{\M A} - \M I_{\M P}\|_{\max,\text{off}}, \|\M I_{\M A} - \M
I_{\M P}\|_{\max,\text{diag}} = \op\big\{(n\rho_n)^{-1/2}\sqrt{\log n}\big\}.
\ee
\\
\par
\noindent{\textbf{Step 3 (Bounding $\|\tilde{\M M}_0 - \M M_0\|_{\F}$ and $\|\tilde{\M M}_0 - \M M_0\|_{\infty}$):}} 
From Eq.~\eqref{t:s:m:res1}, Eq.~\eqref{t:s:m:res2}, Eq.~\eqref{t:s:m:res3} and Eq.~\eqref{t:s:m:res4}, we obtain
\bee\nonumber
\|\tilde{\M M}_0 - \M M_0\|_{\max,\text{off}}
&= \begin{cases}
\op\big(n^{-1/2}\rho_n^{-1}\sqrt{\log n}\big) & \text{when } t_L \geq 2 \text{ and }\beta <\frac{1}{2},
\\
\op\big\{(n\rho_n)^{-1/2}\sqrt{\log n}\big\}& \text{when } t_L \geq 4 \text{ and }\beta<\frac{t_L - 3}{t_L - 1},
\end{cases}
\\
\|\tilde{\M M}_0 - \M M_0\|_{\max,\text{diag}}
&= \begin{cases}
\op\big(\rho_n^{-1}\big) & \text{when } t_L \geq 2 \text{ and }\beta <\frac{1}{2},
\\\op\big\{(n\rho_n)^{-1/2}\sqrt{\log n}\big\}& \text{when } t_L \geq 4 \text{ and }\beta<\frac{t_L - 3}{t_L - 1}.
\end{cases}
\ee
and hence
\bee\nonumber
\|\tilde{\M M}_0-\M M_0\|_{\F} 
&\leq  \BL n^2 \|\tilde{\M M}_0 - \M M_0\|^2_{\max,\text{off}} + n \|\tilde{\M M}_0 - \M M_0\|^2_{\max,\text{diag}}  \BR^{1/2}
= \begin{cases}
\op\big(n^{1/2}\rho_n^{-1}\log^{1/2}{n}\big) & \text{when } t_L \geq 2 \text{ and }\beta <\frac{1}{2},
\\\op\big(n^{1/2}\rho_n^{-1/2}\log^{1/2}\big)& \text{when } t_L \geq 4 \text{ and }\beta<\frac{t_L - 3}{t_L - 1},
\end{cases} \\
\label{M0Minfty}
\|\tilde{\M M}_0-\M M_0\|_{\infty}
&\leq   n\|\tilde{\M M}_0 - \M M_0\|_{\max,\text{off}} +  \|\tilde{\M M}_0 - \M M_0\|_{\max,\text{diag}}  
= \begin{cases}
\op\big(n^{1/2}\rho_n^{-1} \log^{1/2}{n} \big) & \text{when } t_L \geq 2 \text{ and }\beta <\frac{1}{2},
\\\op\big(n^{1/2}\rho_n^{-1/2}\log^{1/2}{n}\big)& \text{when } t_L \geq 4 \text{ and }\beta<\frac{t_L - 3}{t_L - 1}.
\end{cases}
\ee
as desired \qed.
\subsection{Proof of Theorem~\ref{c1} (Sparse Regime)}\label{pfthm4}
Same as the dense regime, we shall assume $n_k = n\pi_{k}, \text{ for all }k\in [K]$. From Eq.~\eqref{truth}, one key observation is that when we write (1) $\M B = \rho_n \M B_0$ and $\M B_0$ is a constant matrix; (2) $n_k = n\pi_{k}, \text{ for all }k\in [K]$; the ${\M M}_0$ built on $\M B$ or $\M B_0$ are identical and do not depend on $n$. We therefore have
\bee\nonumber
\M M_0	&= \begin{pmatrix}
\xi_{11} \bds 1_{n\pi_1} \bds 1_{n\pi_1}^\T &\dots & \xi_{1K} \bds 1_{n\pi_1} \bds 1_{n\pi_K}^\T\\
\vdots & \vdots & \vdots\\
\xi_{K1} \bds 1_{n\pi_K} \bds 1_{n\pi_1}^\T & \dots & \xi_{KK} \bds 1_{n\pi_K} \bds 1_{n\pi_K}^\T
\end{pmatrix} 
= 
\M \Theta \M \Xi \M \Theta^\T
\ee
where $\M \Xi := (\xi_{ii'})_{K \times K}$ is a fixed matrix. The remaining steps in the proof of Theorem \ref{c1} (dense regime) still hold, e.g., 
\bee\label{MMFF:s}
\|\M M_0\|_\F = \Theta(n), \quad \|\M M_0\|_{\infty} =\Theta(n), \quad \sigma_{d}(\M M_0)=\Theta(n), \quad \|\M U\|_{\twoinf} \precsim n^{-1/2}.
\ee
Applying Theorem \ref{T3} (sparse regime) and similar to Eq.~\eqref{fnormbound} and Eq.~\eqref{twoinfnormbound}, one has
\begin{align}
\label{spectral1a:sparse}
\min_{\M T \in \mathbb{O}_{d}} \|\hat{\bds{\mathcal{F}}} \cdot \mathbf{T} - \mathbf{U}\|_{\F} 
& \leq \frac{\|\tilde{\M M}_0 - \M M_0\|_\F}{\sigma_d(\M M_0)}
=\begin{cases}
\op\big\{(n\rho_n)^{-1/2}\rho_n^{-1/2}\log^{1/2}{n}\big\} & \text{if  } t_L \geq 2 \text{ \& }\beta <\frac{1}{2},
\\\op\big\{(n\rho_n)^{-1/2}\log^{1/2}{n}\big\}& \text{if } t_L \geq 4 \text{ \& }\beta<\frac{t_L - 3}{t_L - 1},
\end{cases} \\
\label{twoinf:sparse}
\min_{\M T \in \mathbb{O}_{d}} \|\hat{\bds{\mathcal{F}}} \cdot \mathbf{T} - \mathbf{U}\|_{\twoinf} 
&\leq 14\Bigg(\frac{\|\tilde{\M M}_0 - \M M_0\|_{\infty}}{\sigma_d(\M M_0)}\Bigg)\|\M U\|_{\twoinf}
=\begin{cases}
\op\big(n^{-1}\rho_n^{-1}\log^{1/2}{n}\big) & \text{if } t_L \geq 2 \text{ \& }\beta <\frac{1}{2},
\\\op\big(n^{-1}\rho_n^{-1/2}\log^{1/2}{n}\big)& \text{if } t_L \geq 4 \text{ \& }\beta<\frac{t_L - 3}{t_L - 1}.
\end{cases}
\end{align}
Finally it is easy to check that if $t_{L}\geq 2$ and $\beta < 1/2$ then
\bee\nonumber
n^{1/2} \min_{\M T \in \mathbb{O}_{d}} \|\hat{\bds{\mathcal{F}}} \cdot \mathbf{T} - \mathbf{U}\|_{\twoinf} &\precsim (n\rho_n)^{-1/2}\rho_n^{-1/2}\log^{1/2}{n}
\precsim n^{\beta - 1/2} \log^{1/2}{n}
\rightarrow 0
\ee
with high probability. When $t_L \geq 4$ and $\beta<\frac{t_L - 3}{t_L - 1}$ then
\bee\label{n12123}
n^{1/2}\cdot \min_{\M T \in \mathbb{O}_{d}} \|\hat{\bds{\mathcal{F}}} \cdot \mathbf{T} - \mathbf{U}\|_{\twoinf} &\precsim (n\rho_n)^{-1/2} \log^{1/2}{n}
\rightarrow 0
\ee
with high probability.
%as $(n\rho_n)^{-1/2}\precsim n^{-(1-\beta)/2}$ is polynomially decaying as $\beta \in[0, 1)$. 
Similar to the dense regime, the above $2 \to \infty$ results together with the technical arguments in \cite{lei2019unified} show that applying the $K$-medians algorithm to cluster the rows of $\hat{\bds{\mathcal{F}}}$ will recover the memberships for all nodes with high probability. 
\qed

\subsection{Proof of Remark~\ref{rk:improved}}\label{sec:rk6:pf}
We now show that the condition $\beta < \frac{t_L - 2}{t_L}$, which is weaker than that assumed in Theorem~\ref{c1}, is actually sufficient to achieve exact recovery under the sparse regime of $\rho_n \succsim n^{-\beta}$ for $\beta \in (0,1]$. We will follow the same proof strategy as that presented earlier with the main changes being that we perform more careful book-keeping when bounding the important terms in Section~\ref{sec:pf:ts:main} and Section~\ref{pf:T3}.

More specifically all the arguments in Section~\ref{sec:pf:ts:main} are still valid up to (and including) Eq.~\eqref{apdpt}.
The rate shown in Eq.~\eqref{apdpt} is, however, only negligible compared to the term $\mathcal{O}_{P}(n^{-3/2} \rho_n^{-1/2} \log^{1/2}{n})$ in Eq.~\eqref{stepfinal}for $\frac{t_L - 3}{t_L - 1} > \beta$, i.e., if we only assume $\frac{t_L - 2}{t_L} > \beta$ then we have to include the error term in Eq.~\eqref{apdpt} by replacing Eq.~\eqref{Wttdiff} the following bound
\bee\label{WtWnew}
\|\hat{\M W}^{t} - \M W^{t}\|_{\max} =
\op\Bigg\{\frac{\log^{1/2}{n}}{n^{3/2} \rho_n^{1/2}} + (n\rho_n)^{-t/2}\Bigg\}.
\ee
Noting above equation implies that $\|\hat{\M W}^t - \M W^t\|_{\max}=o_{\mathbb{P}}(1)$ as implied by the following Eq.~\eqref{followbound}, then similar to the discussion in Section \ref{sec:dense}, $\tilde{\M M}_0$ is thus well-defined with high probability. We can then follow the same arguments as that presented in
Section~\ref{pf:T3}, taking care to replace the bound in Eq.~\eqref{Wttdiff} with that in Eq.~\eqref{WtWnew}. 
Eq.~\eqref{WtWnew} to the Proof of Theorem \ref{T3} in Section~\ref{pf:T3}. In particular Eq.~\eqref{wwtww} now becomes
\bee\label{followbound}
&\max_{i \neq {i'}}|w^{(t_L)}_{ii'} - \hat{w}^{(t_L)}_{ii'}|/\min_{i,i'}w^{(t_L)}_{ii'}
=
\op\Bigg\{\frac{\log^{1/2}{n}}{n^{1/2} \rho_n^{1/2}} + n\cdot(n\rho_n)^{-t_L/2}\Bigg\}
= o_{\mathbb{P}}(1)
\ee
as     
%\bee\nonumber
$(\log^{1/2}{n})/(n^{1/2} \rho_n^{1/2}) \precsim \log^{1/2} n^{(\beta - 1)/2}\rightarrow 0$ 
%\ee
and $n\cdot(n\rho_n)^{-t_L/2} \precsim n^{\{2 - (1 - \beta)t_L\}/2} \rightarrow 0$ for $\beta < \frac{t_L - 2}{t_L}$. 
%Then the following arguments in step 1 all hold. In Section~\ref{pf:T3}, step 2, now we consider the case 2 with $t_L \geq 3$ and $\beta < \frac{t_L - 2}{t_L}$.
Meanwhile Eq.~\eqref{t:s:m:res4} becomes
$$
\max_{i,{i'}}\Big|\Big(\sum_{ii'}a_{ii'}\Big)\frac{\hat{w}_{ii'}^{(t)}
  - w_{ii'}^{(t)}}{d_i}\Big| =\op\Bigg\{\frac{\log^{1/2}{n}}{n^{1/2} \rho_n^{1/2}} + n\cdot(n\rho_n)^{-t_L/2}\Bigg\}
$$
and in turn Eq.~\eqref{M0Minfty} becomes
$$
\|\tilde{\M M}_0-\M M_0\|_{\infty}=\op\Bigg\{\frac{n^{1/2}\log^{1/2}{n}}{ \rho_n^{1/2}} + n^2\cdot(n\rho_n)^{-t_L/2}\Bigg\}.
$$
Finally, using the above bound $\|\tilde{\M M}_0-\M M_0\|_{\infty}$, we can proceed with the same proof of Theorem~\ref{c1} in Section \ref{pfthm4} and obtain, in place of Eq.~\eqref{twoinf:sparse}, the bound
\bee
\label{eq:hat_F_alternative}
 &\min_{\M T \in \mathbb{O}_{d}} \|\hat{\bds{\mathcal{F}}} \cdot \mathbf{T} - \mathbf{U}\|_{\twoinf} 
\leq 14\Bigg(\frac{\|\tilde{\M M}_0 - \M M_0\|_{\infty}}{\sigma_d(\M M_0)}\Bigg)\|\M U\|_{\twoinf}
=\op\Bigg\{\frac{\log^{1/2}{n}}{ n^{3/2}\rho_n^{1/2}} + (n\rho_n)^{-t_L/2}\Bigg\}.
\ee
Eq.~\eqref{n12123} then becomes
\bee
n^{1/2}\cdot \min_{\M T \in \mathbb{O}_{d}} \|\hat{\bds{\mathcal{F}}} \cdot \mathbf{T} - \mathbf{U}\|_{\twoinf} 
\precsim  n^{-1/2}\rho_n^{-1/2}\log^{1/2}n + n^{-t_L/2 + 1}\rho_n^{-{t_L/2}}
\precsim  o(1) + n^{\beta t_L/2 - t_L/2 + 1}
 = o(1),
\ee
The technical arguments in \cite{lei2019unified} once again show that clustering the rows of $\hat{\bds{\mathcal{F}}}$ using $K$-medians will, with high probability recovers the memberships of all nodes.
%}
\qed
\subsection{Strong recovery for DCSBM (Sparse Regime)}\label{sec:sparsedcebm}
The exact same arguments as that presented in Section~\ref{sec:dcsbmsr} also apply to the sparse regime where $\rho_n \succsim n^{-\beta}$ for some $\beta \in [0,1)$. Indeed, the matrix $\mathbf{M}'_0$ and $\mathbf{M}_0$ in Section~\ref{sec:dcsbmsr} does not depend on the sparsity $\rho_n$. Rather $\rho_n$ only affects the convergence rate of $\tilde{\mathbf{M}}_0$ to $\mathbf{M}_0$ which in turns lead to a slower convergence rate for $\hat{\bds{\mathcal{F}}}$ to $\mathbf{U}$. This convergence rate is, however, still the same as that in Theorem~\ref{c1} provided that the $\theta_i$ are homogeneous (so that $\max_{i} \theta_i / \min_{i} \theta_i \precsim 1$). Clustering the rows of $\hat{\bds{\mathcal{F}}}$ will thus, with high probability, recover the membership of all nodes. 

\section{Additional proofs and derivations}
\label{app:lemma}
\subsection{Proof of Lemma~\ref{l1}}
\textbf{(1):} The sum of $i$th row of $\M A \M D_{\M A}^{-1}$ is 
\bee\nonumber
\sum_{i=1}^{n} \frac{a_{ii'}}{d_{i'}} = \frac{\sum_{i=1}^{n}a_{ii'}}{\sum_{i=1}^{n}a_{ii'}} = 1.
\ee So $\M 1_n\tp \cdot \hat{\M W}  =\M 1_n\tp \cdot \M A \M D_{\M A}^{-1}  = \M 1_n\tp$ and
\bee\nonumber
\M 1_n\tp \cdot \hat{\M W}^{t} &= \M1_n \tp \underbrace{\M A \M D_{\M A}^{-1} \dots \M A \M D_{\M A}^{-1}}_{t\text{ items}} 
\\ &= (\M1_n\tp \cdot \M A \M D_{\M A}^{-1})\underbrace{\M A \M D_{\M A}^{-1} \dots \M A \M D_{\M A}^{-1}}_{t-1\text{ items}}
= \M1_n\tp\underbrace{\M A \M D_{\M A}^{-1} \dots \M A \M D_{\M A}^{-1}}_{t-1\text{ items}} 
= \cdots 
= \M 1_n\tp.
\ee
With similar argument we have $\M 1_n\tp\M W^t  = \M 1_n\tp$.

\noindent{\textbf{(2):}} The $i$th element of $\hat{\M W}\bds{d}$ is 
\bee\nonumber
\sum_{j = 1}^n\frac{a_{ii'}}{d_{i'}}d_{i'} = d_i
\ee
and hence $\hat{\M W}\bds{d} = \bds{d}$. This implies
$\hat{\M W}^t \bds{d} = \bds{d}$. The same argument yields $\M W^t \bds{p} = \bds{p}$.\qed
%Lemma2
\subsection{Proof of Lemma~\ref{l:dense:rate}}
\noindent{\textbf{(1) $\|\M D_{\M A}\|_{}$, $\|\M D^{-1}_{\M A}\|_{}$: }}With Chernoff bound, since $c_0\leq p_{ii'}\leq c_1$ for any fixed $i$ we can get
\bee\nonumber
&\p\BL \frac{d_i}{n} > \frac{3c_1}{2}\BR\leq \p\Big(d_i > \frac{3}{2}p_i\Big) \precsim \exp\big(-C_0\cdot n\big)
\\
&\p\BL \frac{1}{d_i}\Big/\frac{1}{n} > \frac{2}{c_0}\BR \leq \p\Big({d_i} \leq \frac{p_i}{2}\Big) \precsim \exp\big(-C_1\cdot n\big)
\ee
for some $C_0,C_1 > 0$ by taking appropriate constant in general Chernoff bound. So we have
\bee\nonumber
&\p\BL \frac{\max_{1\leq i \leq n}d_i}{n} 
\leq \frac{3c_1}{2}\BR\precsim n\exp\big(-C_0\cdot n\big) \longrightarrow 0
\\
&\p\Big\{\Big(\max_{1\leq i \leq n}\frac{1}{d_i}\Big)\Big/\frac{1}{n} \leq \frac{2}{c_0}\Big\} \precsim n\exp\big(-C_1\cdot n\big) \longrightarrow 0
\ee
as $n \rightarrow \infty$. We therefore have $\|\M D_{\M A}\| = \max_{1\leq i \leq n}d_i = \op(n),\|\M D_{\M A}^{-1}\| = \max_{1\leq i \leq n} 1/d_i = \op(1/n)$.

\noindent{\textbf{(2) $\|\M D_{\M A} - \M D_{\M P}\|_{}$: }}For a given $i$ the $a_{i1},\dots,a_{in}$ are independent and $\E a_{ii'} = p_{ii'}$ for any $1\leq i,{i'} \leq n$. Also $|a_{ii'} - p_{ii'}|\leq 1$. By Bernstein inequality, we have
\bee\nonumber
\p\Big(\Big|\sum_{j = 1}^{n}a_{ii'} - \sum_{j = 1}^{n}p_{ii'}\Big|> \tilde{t}\Big) \leq 2 \exp\Big(-\frac{\tilde{t}^2}{2\sigma_0^2n + \frac{2}{3}\tilde{t}}\Big),
\ee
where $\sigma_0^2 = \frac{1}{n}\sum_{j = 1}^{n}\text{Var}(a_{ii'} - p_{ii'}) = \frac{1}{n}\sum_{j=1}^{n}p_{ii'}(1-p_{ii'}).$ So we have $\sigma_0^2\leq \frac{1}{n}\sum_{j=1}^{n}(p_{ii'} + 1-p_{ii'})^2/4 = 1/4$ and 
\bee\label{l2:bern1}
\p\Big(\Big|\sum_{j = 1}^{n}a_{ii'} - \sum_{j = 1}^{n}p_{ii'}\Big|> \tilde{t}\Big) \leq 2 \exp\Big(-\frac{\tilde{t}^2}{\frac{1}{2}n + \frac{2}{3}\tilde{t}}\Big).
\ee
Take $\tilde{t} = c\sqrt{n \log n}$ in Eq.~\eqref{l2:bern1}, we have
\bee\nonumber
\p\Big(\Big|\sum_{j = 1}^{n}a_{ii'} - \sum_{j = 1}^{n}p_{ii'}\Big|>
c\sqrt{n \log n}\Big)
&\leq 2 \exp\Big(-\frac{c^2 n \log
  n}{\frac{1}{2}n + \frac{2}{3}c\sqrt{n \log n}}\Big)
\precsim\exp\Big(-2c^2 \log n\Big) 
= \Big(\frac{1}{n}\Big)^{2c^2}.
\ee
Combining all the events among $i \in \{1,\dots,n\}$, we get
\bee\nonumber
\p\BL \|\M D_{\M A} - \M D_{\M P}\|_{} > c\sqrt{n \log n}\BR 
&=
\p\BL\max_{1 \leq i\leq n}\Big|\sum_{j = 1}^{n}a_{ii'} - \sum_{j =
  1}^{n}p_{ii'}\Big|> c\sqrt{n \log n} \BR 
\\ &= \p\BL \bigcup_{i = 1}^{n}\Big\{\Big|\sum_{j = 1}^{n}a_{ii'} -
\sum_{j = 1}^{n}p_{ii'}\Big|> c\sqrt{n \log n}\Big\}\BR 
\precsim n\cdot \BL \frac{1}{n}\BR^{2c^2} 
= \BL \frac{1}{n}\BR^{2c^2-1},
\ee
which implies $\|\M D_{\M A} - \M D_{\M P}\|_{} = \op(\sqrt{n \log (n)})$.

\noindent{\textbf{(3) $\|\M D_{\M A}^{-1} - \M D_{\M P}^{-1}\|_{}$: }}
With the results in \textbf{(1)} and \textbf{(2)}, we have
\bee\nonumber
\|\M D_{\M A}^{-1} - \M D_{\M P}^{-1}\|_{} &= \max_i \Big| \frac{1}{d_i}- \frac{1}{p_i}  \Big| = \max_i \Big|\frac{p_i - d_i}{d_i p_i} \Big|
\leq \|\M D_{\M A} - \M D_{\M P}\|_{}\cdot \|\M D_{\M
  A}^{-1}\|_{}\cdot\|\M D_{\M P}^{-1}\|_{}
 = \op\big(n^{-3/2}\sqrt{\log n}\big).
\ee

\noindent{\textbf{(4) $\|\hat{\M W}^{t}\|_{\max}$, $\max_{i,{i'}} w^{(t)}_{ii'} \asymp 1/n$ and $\min_{i,{i'}} w^{(t)}_{ii'} \asymp 1/n$: }}By result in \textbf{(1)} and $\M A, \M P$ are bounded by 1 elementwisely, it is easy to see that
\bee\label{T1:order_0}
\|\hat{\M W}\|_{\max} = \|\M A \M D_{\M A}^{-1}\|_{\max} &\leq \|\M A\|_{\max} \cdot \|\M D_{\M A}^{-1}\|_{} 
\leq   \max_{1\leq i\leq n}1/d_{i} 
= \op(n^{-1}).
\ee
For any $t\geq 1$,
\bee\nonumber
\|\hat{\M W}^{t}\|_{\max} &\leq n\|\hat{\M W}^{t - 1}\|_{\max} \|\hat{\M W}\|_{\max} 
\leq n^2 \|\hat{\M W}^{t - 2}\|_{\max} \|\hat{\M W}\|_{\max}^2
\leq \dots \leq n^{t - 1}\|\hat{\M W}\|_{\max}^{t}
 = \op(n^{-1}).
\ee
Since $c_0 \leq p_{ii'} \leq c_1$, we could also see 
\bee\label{lemma:rate:wt}
\frac{c_0^t}{c_1^t n} \leq w^{(t)}_{ii'}\leq \frac{c_1^t}{c_0^t n}
\ee
for any given $t\geq 1$ and $1 \leq i,{i'}\leq n$, which implies $\min_{i,i'}w^{(t)}_{ii'}, \max_{i,i'}w^{(t)}_{ii'}\asymp n^{-1}$.\qed
\subsection{Proof of Lemma~\ref{l:sparse:rate}}
\noindent{\textbf{(1)} Bounding \textbf{ $\|\M D_{\M A}\|_{}$, $\|\M D^{-1}_{\M A}\|_{}$: }}
As $c_2\rho_n\leq p_{ii'}\leq c_3\rho_n$ and $\E d_i = p_i$, we have by Chernoff bounds that
\bee\label{A6:1}
\log\Big\{\p\BL \frac{d_i}{n} >\frac{3c_3}{2}\rho_n\BR\Big\}
&\leq \log\Big\{\p\Big(d_i > \frac{3}{2}p_i\Big)\Big\}\leq -C_0n\rho_n, %\leq -C_0 \log^{\beta_1}{n},
\\
\log\Big\{\p\BL \frac{n}{d_i} > \frac{1}{2c_2\rho_n}\BR\Big\}
&\leq\log\BL \p\Big({d_i} \leq \frac{p_i}{2}\Big)\BR \leq -C_0n\rho_n, %\leq -C_0\log^{\beta_1}{n}.
\ee
for any $i \in \{1,2,\dots,n\}$ where $C_0 > 0$ is a constant not depending on $i$ or $n$. Eq.~\eqref{A6:1} together with a union bound imply
\bee
\label{A6:Da}
&\p\BL \frac{\max_{i}d_i}{n} > \frac{3c_3\rho_n}{2}\BR\leq  n \max_i \Big\{\p\BL {d_i}> \frac{3p_i}{2}\BR\Big\}
\leq \exp\bigl(\log n - C_0 n \rho_n \bigr) \longrightarrow 0,
\\
&\p\BL\max_{i}\frac{n}{d_i} > \frac{1}{2c_2\rho_n}\BR \leq n\max_i\Big\{ \p\Big({d_i} \leq \frac{p_i}{2}\Big)\Big\}
\leq \exp\bigl(\log n - C_0 n \rho_n \bigr) \longrightarrow 0,
\ee
as $n \longrightarrow \infty$. So $\|\M D_{\M A}\| = \max_{i}d_i = \op(n\rho_n),\|\M D_{\M A}^{-1}\| = \max_{i} 1/d_i = \op\bigl((n\rho_n)^{-1}\bigr)$.

\noindent{\textbf{(2)} Bounding $\|\M D_{\M A} - \M D_{\M P}\|_{}, \|\M D^{-1}_{\M A} - \M D^{-1}_{\M P}\|_{}$:} For a given vertex $i$ the $\{a_{i1},\dots,a_{in}\}$ are independent Bernoulli random variables with $\E a_{ii'} = p_{ii'}$. Therefore, by Bernstein inequality, we have
\bee\label{l1s:bern}
\p\Big(\Big|\sum_{i' = 1}^{n}a_{ii'} - \sum_{i' = 1}^{n}p_{ii'}\Big|> \tilde{t}\Big) \leq 2 \exp\Big(-\frac{\tilde{t}^2}{2\sigma_0^2n + \frac{2}{3}\tilde{t}}\Big),
\ee
where $\sigma_0^2 = n^{-1} \sum_{i'} \text{Var}(a_{ii'} -
p_{ii'}) = n^{-1} \sum_{i'} p_{ii'}(1-p_{ii'})$. As
$\min_{i,i'}p_{ii'} \asymp \rho_n$ and $\max_{i,i'}p_{ii'} \asymp \rho_n$ where $\rho_n
\rightarrow 0$, we have $\sigma_0^2 \asymp \rho_n$. Letting $\tilde{t}
= C(n\rho_n \log n)^{1/2}$ in Eq.~\eqref{l1s:bern}, we have
%$2\sigma_0^2n \asymp n\rho_n$ and hence
\bee\nonumber
\frac{\sigma_0^2n}{\tilde{t}} \asymp\BL\frac{\rho_n}{n^{-1} \log n}\BR^{1/2} \rightarrow \infty
\ee
by Assumption~\ref{am:s}. We therefore have
\bee\nonumber
\frac{\tilde{t}^2}{2\sigma_0^2n + \frac{2}{3}\tilde{t}} \asymp
\frac{\tilde{t}^2}{2\sigma_0^2n} \asymp C^2 \log n.
\ee
There thus exists some constant $C > 0$ such that 
\bee\nonumber
\p\Big(\Big|\sum_{j = 1}^{n}a_{ii'} - \sum_{j = 1}^{n}p_{ii'}\Big|>
C(n\rho_n \log n)^{1/2} \Big) \leq 2 \exp\Big(-2 \log n\Big)
\ee
and hence
\bee\label{l5:main}
&\p\Big(\max_{1 \leq i \leq n}\Big\{\Big|\sum_{j = 1}^{n}a_{ii'} -
\sum_{j = 1}^{n}p_{ii'}\Big|\Big\}> C(n\rho_n \log n)^{1/2} \Big) 
\leq
2n \exp\Big(-2 \log n \Big)  \rightarrow 0
\ee
as $n \rightarrow \infty$, which implies $\|\M D_{\M A} - \M D_{\M
  P}\|_{} = \op\bigl((n\rho_n \log n)^{1/2}\bigr)$. Combining the above results we obtain
\bee\nonumber
\|\M D_{\M A}^{-1} - \M D_{\M P}^{-1}\|_{} &= \max_i \Big| \frac{1}{d_i}- \frac{1}{p_i}  \Big| = \max_i \Big|\frac{p_i - d_i}{d_i p_i} \Big|
\leq \|\M D_{\M A} - \M D_{\M P}\|_{}\cdot \|\M D_{\M
  A}^{-1}\|_{}\cdot\|\M D_{\M P}^{-1}\|_{} 
  =
\op\bigl((n\rho_n)^{-3/2}\log^{1/2}{n}\bigr).
\ee

\noindent{\textbf{(3)} Bounding $\|\hat{\M W}^{t}\|_{\max}$, $\max_{i,{i'}} w^{(t)}_{ii'}$ and $\min_{i,{i'}} w^{(t)}_{ii'}$}
As the elements of $\M A$ and $\M P$ are non-negative and bounded above by $1$, we have
\bee\|\hat{\M W}\|_{\max} &= \|\M A \M D_{\M A}^{-1}\|_{\max} \leq \|\M A\|_{\max}  \|\M D_{\M A}^{-1}\|
 \leq   \max_{i} d_i^{-1} 
= \op((n\rho_n)^{-1}).
\ee
We thus have, for any $t\geq 1$, that
\bee\nonumber
\|\hat{\M W}^{t}\|_{\max} &\leq (\max_{i}d_i) \|\hat{\M W}^{t - 1}\|_{\max} \|\hat{\M W}\|_{\max} 
\leq (\max_{i}d_i)^2\|\hat{\M W}^{t - 2}\|_{\max} \|\hat{\M W}\|_{\max}^2
\leq \dots \leq (\max_{i}d_i)^{t - 1}\|\hat{\M W}\|_{\max}^{t} 
= \op((n\rho_n)^{-1}).
\ee
Finally, as $c_2\rho_n \leq p_{ii'} \leq c_3\rho_n$ for all $1 \leq i,{i'}\leq n$, we have
\bee\nonumber
\frac{c_2^t}{c_3^t n} \leq w^{(t)}_{ii'}\leq \frac{c_3^t}{c_2^t n}
\ee for all $t\geq 1$. As $t$ is finite, this implies $\min_{i,i'}w^{(t)}_{ii'} \asymp n^{-1}$ and $\max_{i,i'}w^{(t)}_{ii'} \asymp n^{-1}$.
\qed

\subsection{Technical details in section~\ref{sec:pf:ts:main}}\label{app:c:ts:dt}
\subsubsection{Bounding  $\| \mathbf{\Delta}^{(1)}_1\|_{\max}$ and $\|\mathbf\Delta^{(1)}_2\|_{\max}$}\label{app:ts:dt:1}
We first write
\bee\nonumber
\| \M \Delta^{(1)}_1 \|_{\max} &= \max_{i,{i'}} \Big| \frac{a_{ii'}}{d_{i'} \cdot p_{i'}}(d_{i'} - p_{i'})\Big| 
\precsim \max_{i'}\frac{1}{n\rho_n}\Big| \frac{d_{i'} - p_{i'}}{d_{i'}}\Big|
\ee
by $|a_{ii'}| \leq 1$ and Assumption~\ref{am:s}. Applying Lemma~\ref{l:sparse:rate}, we obtain
\begin{gather}
\label{T1S:order_1_1:s}
\| \M \Delta^{(1)}_1 \|_{\max}\precsim \frac{1}{n\rho_n}\max_{i'}|d_{i'} - p_{i'}| \cdot \max_{i'}\frac{1}{d_{i'}}
= \op\big((n\rho_n)^{-3/2}\log^{1/2}{n}\big) \\
\label{T1S:order_1_2:s}
\|\M \Delta^{(1)}_2\|_{\max}  = \max_{i,{i'}}\Big| \frac{a_{ii'} - p_{ii'}}{p_{i'}}\Big| 
\leq \max_{i'} \frac{1}{p_{i'}} 
= \op\bigl((n\rho_n)^{-1}\bigr).
\end{gather}
Since $\rho_n \succsim \frac{\log n}{n}$, we have $(n\rho_n)^{-3/2} \log^{1/2}{n} \precsim (n\rho_n)^{-1}$.
\subsubsection{Bounding $\|\mathbf\Delta_{1}^{(2)}\|_{\max}$}\label{app:ts:dt:2}
The $ii'$th element of $\M \Delta^{(2,1)}_{1}$ is given by
\bee\nonumber
&\sum_{i^* = 1}^{n}\Big(\frac{a_{ii^*}}{d_{i^*} p_{i^*}}(p_{i^*} - d_{i^*}) - \frac{a_{ii^*}}{p_{i^*}^2}(p_{i^*} - d_{i^*})\Big)\cdot w_{i^{*}{i'}} 
= \sum_{i^* = 1}^{n} \frac{a_{ii^*}(p_{i^*} - d_{i^*})^2 w_{i^{*}{i'}}}{p_{i^*}^2 d_{i^*}}.
\ee
We therefore have, by Lemma~\ref{l:sparse:rate}, that
\bee\nonumber
\|\M \Delta^{(2,1)}_{1}\|_{\max} 
&\leq \max_{i,{i'}}\sum_{i^* = 1}^{n} \frac{a_{ii^*}(p_{i^*} - d_{i^*})^2w_{i^{*}{i'}}}{p_{i^*}^2 d_{i^*}}
\\
&\leq \max_{i}d_i\big(\max_{i}|p_i - d_i|\big)^2
\cdot \BL\max_{i}\frac{1}{p_i}\BR^2\BL\max_{i}\frac{1}{d_i}\BR(\max_{i,{i'}}w_{ii'})
= \op(n^{-2}\rho_n^{-1} \log n)
\ee
Similar to Eq.~\eqref{T1:d12} and Eq.~\eqref{T1:d22}, we also have
\bee\nonumber
\|\M \Delta^{(2,2)}_{1}\|_{\max} 
&= \max_{i,{i'}}\BL \Big|\sum_{i^* = 1}^{n}\frac{a_{ii^*}}{p_{i^*}^2}(p_{i^*} - d_{i^*})w_{i^{*}{i'}}\Big|\BR
\leq \max_i d_i (\max_{i} |p_i - d_i|) \cdot \max_{i,{i'}}w_{ii'}  \BL \max_{i}\frac{1}{p_i}\BR^2 =\op\big(n^{-3/2}\rho_n^{-1/2}\log^{1/2}{n}\big) \\
\|\M \Delta^{(2,3)}_{1}\|_{\max} 
&=\max_{i,{i'}}\Big|\sum_{i^*}\frac{a_{ii^*}- p_{ii^*}}{p_{i^*}}w_{i^{*}{i'}}\Big|=
\op\big(n^{-3/2}\rho_n^{-1/2}\log^{1/2}{n}\big).
\ee
By Assumption~\ref{am:s}, we have $n^{-3/2}\rho_n^{-1/2} \log^{1/2}{n} \succsim n^{-2}\rho_n^{-1}{\log n}$. Combining the above results we obtain \bee\label{T1:order_2_2:s}
\|\M \Delta_{1}^{(2)}\|_{\max} &\leq \|\M \Delta_{1}^{(2,1)}\|_{\max} + \|\M \Delta_{1}^{(2,2)}\|_{\max} + \|\M \Delta_{1}^{(2,3)}\|_{\max} 
= \op
\big(n^{-3/2}\rho_n^{-1/2} \log^{1/2}{n}\big).
\ee
\subsubsection{Bounding  ${{\delta}}^{(2,2)}_{2,\off}$ and ${\delta}^{(2,3)}_{2,\off}$}\label{app:ts:dt:3}
\noindent{\textbf{Step 1 (Bounding  ${{\delta}}^{(2,2)}_{2,\off}$): }}By Lemma \ref{l:sparse:rate}, we have
\bee\nonumber
\delta^{(2,2)}_{2,\off} 
&= \max_{i,i'}\Big|\sum_{i^* = 1}^{n}\Big(\frac{p_{ii^*}}{p_{i^*}} - \frac{a_{ii^*}}{d_{i^*}}\Big)\BL \frac{a_{i^{*}{i'}}}{p_{i'}} - \frac{a_{i^{*}{i'}}}{d_{i'}}\BR\Big|
\\
&\leq \max_{i,i'}\Big|\sum_{i^* = 1}^{n}\Big(\frac{p_{ii^*}}{p_{i^*}} - \frac{a_{ii^*}}{d_{i^*}}\Big) {a_{i^{*}{i'}}}\Big|\cdot \max_{i'}\Big|\frac{1}{p_{i'}} - \frac{1}{d_{i'}}\Big| 
=\op(n^{-3/2}\rho_n^{-3/2}\log^{1/2}{n}) \cdot \max_{i,i'}\Big|\sum_{i^* = 1}^{n}\Big(\frac{p_{ii^*}}{p_{i^*}} - \frac{a_{ii^*}}{d_{i^*}}\Big) {a_{i^{*}{i'}}}\Big|.
\ee
Now we focus on $\sum_{i^* = 1}^{n}\Big(\frac{p_{ii^*}}{p_{i^*}} - \frac{a_{ii^*}}{d_{i^*}}\Big) {a_{i^{*}{i'}}}$. We have
\bee\label{t:s:d224}
\sum_{i^* = 1}^{n}\Big(\frac{p_{ii^*}}{p_{i^*}} - \frac{a_{ii^*}}{d_{i^*}}\Big) {a_{i^{*}{i'}}} 
&= \sum_{i^* = 1}^{n}\Big(\frac{d_{i^*}}{p_{i^*}} + 1 - \frac{d_{i^*}}{p_{i^*}}\Big)\Big(\frac{p_{ii^*}d_{i^*} -a_{ii^*}p_{i^*} }{p_{i^*}d_{i^*}}\Big) {a_{i^{*}{i'}}} 
\\
&= \sum_{i^* = 1}^{n}\Big(\frac{p_{ii^*}d_{i^*} -a_{ii^*}p_{i^*}}{p_{i^*}^2}\Big) {a_{i^{*}{i'}}} 
+\sum_{i^* = 1}^n\Big(1 - \frac{d_{i^*}}{p_{i^*}}\Big)\cdot\Big(\frac{p_{ii^*}d_{i^*} -a_{ii^*}p_{i^*} }{p_{i^*}d_{i^*}}\Big)a_{i^*i'}
\\
&=\sum_{i^*= 1}^n\sum_{\iss = 1}^n\Big(\frac{ a_{i^*\iss}p_{ii^*} - p_{i^*\iss}a_{ii^*}}{p_{i^*}^2}\Big)a_{i^*i'} 
+\sum_{i^* = 1}^n\frac{(p_{ii^*}d_{i^*} - a_{ii^*}p_{i^*})(p_{i^*} - d_{i^*})a_{i^*i'}}{p_{i^*}^2d_{i^*}}.
\ee
For the first term on the RHS of Eq.~\eqref{t:s:d224}, we have
\bee \nonumber
%\label{t:s:d222}
\sum_{i^* = 1}^n\sum_{\iss = 1}^n\Big(\frac{ a_{i^*\iss}p_{ii^*} - p_{i^*\iss}a_{ii^*}}{p_{i^*}^2}\Big)a_{i^*i'} 
&= \sum_{i^* = 1\atop i^*\neq i,i'}^n\sum_{\iss= 1\atop \iss \neq i,i'}^n\Big(\frac{ a_{i^*\iss}p_{ii^*} - p_{i^*\iss}a_{ii^*}}{p_{i^*}^2}\Big)a_{i^*i'}
\\
&=\sum_{i^* = 1\atop i^*\neq i,i'}^n\frac{a_{i^*i'}}{p_{i^*}^2}\sum_{\iss = 1\atop\iss \neq i,i'}^na_{i^*\iss}p_{ii^*} - p_{i^*\iss}a_{ii^*} 
\\
&= \sum_{(i^*,i^{**})\in\mathcal{T}(i,i')}\Big(\frac{a_{i^*i'}p_{ii^*}}{p_{i^*}^2} + \frac{a_{\iss i'}p_{ii^{**}}}{p_{\iss}^2}\Big)a_{i^*\iss}
 - \Big(\frac{a_{i^*i'}a_{ii^*}}{p_{i^*}^2} + \frac{a_{\iss i'}a_{ii^{**}}}{p_{\iss}^2}\Big)p_{i^*\iss}
\\ &=: \mathcal{S}_2(\bds a_i,\bds a_{i'}).
\ee
Here $\mathcal{T}(i,i') = \{(i^*,i^{**})|i^* < i^{**}, i^* \not \in
\{i,i'\}, i^{**} \not \in \{i,i'\}\}$ and
$\bm{a}_{i} = (a_{i1}, \dots, a_{in})$.
\par
Conditioning on the event that $\|\M D_{\M A}\| = \max_i d_i \leq cn\rho_n$ with some $c>0$ and $\bds{a}_i,\bds{a}_{i'}$, it is easy to check that the summands in 
$\mathcal{S}_2(\bds a_i,\bds a_{i'})$
are {\em independent} mean $0$ random variables with $\mathrm{Var}[\mathcal{S}_2(\bds a_i,\bds a_{i'})] \asymp n^{-2}$. Similar arguments to Eq.~\eqref{var:d112}-Eq.~\eqref{d112:f} yield
\bee\nonumber
\max_{i,i'}\Big|\sum_{i^*,\iss= 1}^n\Bigg(\frac{ a_{i^*\iss}p_{ii^*} - p_{i^*\iss}a_{ii^*}}{p_{i^*}^2}\Bigg)a_{i^*i'}\Big| = \op(n^{-1}).
\ee
In addition, by Lemma \ref{l:sparse:rate}, we also have
\bee\nonumber
\max_{i,i'}\Big|\sum_{i^* = 1}^n\frac{(p_{ii^*}d_{i^*} -
  a_{ii^*}p_{i^*})(p_{i^*} - d_{i^*})}{p_{i^*}^2d_{i^*}}a_{i^*i'}\Big|
&= \op\{(n\rho_n)^{-1/2}\log^{1/2}{n}\},
\ee
as $\max_{i,i'}|p_{ii'}d_i - a_{ii'}p_i|\leq \max_{i,i'}p_{ii'}d_i + p_i= \op(n\rho_n)$. Combining the above results we obtain
\bee\nonumber
\delta^{(2,2)}_{2,\off} = \op((n\rho_n)^{-2} \log n).
\ee
\par
\noindent\textbf{Step 2 (Bounding ${{\delta}}^{(2,3)}_{2,\off}$): } We first note that
\bee\label{T1s:off-d}
{\delta}^{(2,3)}_{2,\off} &= \max_{i \neq {i'}}\Big|\sum_{i^{*}= 1}^{n}\BL\frac{p_{ii^*}}{p_{i^*}} - \frac{a_{ii^*}}{p_{i^*}} \BR\BL\frac{p_{i^*i'}}{p_{i'}} - \frac{a_{i^{*}{i'}}}{p_{i'}} \BR\Big| 
= \max_{i \neq {i'}}\Big|\sum_{i^{*}= 1}^{n}\frac{1}{p_{i^*} p_{i'}}\BL p_{ii^*} - a_{ii^*} \BR\BL p_{{i'}i^{*}}-a_{i'i^*} \BR\Big|.
\ee
Denote $\tilde{\zeta}^{ii'}_{i^{*}} := \frac{1}{p_{i^*} p_{i'}}\big( p_{ii^*} - a_{ii^*} \big)\big( p_{{i'}i^{*}}-a_{i'i^*} \big)$. We then have
\bee\label{T1s:bern_mean}
\sum_{i^{*}= 1}^{n}\BL\frac{p_{ii^*}}{p_{i^*}} - \frac{a_{ii^*}}{p_{i^*}} \BR\BL\frac{p_{i^*i'}}{p_{i'}} - \frac{a_{i^{*}{i'}}}{p_{i'}} \BR 
= \Big(\sum_{i^* = 1\atop i^* \neq i,{i'}}^{n}\tilde{\zeta}^{ii'}_{i^{*}} \Big)+ \tilde{\zeta}^{ii'}_i + \tilde{\zeta}^{ii'}_{i'}.
\ee
Next observe that the $\{\tilde{\zeta}^{ii'}_{i^{*}}\}$ for $i^* \not \in \{i,i'\}$ are independent mean $0$ random variables. Hence
\bee
\frac{1}{n-2}\sum_{i^* = 1\atop i^*\neq i,{i'}}^{n} \text{Var}(\tilde{\zeta}^{ii'}_{i^{*}})
& = \frac{1}{n-2}\sum_{i^* = 1\atop i^*\neq i,{i'}}^n\text{Var}\Big\{\BL\frac{p_{ii^*}}{p_{i^*}} - \frac{a_{ii^*}}{p_{i^*}} \BR\BL\frac{p_{i^*i'}}{p_{i'}} - \frac{a_{i^{*}{i'}}}{p_{i'}} \BR\Big\}
\\
& = \frac{1}{(n-2)}\sum_{i^* = 1\atop i^*\neq i,{i'}}^{n}\frac{1}{p_{i^*}^2p_{i'}^2} \text{Var}({p_{ii^*}} - {a_{ii^*}}) \cdot \text{Var}({p_{{i'}i^{*}}} - {a_{i'i^*}}) 
\asymp \frac{1}{n^4\rho_n^2}.
\ee
We also have
\bee\label{T1s:bern}
\max_{i \not = i'} |\tilde{\zeta}^{ii'}_{i^{*}}| &= \Big | \BL\frac{p_{ii^*}}{p_{i^*}} - \frac{a_{ii^*}}{p_{i^*}} \BR\BL\frac{p_{i^*i'}}{p_{i'}} - \frac{a_{i^{*}{i'}}}{p_{i'}} \BR\Big |
\leq \Bigl(\frac{1}{n\rho_n c_2}\Bigr)^2
\precsim (n\rho_n)^{-2}.
\ee
Therefore, by Bernstein inequality, for any given $c' > 0$ there exists a constant $C'>0$ not depending on $n$ such that
\bee\nonumber
&\p\Big(\max_{i \neq {i'}}\Big|\sum_{i^* = 1\atop i^* \neq
  i,{i'}}^{n}\tilde{\zeta}^{ii'}_{i^{*}}\Big|>C'n^{-3/2}\rho_n^{-1} \log^{1/2}{n}\Big) \leq n^{-c'} 
\ee
In summary we have 
$$\max_{i \neq
  {i'}}\Big|\sum_{i^* = 1\atop i^* \neq
  i,{i'}}^{n}\tilde{\zeta}^{ii'}_{i^{*}}\Big| =
\op(n^{-3/2}\rho_n^{-1}\log^{1/2}{n}).$$ 
By Eq.~\eqref{T1s:bern}, we also have $\max_{i,{i'}}|\tilde{\zeta}^{ii'}_{i^{*}}| = \mathcal{O}((n\rho_n)^{-2})$ and thus
\bee\label{T4:s1finals}
\delta^{(2,3)}_{2,\off} &= \max_{i \neq {i'}}\Big|\sum_{i^{*}= 1}^{n}\BL\frac{p_{ii^*}}{p_{i^*}} - \frac{a_{ii^*}}{p_{i^*}} \BR\BL\frac{p_{i^*i'}}{p_{i'}} - \frac{a_{i^{*}{i'}}}{p_{i'}} \BR \Big| 
\\ &\leq \max_{i \neq {i'}}\Big| \sum_{i^* = 1\atop i^* \neq i,{i'}}^{n}\tilde{\zeta}^{ii'}_{i^{*}} \Big|+ \max_{i,{i'}}\Big|\tilde{\zeta}^{ii'}_i\Big| + \max_{i,{i'}}\Big|\tilde{\zeta}^{ii'}_{i'}\Big| 
%\\ &\leq \op(n^{-3/2}\rho_n^{-1}\log^{1/2}{n}) + \mathcal{O}((n\rho_n)^{-2}) + \mathcal{O}((n\rho_n)^{-2}) 
= \op(n^{-3/2}\rho_n^{-1}\log^{1/2}{n}).
\ee
%Under Assumption~\ref{am:C}, since $(n\rho_n)^{-3/2}\sqrt{\log(n)}\succsim n^{-3/2}\rho_n^{-1}\sqrt{\log(n)}, (n\rho_n)^{-2}$ we have
%\bee
%\max_{i \neq {i'}}\Big|\sum_{i^{*}= 1}^{n}\BL\frac{p_{ii^*}}{p_{i^*}} - \frac{a_{ii^*}}{p_{i^*}} \BR\BL\frac{p_{i^*i'}}{p_{i'}} - \frac{a_{i^{*}{i'}}}{p_{i'}} \BR \Big| = \op((n\rho_n)^{-3/2}\sqrt{\log(n)}).
%\ee
%Furthermore if Assumption~\ref{am:fd} holds, we could check that  $n^{-3/2}\rho_n^{-1}\sqrt{\log(n)} \succsim (n\rho_n)^{-2}$ and (\ref{T4:s1finals}) leads 
%\bee
%\max_{i \neq {i'}}\Big|\sum_{i^{*}= 1}^{n}\BL\frac{p_{ii^*}}{p_{i^*}} - \frac{a_{ii^*}}{p_{i^*}} \BR\BL\frac{p_{i^*i'}}{p_{i'}} - \frac{a_{i^{*}{i'}}}{p_{i'}} \BR \Big| = \op(n^{-3/2}\rho_n^{-1}\sqrt{\log(n)}).
%\ee
%as desired.
%\subsection{Additional Proofs of Other Theorems and Corollary}

\subsection{Proof of Theorem~\ref{T1}}
To give a detailed analysis for all components in $C_{ii'}$, we firstly denote $C_{ii'}^{(t)}$ as the times that the structure,
\bee\label{structure3}
\dots,v_i,\underbrace{\dots\dots}_{t - 1 \text{terms}},v_{i'},\dots
\ee 
with a fixed $t$ appears among all random paths in $\bigcup_{i = 1}^n \mathcal{L}_i$. As $C_{ii'}$ counts all structures defined in Eq.~\eqref{node2vec:ii'}, we have
\bee\nonumber
C_{ii'} = \sum_{t = t_L}^{t_U}C_{ii'}^{(t)} + \sum_{t = t_L}^{t_U}C_{i'i}^{(t)}.
\ee
Let $C_{k,i,i'}^{(t)}$ denote the number of times the following structure appears among all random paths in $\bigcup_{i = 1}^{n} \mathcal{L}_i$,
\bee\label{structure1}
\underbrace{\dots\dots}_{k \text{ nodes}},v_i,\underbrace{\dots\dots}_{t - 1\text{ nodes}},v_{i'},\dots
\ee
We then have
\bee\label{structure2}
C_{ii'}^{(t)} = \sum_{i^* = 0}^{L-t-1}C_{i^*,i,i'}^{(t)}.
\ee
Let $\{R_i\}_{i \geq 1}$ represents a stationary simple random walk on $\mathcal{G}$. Since all random paths are stationary and independent simple random walks over $\mathcal{G}$, the strong law of large numbers for Markov chain implies
\bee\label{T1:SLLN}
\lim_{r \rightarrow \infty} \frac{1}{r} C^{(t)}_{i^*,i,i'}
&= \p(R_{i^*+1} = v_i)\cdot \p(R_{i^*+t+1} = v_{i'}|R_{i^*+1} = v_i)
=S_i\cdot \p(R_{t + 1} = v_{i'}|R_{1} = v_i)
= \frac{d_i}{2|\M A|} \cdot \hat{w}^{(t)}_{i'i}
\ee
almost surely. 
Furthermore we also have
\bee\nonumber
\lim_{r \rightarrow \infty} \frac{1}{r} C^{(t)}_{ii'} &= \lim_{r \rightarrow \infty} \frac{1}{r} \sum_{i^* = 0}^{L-t-1}C_{i^*,i,i'}^{(t)}
= \frac{(L - t)d_i}{2|\M A|}  \hat{w}^{(t)}_{i'i}, \\
\lim_{r \rightarrow \infty} \frac{1}{r} C_{ii'} &= 
\lim_{r \rightarrow \infty} \frac{1}{r} \Bigl(\sum_{t = t_L}^{t_U}C_{ii'}^{(t)} + \sum_{t =
  t_L}^{t_U}C_{i'i}^{(t)}\Bigr)
 = \sum_{t = t_L}^{t_U}(L -
t)\Big(\frac{d_i}{2|\M A|}\hat{w}_{i'i}^{(t)}+\frac{d_{i'}}{2|\M
  A|}\hat{w}_{ii'}^{(t)}\Big).
% \ee
\ee
almost surely. 
Combining the above two convergences, we have
\bee
\label{T1:Ci}
\lim_{r \rightarrow \infty} \frac{1}{r} \sum_{i = 1}^{n}C_{ii'}
= \sum_{t = t_L}^{t_U}(L - t)\Big(\frac{1}{2|\M A|}\sum_{i = 1}^{n}{d_i}\hat{w}_{i'i}^{(t)}+\frac{d_{i'}}{2|\M A|}\sum_{i = 1}^n\hat{w}_{ii'}^{(t)}\Big)
= \sum_{t = t_L}^{t_U}(L - t)\Big(\frac{d_{i'}}{2|\M A|} + \frac{d_{i'}}{2|\M A|}\Big) 
= % \frac{(2L - t_L - t_U)(t_U - t_L + 1)}{2}\cdot\frac{d_{i'}}{|\M A|}
% \\
\frac{\gamma d_{i'}}{|\M A|}
% \ee
% \bee\nonumber
\ee
almost surely, 
where we denote $\gamma:= \frac{(2L - t_L - t_U)(t_U -
  t_L + 1)}{2}$. Note that the second equality in Eq.~\eqref{T1:Ci} is
due to Lemma~\ref{l1}. Similar reasoning yields
\bee
\lim_{r \rightarrow \infty} \frac{1}{r} \sum_{i' = 1}^{n}C_{i'i} = 
\frac{\gamma d_i}{|\M A|}, \quad
\lim_{r \rightarrow \infty} \frac{1}{r} \sum_{i = 1}^n \sum_{i' = 1}^n C_{ii'} = 2 \gamma
\ee
almost surely. 
Now for $(t_L, t_U)$ satisfying that all entries in $\sum_{t = t_L}^{t_U}\hat{\M W}^t$ are positive, the $ii'$th entry
in $\tilde{\M M}(\M C,\kappa)$ satisfies
\bee\label{T1:final}
\log \left(\frac{C_{ii'} \cdot(\sum_{i,i'}C_{ii'})}{\kappa \sum_{i}C_{ii'} \cdot \sum_{i'}C_{ii'}}\right)
&= \log \left(\frac{(C_{ii'}/r)\cdot(\kappa \sum_{i,i'}C_{ii'}/r)}{\sum_{i}(C_{ii'}/r) \cdot \sum_{i'}(C_{ii'}/r)}\right) 
\\
&\xrightarrow{a.s.}\log\Big[\frac{|\M A|}{\kappa \gamma}\sum_{t = t_L}^{t_U}(L -t)\Big(\frac{\hat{w}^{(t)}_{i'i}}{d_{i'}}+\frac{\hat{w}_{ii'}^{(t)}}{d_i}\Big)\Big] 
\\
&=\log\Big[\frac{2|\M A|}{\kappa \gamma}\sum_{t = t_L}^{t_U}(L - t)\Big(\frac{\hat{w}_{ii'}^{(t)}}{d_i}\Big)\Big],
\ee
where the last equality is because $\hat{\M W}^{t}$ is a
transition matrix that satisfies the detailed balance
condition. Writing Eq.~\eqref{T1:final} in matrix form, we obtain 
\bee\nonumber
\tilde{\M M}(\M C,k) \xrightarrow{a.s.} &\log\Big[\frac{2|\M A|}{\kappa \gamma} \sum_{t = t_L}^{t_U}(L - t)\M D_{\M A}^{-1}\hat{\M W}^t\Big] 
\ee 
as desired.
\qed
\section{Additional figures}\label{sec:addfigure}
This section contains the figures of all additional simulation results for Section \ref{sec:simu} and Section \ref{sec:disc} in the main paper.
\begin{figure*}
\begin{subfigure}{.23\columnwidth}
\includegraphics[width=\columnwidth]{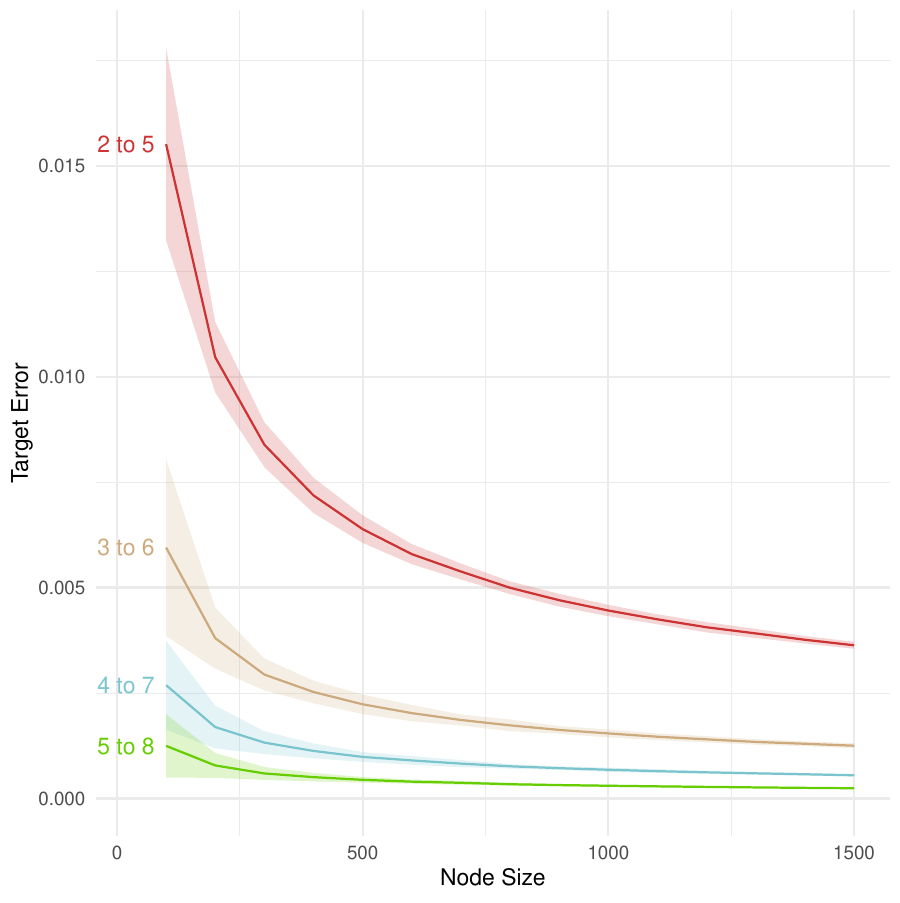}%
\caption{$\rho_n = 1$}
\end{subfigure}\hfill
\begin{subfigure}{.23\columnwidth}
\includegraphics[width=\columnwidth]{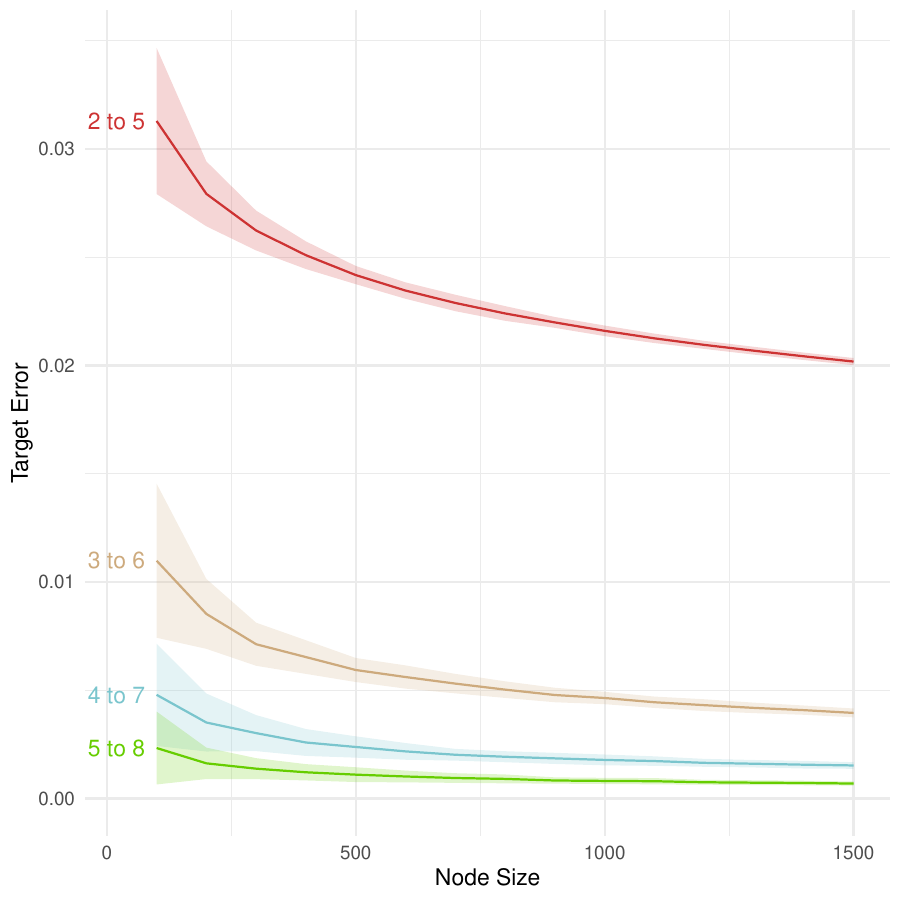}%
\caption{$\rho_n = 3n^{-1/3}$}
\end{subfigure} 
\begin{subfigure}{.23\columnwidth}
\includegraphics[width=\columnwidth]{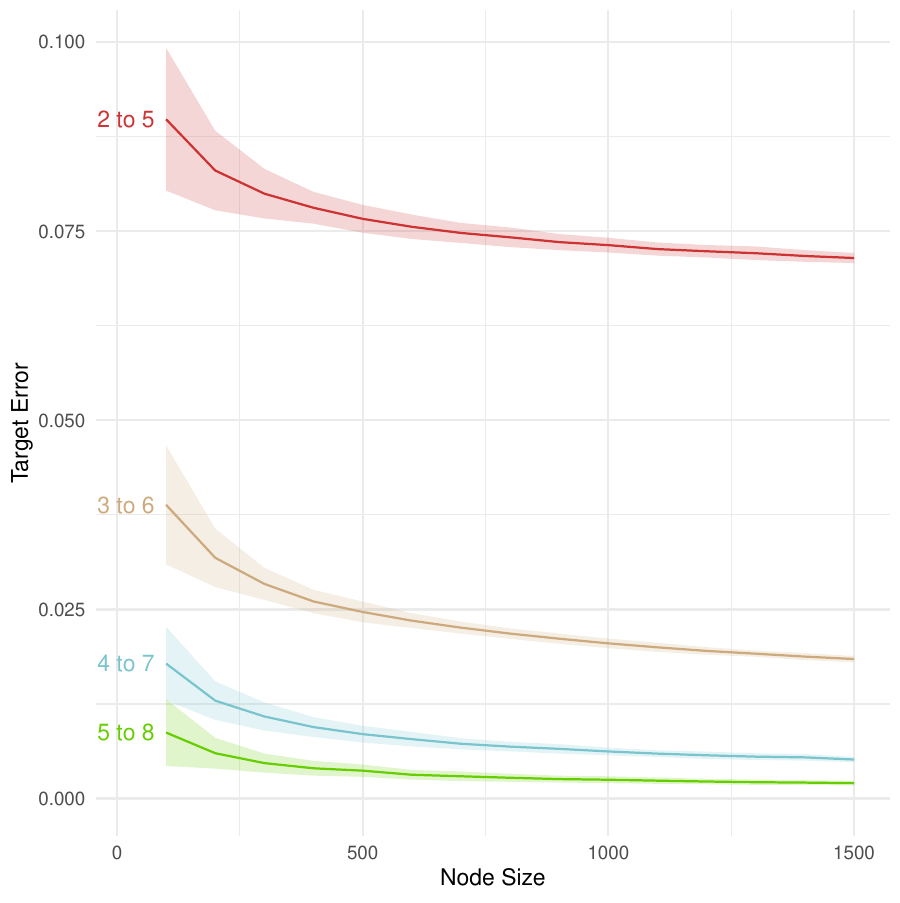}%
\caption{$\rho_n = 3n^{-1/2}$}
\end{subfigure}\hfill
\begin{subfigure}{.23\columnwidth}
\includegraphics[width=\columnwidth]{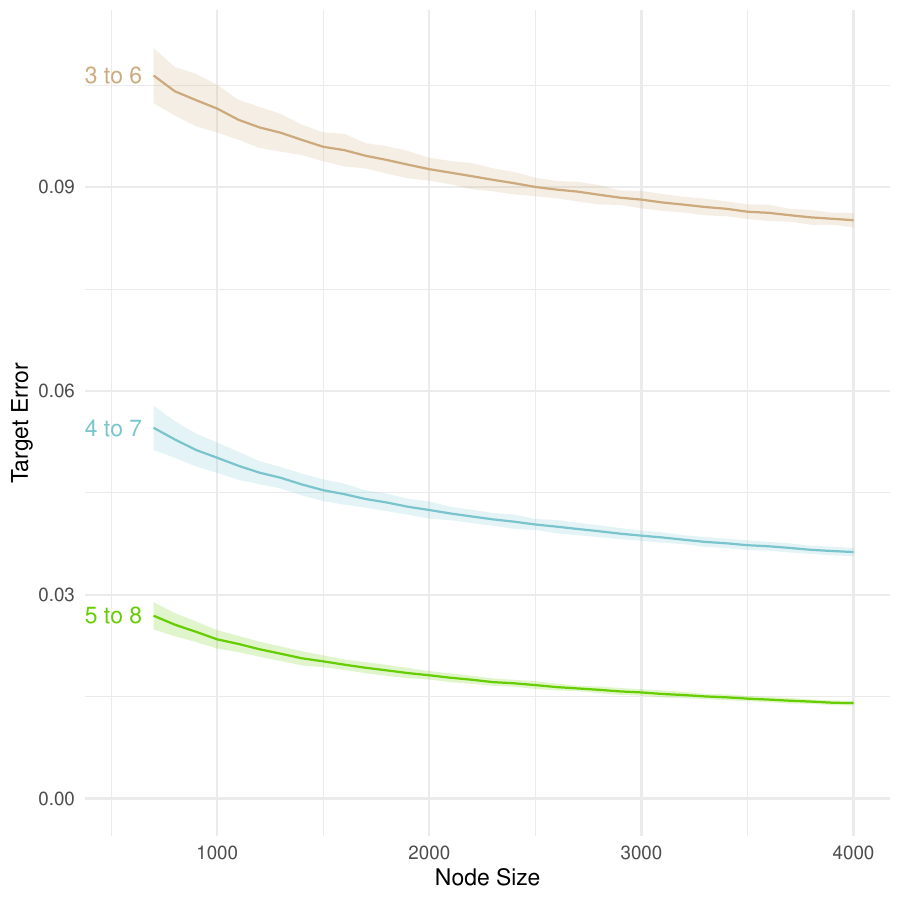}%
\caption{$\rho_n = 15n^{-2/3}$}
\end{subfigure}
\caption{Sample means and $95\%$ empirical confidence
    intervals for $\varepsilon_1(\tilde{\mathbf{M}}_0)$ based on $100$ Monte Carlo replicates for different
    settings of $n,\rho_n, (t_L, t_U)$. Here we set $t_U - t_L = 3$.}
\label{f:rate:2}
\end{figure*}
\begin{figure*}
\begin{subfigure}{.23\columnwidth}
\includegraphics[width=\columnwidth]{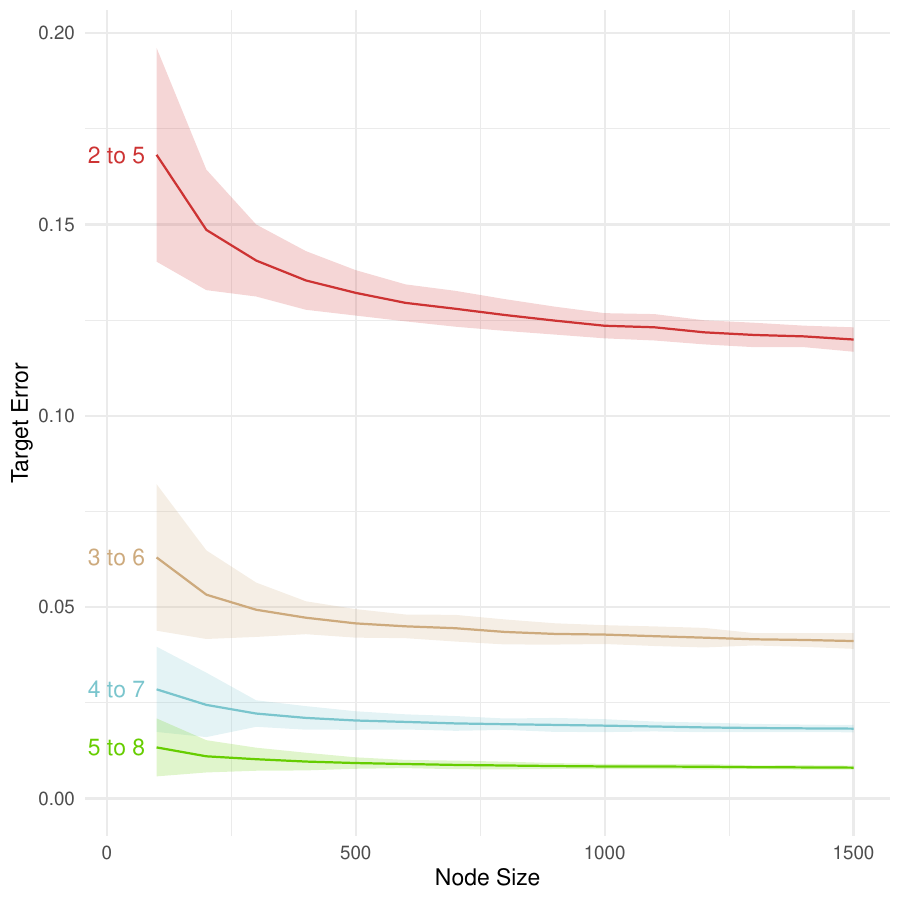}%
\caption{$\rho_n = 1$}
\end{subfigure}\hfill
\begin{subfigure}{.23\columnwidth}
\includegraphics[width=\columnwidth]{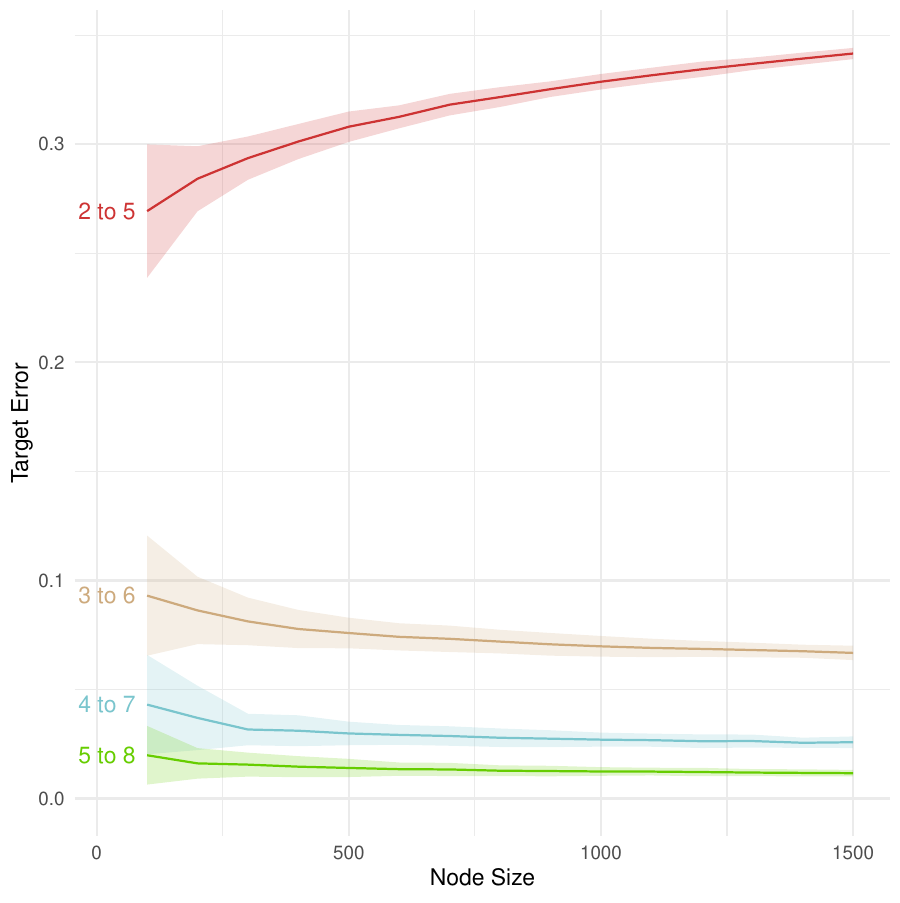}%
\caption{$\rho_n = 3n^{-1/3}$}
\end{subfigure}
\begin{subfigure}{.23\columnwidth}
\includegraphics[width=\columnwidth]{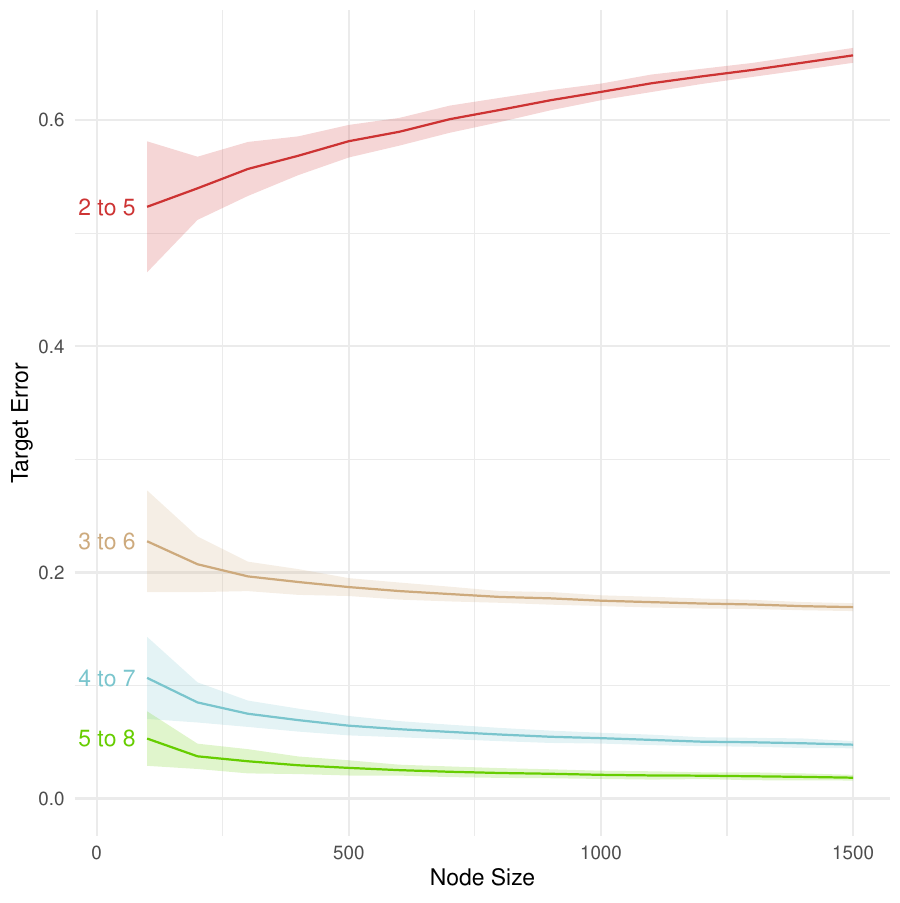}%
\caption{$\rho_n = 3n^{-1/2}$}
\end{subfigure}\hfill
\begin{subfigure}{.23\columnwidth}
\includegraphics[width=\columnwidth]{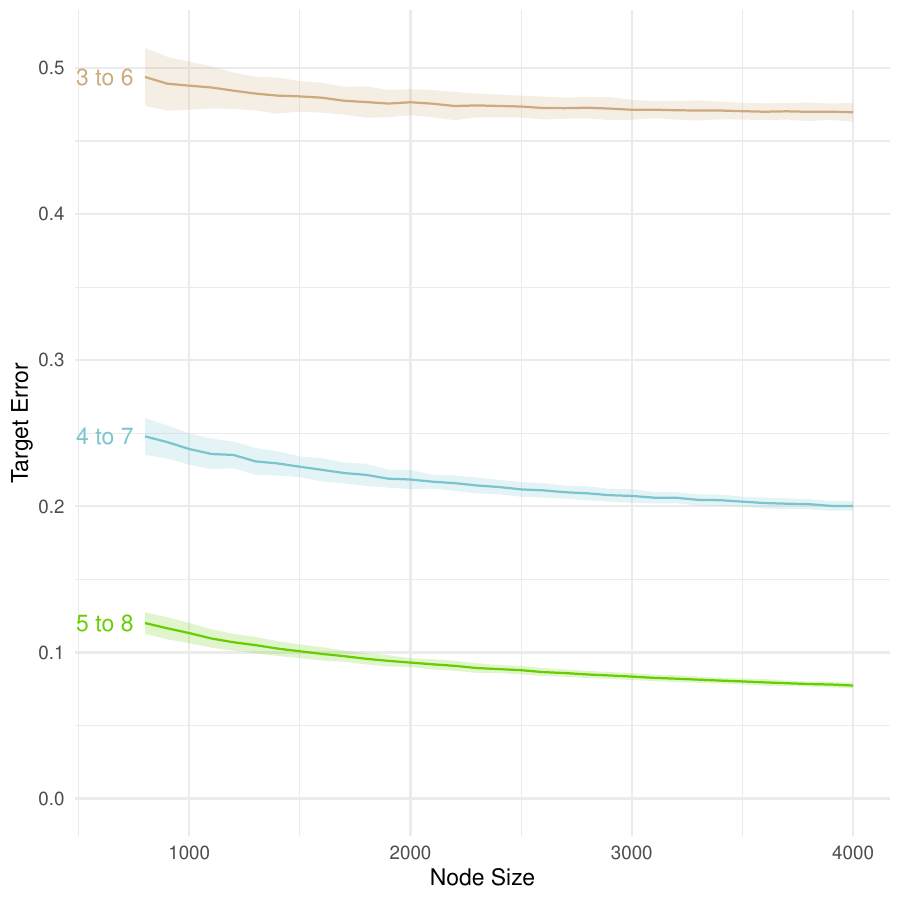}%
\caption{$\rho_n = 15n^{-2/3}$}
\end{subfigure}
\caption{Sample means and $95\%$ empirical confidence
    intervals for $\varepsilon_2(\tilde{\mathbf{M}}_0)$ based on $100$
    Monte Carlo replicates for different values of  of $n,\rho_n$, and
    $(t_L, t_U)$ with $t_L - t_U = 3$.}
\label{f:rate:4}
\end{figure*}

\begin{figure*}
\centering
\begin{subfigure}{.24\columnwidth}
\includegraphics[width=\columnwidth]{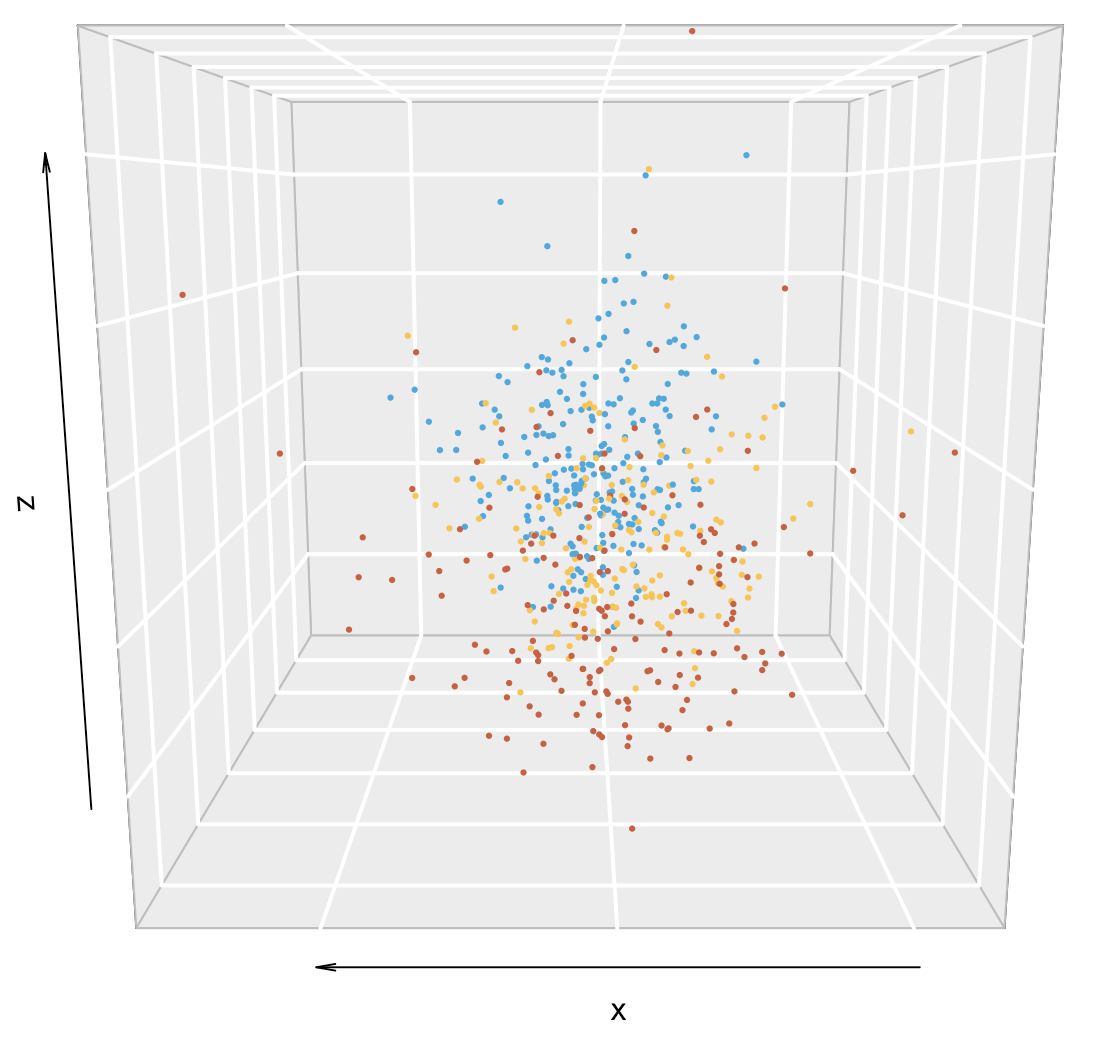}%
\caption{$t_U = 5$, accuracy $=0.72$}
\end{subfigure}
\begin{subfigure}{.24\columnwidth}
\includegraphics[width=\columnwidth]{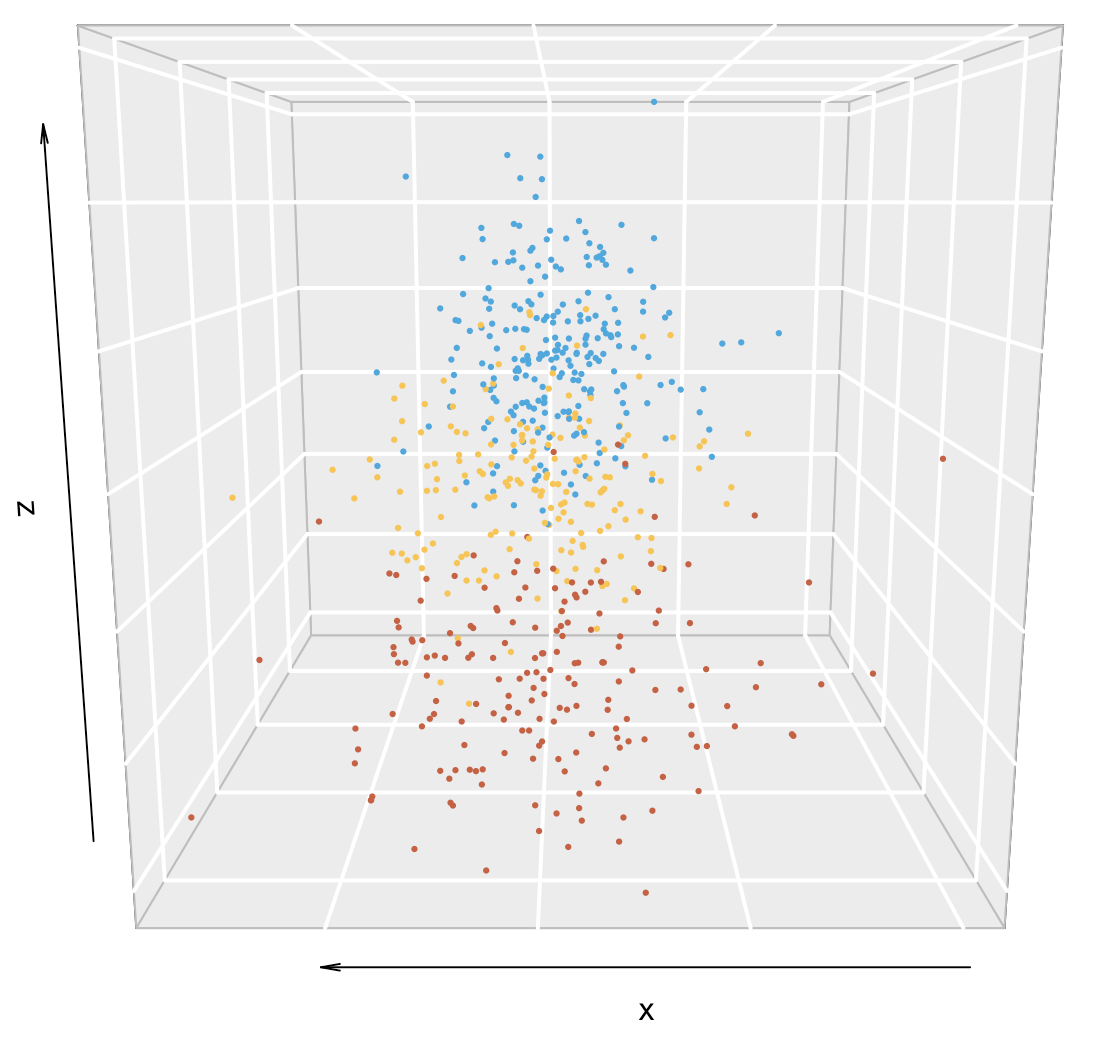}%
\caption{$t_U = 6$, accuracy $=0.80$}
\label{12:3}
\end{subfigure}
\begin{subfigure}{.24\columnwidth}
\includegraphics[width=\columnwidth]{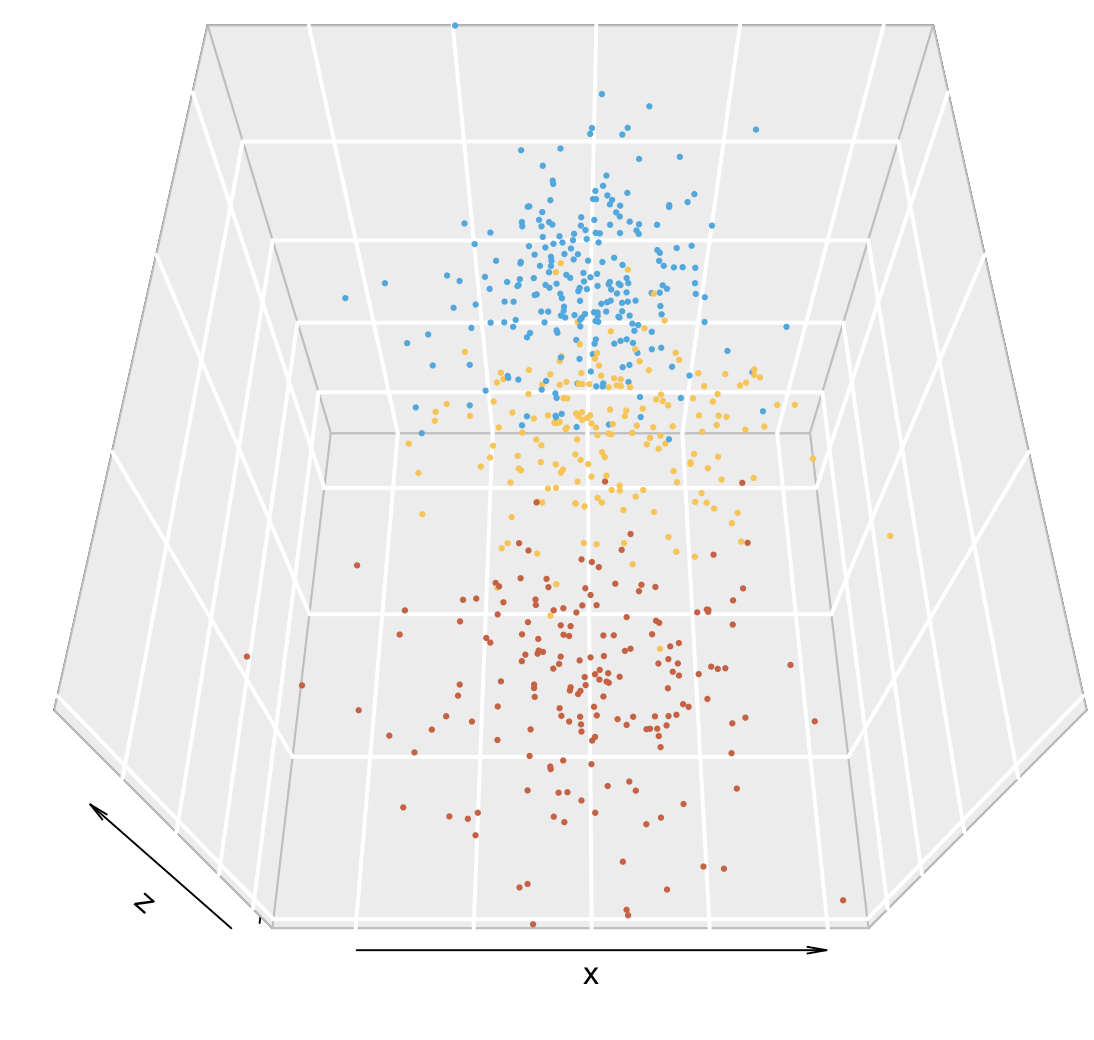}%
\caption{$t_U = 8$, accuracy $=0.88$}
\end{subfigure}
\begin{subfigure}{.24\columnwidth}
\includegraphics[width=\columnwidth]{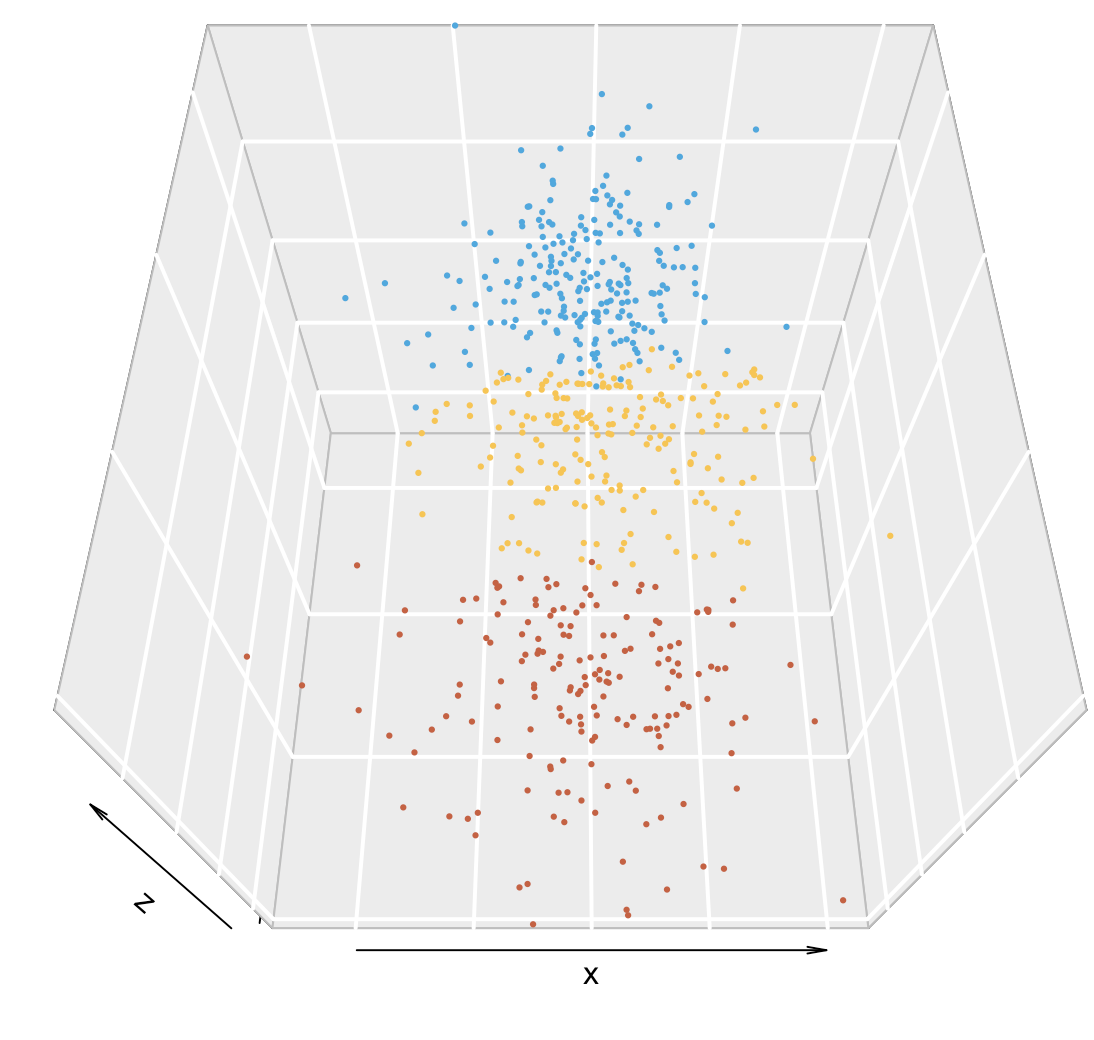}%
\caption{$t_U = 8$, recovery result}
\end{subfigure}

\begin{subfigure}{.24\columnwidth}
\includegraphics[width=\columnwidth]{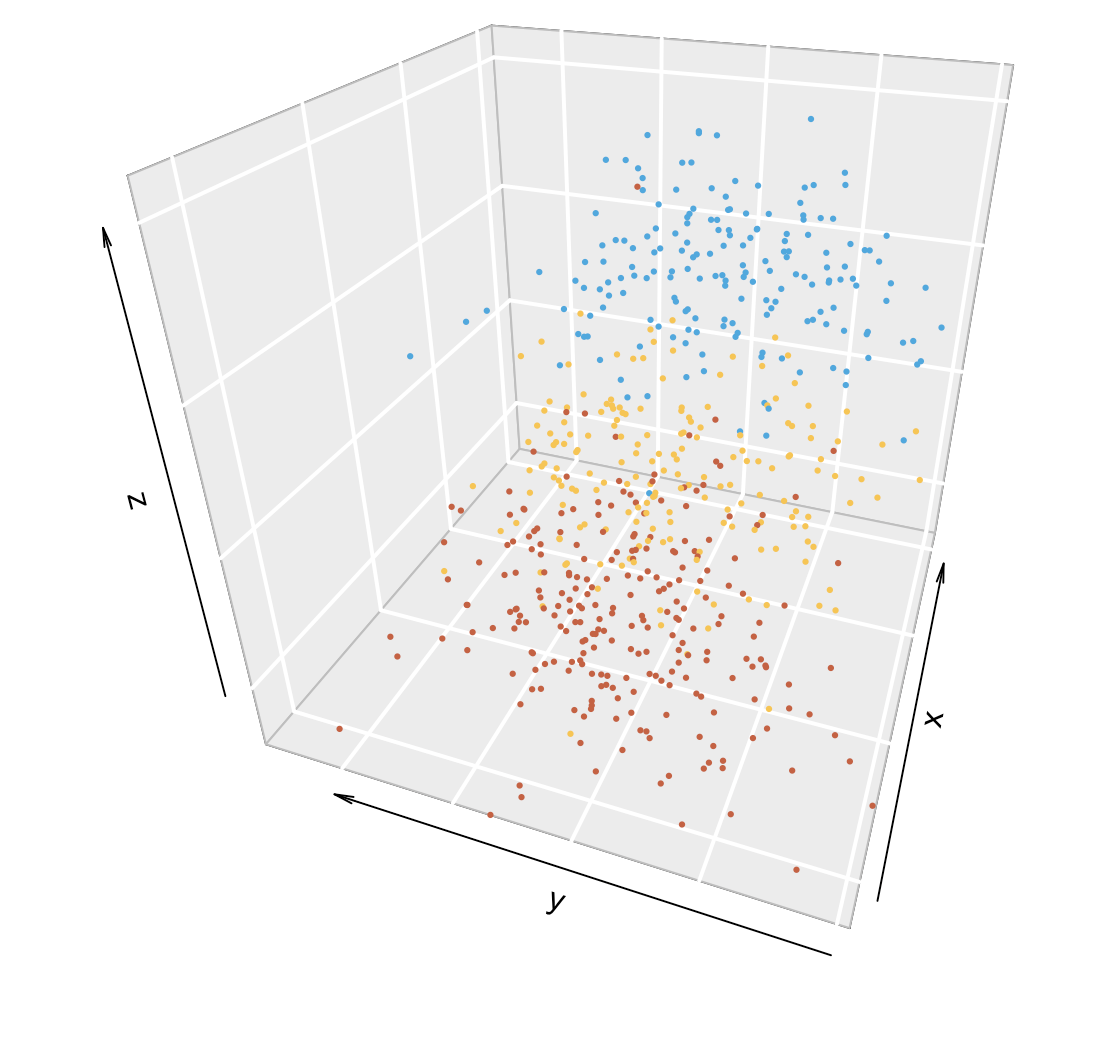}%
\caption{$t_U = 5$, accuracy $=0.62$}
\end{subfigure}
\begin{subfigure}{.24\columnwidth}
\includegraphics[width=\columnwidth]{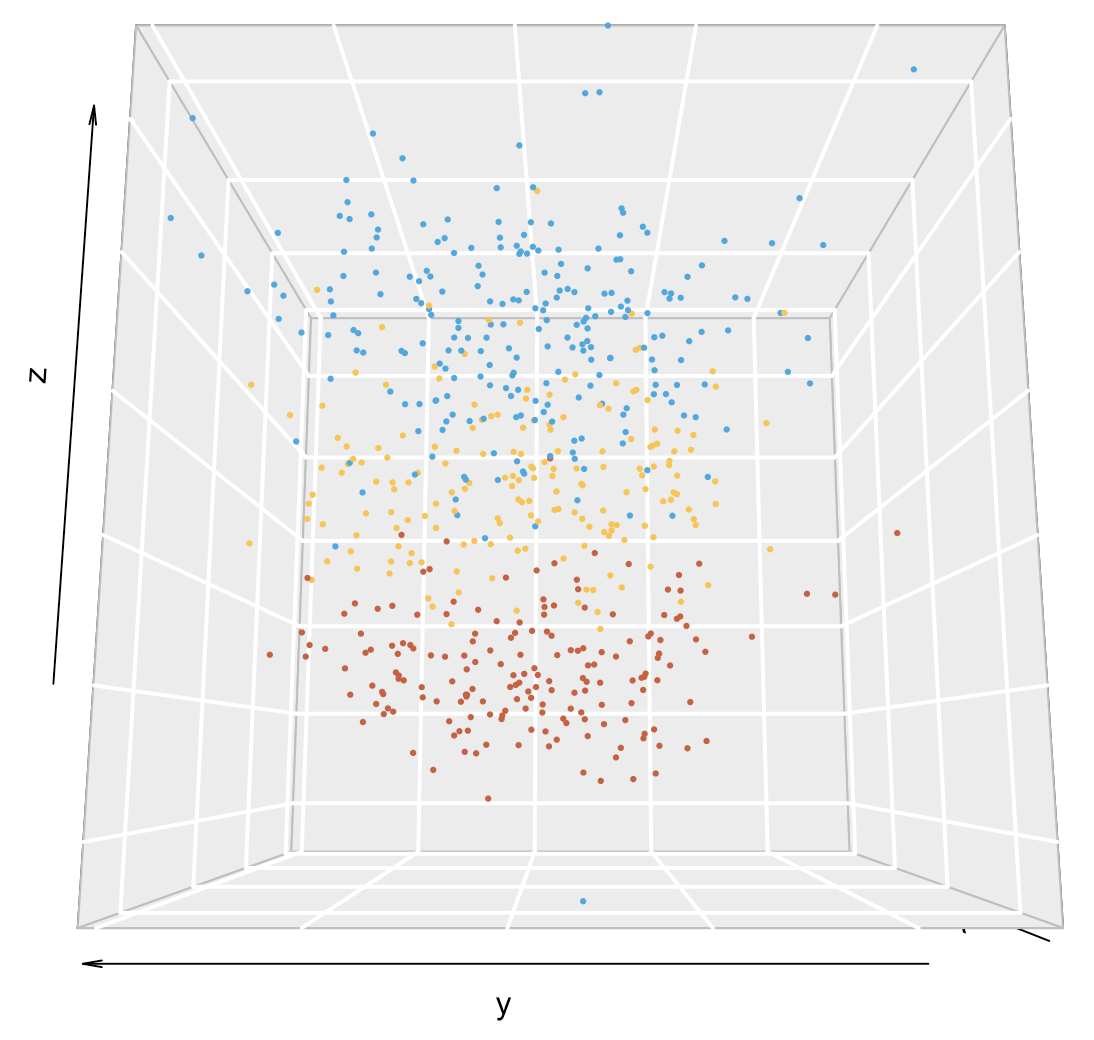}%
\caption{$t_U = 6$, accuracy $=0.62$}
\label{12:3}
\end{subfigure}
\begin{subfigure}{.24\columnwidth}
\includegraphics[width=\columnwidth]{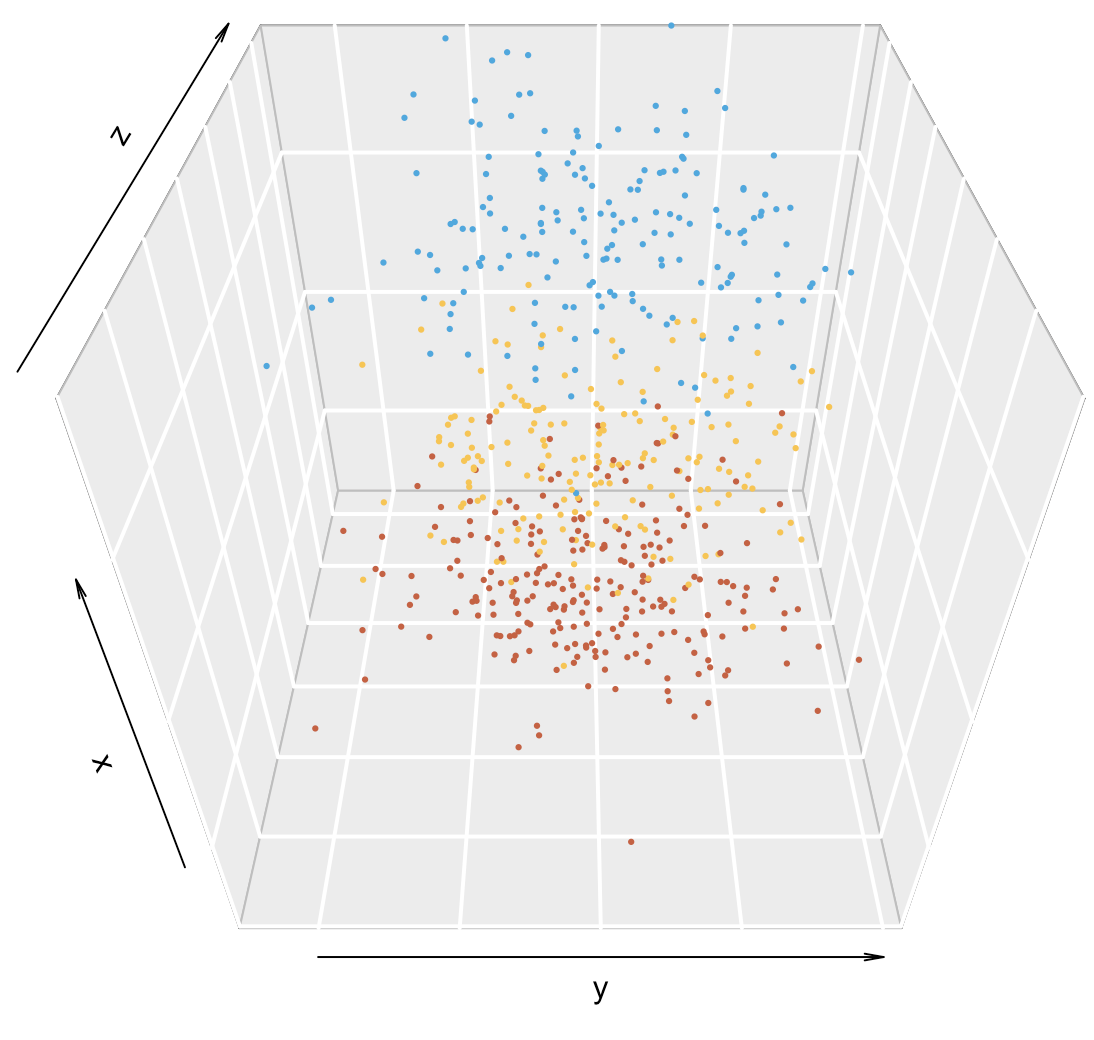}%
\caption{$t_U = 8$, accuracy $=0.61$}
\end{subfigure}
\begin{subfigure}{.24\columnwidth}
\includegraphics[width=\columnwidth]{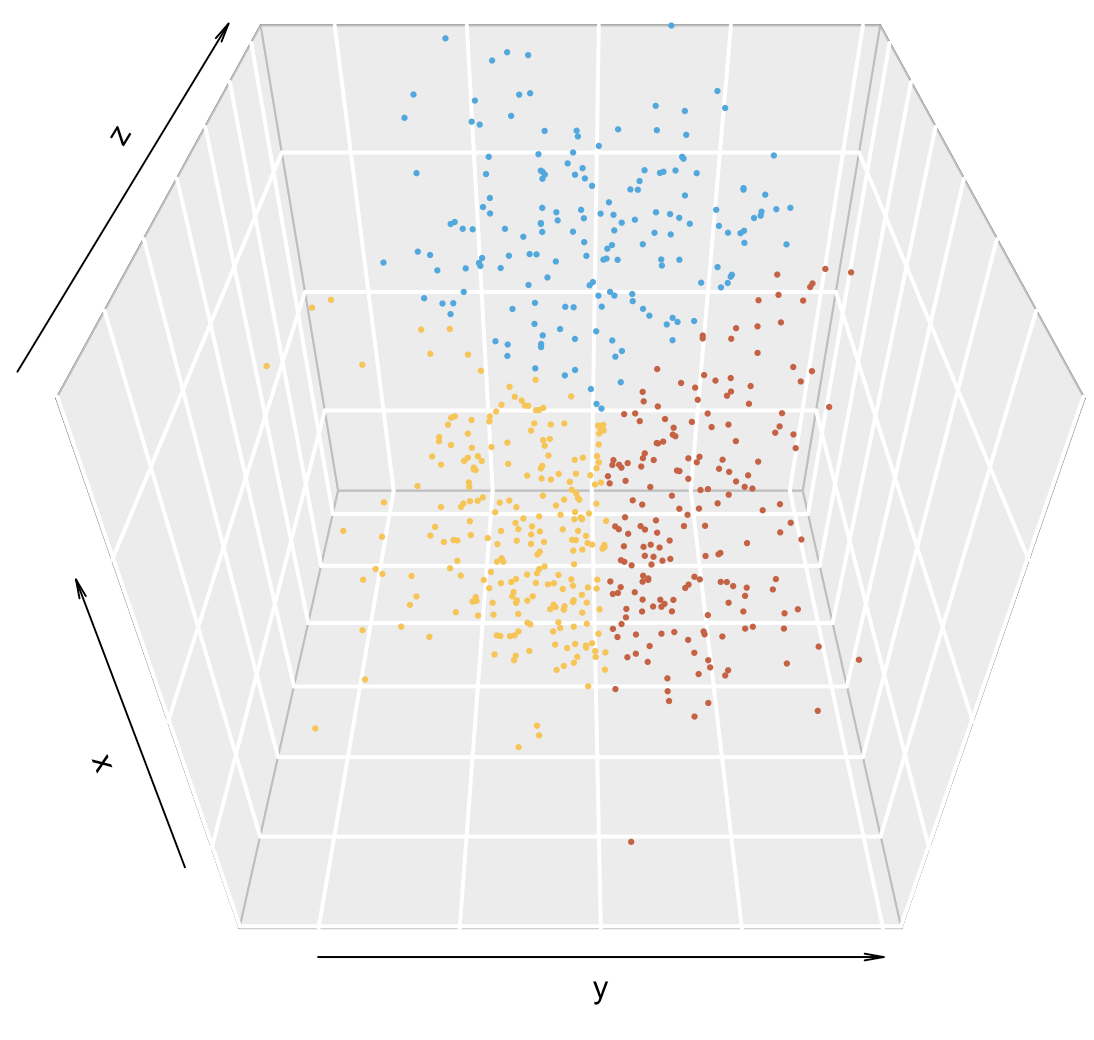}%
\caption{$t_U = 8$, recovery result}
\end{subfigure}
\caption{Visualizations of the SVD-based node2vec embeddings (first row) and original node2vec embeddings (second row) for different choices of $t_U$. The plots are for a single realization of a DCSBM graph on $n = 600$ vertices with block probabilities matrix $\M B_1$ (see eq.~\eqref{eq:Bmat_simulations}), sparsity $\rho_n = 3n^{-1/2}$, and block assignment probabilities $\bm{\pi} = (0.3, 0.3, 0.4)$. The embeddings in panels (a)--(c) and (e)--(g) are colored using the true membership assignments while the embeddings in panels (d) and (h) are colored using the $K$-means clustering.
Accuracy of the recovered memberships (by $K$-means clustering) are also reported for panels 
(a)--(c) and (e)--(g).}
\label{f:embd:dcsbm}
\end{figure*}

\begin{figure*}
\centering
\begin{subfigure}{.32\columnwidth}
\includegraphics[width=\columnwidth]{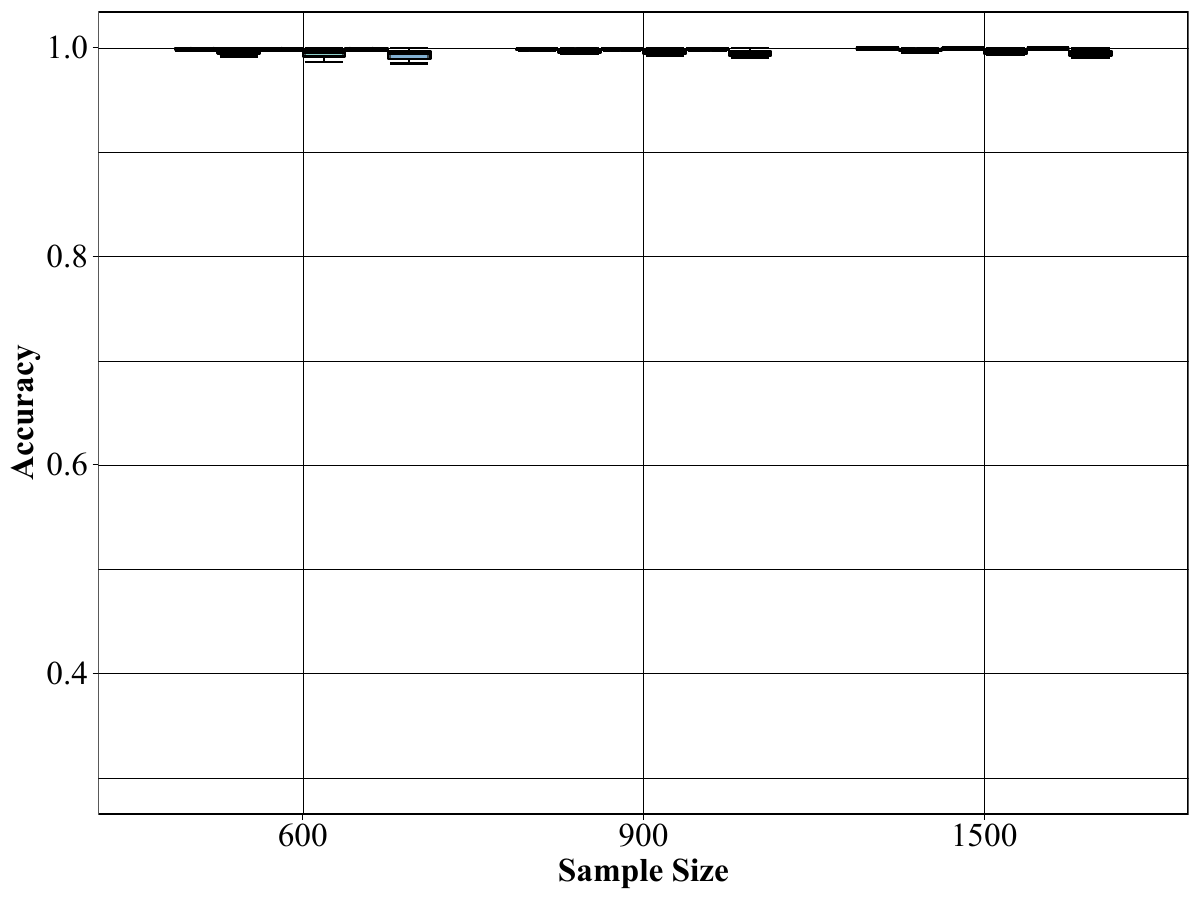}%
\caption{$\rho_n = 1$}
\label{dense:1}
\end{subfigure}
\begin{subfigure}{.32\columnwidth}
\includegraphics[width=\columnwidth]{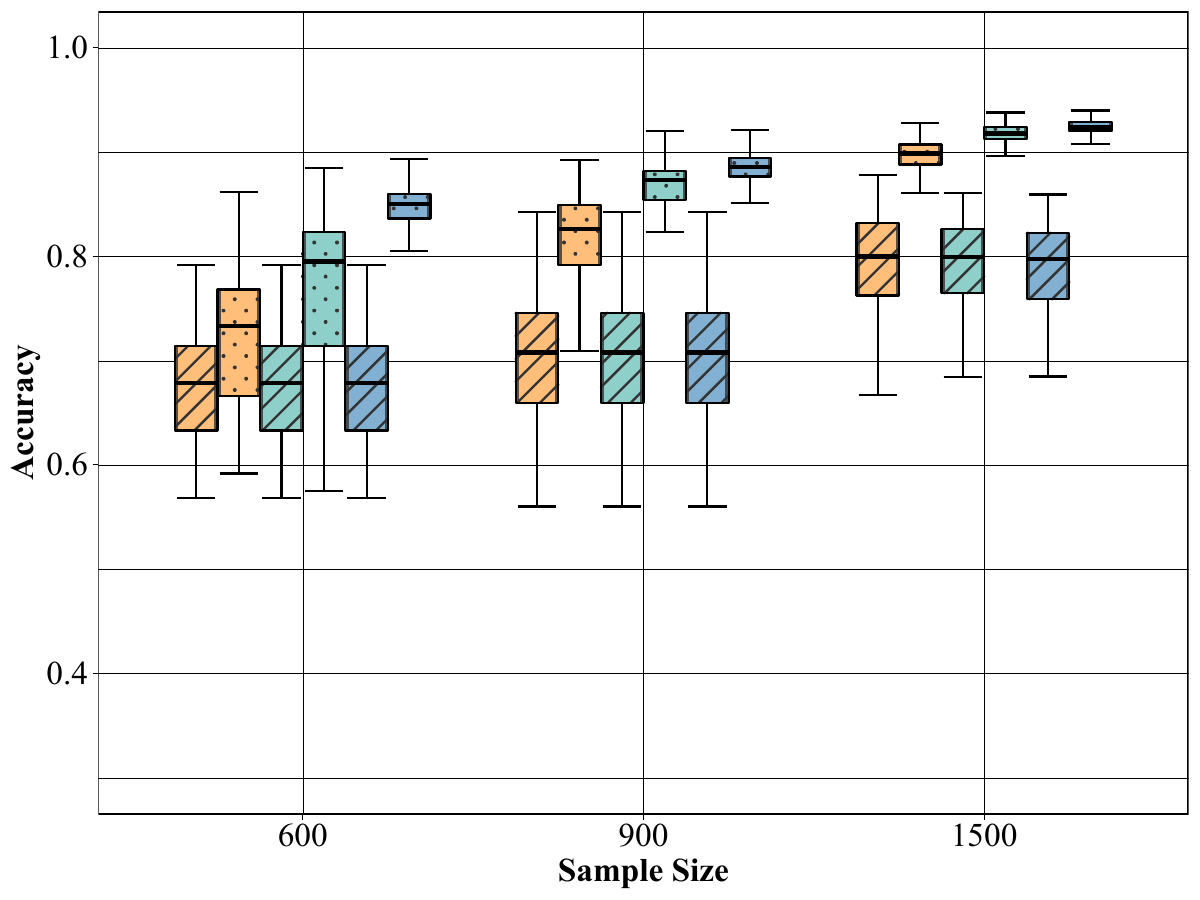}%
\caption{$\rho_n = 3n^{-1/2}$}
\end{subfigure}
\begin{subfigure}{.32\columnwidth}
\includegraphics[width=\columnwidth]{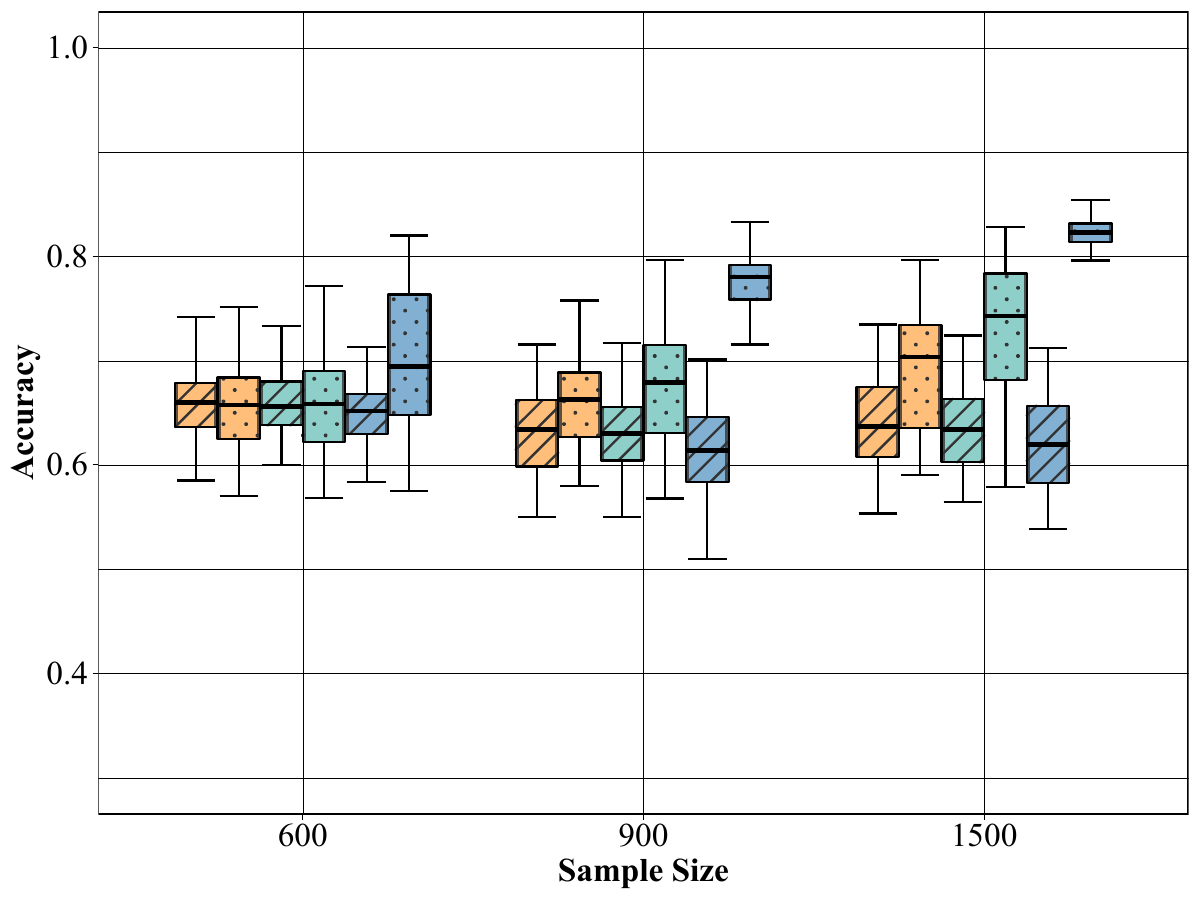}%
\caption{$\rho_n = 6n^{-2/3}$}
\end{subfigure}
\par
\centering
\begin{subfigure}{.32\columnwidth}
\includegraphics[width=\columnwidth]{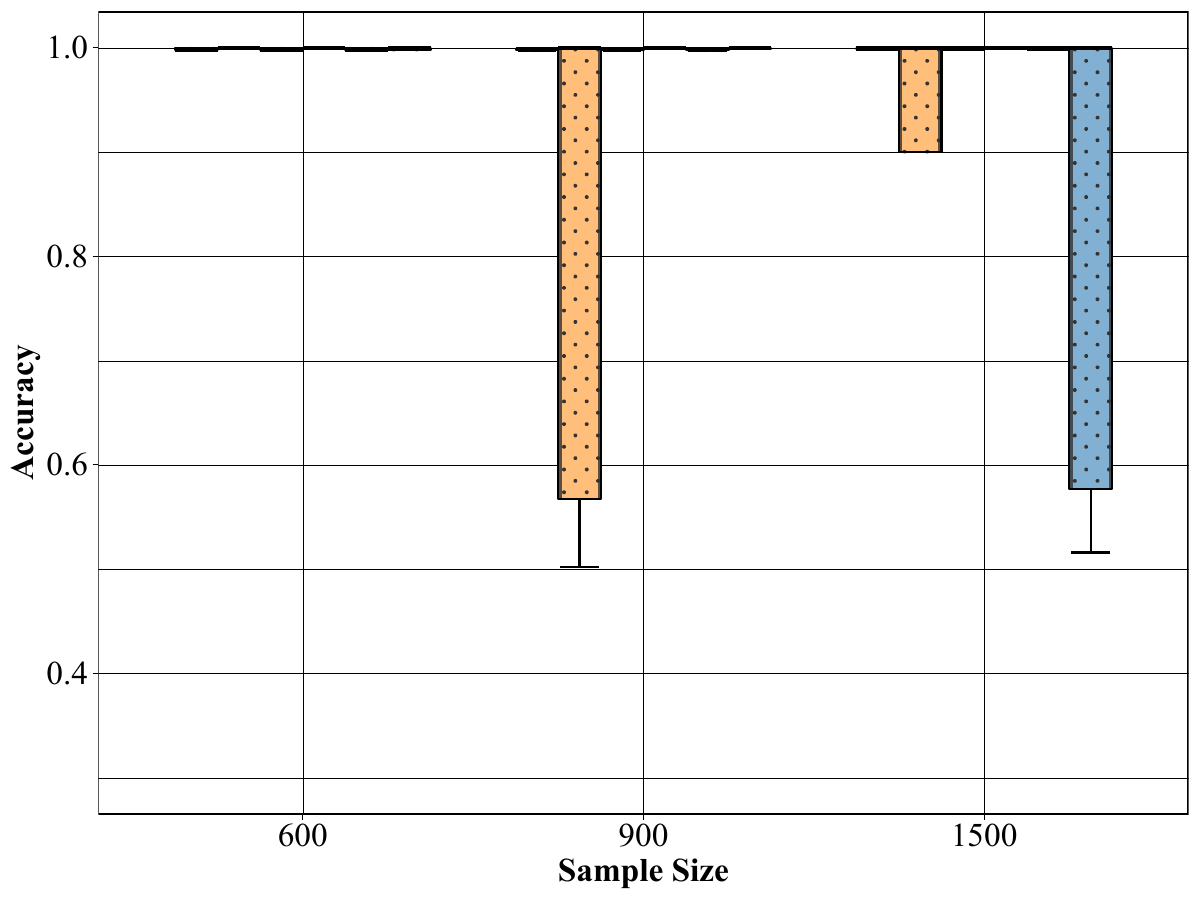}%
\caption{$\rho_n = 1$}
\label{dense:4}
\end{subfigure}
\begin{subfigure}{.32\columnwidth}
\includegraphics[width=\columnwidth]{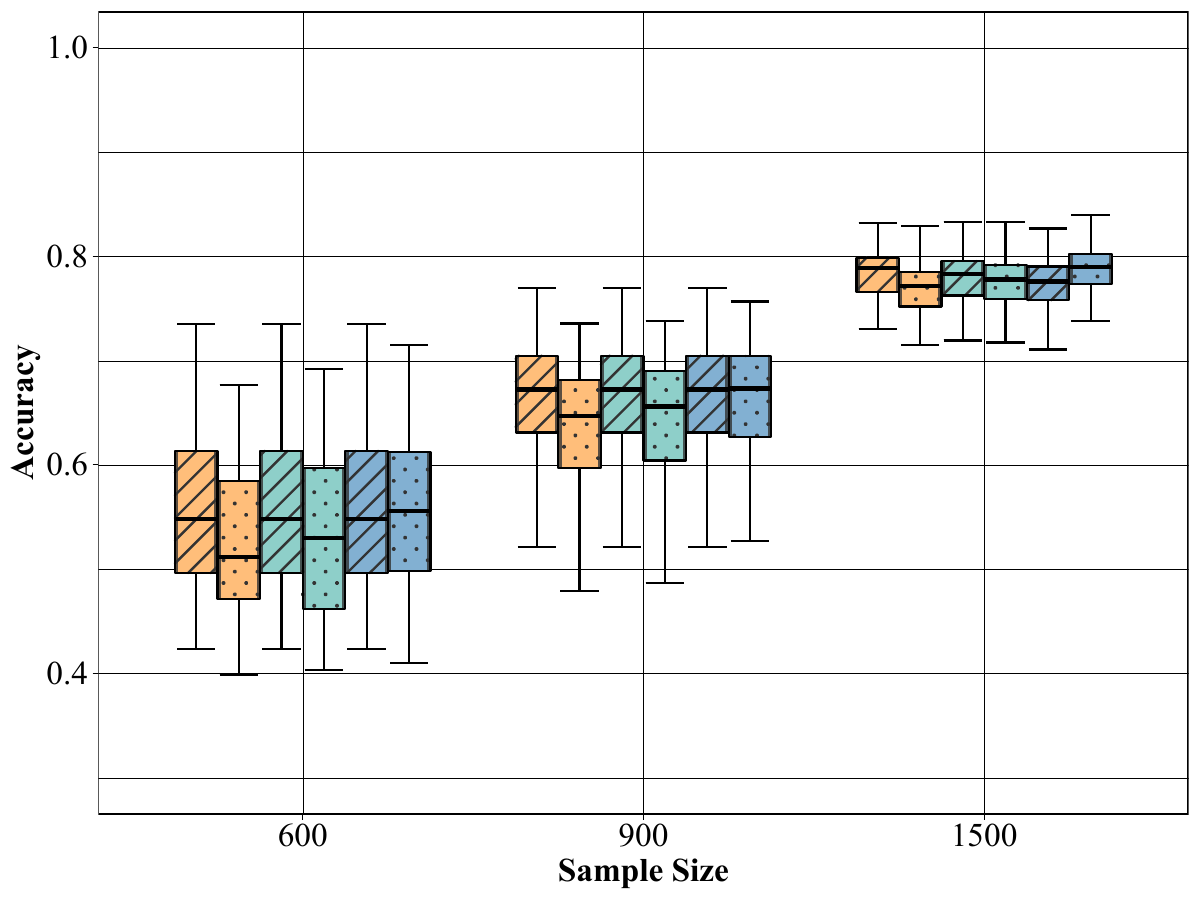}%
\caption{$\rho_n = 3n^{-1/2}$}
\end{subfigure}
\begin{subfigure}{.32\columnwidth}
\includegraphics[width=\columnwidth]{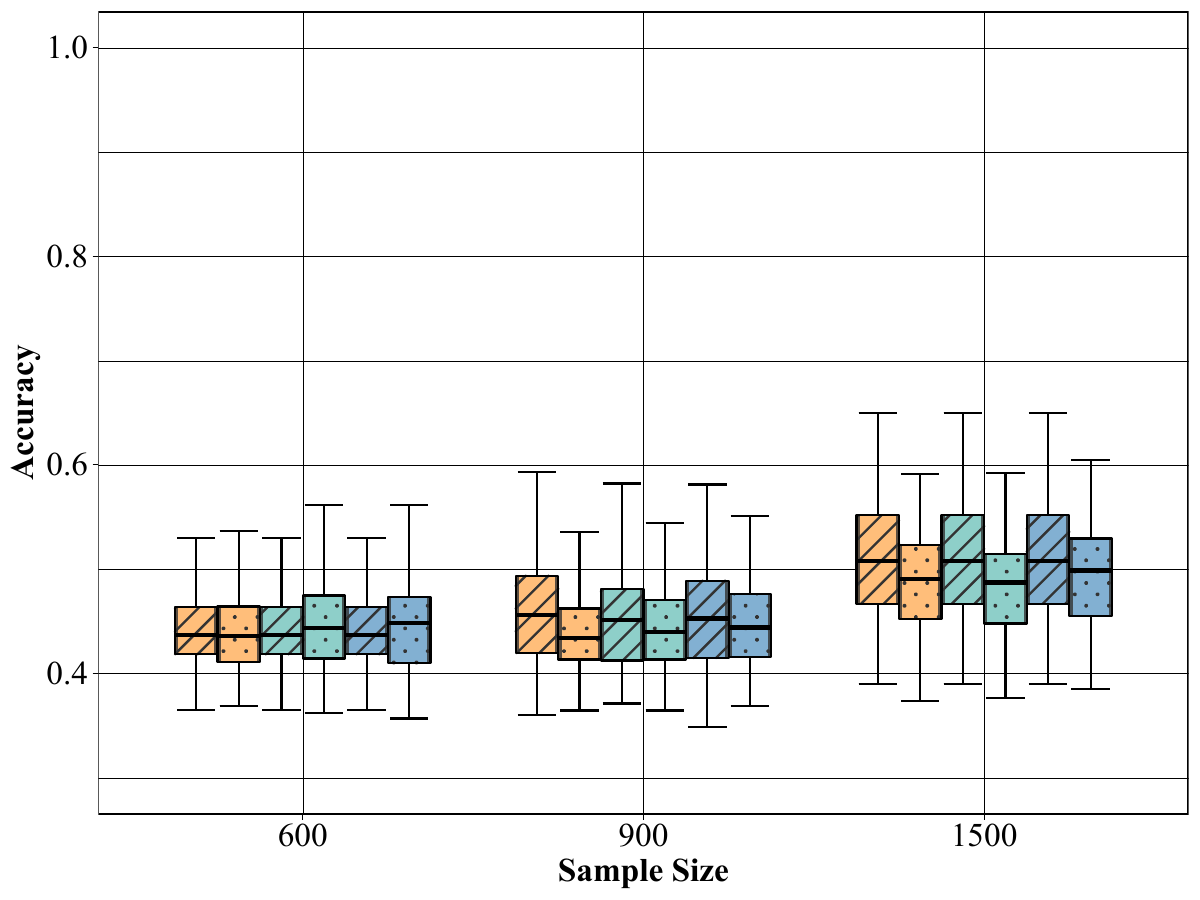}%
\caption{$\rho_n = 6n^{-2/3}$}
\label{subfig:special}
\end{subfigure}
\caption{Community detection accuracy of node2vec followed by
    $K$-means for DCSBM graphs. The
    boxplots of the accuracy for each value of $n, \rho_n$ and $
    t_U$ are based on $100$ Monte Carlo replications. Boxplots with the slash pattern (resp. dot pattern) summarized the results for the original (resp. SVD-based) node2vec. Different colors (yellow, green, blue) represent the algorithms implemented for different choices of $t_U \in \{5,6,8\}$. The first and second row
    plot the results when the block probabilities for the DCSBM is $\M B_1$ and $\M
    B_2$ (see Eq.~\eqref{eq:Bmat_simulations}) respectively.}
\label{box:2}
\end{figure*}

\begin{figure*}[htbp]
\centering
\begin{subfigure}{.38\columnwidth}
\includegraphics[width=\columnwidth]{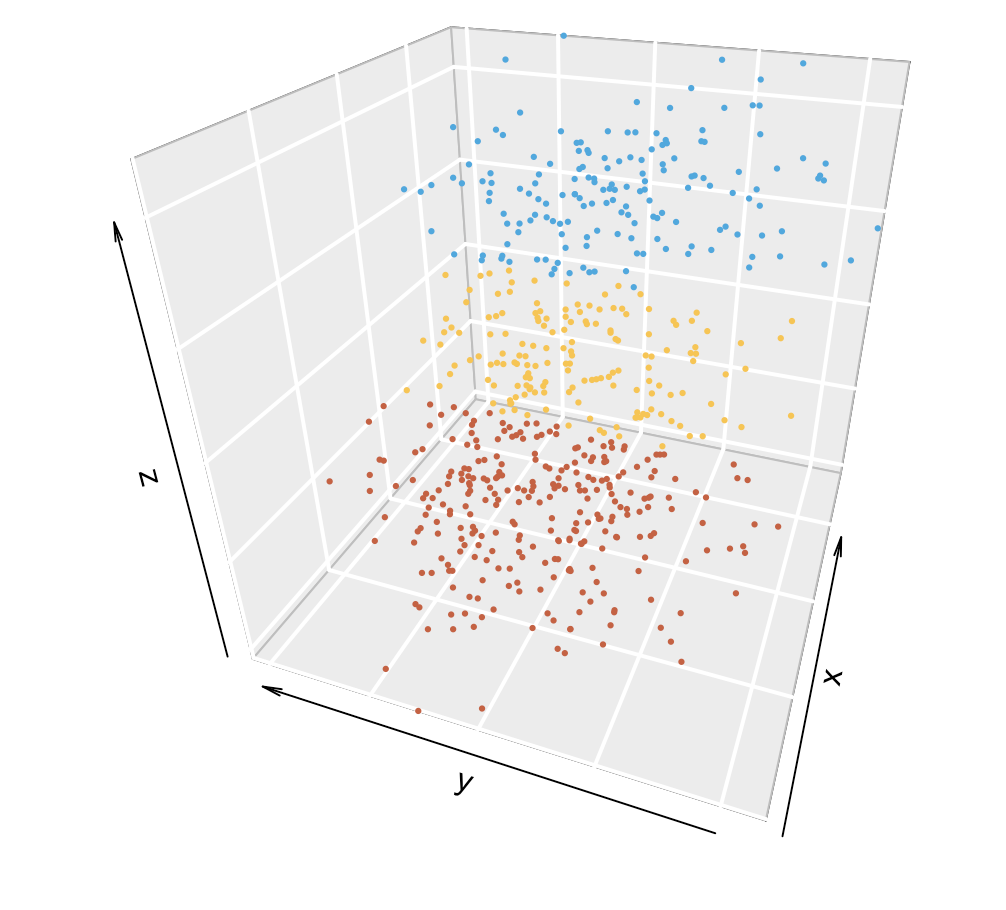}%
\end{subfigure}
\caption{The colors of embeddings represent the recovered memberships of corresponding vertices, by applying the Gaussian mixture model-based (GMM) clustering on the embeddings shown in Fig.~\ref{f:embd:sbm} (g). Comparing Fig. \ref{fig:improve} and Fig.~\ref{f:embd:sbm} (g), one can see that GMM clustering correctly recovers most of vertices' memberships (accuracy of $0.84$).}
\label{fig:improve}
\end{figure*}

\begin{figure*}
\centering
\begin{subfigure}{.32\columnwidth}
\includegraphics[width=\columnwidth]{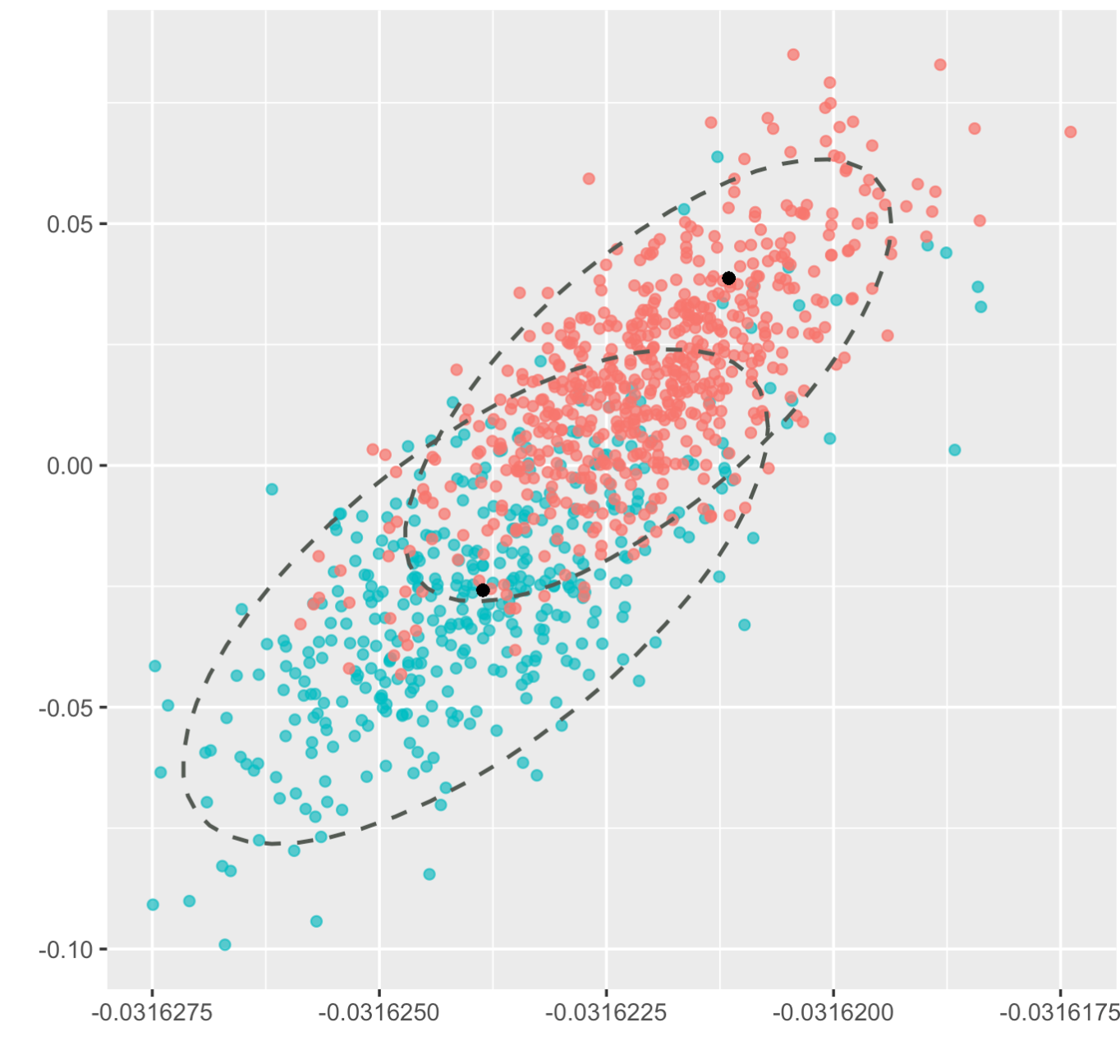}%
\caption{$n = 1000$}
\label{dense:1}
\end{subfigure}
\begin{subfigure}{.32\columnwidth}
\includegraphics[width=\columnwidth]{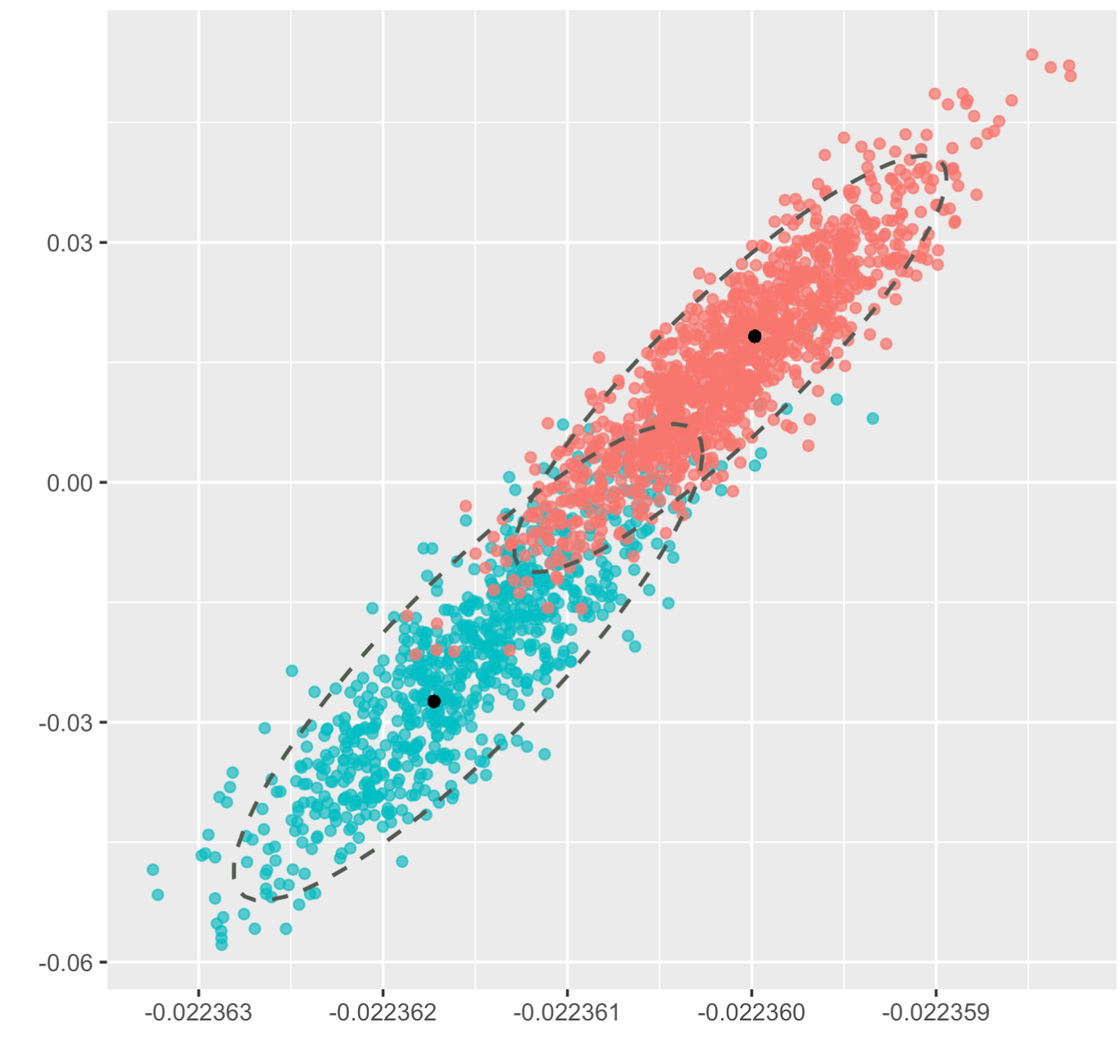}%
\caption{$n = 2000$}
\end{subfigure}
\begin{subfigure}{.32\columnwidth}
\includegraphics[width=\columnwidth]{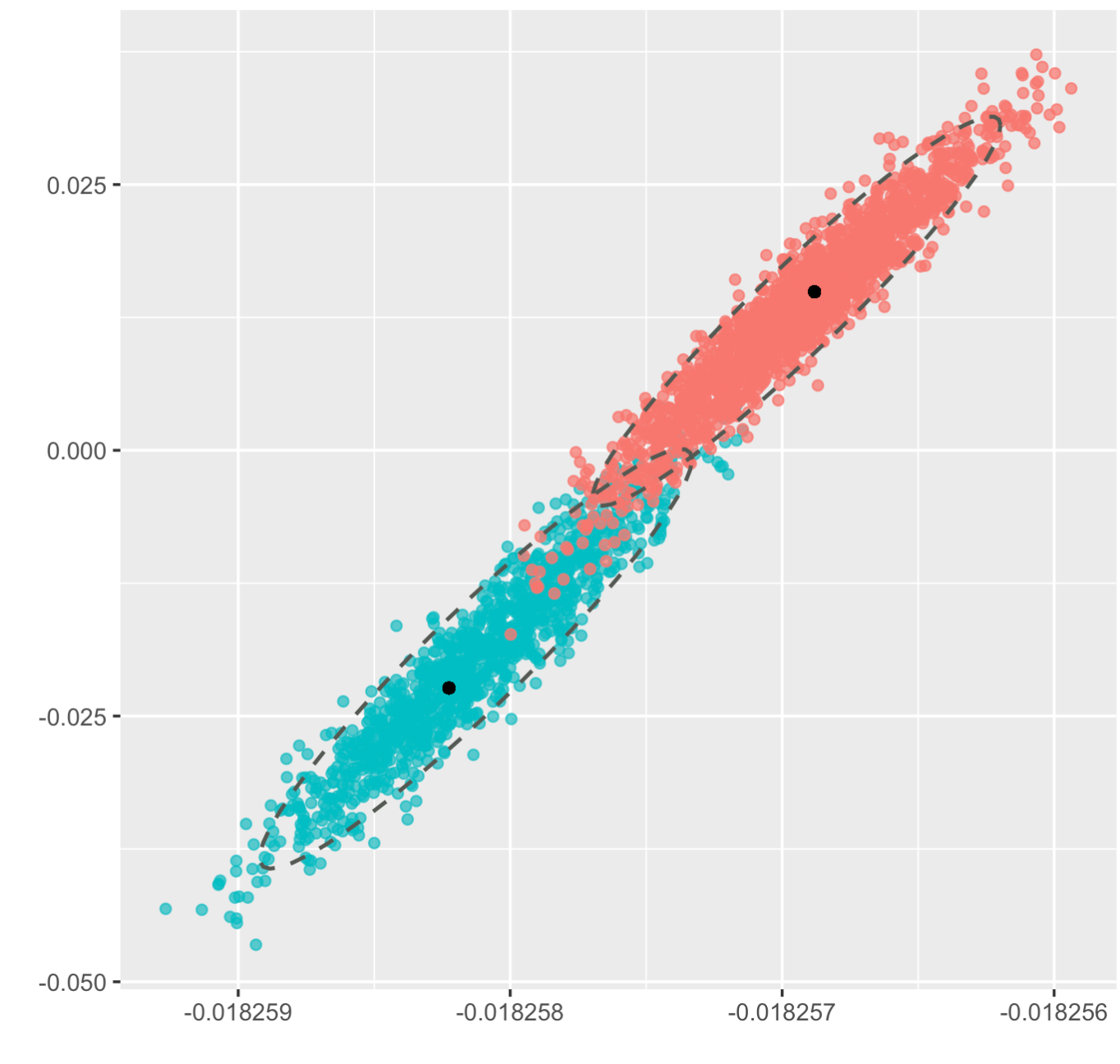}%
\caption{$n = 3000$}
\end{subfigure}
\caption{Scatter plots of node2vec/DeepWalk embeddings
    for a two-blocks SBM with $\mathbf{B}$ and
    $\bds{\pi}$ as defined in Eq.~\eqref{Bpidef} as $n$ varies. The points are colored according to their community membership. The dashed ellipses are the $95\%$ level curves for the {\em block-conditional} empirical distributions. The two black points are the two distinct embedding vectors obtained by factorizing ${\M M}_0$; note that these points had been transformed by the appropriate orthogonal matrices so as to align them with the node2vec/DeepWalk embedding obtained from the observed graphs.}\label{f:embd:sbm}
\end{figure*}

\end{document}